\newcommand{\mc}{\mathcal}
\newcommand{\m}[1]{{\bf{#1}}}
\newcommand{\wh}[1]{{\widehat{#1}}}
\newcommand{\ti}[1]{{\tilde{#1}}}
\renewcommand{\mc}[1]{\ensuremath{\mathcal{#1}}} 
\newcommand{\g}[1]{\mbox{\boldmath $#1$}}
\newcommand{\mb}[1]{{\mathbb{#1}}}
\DeclareMathOperator*{\argmin}{argmin}
\DeclareMathOperator*{\minimize}{minimize}
\DeclareMathOperator{\trace}{trace}
\DeclareMathOperator{\sign}{sign}
\DeclareMathOperator{\Var}{Var}
\theoremstyle{plain}
\newtheorem{thm}{Theorem}
\newtheorem{exm}[thm]{Example}
\newtheorem{rem}[thm]{Remark}
\newtheorem{lem}[thm]{Lemma}
\theoremstyle{remark}
\begin{document}
\begin{frontmatter}
\title{Fair Community Detection and Structure Learning in Heterogeneous Graphical Models}
%\title{A sample article title with some additional note\thanksref{t1}}
\runtitle{Fair Graphical Models }
%\thankstext{T1}{A sample additional note to the title.}

\begin{aug}
%%%%%%%%%%%%%%%%%%%%%%%%%%%%%%%%%%%%%%%%%%%%%%%
%% Only one address is permitted per author. %%
%% Only division, organization and e-mail is %%
%% included in the address.                  %%
%% Additional information can be included in %%
%% the Acknowledgments section if necessary. %%
%% ORCID can be inserted by command:         %%
%% \orcid{0000-0000-0000-0000}               %%
%%%%%%%%%%%%%%%%%%%%%%%%%%%%%%%%%%%%%%%%%%%%%%%
\author[A]{\fnms{Davoud}~\snm{Ataee Tarzanagh}\ead[label=e1]{tarzanagh@gmail.com,girasole@umich.edu,hero@eecs.umich.edu}\orcid{https://orcid.org/0000-0003-1267-3889}},
\author[A]{\fnms{Laura}~\snm{Balzano}\orcid{https://orcid.org/0000-0003-2914-123X}}
\and
\\
\author[A]{\fnms{Alfred}~\snm{O. Hero} \orcid{https://orcid.org/0000-0002-2531-9670}}
%%%%%%%%%%%%%%%%%%%%%%%%%%%%%%%%%%%%%%%%%%%%%%
%% Addresses                                %%
%%%%%%%%%%%%%%%%%%%%%%%%%%%%%%%%%%%%%%%%%%%%%%
\address[A]{Department of Electrical Engineering and Computer Science, University of Michigan\printead[presep={,\ }]{e1}}
% \address[B]{Department,
% University or Company Name\printead[presep={,\ }]{e2,e3}}
 \runauthor{D. Ataee Tarzanagh et al.}
\end{aug}

\begin{abstract}
Inference of community structure in probabilistic graphical models may not be consistent with fairness constraints when nodes have demographic attributes. Certain demographics may be over-represented in some detected communities and under-represented in others.  This paper defines a novel $\ell_1$-regularized pseudo-likelihood approach for fair graphical model selection. In particular, we assume there is some community or clustering structure in the true underlying graph, and we seek to learn a sparse undirected graph and its communities from the data such that demographic groups are fairly represented within the communities. In the case when the graph is known a priori, we provide a convex semidefinite programming approach for fair community detection. We establish the statistical consistency of the proposed method for both a Gaussian graphical model and an Ising model for, respectively, continuous and binary data, proving that our method can recover the graphs and their fair communities with high probability. 
\end{abstract}

\begin{keyword}[class=MSC]
\kwd[Primary ]{62A09}
\kwd{62P25}
\kwd[; secondary ]{15-04}
\end{keyword}

\begin{keyword}
\kwd{Fairness}
\kwd{Convex Community Detection }
\kwd{Graphical Models}
\end{keyword}

\end{frontmatter}
%%%%%%%%%%%%%%%%%%%%%%%%%%%%%%%%%%%%%%%%%%%%%%
%% Please use \tableofcontents for articles %%
%% with 50 pages and more                   %%
%%%%%%%%%%%%%%%%%%%%%%%%%%%%%%%%%%%%%%%%%%%%%%
\tableofcontents
\section{Introduction}\label{sec:intro}

Probabilistic graphical models have been applied in a wide range of machine learning problems to infer dependency relationships among random variables. Examples include gene expression~\cite{peng2009partial,wang2009learning}, social interaction networks~\cite{Tan14,tarzanagh2018estimation}, computer vision~\cite{hassner1981use,laferte2000discrete,manning1999foundations}, and recommender systems~\cite{kouki2015hyper,wang2015collaborative}.  Since in most applications the number of model parameters to be estimated far exceeds the available sample size, it is necessary to impose structure, such as sparsity or community structure, on the estimated parameters to make the problem well-posed. With the increasing application of structured graphical models and community detection algorithms in human-centric contexts \cite{tan2013data, song2011fast, glassman2014feature, burke2011recommender, pham2011clustering, das2014clustering}, there is a growing concern that, if left unchecked, they can lead to discriminatory outcomes for protected groups. For instance, the proportion of a minority group assigned to some community can be far from its underlying proportion, even if detection algorithms do not take the minority sensitive attribute into account in decision making~\cite{chierichetti2017fair}. Such an outcome may, in turn, lead to unfair treatment of minority groups.   

For example, in precision medicine, patient-patient similarity networks over a biomarker feature space can be used to cluster a cohort of patients and support treatment decisions on particular clusters \cite{parimbelli2018patient, lafit2019partial}. If the clusters learned by the algorithm are demographically imbalanced, this treatment assignment may unfairly exclude under-represented groups from effective treatments. In social networks, ensuring demographic parity can help prevent echo chambers, fostering diverse viewpoints within communities and promoting a healthier discourse~\cite{cinelli2021echo, aslan2021demographic}. In educational settings, fair clustering can ensure that study groups are diverse, providing equal opportunities for interaction and learning across different demographic groups~\cite{brusilovsky2016educational, holstein2019fairness}. These examples highlight the importance of integrating fairness considerations into community detection algorithms to mitigate biases.

To the best of our knowledge, the estimation of \textit{fair} structured graphical models has not previously been addressed. However, there is a vast body of literature on learning structured probabilistic graphical models. Typical approaches to impose structure in graphical models, such as $\ell_1$-regularization, encourage sparsity structure that is uniform throughout the network and may therefore not be the most suitable choice for many real world applications where data have clusters or communities, i.e., groups of graph nodes with similar connectivity patterns or stronger connections within the group than to the rest of the network. Graphical models with these properties are called heterogeneous.

It is known that if the goal is structured heterogeneous graph learning, structure or community inference and graph weight estimation should be done jointly. In fact, performing structure inference before weight estimation results in a sub-optimal procedure~\cite{marlin2009sparse}. To overcome this issue, some of the initial work focused on either inferring connectivity information or performing graph estimation in case the connectivity or community information is known~\textit{a priori} \cite{Danaher13,guo2011joint,gan2019bayesian,ma2016joint,lee2015joint}. Recent approaches consider the two tasks jointly and simutaneously perform connectivity inference, community detection, and weight estimation in structured graphical models ~\cite{kumar2020unified,hosseini2016learning,hao2018simultaneous,  tarzanagh2018estimation,kumar2019structured, gheche2020multilayer,pircalabelu2020community,cardoso2020algorithms,eisenach2020high}.  Our proposed fair graphical modeling framework adopts this joint inference, detection and estimation approach. 

In this paper, we develop a provably convergent penalized pseudo-likelihood method to induce fairness into clustered probabilistic graphical models. Our contributions are as follows:
\begin{itemize}
    \item \textbf{Fair Structure Learning in Graphical Models}: We formulate a novel version of probabilistic graphical modeling that incorporates fairness considerations. We assume an underlying community structure in our graph and aim to learn an undirected graph from the data such that demographic groups are fairly represented within the communities.
    \item \textbf{Fair Convex Community Detection}: We provide a semidefinite programming for detecting fair communities within a known graph. This approach retains the computational complexity of convex community detection methods \cite{amini2018semidefinite,cai2015robust,li2021convex} while enhancing fairness in community structure.
    \item \textbf{Unified Consistency Estimation Analysis}: Our rigorous analysis shows high-probability recovery of fair communities within an unknown graph. We show that community estimators are asymptotically consistent in high-dimensional settings for Gaussian and Ising graphical models, assuming standard regularity conditions.
    \item We present numerical results from both synthetic and real-world datasets, highlighting the importance of proportional clustering. Comparisons of our method with traditional graphical models show our algorithms' superior efficacy in estimating graphs and their communities, thus attaining higher fairness levels.
\end{itemize}

The remainder of the paper is organized as follows:
Section \ref{sec:genfram} gives a general framework for fair structure learning in graphs. Section~\ref{sec:fairgraphs}
gives a detailed statement of the proposed fair graphical models for continuous and binary datasets. In  Sections~\ref{sec:synth} and \ref{Sec:realdata}, we illustrate the proposed framework on a number of synthetic and real data sets, respectively. Section~\ref{sec:conc} provides some concluding remarks.

\paragraph{Notation.} 
For a set $\mc{S}$, $|\mc{S}|$ denotes its cardinality, and $\mc{S}^c$ its complement. The real and nonnegative real fields are denoted by $\mb{R}$ and $\mb{R}_{+}$, respectively. Lower- and upper-case bold letters, such as $\m{x}$ and $\m{X}$, represent vectors and matrices, respectively, with $x_i$ and $x_{ij}$ denoting their elements. If all coordinates of a vector $\m{x}$ are nonnegative, we write $\m{x} \geq 0$; similarly, $\m{X} \geq 0$ and $\m{X} > 0$ denote elementwise nonnegativity and positivity for matrices. For a symmetric matrix $\m{X} \in \mb{R}^{n\times n}$, we write $\m{X} \succ 0$ if $\m{X}$ is positive definite, and $\m{X} \succeq 0$ if it is positive semidefinite. 
We denote by $\m{I}_p$, $\m{J}_p$, and $\m{0}_p$ the $p\times p$ identity, all-ones, and all-zeros matrices, respectively. The notation $\m{X}^k$ represents the $k$-th matrix power of $\m{X}$. 
For any matrix $\m{X}$, $\Lambda_{i}(\m{X})$, $\Lambda_{\max}(\m{X})$, and $\Lambda_{\min}(\m{X})$ denote its $i$-th, maximum, and minimum singular values, respectively.  
Matrix and vector norms are defined as follows:
\begin{align*}
 \|\m{X}\|_{\infty} := \max_{ij}|x_{ij}|, \quad 
\|\m{X}\|_{1} := \sum_{ij}|x_{ij}|, \quad 
\|\m{X}\|_{2} := \Lambda_{\max}(\m{X}), \quad
\|\m{X}\|_{\text{F}} := \sqrt{\sum_{ij}|x_{ij}|^2}.   
\end{align*}
For a vector $\m{x}$, we use $\|\m{x}\|:= (\sum_i x_i^2)^{1/2}$ for its Euclidean (or $\ell_2$) norm.

\section{Fair Structure Learning in Graphical Models}\label{sec:genfram}

We introduce a fair graph learning method that simultaneously accounts for fair community detection and estimation of heterogeneous graphical models.  

Let $\m{Y}$ be an $n\times p$ matrix, with columns $\m{y}_{1},\ldots,\m{y}_{p}$. Each column \(\mathbf{y}_i\) (\(i \in \{1, 2, \ldots, p\}\)) is a vector of length \(n\), representing the observations for the $i$-th variable across all \(n\) samples. We associate to each column in $\m{Y}$ a node in a graph $\mc{G}=( \mc{V}, \mc{E})$, where $\mc{V}= \{1,2,\ldots,p \}$ is the vertex set and $\mc{E}\in \mc{V}\times\mc{V}$ is the edge set.  We consider a simple undirected graph, without self-loops, and whose edge set contains only distinct pairs. Graphs are conveniently represented by a $p \times p $ matrix, denoted by $\g{\Theta}$, whose nonzero entries correspond to edges in the graph. The precise definition of this usually depends on modeling assumptions,  properties of the desired graph, and application domain.

In order to obtain a sparse and interpretable graph estimate, many authors have considered the problem
\begin{align}~\label{eqn:obj:graph}
\begin{array}{ll}
\underset{\g{\Theta}}{\text{minimize}} &
\begin{array}{c}
\hspace{.2cm}  L (\g{\Theta};\m{Y}) + \rho_1 \|\g{\Theta}\|_{1,\textnormal{off}} 
\end{array}\\
\text{subj. to} & \begin{array}[t]{l}
\hspace{.2cm} \g{\Theta} \in \mc{M}.
\end{array}
\end{array}
\end{align}
Here, $L$ is a loss function; $\rho_1 \|\g{\Theta}\|_{1,\textnormal{off}}$ is the $\ell_1$-norm regularization applied to off-diagonal elements of $\g{\Theta}$ with parameter $\rho_1> 0$; and $\mc{M}$ is a convex constraint subset of $\mb{R}^{p \times p}$. For instance, in the case of a Gaussian graphical model,  we could take $L( \g{\Theta};\m{Y}) =-\log \det (\g{\Theta}) + \trace(\m{S}\g{\Theta})$,  where $\m{S}=n^{-1} \sum_{i=1}^n \m{y}_i \m{y}_i^\top$ and $\mc{M}$ is the set of $p \times p$ positive definite matrices. The solution to \eqref{eqn:obj:graph} can then be interpreted as a sparse estimate of the inverse covariance matrix \citep{banerjee2008model,Friedman07}.  Throughout, we assume that $L (\g{\Theta};\m{Y})$ and $\mc{M}$ are convex function and set, respectively.

\subsection{Model Framework}\label{sec:model}

We build our fair graph learning framework using \eqref{eqn:obj:graph} as a starting point.  
Let $\mc{V}$ denote the set of nodes. Suppose there exist $K$ disjoint communities of nodes denoted by $\mc{V}=\mc{C}_1 \cup \cdots \cup \mc{C}_K$ where $\mc{C}_k$ is the subset of nodes from $\mc{G}$ that belongs to the $k$--th community. 
An illustrative example of these community structures and demographic groups is shown in Figure~\ref{fig:SBM-example}.
For each candidate partition of $n$ nodes into $K$ communities, we associate it with a \emph{partition matrix} $\m{Q} \in [0, 1]^{p \times p}$, such that $q_{ij}=1/|\mc{C}_k|$ if and only if nodes $i$ and $j$ are assigned to the $k$th community. Let $\mc{Q}_{pK}$ be the set of all such partition matrices, and $ \bar{\m{Q}}$ the true partition matrix associated with the ground-truth clusters $\{\bar{\mc{C}}_k\}_{k=1}^K$. 

\begin{figure}[t]
    \begin{center} 
    \includegraphics[width=0.46\textwidth]{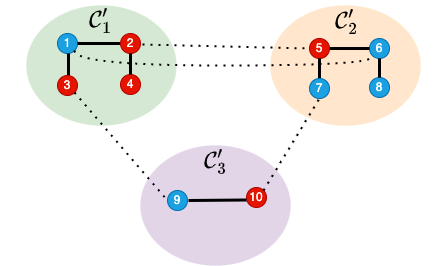}
    \hspace{.5cm}
    \includegraphics[width=0.46\textwidth]{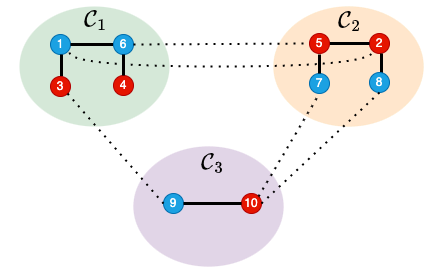} 
    \end{center} 
% \caption{Examples of graphical models with $p = 10$ nodes and demographic groups $\mc{V} = \mc{D}_1 \cup \mc{D}_2$, where $\mc{D}_1$ is the set of {\color{red}red} nodes and $\mc{D}_2$ is the set of {\color{cyan}cyan} nodes. The left and right figures show the   graphical model rendering of the sparse precision matrices associated with $\m{Q}'$ and $\m{Q}$ (fair membership matrix) defined in Example~\ref{exam:fair:sbm}, respectively. In the fair graphical model (right) with disjoint communities $\mc{C}_1$, $\mc{C}_2$, and $\mc{C}_3$, we have $|\mc{D}_h \cap \mc{C}_k|/|\mc{C}_k| = 1/2 = |\mc{D}_h|/p$ for all $k \in \{1, 2, 3\}$. The dotted edges represent connections with a lower probability, where nodes are connected to nodes from other communities with probability $a=0.1$ (edges between communities). In contrast, the solid edges show connections with a higher probability, where nodes are connected to other nodes within the same community with probability $5a$. {\bf \color{blue} (AOH) This caption should have a sentence explaining why $Q$ is a fairer model than $Q^{'}$.}
% }
\caption{Examples of graphical models with $p = 10$ nodes and demographic groups $\mc{V}=\mc{D}_1\cup\mc{D}_2$, where $\mc{D}_1$ is the set of {\color{red}red} nodes and $\mc{D}_2$ is the set of {\color{cyan}cyan} nodes. The left and right figures show the graphical model renderings of the sparse precision matrices associated with $\m{Q}'$ and $\m{Q}$ (fair membership matrix) defined in Example~\ref{exam:fair:sbm}, respectively. In the fair graphical model (right) with disjoint communities $\mc{C}_1$, $\mc{C}_2$, and $\mc{C}_3$, we have $|\mc{D}_h\cap\mc{C}_k|/|\mc{C}_k| = 1/2 = |\mc{D}_h|/p$ for all $k\in\{1,2,3\}$ and $h\in\{1,2\}$, as a result, $\m{Q}$ satisfies demographic parity in every community—equivalently, $\m{R}(\m{I}-\m{J}_p/p)\m{Q}=\m{0}$. The right panel (membership with $\m{Q}$) therefore represents a fairer model than the left panel (membership with $\m{Q}'$), since $\m{Q}'$ violates this balance in at least one community, i.e., $\m{R}(\m{I}-\m{J}_p/p)\m{Q}'\neq \m{0}$. The dotted edges represent lower-probability connections across communities ($a=0.1$), while solid edges are higher-probability within-community connections ($5a$).}
\label{fig:SBM-example}
\end{figure}

Assume the set of nodes contains \( H \) demographic groups, such that \( \mathcal{V} = \mathcal{D}_1 \cup \cdots \cup \mathcal{D}_H \), with the possibility of overlapping group memberships. Let \( \mathbf{R} \in \{0, 1\}^{p \times p} \) be defined so that \( r_{ij} = 1 \) if and only if nodes \( i \) and \( j \) belong to the same demographic group, with the convention that \( r_{ii} = 1 \) for all \( i \). For example, in a graph with nodes \( \mathcal{V} = \{v_1, v_2, v_3\} \) and two demographic groups \( \mathcal{D}_1 = \{v_1, v_2\} \) and \( \mathcal{D}_2 = \{v_2, v_3\} \), the matrix \( \mathbf{R} \) is given by 
$
\mathbf{R} = [1, 1, 0 ;\ 1, 1, 1 ;\ 0, 1, 1] 
$
where \( r_{ij} = 1 \) indicates that nodes \( i \) and \( j \) share at least one demographic group. Here, node \( v_2 \) belongs to both groups, demonstrating an overlap in demographic memberships. 

Chierichetti et al. \cite{chierichetti2017fair} proposed a fair clustering model that requires the representation in each cluster to preserve the global fraction of each demographic group~$\mc{D}_h$, i.e., 
\begin{equation}\label{eq:fairnotion}
 \frac{|\mc{D}_h\cap \mc{C}_k|}{|\mc{C}_k|}= \frac{|\mc{D}_h|}{p}~~ \textnormal{for all}~~k \in [K]. 
\end{equation} 
Let 
\begin{equation}\label{eqn:def:AB}
\m{A}_1:= \m{R}(\m{I} - \m{J}_p / p), \quad  \textnormal{and} \quad \m{B}_1:= \text{diag}(\g{\epsilon}) \m{J}_{p}.    
\end{equation}
 for some $\g{\epsilon}>0$ that controls how close we are to exact demographic parity. Under this setting, we introduce a general optimization framework for fair structured graph learning via a trace regularization and a fairness constraint on the partition matrix $\m{Q}$ as follows:
\begin{align}\label{loss:fair}
\begin{array}{ll}
\underset{\g{\Theta},~\g{Q}}{\text{minimize}} &
\begin{array}{c}
\hspace{.5cm}  L (\g{\Theta};\m{Y}) + \rho_1 \|\g{\Theta}\|_{1,\textnormal{off}} +  \rho_2 \trace\big(\m{Q} G(\g{\Theta}) \big)   
\end{array}\\
\text{subj. to} & \begin{array}[t]{l}
\hspace{.2cm} \g{\Theta} \in \mc{M},~~  \m{A}_1 \m{Q} \leq \m{B}_1,~~\text{and}~~\m{Q} \in \cup_{K} \mc{Q}_{pK}.
\end{array}
\end{array}
\end{align}
Here,  $G(\g{\Theta}): \mc{M} \rightarrow \mc{M}$ is a function of $\g{\Theta}$ (introduced in Sections~\ref{sec:fconcord} and \ref{sec:fblasso}).  

Figure~\ref{fig:SBM-example} depicts an illustrative example of two models: an unbalanced demographic representation ${\mathbf Q}^{'}$ (Left) and a balanced  representation ${\mathbf Q}$ (Right). In Problem \eqref{loss:fair}, the term $ \rho_1 \|\g{\Theta}\|_{1,\textnormal{off}}$ shrinks small entries of $\g{\Theta}$ to $0$, thus enforcing sparsity in $\g{\Theta}$ and consequently in $\mc{G}$. This term controls the presence of edges between any two nodes irrespective of the community they belong to, with higher values for $\rho_1$ forcing sparser estimators. The polyhedral constraint is the fairness constraint, enforcing that every community contains the $\g{\epsilon}$-approximate proportion of elements from each demographic group $\mc{D}_h$, $h \in [H]$, matching the overall proportion. The term $ \rho_2 \trace\left(\m{Q} G(\g{\Theta}) \right)$ enforces community structure in a similarity graph, $G(\g{\Theta})$. It should be mentioned that such trace regularization has been previously used in different
forms \citep{cai2015robust,amini2018semidefinite,hosseini2016learning,pircalabelu2020community,eisenach2020high} when estimating communities of networks. The intuition/interpretation behind the specific choice of the function $G(\g{\Theta})$ is to connect the trace term and the loss function $L (\g{\Theta};\m{Y})$ in the different graphical models; see discussions in Sections~\ref{sec:fairgraphs} and \ref{sec:fblasso}.  Specifically, In Equations \eqref{eqn:fgglasso}, \eqref{eqn:fglasso}, and \eqref{eqn:fair:ising}, we specify \(G(\g{\Theta}) = (n/2) \g{\Theta}^2\), \(G(\g{\Theta}) = (n/2) \g{\Theta}\), and \(G(\g{\Theta}) = (n/2) \g{\Theta}^2\) to obtain the fair CCONCORD \citep{khare2015convex}, a fair variant of graphical Lasso \cite{Friedman07}, and the fair Ising graphical model \cite{ravikumar2010high}, respectively. This approach allows us to present a unified framework while clearly distinguishing between the different methods later in  Sections~\ref{sec:fairgraphs} and \ref{sec:fblasso}.

Problem \eqref{loss:fair} is in general NP-hard due to its constraint on $\m{Q}$. However, it can be relaxed to a computationally feasible problem. To do so, we exploit algebraic properties of a community matrix $\m{Q}$. By definition, we see that $ \m{Q} $ must have the form $\m{Q}=\g{\Psi}\g{\Gamma} \g{\Psi}^{\intercal}$, where $\g{\Gamma}$ is block diagonal with size ${p_k} \times {p_k}$ blocks on the diagonal with blue all entries equal to  $1/|\mc{C}_k|$ associated with $k$-th community, $\g{\Psi}$ is some permutation matrix, and the number of communities $K$ is unknown. The set of all matrices ${\m{Q}} $ of this form is non-convex. The key observation is that any such $\m{Q}$ satisfies several convex constraints such as (i) all entries of $\m{Q}$ are nonnegative, (ii) all diagonal entries of $\m{Q}$ are $1/|\mc{C}_k|$, and (iii) $\m{Q}$ is positive semi-definite \citep{cai2015robust,amini2018semidefinite,li2021convex}. Without loss of generality, we assume that the permutation matrix corresponding to the ground truth communities is the identity, i.e., $\g{\Psi} = \m{I}$. Now, let 
\begin{equation*}
\m{A}:=[\m{A}_1 ; \m{I}_p],~~~\text{and} ~~\m{B}:=[\m{B}_1; \m{J}_p]. 
\end{equation*}
Thus, we propose the following relaxation: %One typical relaxation obtained in this way is:
\begin{subequations}\label{eqn:loss:relaxfair}
\begin{align}~\label{eqn:loss:relaxfair:obj}
\begin{array}{ll}
\underset{\g{\Theta},~\m{\g{Q}}}{\text{minimize}} &
\begin{array}{c}
\hspace{.5cm} L (\g{\Theta};\m{Y}) + \rho_1 \|\g{\Theta}\|_{1,\textnormal{off}} + \rho_2 \trace\big(\m{Q} G(\g{\Theta}) \big)   
\end{array}\\
\text{subj. to} & 
\begin{array}[t]{l}
\hspace{.5cm} \g{\Theta} \in \mc{M},~~  \textnormal{and}~~\m{Q} \in \mc{N},
\end{array}
\end{array}
\end{align}
where 
\begin{align}\label{eqn:const:v}
\mc{N} &=\left\{\m{Q}\in \mb{R}^{p \times p}:~\m{Q} \succeq \m{0}, ~\m{0} \leq \m{A} \m{Q} \leq \m{B},~q_{ii}=1~\textnormal{for}~1 \leq i \leq p \right\}.
\end{align}
\end{subequations}
The solution of \eqref{eqn:loss:relaxfair} jointly learns the fair community matrix $\m{Q}$ and the network estimate $\g{\Theta}$. We highlight the following attractive properties of the formulation \eqref{eqn:loss:relaxfair}: (i) the communities are allowed to have significantly different sizes; (ii) the number of communities $ K$ may grow as $p$ increases; (iii) the knowledge of $K$ is not required for fair community detection, and (iv) the objective function \eqref{eqn:loss:relaxfair:obj} is convex in $\g{\Theta}$ given $\m{Q}$ and conversely.

In the framework \eqref{eq:fairnotion}, we prioritize demographic parity as the fairness criterion, which ensures that each demographic group is proportionately represented within each community. This choice is particularly suitable for applications in social networks \cite{aslan2021demographic}, education \cite{brusilovsky2016educational}, and healthcare \cite{pfohl2019creating}, where equitable representation of demographic groups can mitigate biases and promote inclusivity \cite{holstein2019fairness}. For comparison, other fairness notions such as equal opportunity and individual fairness could be considered. Equal opportunity focuses on ensuring that individuals from all demographic groups have equal chances of receiving positive outcomes \cite{hardt2016equality}, which is more relevant in scenarios like hiring or admissions where decisions are binary. Individual fairness, on the other hand, requires similar individuals to be treated similarly \cite{dwork2012fairness}, and is most applicable in personalized recommendation systems where fairness at the individual level is critical. However, demographic parity is chosen in our context because it directly addresses the proportional representation within communities, aligning well with the goals of fair clustering in graphical models.

\begin{rem}
Demographic attributes can be associated with nodes, samples, or both in a given data matrix \(\m{Y} \in \mathbb{R}^{n \times p}\). For example, in movie recommender systems, if nodes represent movies, demographic information may include whether a movie is old or new \cite{zhu2018fairness} (see Section~\ref{sec:exp:recom}). In the same datasets, users can also have demographic information, with potential biases related to users' gender or age \cite{wang2023survey}.  In this work, our methodology is designed for cases where demographic grouping is relevant to the nodes, which aligns with many real-world scenarios where demographic attributes are intrinsic to the entities represented by the nodes, such as users in social networks or items in recommender systems. Our approach could potentially be extended to cases where demographic groups are associated with samples, or both nodes and samples, by modifying \eqref{loss:fair} or incorporating concepts from fair supervised learning \cite{donini2018empirical}.
\end{rem}

%%%%%%%%%%%%%%%%%%%%
% Algorithm
%%%%%%%%%%%%%%%%%%%%
\subsection{Algorithm}\label{sec:alg}

In order to solve (\ref{eqn:loss:relaxfair}), we use an \emph{alternating direction method of multipliers} (ADMM) algorithm \citep{Boyd11}. ADMM is an attractive algorithm for this problem, as it allows us to decouple some of the terms in (\ref{eqn:loss:relaxfair}) that are difficult to optimize jointly.  In order to develop an ADMM algorithm for (\ref{eqn:loss:relaxfair}) with guaranteed convergence, we reformulate it as follows:
\begin{align}\label{Eq:reformulate}
\nonumber
\minimize_{\g{Q},\g{\Theta},\g{\Omega}}~~~~&
%\mc{F}(\g{\Theta}, \m{Q}):= %
 L(\g{\Theta};\m{Y}) + \rho_1 \|\g{\Omega}\|_{1,\textnormal{off}}+\rho_2 \trace\big(\m{Q} G(\g{\Omega}) \big)  \\
\text{subj. to} ~~~~ &\g{\Theta}=\g{\Omega},~~\g{\Theta} \in \mc{M},~~  \textnormal{and}~~\m{Q} \in \mc{N}.
\end{align}
% Here, $\iota( \m{A} \in \mc{A})$ denotes the indicator function:
% \begin{equation*}
% \iota(  \m{A} \in \mc{A})= \begin{cases}
% 0, & \quad   \m{A} \in \mc{A},\\
% +\infty, & \quad \m{A} \notin \mc{A}.    
% \end{cases}
% \end{equation*}
% By this definition, we can easily conclude that $\iota(\m{A} \in \mc{A})$ is a convex function if and only if $\mc{A}$ is a convex set.  

The \textit{scaled} augmented Lagrangian function for (\ref{Eq:reformulate}) takes the form 
\begin{equation}\label{eqn:lagfun}
\begin{split}
  \Upsilon_{\gamma} (\g{\Theta},\m{Q},\g{\Omega},\m{W}) &:= L(\g{\Theta};\m{Y}) + \rho_1 \|\g{\Omega}\|_{1,\textnormal{off}} \\
& + \rho_2 \trace\big(\m{Q} G(\g{\Omega}) \big)+\frac{\gamma}{2}\left\|\g{\Theta}-\g{\Omega}+\m{W}\right\|^2_\textnormal{F},\\
\end{split}
\end{equation}
where $\g{\Theta} \in \mc{M}$, $\g{\Omega}$, and $\m{Q} \in \mc{N}$ are the primal variables;  ${\bf W}$ is the dual variable; and $\gamma>0$ is a dual parameter. We note that the scaled augmented Lagrangian can be derived from the usual Lagrangian by adding a quadratic term and completing the square~\citep[Section~3.1.1]{Boyd11}. 

\noindent  The proposed ADMM algorithm requires the following updates:%In order to obtain the optimal solution for (\ref{Eq:reformulate}), we iterate the following until convergence:
\begin{subequations}
\begin{eqnarray}
\m{Q}^{(t+1)}&\leftarrow & \underset{ \m{Q} \in \mc{N}}{\text{argmin}}~\Upsilon_{\gamma} \left(\m{Q},\g{\Omega}^{(t)},\g{\Theta}^{(t)},\m{W}^{(t)}\right), 
\label{eq:up:parti}\\
\g{\Omega}^{(t+1)}&\leftarrow & \underset{ \g{\Omega}}{\text{argmin}}~\Upsilon_{\gamma} \left(\m{Q}^{(t+1)},\g{\Omega},\g{\Theta}^{(t)},\m{W}^{(t)}\right), 
\label{eq:up:dtheta}\\
\g{\Theta}^{(t+1)} & \leftarrow &  \underset{\g{\Theta} \in\mc{M}}{\text{argmin  }}   \Upsilon_{\gamma} \left(\m{Q}^{(t+1)},\g{\Omega}^{(t+1)},\g{\Theta},\m{W}^{(t)}\right), 
\label{eq:up:theta}\\
\m{W}^{(t+1)} & \leftarrow & \m{W}^{(t)}+\g{\Theta}^{(t+1)}-\g{\Omega}^{(t+1)}. \label{eq:up:w}
\end{eqnarray}
\end{subequations}
\noindent  A general algorithm for solving (\ref{eqn:loss:relaxfair})  is provided in Algorithm \ref{Alg:general}.  Note that the update for $\g{\Theta}$, $\m{Q}$,  and $\g{\Omega}$ depends on the form of the functions $L$ and $G$, and is addressed in Sections~\ref{sec:fconcord} and \ref{sec:fblasso}. The $\m{Q}$ sub-problem in \ref{step:q} can be solved via a variety of convex optimization methods such as  CVX \citep{grant2014cvx} and ADMM~\citep{cai2015robust,amini2018semidefinite}. In the following sections, we consider special cases of (\ref{eqn:loss:relaxfair}) that lead to the estimation of Gaussian graphical model and an Ising model for, respectively, continuous and binary data.
\begin{algorithm}[t]
\small
\caption{Fair Graph Learning via Alternating Direction Method of Multipliers}
\label{Alg:general}
\textit{Initialize}  primal variables $\g{\Theta}^{(0)},\m{Q}^{(0)}$,  and $\g{\Omega}^{(0)}$; dual variable $\m{W}^{(0)}$; and positive constants $\gamma,\nu$. \\
\textit{Iterate} 
until the stopping criterion $\max \left\{\frac{\| {\g{\Theta}}^{(t+1)}- {\g{\Theta}}^{(t)} \|_{{\textnormal{F}}}^2}{\| {\g{\Theta}}^{(t)}\|_{\textnormal{F}}^2}, \frac{\| {\mathbf{Q}}^{(t+1)}- {\mathbf{Q}}^{(t)} \|_{\textnormal{F}}^2}{\| {\mathbf{Q}}^{(t)}\|_{\textnormal{F}}^2}  \right\} \le \nu$ is met, where ${\g{\Theta}}^{(t+1)}$ and ${\m{Q}}^{(t+1)}$ are the value of $\g{\Theta}$ and $\m{Q}$, respectively, obtained at the $t$-th iteration: 
\begin{enumerate}[label={\textnormal{\textit{S}\arabic*.}}]
\item \label{step:q} $\m{Q}^{(t+1)} \leftarrow  \underset{\m{Q} \in \mc{N}}{\textnormal{argmin}}~ \trace\big( \m{Q} G(\g{\Omega}^{(t)}) \big)$.
\item \label{step:dtheta}   $ \g{\Omega}^{(t+1)} \leftarrow  \underset{\g{\Omega} }{\textnormal{argmin}}~\rho_2 \trace\big( \m{Q}^{(t+1)}G(\g{\Omega}) \big)+ \rho_1 \|\g{\Omega}\|_1+ \frac{\gamma}{2} \left\|\g{\Theta}^{(t)}-\g{\Omega}+\m{W}^{(t)} \right\|_{\textnormal{F}}^2.$
\item \label{step:theta} $\g{\Theta}^{(t+1)} \leftarrow \underset{\g{\Theta} \in\mc{M}}{\textnormal{argmin}}~L(\g{\Theta};\m{Y})+ \frac{\gamma}{2} \left\|\g{\Theta}-\g{\Omega}^{(t+1)}+\m{W}^{(t)}\right\|_{\textnormal{F}}^2$.
\item \label{step:w} $\m{W}^{(t+1)} \leftarrow \m{W}^{(t)} + \g{\Theta}^{(t+1)}-\g{\Omega}^{(t+1)}$. 
\end{enumerate}
\end{algorithm}

%\subsubsection{Convergence}
We have the following global convergence result for~Algorithm~\ref{Alg:general}.

\begin{thm}\label{thm:bcd}
Algorithm~\ref{Alg:general} converges globally for any sufficiently large\footnote{The lower bound is given in \citep[Lemma 9]{wang2019global}.} $\gamma$, i.e., starting from any  $(\g{\Theta}^{(0)}, \m{Q}^{(0)},\g{\Omega}^{(0)},\m{W}^{(0)})$, it generates  $(\g{\Theta}^{(t)}, \m{Q}^{(t)}, \g{\Omega}^{(t)}, \m{W}^{(t)})$ that converges to a stationary point of \eqref{eqn:lagfun}.
\end{thm}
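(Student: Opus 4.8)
The plan is to show that the reformulation \eqref{Eq:reformulate} falls within the scope of the general convergence theory for multi-block nonconvex nonsmooth ADMM developed in \citep{wang2019global}, and then to verify its hypotheses one by one for the specific loss $L$, coupling $G$, and constraint sets $\mc{M}$, $\mc{N}$ at hand. Concretely, I would identify $\g{\Theta}$ as the distinguished block that is updated last among the primal variables and whose coefficient in the linear constraint $\g{\Theta}-\g{\Omega}=\m{0}$ is the identity; the remaining blocks are $\g{\Omega}$ (coefficient $-\m{I}$) and $\m{Q}$ (which does not enter the linear constraint at all). Writing the objective as the sum of the smooth part $L(\g{\Theta};\m{Y})+\rho_2\trace(\m{S}G(\g{\Theta}))+\rho_2\trace(\m{Q}G(\g{\Omega}))$ and the nonsmooth part $\rho_1\|\g{\Omega}\|_{1,\textnormal{off}}$ plus the indicators of $\mc{M}$ and $\mc{N}$, the task reduces to checking the standard list of assumptions: an image/surjectivity condition on the constraint coefficients, lower boundedness (coercivity) of the augmented Lagrangian, Lipschitz differentiability of the smooth blocks, and a regularity condition on the nonsmooth terms.

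First I would dispatch the image condition: because $\g{\Theta}$ appears with the identity coefficient, $\mathrm{Im}(\m{I})=\mb{R}^{p\times p}$ contains the images of the other coefficient maps, so the surjectivity hypothesis holds automatically and $\m{W}$ stays in the column space required by the descent estimate. Next I would establish that the augmented Lagrangian in \eqref{eqn:lagfun} is bounded below and has bounded sublevel sets. The set $\mc{N}$ in \eqref{eqn:const:v} is compact --- it is closed and, since $q_{ii}=1$ together with $\m{Q}\succeq\m{0}$ forces $|q_{ij}|\le 1$, it is bounded --- so the $\m{Q}$ block and the coupling term are automatically controlled; for $\g{\Theta}$ I would use that $L(\cdot;\m{Y})$ is coercive on $\mc{M}$ (for the Gaussian model $-\logt(\g{\Theta})+\trace(\m{S}\g{\Theta})\to+\infty$ as $\g{\Theta}$ approaches the boundary of, or diverges within, the positive definite cone when $\m{S}\succ 0$), which furnishes the required lower bound and coercivity.

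The analytic heart of the argument is the sufficient-decrease inequality: one shows that a full sweep \eqref{eq:up:parti}--\eqref{eq:up:w} decreases $\Upsilon_\gamma$ by at least a constant multiple of the squared successive differences of all blocks. I would obtain this by combining the optimality conditions of the three subproblems with the strong convexity that the quadratic penalty $\tfrac{\gamma}{2}\|\g{\Theta}-\g{\Omega}+\m{W}\|_{\textnormal{F}}^2$ injects into each update, and by bounding the dual increment $\m{W}^{(t+1)}-\m{W}^{(t)}$ in terms of the $\g{\Theta}$ increment through the $\g{\Theta}$ optimality condition. This is exactly where $\gamma$ must be taken sufficiently large relative to the Lipschitz modulus of $\nabla_{\g{\Theta}}[L+\rho_2\trace(\m{S}G(\g{\Theta}))]$ and of the coupling $G$, and the explicit threshold is the one recorded in \citep[Lemma 9]{wang2019global}. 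Together with the lower bound from the previous step, monotone convergence of $\Upsilon_\gamma$ then yields summability of the successive differences, hence boundedness of the iterates and the fact that every limit point is a stationary point of \eqref{eqn:lagfun}.

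Finally, to upgrade subsequential convergence to convergence of the entire sequence, I would invoke the Kurdyka--{\L}ojasiewicz property: the objective is a sum of the real-analytic and semialgebraic functions $\logt$, $\trace$, and $\|\cdot\|_{1,\textnormal{off}}$ together with the indicators of the semialgebraic sets $\mc{M}$ and $\mc{N}$, so $\Upsilon_\gamma$ satisfies the KL inequality and the standard KL argument forces the whole sequence to converge. The step I expect to be the main obstacle is the sufficient-decrease estimate, because the bilinear coupling $\rho_2\trace(\m{Q}G(\g{\Omega}))$ makes the objective jointly nonconvex; controlling this cross term --- in particular verifying that $G(\cdot)$ is smooth with a Lipschitz gradient on the relevant sublevel domain so that the $\gamma$-threshold of \citep[Lemma 9]{wang2019global} is finite and applicable --- is the delicate point on which the entire convergence proof rests.
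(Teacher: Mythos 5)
Your proposal is correct and takes essentially the same route as the paper: the paper's proof likewise casts Algorithm~\ref{Alg:general} into the nonconvex multi-block ADMM framework of \citet{wang2019global}, invokes the $\gamma$-threshold from their Lemma~9 for boundedness and subsequential convergence to stationary points of \eqref{eqn:lagfun}, and upgrades to whole-sequence convergence via the Kurdyka--{\L}ojasiewicz property of the augmented Lagrangian (convex loss plus subanalytic/semi-algebraic terms and indicator functions). Your write-up is in fact more detailed than the paper's, which leaves the hypothesis verification (image condition, compactness of $\mc{N}$, coercivity, sufficient decrease) implicit by deferring to \citep[Theorem~1]{wang2019global}.
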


The proof of Theorem~\ref{thm:bcd} follows from the results in \cite{wang2019global}, with adaptations to \eqref{Eq:reformulate}, and is provided in Appendix~\ref{sec:proof:thm:bcd}.

%\begin{rem}
In Algorithm~\ref{Alg:general}, Step~\ref{step:q} dominates the computational complexity in each iteration of ADMM \citep{cai2015robust,amini2018semidefinite}. In fact, an exact implementation of this optimization subproblem requires a full SVD, whose computational complexity is $O(p^3)$. When $p$ is as large as hundreds of thousands, the full SVD becomes computationally impractical. To facilitate implementation, one may apply an iterative approximation method in each iteration of ADMM, where the full SVD is replaced by a partial SVD. Although this type of method may converge to a local minimizer, since the SDP implementation can be viewed as a preprocessing step before $K$-means clustering, such a local minimum approximation can still be helpful. It is worth mentioning that when the number of communities $K$ is known, the computational complexity of ADMM can be much smaller than $O(p^3)$; see Remark~\ref{rem:compl} for further discussion.

\subsection{Two-Stage Graph Estimation and Fair Community Detection}
In this section, we introduce a variant of \eqref{eqn:loss:relaxfair} and propose a semidefinite programming framework for fair community detection. A key aspect of our approach is the interdependence between the graph structure $\g{\Theta}$ and the community partition matrix $\m{Q}$. Unlike standard methods that estimate these components separately, our formulation jointly models the interactions between community structures and graphical relationships, similar to prior work~\cite{pircalabelu2020community, hao2018simultaneous, kumar2020unified, hosseini2016learning}. However, our approach uniquely incorporates fairness constraints to ensure equitable clustering, distinguishing it from these methods; see Figure~\ref{fig:sub:chart1}.

An alternative to our proposed joint approach is a two-step sequential strategy (see Figure~\ref{fig:sub:chart2}), where the graph structure $\g{\Theta}$ is first estimated independently of community detection, and clustering is then performed based on the estimated adjacency structure \cite{pircalabelu2020community}. While this sequential approach simplifies the estimation process, it introduces several limitations:
\begin{itemize}
\item It directly relies on the empirical covariance, which is rank-deficient when the number of samples is small relative to the dimension.
    \item It does not incorporate feedback from community structures when estimating $\g{\Theta}$, leading to potential biases in graph inference.
    \item The adjacency matrix $\hat{\m{A}}$ is treated as an observed variable without accounting for estimation error, making the inferred graph structure less robust.
\end{itemize}
These limitations highlight the necessity of an integrated framework that jointly models community structures and graphical dependencies, as supported by prior work~\cite{pircalabelu2020community, hao2018simultaneous, kumar2020unified, hosseini2016learning}. The following two-step procedure provides a baseline for sequential estimation that will be compared with our proposed joint estimation framework introduced below.
\begin{figure}[ht]
\centering
\begin{subfigure}[b]{0.39\textwidth}
    \includegraphics[width=\textwidth]{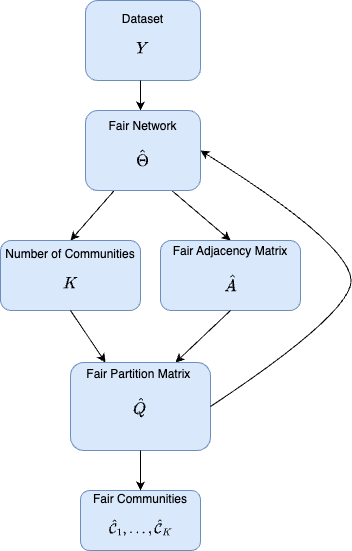}
    \caption{Joint procedure}
    \label{fig:sub:chart1}
\end{subfigure}
\hspace{1cm}
\begin{subfigure}[b]{0.41\textwidth}
    \includegraphics[width=\textwidth]{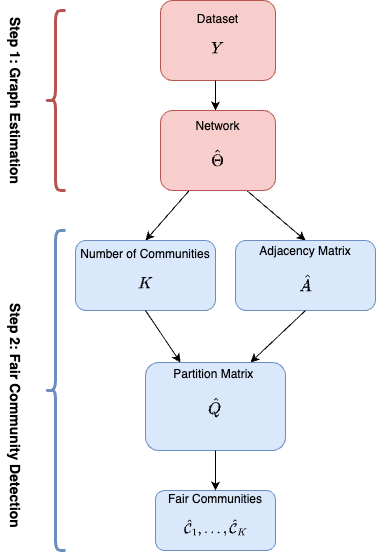}
    \caption{Two-step procedure}
    \label{fig:sub:chart2}
\end{subfigure}
\caption{Comparison of (a) our joint estimation framework with (b) a traditional two-step approach.}
\label{fig:chart}
\end{figure}

\begin{enumerate}[label={\textnormal{\textit{T}\arabic*.}}]
\item \label{step:known:theta} Solve the sparse graphical model estimation problem:
\begin{align}~\label{eqn:obj:know:graph}
\hat{\g{\Theta}}\leftarrow   \underset{}{\text{argmin  }}  
\hspace{.2cm}  L (\g{\Theta};\m{Y}) + \rho_1 \|\g{\Theta}\|_{1,\textnormal{off}}, ~\text{ subj. to }~\g{\Theta} \in \mc{M}. 
\end{align}
\item \label{step:known:q}
Estimate the community partition matrix by solving:
\begin{equation}\label{eqn:obj:know:q}
\min_{\m{Q}}  ~ \text{trace}(\hat{\m{E}}\m{Q})~\text{ subj. to }~\m{Q} \in \mc{N},
\end{equation}
where  
\begin{align*}
\hat{\m{E}}(\Theta) 
&= -(\m{I} - \hat{\m{D}})^{1/2} \, \hat{\m{A}}(\g{\Theta}) \, (\m{I} - \hat{\m{D}})^{1/2} 
+ \hat{\m{D}}^{1/2} \, \big(\m{I} - \hat{\m{A}}(\g{\Theta})\big) \, \hat{\m{D}}^{1/2}.
\end{align*}
Here, $\hat{\m{A}}(\g{\Theta})$ is the adjacency matrix with entries $\hat{a}_{ij} = 1$ if $\hat{\theta}_{ij} \neq 0$ and $0$ otherwise, and $\hat{\m{D}}$ is the diagonal matrix whose diagonal entries are the degrees of the nodes in $\hat{\m{A}}(\g{\Theta})$. 

\end{enumerate}

Compared to prior approaches, the joint framework integrates fairness constraints directly into the graph estimation process, rather than treating them as a separate post-processing step. Specifically, while two-step methods estimate $\g{\Theta}$ first and then infer communities independently, the joint formulation leverages the community structure to inform the estimation of the inverse covariance matrix. This integration ensures that fairness considerations are incorporated during graph estimation, rather than being applied afterward.
Furthermore, the semidefinite programming formulation in \eqref{eqn:obj:know:q} adapts prior work on robust community detection~\cite{cai2015robust} by explicitly incorporating fairness constraints, thereby ensuring demographic parity in learned community structures.
The key novelty of this joint approach lies in its simultaneous consideration of community structure and graph topology. The formulation directly integrates community dependencies into the graph estimation process, rather than relying on a sequential two-step method that lacks feedback between $\g{\Theta}$ and $\m{Q}$. By ensuring that community-aware constraints inform the estimation of $\boldsymbol{\Theta}$, the approach yields more robust and fairness-aware graph representations.

\subsection{Related Work}\label{sec:related_work}
To the best of our knowledge, the fair graphical model proposed here is the first model that can jointly learn fair communities simultaneously with the structure of a conditional dependence network. Related work falls into two categories: graphical model estimation and fairness. 

\paragraph{Estimation of graphical models.} There is a substantial body of literature on methods for estimating network structures from high-dimensional data, motivated by important biomedical and social science applications \citep{Liljeros01,Robins07,guo2011joint,Danaher13,Friedman07,Tan14,Guo15,tarzanagh2018estimation}. %Two powerful formalisms have been employed for this task, the Markov Random Field (MRF) model and the Gaussian covariance graph model. The former captures statistical conditional dependence relationships amongst random variables that correspond to the network nodes, while the latter to marginal associations. 
Since in most applications the number of model parameters to be estimated far exceeds the available sample size, the assumption of sparsity is made and imposed through regularization of the learned graph. An $\ell_1$ penalty on the parameters encoding the network edges is the most common choice \citep{Friedman07,meinshausen2006high,ElKaroui08,Cai11,Xue12,khare2015convex,peng2009partial}. This approach encourages sparse uniform network structures that may not be the most suitable choice for real world applications, that may not be uniformly sparse. As argued in \citep{Danaher13,guo2011joint,tarzanagh2018estimation} many networks exhibit different structures at different scales. An example includes a densely connected subgraph or community in the social networks literature. Such structures in social interaction networks may correspond to groups of people sharing common interests or being co-located \citep{tarzanagh2018estimation}, while in biological systems to groups of proteins responsible for regulating or synthesizing chemical products and in precision medicine the communities may be patients with common disease suceptibilities. An important part of the literature therefore deals with the estimation of hidden communities of nodes, by which it is meant that certain nodes are linked more often to other nodes which are similar, rather than to dissimilar nodes. This way the nodes from communities that are more homogeneous within the community than between communities where there is a larger degree of heterogeneity. Some of the initial work focused on either inferring connectivity information~\citep{marlin2009sparse} or performing graph estimation in case the connectivity or community information is known~\textit{a priori} \citep{Danaher13,guo2011joint,gan2019bayesian,ma2016joint,lee2015joint}, but not both tasks simultaneously. Recent developments consider the two tasks jointly and estimate the structured graphical models arising from heterogeneous observations~\citep{kumar2020unified,hosseini2016learning,hao2018simultaneous, tarzanagh2018estimation,kumar2019structured, gheche2020multilayer,pircalabelu2020community,cardoso2020algorithms,eisenach2020high}. 

\paragraph{Fairness.} There is a growing body of work on fairness in ML. Much of the research is on fair supervised methods; see, \cite{chouldechova2018frontiers,barocas-hardt-narayanan,donini2018empirical} and references therein. Our paper adds to the literature on fair methods for unsupervised learning tasks \citep{chierichetti2017fair,celis2017ranking,samadi2018price,tantipongpipat2019multi,oneto2020fairness,caton2020fairness,kleindessner2019guarantees}. %Note that all these papers assume to know which demographic group a data point belongs to just as we do. 
We discuss the work on fairness most closely related to our paper. \cite{chierichetti2017fair} provides approximation algorithms that incorporate this fairness notion into $K$-center as well as $K$-median clustering. \cite{kleindessner2019guarantees} extend this to K-means and provide a provable fair spectral clustering method; they implement $K$-means on the subspace spanned by the smallest \textit{fair} eigenvectors of Laplacian matrix. Unlike these works, which assume that the graph structure and/or the number of communities is given in advance, an appealing feature of our method is to learn fair community structure while estimating heterogeneous graphical models.

\paragraph{Convex Community Detection.}
Convex optimization approaches for community detection can be traced back to the computer science and mathematical programming literature in the study of the planted partition problem; see, e.g., \cite{mathieu2010correlation,chen2014clustering,Ames2013}. For community detection under statistical models, various theoretical properties of convex optimization methods have been studied in depth recently; a partial list includes strong consistency\footnote{ In the graph context, an estimator is \emph{strongly consistent} if it identifies the true community structure with increasing certainty as the network size grows. It exhibits \emph{weak consistency} if the proportion of incorrectly labeled nodes diminishes to zero. For \emph{non-trivial recovery}, it performs better than random guessing in classifying nodes.} with a growing number of communities \cite{CSX2014,ChenXu14,cai2015robust}, sharp threshold under sparse networks for strong consistency \cite{Abbe14, Bandeira15, ABBK}, weak consistency \cite{guedon2016community, fei2018exponential}, non-trivial recovery \cite{montanari2016semidefinite,mei2017solving}, robustness against outlier nodes \cite{cai2015robust}, consistency under degree-corrected models \cite{CLX2015}, and consistency under weak associativity \cite{amini2018semidefinite}. Motivated by this line of work, our work provides the first convex programming approach for detecting fair communities within a known graph. This approach retains the computational complexity of convex community detection methods \cite{amini2018semidefinite,cai2015robust,li2021convex} while enhancing fairness in community structure.

\paragraph{Deep Graph Clustering Methods.}
Recent advances in deep learning have produced powerful graph neural network (GNN) based clustering methods \cite{tsitsulin2023graph,bianchi2020spectral,chien2021adaptive}. These methods leverage neural architectures to learn node embeddings that facilitate clustering on known graph structures. However, our framework differs fundamentally in several aspects. First, GNN methods are applied to graph-structured relational data, 
    whereas the methods proposed in this paper are applied to multivariate attributional data where the graph is hidden in the statistical model as a sparsity pattern in the precision matrix $\boldsymbol{\Theta}$ of the data.  We address the distinct problem of simultaneously estimating both the graph structure in the precision matrix and making fair community assignments based on the multivariate data. Second, while GNN approaches often achieve strong empirical performance on large graphs, they typically lack theoretical guarantees for graph recovery and fairness satisfaction. Our optimization framework provides finite-sample guarantees (Theorems~\ref{thm:three:fglasso} and~\ref{thm:bin}) and ensures global optimality (Theorem~\ref{thm:bcd}). Third, our estimated precision matrix has direct probabilistic interpretation as conditional dependencies in a Gaussian graphical model, enabling interpretable insights for domain scientists. Incorporating fairness into GNN architectures remains an active research area \cite{dong2022fairness,agarwal2021towards}, with most existing approaches treating fairness as a soft penalty rather than a hard constraint.
%%%%%%%%%%%%%%%%%%%%%%%%%%%%

% Fair Graphical Models

%%%%%%%%%%%%%%%%%%%%%%%%%%%%

%%%%%%%%%%%%%%%%%%%%%%%%%%%%
%%%%%%%%%%%%%%%%%%%%%%%%%%%%
% Fair Graphical Models
%%%%%%%%%%%%%%%%%%%%%%%%%%%%
%%%%%%%%%%%%%%%%%%%%%%%%%%%%

\section{The Fair Community Detection and  Graphical Models}\label{sec:fairgraphs}

In the following subsections, we consider two special cases of~\eqref{eqn:loss:relaxfair} that lead to estimation of graphical models for continuous and binary data.

\subsection{Fair Pseudo-Likelihood Graphical Model}\label{sec:fconcord}
Suppose $\m{y}_\kappa = (y^1_\kappa,\ldots,y^p_\kappa)$ are i.i.d. observations from $N(\g{0}, \g{\Sigma})$, for $\kappa = 1,\ldots , n$. Denote the sample of the $i$th variable as $\m{y}^i =  (y^i_1,\ldots,y^i_n)^\top \in \mb{R}^p$. 
Let $\omega_{ij}:=-\theta_{ij}/\theta_{ii}$, for all $j \neq i$. We note that the set of nonzero coefficients of $\omega_{ij}$ is the same as the set of nonzero entries in the row vector of $\theta_{ij}~ (i\neq j)$, which defines the set of neighbors of node $\theta_{ij}$. Using an $\ell_1$-penalized regression, \cite{meinshausen2006high} estimates the zeros in $\g{\Theta}$ by fitting separate Lasso regressions for each variable $\m{y}^i$ given the other variables as follows
\begin{equation}
F_i(\g{\Theta};\m{Y})= \| {\bf y}^i - \sum_{j \neq i} \omega_{ij} {\bf y}^j
\|^2 + \rho_1 \sum_{1 \leq i < j \leq p} \left| \omega_{ij}\right|, ~~\textnormal{where}~~ \omega_{ij}=-\theta_{ij}/\theta_{ii}.
\end{equation}
%For each node $i=1, \ldots, p$ in the graphical model, consider the optimal prediction of the random variable $\m{y}^i$ as a linear combination of the remaining variables:
%\begin{equation}
%\m{y}^i = \sum_{j \neq i}\omega_{ij} \m{y}^j + \epsilon_i, 
%\end{equation}
%where $\omega_{ij}$ are the regression coefficients and $\epsilon_i \perp \{\m{y}^j: j\neq i\}$. The matrix is determined by the inverse covariance matrix $\g{\Theta}=(\theta_{ij})$. Specifically,

These individual Lasso fits give neighborhoods that link each variable to others. \cite{peng2009partial} improve this neighborhood selection method by taking the natural symmetry in the problem into account (i.e., $\theta_{ij} =\theta_{ij}$ ), and propose the following joint objective function (called SPACE):
\begin{align}\label{eqn:space}
\nonumber
 F(\g{\Theta};\m{Y})&= \frac{1}{2} \sum_{i=1}^p \big( -n\log \theta_{ii} + w_i \| {\bf y}^i - \sum_{j \neq i} \dot{\omega}_{ij}\sqrt{\frac{\theta_{jj}}{\theta_{ii}}} {\bf y}^j
\|^2\big) + \rho \sum_{1 \leq i < j \leq p} \left| \dot{\omega}_{ij}\right|\\
    &= \frac{1}{2}
    \sum_{i=1}^p \big(-n\log \theta_{ii} +  w_i \| {\bf y}^i + \sum_{j \neq i} 
      \frac{\theta_{ij}}{\theta_{ii}} {\bf y}^j \|^2 \big)+ \rho_1 \sum_{1 \leq i < j \leq p} \left| \dot{\omega}_{ij} \right|,
\end{align}
where $\{w_i\}_{i=1}^p$ are nonnegative weights  and  $\dot{\omega}_{ij}=-\frac{\theta_{ij}}{\sqrt{\theta_{ii}\theta_{jj}}}$ denotes the partial correlation between the $i$th and $j$th variables for $1\leq i \neq j \leq p$. Note that $\dot{\omega}_{ij}=\dot{\omega}_{ji}$ for $i\neq
j$. 

It is shown in \citep{khare2015convex} that the above expression is not convex.  
%To address this issue, a convex pseudo-likelihood approach with good model selection properties called CONCORD~\citep{khare2015convex} was proposed. 
Setting $w_i=  \theta_{ii}^2$ and putting the $\ell_1$-penalty term on the partial covariances $ \theta_{ij}$ instead of on the partial correlations $ \dot{\omega}_{ij}$, they obtain a convex pseudo-likelihood approach with good model selection properties called CONCORD. Their objective takes the form  
\begin{align} 
 F(\g{\Theta};\m{Y})&= \sum_{i=1}^p \big( - n\log \theta_{ii} +  \frac{1}{2}\|
  \theta_{ii} {\bf y}^i + \sum_{j \neq i} \theta_{ij} {\bf y}^j \|_2^2\big) + \rho_1 \sum_{1 \leq i < j \leq p} |\theta_{ij}|. \label{eq:concord}
\end{align}
Note that the matrix version of the CONCORD objective can be obtained by setting $L (\g{\Theta}; \m{Y}) = n/2[-\log |\text{diag}(\g{\Theta})^2 | + \text{trace}(\m{S}\g{\Theta}^2)]$ in \eqref{eqn:obj:graph}. Importantly, as discussed in \citep[Section~3.2]{khare2015convex}, CONCORD does not enforce positive definiteness of ${\g{\Theta}}$ during optimization, focusing instead on identifying the sparsity pattern; a positive definite estimate can subsequently be obtained via standard procedures such as maximum likelihood estimation with known zeros.

Our proposed fair graphical model formulation, called  FCONCORD, is a fair version of CONCORD from \eqref{eq:concord}. In particular, letting $G(\g{\Theta})= \frac{n}{2}\g{\Theta}^2$ and
$$\mc{M} = \left\{ \g{\Theta} \in  \mb{R}^{p \times p}:~\theta_{ij} =\theta_{ji}, \text{ and } \theta_{ii} > 0, \mbox{ for every } 1  \leq i,j \leq p \right\},
$$ in \eqref{eqn:loss:relaxfair}, our problem takes the form
\begin{align}\label{eqn:fgglasso}
\nonumber
\minimize_{\g{\Theta},~\g{Q}}~~&\frac{n}{2} \big[-\log |\text{diag}(\g{\Theta})^2|+ \trace\big( (\m{S}+ \rho_2 \m{Q} ) \g{\Theta}^2\big) \big]+\rho_1 \|\g{\Theta}\|_{1,\textnormal{off}} \\
\text{subj. to} ~~~~~~& \g{\Theta} \in \mc{M}~~\text{and}~~\m{Q} \in \mc{N}.
\end{align}
Here, $\mc{M}$ and $\mc{N}$ are the graph adjacency and fairness constraints, respectively.

\begin{rem}
When $\rho_2=0$, i.e., without a fairness constant and the second trace term, %$\mc{F}(\g{\Theta}, \m{Q})$ 
the objective in \eqref{eqn:fgglasso} reduces to the objective of the CONCORD estimator, and is similar to those of SPACE \citep{peng2009partial}, SPLICE \citep{rocha2008path}, and SYMLASSO \citep{friedman2010applications}. Our framework is a generalization of these methods to fair graph learning and community detection, when the demographic group representation holds.
%; see, \citep[Table~1]{khare2015convex} for more details. %We also note that as in \citep{khare2015convex}, we have deliberately not restricted $\g{\Theta}$ to be positive definite as the main goal is to estimate the sparsity pattern in $\g{\Theta}$.
\end{rem}

Problem~\eqref{eqn:fgglasso} can be solved using Algorithm~\ref{Alg:general}. The update for  $\g{\Omega}$ and  $\g{\Theta}$ in \ref{step:dtheta} and \ref{step:theta} can be derived by minimizing
\begin{subequations}
\begin{align} 
\Upsilon_{1,\gamma} ( \g{\Omega}) &:= \frac{n  \rho_2 }{2} \trace\big(\m{Q} \g{\Omega}^2\big) + \rho_1 \|\g{\Omega}\|_{1,\textnormal{off}} +\frac{\gamma }{2}\left\|\g{\Theta}-\g{\Omega}+\m{W}\right\|_F^2,\label{eqn:aug:sub:dtheta}\\
\Upsilon_{2,\gamma} ( \g{\Theta}) &:= \frac{n}{2} \big[-\log |\text{diag}(\g{\Theta})^2| + \trace\big(\m{S}\g{\Theta}^2\big) \big] +\frac{\gamma}{2}\left\|\g{\Theta}-\g{\Omega}+\m{W}\right\|_F^2,\label{eqn:aug:sub:theta}
\end{align}
\end{subequations}
with respect to  $\g{\Omega}$ and $\g{\Theta}$, respectively.

For $1 \leq i \leq j \leq p$, define the matrix function $ \m{T}_{ij}: \mc{M} \rightarrow \mc{M}$ by
\begin{subequations}
\begin{align}\label{eq:ipm section} 
\m{T}_{ij} (\g{\Omega}) &\leftarrow \argmin_{\ti{\g{\Omega}}} \big\{ \Upsilon_{1,\gamma}(\ti{\g{\Omega}}) : ~\ti{\omega}_{kl} = \omega_{kl} \; \forall (k,l) \neq (i,j) \big\}, \\
\m{T}_{ij} (\g{\Theta}) & \leftarrow \argmin_{\ti{\g{\Theta}}} \big\{ \Upsilon_{2,\gamma}(\ti{\g{\Theta}}) : ~\ti{\theta}_{kl} = \theta_{kl} \; \forall (k,l) \neq (i,j) \big\} .
\end{align}
\end{subequations}
For each \((i,j)\), \(\m{T}_{ij} (\g{\Omega})\) and \(\m{T}_{ij}(\g{\Theta})\) update the \((i,j)\)-th entry with the minimizer of \eqref{eqn:aug:sub:dtheta} and \eqref{eqn:aug:sub:theta} with respect to \(\omega_{ij}\) and \(\theta_{ij}\), respectively, holding all other variables constant. Recall that \(s_{ij}\) denotes the \((i,j)\)-th entry of the sample covariance matrix \(\m{S}\). Given $\m{T}_{ij} (\g{\Omega})$ and $\m{T}_{ij}(\g{\Theta})$, the update for  $\g{\Omega}$ and  $\g{\Theta}$ in \ref{step:dtheta} and \ref{step:theta} can be obtained by a similar coordinate-wise descent algorithm proposed in \cite{peng2009partial,khare2015convex}. Closed form updates for $\m{T}_{ij} (\g{\Omega})$ and $\m{T}_{ij}(\g{\Theta})$ are provided in Lemma~\ref{lem:fcon:update}.

\begin{lem}\label{lem:fcon:update}
Set $\gamma_n :=\gamma n$. For all $1\leq i \leq p$, let
\begin{align*}
&a_{i}:=s_{ii} +\gamma_n, \qquad \qquad \quad  b_{i}:=\sum_{j \neq i} \theta_{ij} s_{ij}+ \gamma_n( w_{ii} -\omega_{ii}),\\
&c_{i}:=\rho_2 q_{ii}+ \gamma_n, \qquad \qquad d_{i}:=  \rho_2  \sum_{j \neq i} q_{ij} \omega_{ij}+ \gamma_n( w_{ii}+\theta_{ii}).
\end{align*}
Further, for $1\leq i < j \leq p$, let
\begin{small}
\begin{align*}
&a_{ij} := (s_{ii} + s_{jj}) + \gamma_n, \quad \quad  \quad b_{ij} :=  \big(\sum_{j' \neq j} \theta_{ij'}  s_{jj'} + \sum_{i' \neq i} \theta_{i'j} s_{ii'}\big)+ \gamma_n(w_{ij}-\omega_{ij}), \\
&c_{ij} := \rho_2(q_{ii} + q_{jj}) +\gamma_n, \qquad d_{ij} := \rho_2 \big(\sum_{j' \neq j} \omega_{ij'} q_{jj'} +  \sum_{i' \neq i} \omega_{i'j} q_{ii'}\big)+ \gamma_n(w_{ij}+\theta_{ij}). 
\end{align*}
\end{small} 
Then, for all $1\leq i \leq p$:
\begin{subequations}
\begin{align} 
~~~~~~\big(\m{T}_{ii} (\g{\Omega}) \big)_{ii} &= \frac{-d_i}{c_i} , \qquad \qquad \qquad \big(\m{T}_{ii} (\g{\Theta}) \big)_{ii}=  \frac{1}{2a_i} (-b_i+\sqrt{b_i^2+4a_i}),\label{eq:fggl:offdiag}
\end{align}
and for all $1\leq i < j \leq p$:
\begin{align}
\big(\m{T}_{ij} (\g{\Omega}) \big)_{ij} &= S \big(-\frac{d_{ij}}{c_{ij}}, \frac{\rho_1}{\gamma_n} \big), \quad ~~~~~ \big(\m{T}_{ij} (\g{\Theta}) \big)_{ij} = -\frac{b_{ij}}{a_{ij}}, \label{eq:fggl:diag}
\end{align}
\end{subequations}
where $S(\alpha,\beta) := \textnormal{sign}(\alpha)\max( |\alpha|-\beta, 0)$. 
\end{lem}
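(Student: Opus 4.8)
The plan is to exploit the fact that, for a fixed coordinate, each of the functions $\Upsilon_{1,\gamma}$ and $\Upsilon_{2,\gamma}$ restricts to a one-dimensional convex problem that can be solved in closed form; the lemma is then just a catalogue of these four scalar solutions. First I would record the separable structure: since $\g{\Theta}$ (resp.\ $\g{\Omega}$) is symmetric, updating the pair $(i,j)$ and $(j,i)$ simultaneously while freezing every other entry turns \eqref{eqn:aug:sub:theta} (resp.\ \eqref{eqn:aug:sub:dtheta}) into a function of a single scalar $x=\theta_{ij}$ (resp.\ $x=\omega_{ij}$). Each objective is a sum of three pieces --- a (possibly logarithmic) smooth term built from $\trace(\m{S}\g{\Theta}^2)$ or $\trace(\m{Q}\g{\Omega}^2)$, the Frobenius proximal term, and, only for the off-diagonal coordinates of $\Upsilon_{1,\gamma}$, the $\ell_1$ penalty --- so the scalar problem is always one of three elementary forms.

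Next I would expand the two trace terms as explicit quadratics in the active coordinate. The key identity is that for symmetric $\m{A}$ and $\m{X}$ one has $\partial_{x_{ij}}\trace(\m{A}\m{X}^2) = 2[\m{A}\m{X}+\m{X}\m{A}]_{ij}$ (with the factor of two reflecting the symmetry-coupled positions on the off-diagonal); separating the part that multiplies $x_{ij}$ from the part that does not yields exactly the quadratic coefficients $a_\bullet,c_\bullet$ and linear coefficients $b_\bullet,d_\bullet$ of the lemma. In particular, the $j'\neq j$ and $i'\neq i$ sums in $b_{ij}$ and $d_{ij}$ are precisely the frozen cross-terms of $[\m{X}\m{A}]_{ij}+[\m{A}\m{X}]_{ij}$, with the coordinate's own diagonal contributions $s_{ii}+s_{jj}$ (resp.\ $q_{ii}+q_{jj}$) peeled off into $a_{ij}$ (resp.\ $c_{ij}$); the remaining $\gamma_n$-terms come from differentiating the Frobenius penalty $\tfrac{\gamma}{2}\|\g{\Theta}-\g{\Omega}+\m{W}\|_{\textnormal{F}}^2$.

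With the coefficients in hand, each scalar minimization is solved by its first-order condition. For the off-diagonal coordinate of $\Upsilon_{2,\gamma}$ and the diagonal coordinate of $\Upsilon_{1,\gamma}$ the objective is a strictly convex quadratic $\tfrac12(\text{coef})x^2+(\text{coef})x$, whose minimizer is the ratio $-b_{ij}/a_{ij}$ (resp.\ $-d_i/c_i$). For the off-diagonal coordinate of $\Upsilon_{1,\gamma}$ the same quadratic is augmented by $\rho_1|x|$, so the minimizer is the soft-thresholding $S(-d_{ij}/c_{ij},\rho_1/\gamma_n)$ of the unpenalized solution. The only genuinely non-routine case is the diagonal coordinate of $\Upsilon_{2,\gamma}$: the term $-\log\theta_{ii}^2$ contributes, after clearing the common factor $n$, a $-1/x$ term to the stationarity equation, so setting the derivative to zero and multiplying through by $x$ gives $a_i x^2+b_i x-1=0$. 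I would then argue that the constraint $\theta_{ii}>0$ imposed by $\mc{M}$ together with $a_i>0$ selects the unique positive root $(-b_i+\sqrt{b_i^2+4a_i})/(2a_i)$, and that strict convexity of $x\mapsto-\log|x|+\tfrac{a_i}{2}x^2+b_i x$ on $\{x>0\}$ certifies it as the global minimizer.

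The main obstacle I anticipate is bookkeeping rather than conceptual: correctly accounting for the symmetry constraint, so that each off-diagonal coordinate carries the contributions of both $(i,j)$ and $(j,i)$ --- producing the two cross-sums in $b_{ij},d_{ij}$ and the paired diagonal entries in $a_{ij},c_{ij}$ --- and tracking the $n/2$ and $\gamma_n=\gamma n$ normalizations so that the logarithmic term reduces to exactly the constant $1$ in $a_i x^2+b_i x-1=0$. Everything else follows from strict convexity of the univariate restrictions, which guarantees that the coordinatewise stationary points listed are their unique minimizers.
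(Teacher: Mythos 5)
Your proposal matches the paper's proof essentially verbatim: the paper likewise freezes all but one coordinate, expands $\trace(\m{S}\g{\Theta}^2)$ and $\trace(\m{Q}\g{\Omega}^2)$ into scalar quadratics whose coefficients are exactly $a_\bullet, b_\bullet, c_\bullet, d_\bullet$, solves the first-order conditions (soft-thresholding for the $\ell_1$-penalized off-diagonal $\g{\Omega}$ entries), and for the diagonal of $\g{\Theta}$ reduces $-n\log\theta_{ii}$ to the equation $a_i\theta_{ii}^2 + b_i\theta_{ii} - 1 = 0$, retaining the positive root since $\theta_{ii}>0$. Your additional remarks on the symmetry bookkeeping and strict convexity of the univariate restrictions are consistent with, and slightly more explicit than, the paper's argument.
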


Note that, alternatively, one can minimize the augmented Lagrangian function \eqref{eqn:lagfun} and subproblem~\eqref{eqn:aug:sub:dtheta} over $\Upsilon_{1,\gamma} (\g{\Omega})-\log|\text{diag}(\g{\Omega})^2|$. The additional term ensures that $\omega_{ii} >0$ at each iteration of the ADMM algorithm. In this case, the coordinate-descent update for $\big(\m{T}_{ii} (\g{\Omega}) \big)_{ii}$ in \eqref{eq:fggl:offdiag} becomes $$\left(\m{T}_{ii} (\g{\Omega}) \right)_{ii} =  \frac{1}{2c_i} \left(-d_i+\sqrt{d_i^2+4c_i}\right).$$

\begin{rem}\label{rem:compl}
In the case when $K$ and $H$ are known, the computational complexity of Step~\ref{step:q} in Algorithm~\ref{Alg:general}, corresponding to updating the estimators, is of the same order as that of the CONCORD~\cite{khare2015convex}, SPACE~\cite{peng2009partial}, and SYMLASSO~\cite{friedman2010applications} estimators. In fact, computing the fair clustering matrix $\m{Q}$ requires $O((p-H+1)^2K)$ operations. On the other hand, it follows from \citep[Lemma~5]{khare2015convex} that the $\g{\Theta}$ updates can be performed with complexity $\min\!\left(O(np^2), O(p^3)\right)$. This shows that when the number of communities is known, the computational cost of each iteration of FCONCORD is
\[
\max\!\left(\min\!\left(O(np^2), O(p^3)\right),\, O((p-H+1)^2K)\right),
\]
which is comparable to the aforementioned estimators.
\end{rem}

Note that Eq.~\eqref{loss:fair} provides a unifying framework that includes a fair variant of the graphical Lasso (GLASSO) \cite{Friedman07}, SPACE~\citep{peng2009partial}, SPLICE \citep{rocha2008path}, and SYMLASSO \citep{friedman2010applications}. In particular, letting $L( \g{\Theta};\m{Y}) =n/2[-\log \det (\g{\Theta}) + \trace(\m{S}\g{\Theta})]$, $G(\g{\Theta})={(n/2)}\g{\Theta}$, and $\mc{M} = \{\g{\Theta}: \g{\Theta} \succ 0~\text{ and}~\g{\Theta}=\g{\Theta}^\top\}$, we obtain
\begin{align}\label{eqn:fglasso}
\nonumber
\minimize_{\g{\Theta},~\g{Q}}~~& \frac{n}{2} \big[-\log \det\g{\Theta}+ \trace\big( (\m{S}+ \rho_2 \m{Q} ) \g{\Theta}\big) \big]+\rho_1 \|\g{\Theta}\|_{1,\textnormal{off}} \\
\text{subj. to} ~~~~~~& \g{\Theta} \in \mc{M}~~\text{and}~~\m{Q} \in   \mc{N},
\end{align}
which can be considered as a fair variant of cluster-based GLASSO~\citep{kumar2020unified,hosseini2016learning,hao2018simultaneous, gheche2020multilayer,pircalabelu2020community,cardoso2020algorithms,eisenach2020high}.   

\subsubsection{Large Sample Properties of FCONCORD}\label{sect:est}

We show that under suitable conditions, the FCONCORD estimator achieves both model selection consistency and estimation consistency. 

As in other studies \citep{khare2015convex,peng2009partial}, for the convergence analysis we assume that the diagonal of the graph matrix $\g{\Theta}$ and partition matrix $\m{Q}$ are known. 
%Further, we assume that the matrix partition matrix $\m{Q}$ is block structured with $K$ blocks on the diagonal, each of dimension $p/K \times p/K$. %The diagonals are by convention set to $1$, thus these $p$ elements need not be estimated.
Let $\g{\theta}^o =(\theta_{ij})_{1 \leq i < j \leq p}$ and $\m{q}^o =(q_{ij})_{1 \leq i < j \leq p}$ denote the vector of off-diagonal entries of $\g{\Theta}$ and $\m{Q}$, respectively. Let $\g{\theta}^d$ and $\m{q}^d$ denote the vector of diagonal entries of $\g{\Theta}$ and $\m{Q}$, respectively.  Let $\bar{\g{\theta}}^o, \bar{\g{\theta}}^d, \bar{\m{q}}^o$, and $\bar{\m{q}}^d$ denote the true value of $ {\g{\theta}}^o, {\g{\theta}}^d, \m{q}^o$, and $\m{q}^d$, respectively. Let $\mc{B}$ denote the set of non-zero entries in the vector $\bar{\g{\theta}}^o$ and define
\begin{align}\label{eqn:rate:quant}
 q &:= |\mc{B}|,~~~~\Psi(p,H,K) := (p-H+1) ((p-H+1)/K-1).
 %\bar{\theta}_{\min} &:=\min_{(i,j) \in \mc{B}} \bar{\theta}_{ij}~~~\bar{\theta}_{\max} :=\max_{(i,j) \in \mc{B}} \bar{\theta}_{ij}
\end{align}
In our consistency analysis, we let the regularization parameters $\rho_1=\rho_{1n}$ and $\rho_2=\rho_{2n}$ vary with $n$.  

The following standard assumptions are required:
\\
\noindent   {\bf Assumption A}
\begin{enumerate}[label={\textbf{(A\arabic*})}]
\item \label{assu:subgauss}  
The random vectors $\m{y}_{1},\dots, \m{y}_{n}$ are \emph{i.i.d.} sub-Gaussian for every $n \geq 1$, i.e., there exists $\tau>0$ such that $\|\m{u}^\top\m{y}_i\|_{\psi_2}\leq \tau \sqrt{\mb{E} (\m{u}^\top\m{y}_i)^2}$, $\forall \m{u}\in\mb{R}^p$. Here, $\|\m{y}\|_{\psi_2}= \sup_{t\ge 1}(\mb{E}|\m{y}|^t)^{\frac{1}{t}}/\sqrt{t}$. 
\item \label{assu:beig} 
There exist constants $\tau_1, \tau_2 \in (0,\infty)$ such that 
$$
\tau_1< \Lambda_{\min}(\bar{\g{\Theta}}) \leq \Lambda_{\max}(\bar{\g{\Theta}}) < \tau_2.
$$
\item \label{assu:beigq}  
There exists a constant $ \tau_3 \in (0,\infty)$ such that   
$$0\leq \Lambda_{\min}(\bar{\m{Q}} ) \leq \Lambda_{\max}(\bar{\m{Q}}) < \tau_3.$$
\item  \label{assu:comm}
For any $K, H \in [p]$, we have $K \leq p-H+1$.
\item \label{assu:incoh} 
There exists a constant $\delta  \in (0,1] $ such that, for any $(i,j) \in \mc{B}^c$
\begin{align*}
\left | \bar{\m{H}}_{ij,\mc{B}}\bar{\m{H}}_{\mc{B}, \mc{B}} ^{-1}\sign(\bar{\g{\theta}}^o_{\mc{B}}) \right | \leq (1-\delta), 
\end{align*}
where for $1 \leq i,j,t,s \leq p$ satisfying $i < j$ and $t < s$, 
\begin{align}\label{eqn:secder}
\bar{\m{H}}_{ij,ts}:= \mb{E}_{\bar{\g{\theta}}^o} \Big(\frac{\partial^2 L(\bar{\g{\theta}}^d, {\g{\theta}}^o;\m{Y})}{\partial \theta_{ij} \partial \theta_{ts}} \Big|_{\g{\theta}^o=\bar{\g{\theta}}^o} \Big).
\end{align}
\end{enumerate}

Assumptions~\ref{assu:beig}--\ref{assu:beigq} guarantee that the eigenvalues of the true graph matrix $\bar{\g{\Theta}}$ and those of the true membership matrix $\bar{\m{Q}}$ are well-behaved. Assumption~\ref{assu:comm} links how $H, K$ and $p$ can grow with $n$. Note that $K$ is limited in order for fairness constraints to be meaningful; if $K>p-H+1$ then there can be no community with $H$ nodes among which we enforce fairness. Assumption~\ref{assu:incoh} corresponds to the incoherence condition in \cite{meinshausen2006high}, which plays an important role in proving model
selection consistency of $\ell_1$ penalization problems. \cite{zhao2006model} show that such a condition is almost necessary and sufficient for model selection consistency in Lasso regression, and they provide some examples when this condition is satisfied. Note that Assumptions~\ref{assu:subgauss}, \ref{assu:beig}, and \ref{assu:incoh} are identical to Assumptions (C0)--(C2) in \cite{peng2009partial}. Further, it follows from \cite{peng2009partial} that under Assumption~\ref{assu:incoh} for any $(i,j) \in \mc{B}^c$,
\begin{equation}\label{eq:mm}
\| \bar{\m{H}}_{ij,\mc{B}} \bar{\m{H}}_{\mc{B},\mc{B}}^{-1}\|\leq M(\bar{\g{\theta}}^o)
\end{equation}
for some finite constant $M(\bar{\g{\theta}}^o)$.
%As $K$ grows, the clustering problem gets harder and for the case $K > p-H+1 $, \eqref{eqn:loss:relaxfair} does not have any solution due to the fairness constraint in~\eqref{eqn:const:v}. 

Next, inspired by \cite{peng2009partial,khare2015convex}, we prove estimation consistency for the nodewise FCONCORD.

\begin{thm}\label{thm:three:fglasso}
Suppose Assumptions~\ref{assu:subgauss}--\ref{assu:incoh} are satisfied. Assume further that
$p = O(n^{\alpha})$ for some ${\alpha} > 0$, $\rho_{1n} =O(\sqrt{\log p/n})$, $n > O(q \log (p))$ as $n \rightarrow \infty$, $ \rho_{2n} =O(\sqrt{\log(p-H+1)/n})$, $ \rho_{2n} \leq \delta\rho_{1n}/((1+M(\bar{\g{\theta}}^o))\tau_2 \tau_3)$, and $\g{\epsilon}=\g{0}$  in \eqref{eqn:def:AB}. Then, there exist finite constants $C(\bar{\g{\theta}}^o)$ and $D(\bar{\m{q}}^o)$, such that for any $\eta>0$, the following events hold with probability at least $1 - O(\exp(-\eta\log p))$:
 \begin{itemize}
\item There exists a minimizer $(\wh{\g{\theta}}^o, \wh{\m{q}}^o)$ of \eqref{eqn:fgglasso} such that 
\begin{equation*}
\hspace{-.25cm}\max\big(\| \widehat{\g{\theta}}^o- \bar{\g{\theta}}^o \|, \|\wh{\m{q}}^o- \bar{\m{q}}^0\| \big) \\
\leq \max \big( C(\bar{\g{\theta}}^o) \rho_{1n}\sqrt{q}, D(\bar{\m{q}}^o) \rho_{2n} \sqrt{\Psi(p,H,K)}\big),
\end{equation*}    
where $q$ and $\Psi(p,H,K)$ are defined in \eqref{eqn:rate:quant}.
\item   If  $\min_{(i,j) \in \mc{B}} \bar{\theta}_{ij} \geq  2 C(\bar{\g{\theta}}^o) \rho_{1n}\sqrt{q}$,  then $\wh{\g{\theta}}^o_{\mc{B}^c} =0$. % for the $k$-th community.
  \end{itemize}
\end{thm}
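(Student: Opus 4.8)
The plan is to establish both conclusions through the convexity-based ``ball'' argument combined with a primal-dual witness construction, following and extending the CONCORD and SPACE analyses of \cite{khare2015convex,peng2009partial}. With the diagonals of $\g{\Theta}$ and $\m{Q}$ fixed, the smooth part of \eqref{eqn:fgglasso} is jointly convex in the off-diagonal variables $(\g{\theta}^o,\m{q}^o)$, so it suffices to study the objective on a small ball around the truth. I would begin by computing the gradient and Hessian of the smooth loss at $(\bar{\g{\theta}}^o,\bar{\m{q}}^o)$: the Hessian splits into a $\g{\theta}$-block equal to $\bar{\m{H}}$ from \eqref{eqn:secder}, whose eigenvalues lie in a fixed interval by Assumption~\ref{assu:beig}, together with a $\rho_{2n}$-scaled block from the coupling term $\rho_{2n}\trace(\m{Q}\g{\Theta}^2)$. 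For the gradient I would invoke the sub-Gaussian Assumption~\ref{assu:subgauss} to obtain $\|\m{S}-\g{\Sigma}\|_{\infty}=O_P(\sqrt{\log p/n})$, matching the prescribed order of $\rho_{1n}$, while the analogous $\m{Q}$-driven deviation concentrates at rate $\sqrt{\log(p-H+1)/n}$, matching $\rho_{2n}$.

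For estimation consistency (first bullet), I would set $\g{\Delta}=(\g{\theta}^o-\bar{\g{\theta}}^o,\,\m{q}^o-\bar{\m{q}}^o)$ and show that on the sphere $\|\g{\Delta}\|=r$, with $r=\max\big(C(\bar{\g{\theta}}^o)\rho_{1n}\sqrt{q}/(1+\rho_{2n}),\,D(\bar{\m{q}}^o)\rho_{2n}\sqrt{\Psi(p,H,K)}\big)$, the increment $F(\bar{\g{\theta}}^o+\g{\Delta})-F(\bar{\g{\theta}}^o)$ is strictly positive. The quadratic term supplies a positive lower bound from the eigenvalue control in Assumptions~\ref{assu:beig}--\ref{assu:beigq}, the linear term is dominated using the gradient bound above, and the $\ell_1$ increment is absorbed by restricting to the active set $\mc{B}$ of size $q$ (yielding the $\sqrt{q}$ factor). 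The effective dimension $\Psi(p,H,K)$ plays the analogous counting role for the constrained $\m{q}$-block over $\mc{N}$ under exact parity $\g{\epsilon}=\g{0}$. Convexity then forces a minimizer into the ball, establishing both existence and the stated rate.

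For sparsistency (second bullet), I would run the primal-dual witness: solve the problem restricted to $\mc{B}$ so that $\wh{\g{\theta}}^o_{\mc{B}^c}=0$ holds by construction, then verify that the implied subgradient on $\mc{B}^c$ is strictly dual-feasible. Assumption~\ref{assu:incoh} together with \eqref{eq:mm} supplies the deterministic margin $1-\delta$, and I would bound the stochastic remainder plus the $\rho_{2n}$-induced contribution strictly below it. Here the hypothesis $\rho_{2n}\le\delta\rho_{1n}/((1+M(\bar{\g{\theta}}^o))\tau_2\tau_3)$ is decisive: using $\|\bar{\g{\Theta}}\|\le\tau_2$ and $\|\bar{\m{Q}}\|\le\tau_3$ from Assumptions~\ref{assu:beig}--\ref{assu:beigq}, the coupling term contributes at most $\rho_{2n}(1+M(\bar{\g{\theta}}^o))\tau_2\tau_3\le\delta\rho_{1n}$ to the dual deviation, so it erodes the incoherence gap by at most a $\delta$-fraction. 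The minimum-signal condition on $\min_{(i,j)\in\mc{B}}\bar{\theta}_{ij}$ then prevents soft-thresholding from zeroing any true edge.

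The hardest part is this last control. Unlike pure CONCORD, the irrepresentability condition must survive the $\m{Q}$-dependent curvature, and the joint error has to be apportioned consistently between the $\g{\theta}$ and $\m{q}$ blocks while both are estimated simultaneously over the SDP-type set $\mc{N}$. I expect the eigenvalue bounds (A2)--(A3) and the tuned ceiling on $\rho_{2n}$ to be exactly what keeps the coupling a strict $\delta$-fraction of the incoherence margin, thereby closing the primal-dual argument.
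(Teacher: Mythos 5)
Your architecture matches the paper's proof quite closely: the paper also proceeds by a disc argument around the truth to get the rates (Lemma~\ref{lem:disc:conti}, bounding the increment $I_1+I_2+I_3$ on a sphere of radius $\max\bigl(C_1\rho_{1n}\sqrt{q},\,D_1\rho_{2n}\sqrt{\Psi(p,H,K)}\bigr)$), then a restricted problem on $\mc{B}$ with a KKT/primal--dual witness verification that no wrong edge enters (Lemmas~\ref{thm:one:fglasso} and~\ref{thm:two:fglasso}), where the ceiling $\rho_{2n}\le\delta\rho_{1n}/((1+M(\bar{\g{\theta}}^o))\tau_2\tau_3)$ is used exactly as you describe, via $\|\bar{\g{\Theta}}\|\le\tau_2$ and $\|\bar{\m{Q}}\|\le\tau_3$, to keep the coupling term's contribution to the dual deviation a strict fraction of the incoherence margin. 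However, two points in your plan need repair. First, the smooth part of \eqref{eqn:fgglasso} is \emph{not} jointly convex in $(\g{\theta}^o,\m{q}^o)$: the term $\trace\bigl(\m{Q}\g{\Theta}^2\bigr)$ is convex in $\g{\Theta}$ for fixed $\m{Q}\succeq\m{0}$ and linear in $\m{Q}$ for fixed $\g{\Theta}$, i.e., the problem is only biconvex, as the paper itself states. So your step ``convexity then forces a minimizer into the ball'' is not available. The fix is the one the paper uses: strict positivity of the increment on the sphere plus continuity and compactness already yields a \emph{local} minimizer inside the disc, and globality over $\mc{B}^c$ is then recovered through the witness step rather than through convexity.

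Second, you assert that $\Psi(p,H,K)$ ``plays the analogous counting role'' for the $\m{q}$-block, but you give no mechanism, and this is where the exact-parity hypothesis $\g{\epsilon}=\g{0}$ actually does work. The paper substitutes $\m{Q}^o=\m{N}\m{R}^o\m{N}^\top$, where $\m{N}\in\mb{R}^{p\times(p-H+1)}$ has orthonormal columns spanning the nullspace of $\m{A}_1$ (which has rank $H-1$). This reparametrization (i) guarantees that the sphere perturbations $\bar{\m{r}}^o+\mu_{2n}\m{z}$ used in the disc argument remain feasible for the fairness constraint --- without it, perturbing $\m{q}^o$ directly can exit $\mc{N}$, and your sphere argument does not apply; (ii) reduces the $\m{q}$-block to an unconstrained-dimension count of $p-H+1$, which is precisely the source of both $\Psi(p,H,K)$ in \eqref{eqn:rate:quant} and the rate $\rho_{2n}=O(\sqrt{\log(p-H+1)/n})$; and (iii) transfers the error bound back to $\m{q}^o$ via $\|\wh{\m{q}}^o-\bar{\m{q}}^o\|=\|\wh{\m{r}}^o-\bar{\m{r}}^o\|$ since $\m{N}^\top\m{N}=\m{I}_{p-H+1}$. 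With the biconvexity issue corrected and the nullspace substitution supplied, your proposal coincides with the paper's proof.
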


Theorem \ref{thm:three:fglasso} provides sufficient conditions on the quadruple $(n, p,H, K)$ and the model parameters for the FCONCORD to succeed in consistently estimating the neighborhood of every node in the graph  and communities simultaneously. Notice that if $H=1$ (no fairness) and $K=p$ (no clustering) we recover the results of \cite{khare2015convex,peng2009partial}.
%We note that if $H=1$ (no fairness) and $K=p$ (no clustering), we recover the results of \cite{ravikumar2010high}.
%
%\begin{rem}
%Notice that if $H=1$ (no fairness) and $K=p$ (no clustering) we recover the results of \cite{khare2015convex,peng2009partial}. Moreover, under the ‘‘worst case’’ scenario, that is when $q$ is almost in the order of 
%$\sqrt{n/\log n}$, $\rho_{1n}$ needs to be nearly in the order of $n^{-1/4}$. On the other hand, for the ‘‘best case’’ scenario, that is when $q=O(1)$ (for example, when the dimension $p$ is fixed), the order of $\rho_{1n}$ can be nearly as small as $n^{-1/2}$ (within a factor of $\log n$). 
%\end{rem}

\subsection{Fair Ising Graphical Model}\label{sec:fblasso}

In the previous section, we studied the fair estimation of graphical models for continuous data. Next, we focus on estimating an Ising Markov random field~\citep{Ising25}, suitable for binary or categorical data. Let $\m{y} = (y_1, \ldots, y_p)^\top \in \{0, 1\}^p$ denote a binary random vector. The Ising model specifies the probability mass function 
\begin{equation}\label{eqn:ising:prob}
p(\m{y}) = \frac{1}{\mc{W}(\g{\Theta})} \exp \Big( \sum_{j=1}^{p} \theta_{jj} y_j + \sum_{1\leq j< j'\leq p} \theta _{jj'} y_j y_{j'}\Big). 
\end{equation}
Here, $\mc{W}(\boldsymbol{\Theta})$ is the partition function, which ensures that 
the probability mass function in~\eqref{eqn:ising:prob} sums to one; $\boldsymbol{\Theta}$ is a $p \times p$ symmetric matrix that specifies the graph structure, where $\theta_{jj'} = 0$ implies that the $j$th and $j'$th variables are conditionally independent given the remaining ones.

Several sparse estimation procedures for this model have been proposed. \cite{lee06} considered maximizing an $\ell_1$-penalized log-likelihood for this model. Due to the difficulty in computing the log-likelihood with the expensive partition function, alternative approaches have been considered. For instance, \cite{ravikumar2010high} proposed a neighborhood selection approach which involves solving $p$ logistic regressions separately (one for each node in the network), which leads to an estimated parameter matrix that is in general not symmetric. In contrast, others have considered maximizing an $\ell_1$-penalized pseudo-likelihood with a symmetric constraint on $\g{\Theta}$ \citep{guo2011joint,Guo15,Tan14,tarzanagh2018estimation}. Under the probability model above,  the negative $\log$-pseudo-likelihood for $n$ observations takes the form
% \begin{equation}\label{eqn:ising:loss}
% %L_{Ising}
% L(\g{\Theta};\m{Y})= -\sum_{j=1}^{p}\sum_{j'=1}^{p} \theta_{jj'} s_{jj'}+ \frac{1}{n}\sum_{i=1}^{n}\sum_{j=1}^{p} \log \big(1 + \exp (\theta_{jj}+ \sum_{j'\neq j}\theta_{jj'}y_{ij'})\big).
% \end{equation}
\begin{equation}\label{eqn:ising:loss}
%L_{Ising}
L(\g{\Theta};\m{Y})= -\sum_{j=1}^{p}\sum_{j'=1}^{p} \theta_{jj'} s_{jj'}+ \frac{1}{n} \sum_{\kappa=1}^{n}\sum_{j=1}^{p} \log \big(1 + \exp (\theta_{jj}+ \sum_{j'\neq j}\theta_{jj'} y_{\kappa j'})\big).
\end{equation}
%%%%%%%%%%%%%%%%%%%%%%%%%%%%
%%%%%%%%%%%%%%%%%%%%%%%%%%%%
% Ising : Formulation 
%%%%%%%%%%%%%%%%%%%%%%%%%%%%
%%%%%%%%%%%%%%%%%%%%%%%%%%%%
%\subsection{Formulation and Algorithm}
%\label{Binary:formulation}

We propose to additionally impose the fairness constraints on $\g{\Theta}$ in (\ref{eqn:ising:loss}) in order to obtain a sparse binary network with fair clustering.  Let $\mc{M} =\{ \g{\Theta} \in  \mb{R}^{p \times p}:~\g{\Theta}=\g{\Theta}^\top \}$, $G(\g{\Theta})=  {(n/2)}\g{\Theta}^2$ and choose $\iota_n \in (0,1)$ such that $n\iota_n \rightarrow 0$ as $n \rightarrow \infty$. Under this setting, we consider the following criterion
\begin{align}\label{eqn:fair:ising}
\nonumber
\minimize_{\g{\Theta}, \g{Q}}~~~~~~& \sum_{j=1}^p \sum_{j'=1}^p -n\theta_{jj'} s_{jj'}+\frac{\rho_2n}{2} \trace\big((\m{Q}+\iota_n\m{I})\g{\Theta}^2\big)\\
\nonumber
& +  \sum_{\kappa=1}^n \sum_{j=1}^p \log \big( 1+ \exp(\theta_{jj}+ \sum_{j'\ne j} \theta_{jj'} y_{\kappa j'}) \big)+ \rho_1 \sum_{1 \leq i < j \leq p} |\theta_{ij}|,\\
\text{subj. to} ~~~~~~~~~ & \g{\Theta} \in \mc{M}~~\text{and}~~\m{Q} \in   \mc{N}.
\end{align}
Here, $\mc{M}$ and $\mc{N}$ are the graph and fairness constraints, respectively. Note that $\iota_n$ is used to provide a theoretical guarantee of convergence to a local minimum and throughout all numerical experiments is set to zero; see, Lemma~\ref{lem:disc:ising} in Appendix for further details. 

We refer to the solution to \eqref{eqn:fair:ising} as the \emph{Fair Binary Network} (FBN).  An interesting connection can be drawn between our technique and a fair variant of Ising block model discussed in \cite{bert16}, which is a perturbation of the mean field approximation of the Ising model known as the Curie-Weiss model: the sites are partitioned into two blocks of equal size and the interaction between those within the same block is stronger than across blocks, to account for more order within each block. %One can easily seen that the Ising block model is a special case of \eqref{eqn:fair:ising}.

An ADMM algorithm for solving \eqref{eqn:fair:ising} is given in Algorithm~\ref{Alg:general}. The update for $\g{\Omega}$ in \ref{step:dtheta} can be obtained from \eqref{eqn:aug:sub:dtheta} by replacing $\g{\Omega}^2$ with $\g{\Omega}$. We solve the update for $\g{\Theta}$ in \ref{step:theta} using a relaxed variant of Barzilai-Borwein method~\citep{barzilai1988two}. The details are given in \citep[Algorithm~2]{tarzanagh2018estimation}.

\subsubsection{Large Sample Properties of FBN}\label{sect:est:bin}

In this section, we present the model selection consistency property for the Ising model. The spirit of the proof is similar to \cite{ravikumar2010high}, but their model does not include the membership matrix $\m{Q}$ nor fairness constraints.

Similar to Section~\ref{sect:est}, let $\g{\theta}^o =(\theta_{ij})_{1 \leq i < j \leq p}$ and $\m{q}^o =(q_{ij})_{1 \leq i < j \leq p}$ denote the vector of off-diagonal entries of $\g{\Theta}$ and $\m{Q}$, respectively. Let $\g{\theta}^d$ and  $\m{q}^d$ denote the vector of diagonal entries of $\g{\Theta}$ and $\m{Q}$, respectively.  Let $ \bar{\g{\theta}}^o, \bar{\g{\theta}}^d, \bar{\m{q}}^o, \bar{\m{q}}^d$, $\bar{\g{\Theta}}$ and $\bar{\m{Q}}$ denote the true value of $ {\g{\theta}}^o, {\g{\theta}}^d, \m{q}^o, \m{q}^d$, $\g{\Theta}$ and $\m{Q}$, respectively. Let $\mc{B}$ denote the set of non-zero entries in the vector $\bar{\theta}^o$, and let $q = |\mc{B}|$. Denote the log-likelihood for the $\kappa$-th observation by
\begin{equation}\label{eqn:ising:ithloss}
L_{\kappa}(\g{\theta}^d, {\g{\theta}}^o; \m{Y}) = -\sum_{j=1}^{p} y_{ij} \big(\theta_{jj}  + \sum_{j\neq j'}\theta_{jj'} y_{\kappa j'}\big)+ \log \big(1 + \exp( \g{\theta}_{jj} \sum_{j\neq j'}\theta_{jj'} y_{\kappa j'})\big).
\end{equation}

The population Fisher information matrix of $L_{\kappa}$ at $(\bar{\g{\theta}}^d, \bar{\g{\theta}}^o)$ and its sample counterpart can be jointly expressed as
\[
\bar{\m{H}} = \mathbb{E}\!\left[\nabla^2 L_{\kappa}(\bar{\g{\theta}}^d, \bar{\g{\theta}}^o; \m{Y})\right], \quad 
\bar{\m{H}}^{n} = \frac{1}{n}\sum_{\kappa=1}^{n} \nabla^2 L_{\kappa}(\bar{\g{\theta}}^d, \bar{\g{\theta}}^o; \m{Y}),
\]
respectively.

 Let 
\begin{align*}
v_{{ \kappa} j}&= \dot{v}_{{ \kappa} j }(1-\dot{v}_{{ \kappa} j}),~~\textnormal{where}~~\dot{v}_{{ \kappa} j}=\frac{\exp (\theta_{jj}+ \sum_{j'\neq j}\theta_{jj'}y_{{ \kappa} j'})}{1 + \exp (\theta_{jj}+\sum_{j'\neq j}\theta_{jj'}y_{{ \kappa} j'})},~~\textnormal{and}\\
\widetilde{\m{y}}_j&=\left(\sqrt{v_{1j}} y_{1j}-\dot{\m{y}}_j,\ldots,\sqrt{v_{nj}}y_{nj}-\dot{y}_j\right)^\top,~~\textnormal{where}~~\dot{\m{y}}_j= 1/n\sum_{i=1}^n\sqrt{v_{ { \kappa} j}} y_{{ \kappa} j}.
\end{align*}
We use  $\tilde{\m{X}}=(\tilde{\m{X}}_{(1,2)},\cdots,\tilde{\m{X}}_{(p-1,p)})$ to denote an $np$ by $\binom{p}{2}$ matrix, with
$$
\tilde{\m{X}}_{(j,j^{'})}=\Big(\m{0}_n, ..., \m{0}_n,\underbrace{\widetilde{\m{y}}_j}_{j-\textnormal{th block}}, \m{0}_n, ...,
\m{0}_n, \underbrace{\widetilde{\m{y}}_{j^{'}}}_{j^{'}-\textnormal{th block}},  \m{0}_n,
\ldots, \m{0}_n\Big)^\top,
$$
where $\m{0}_n$ is an $n$-dimensional column vector of zeros. Let  $\tilde{\m{X}}^{(i,j)}$ be the $[(j-1)n + i]$-th row of $\tilde{\m{X}}$ and $\tilde{\m{X}}^{(i)} = (\tilde{\m{X}}^{(i,1)},\ldots, \tilde{\m{X}}^{(i,p)})$. Let $\m{T}= \mb{E}(\tilde{\m{X}}^{(i)} (\tilde{\m{X}}^{(i)})^\top)$ and $\m{T}^n=1/n\sum_{\kappa=1}^n\tilde{\m{X}}^{(\kappa)} (\tilde{\m{X}}^{(\kappa)})^\top$ as its sample counterpart.

Our results rely on Assumptions~\ref{assu:beig}--\ref{assu:comm} and the following regularity conditions:
\\
\noindent   {\bf Assumption B}
\begin{enumerate}[label={\textbf{(B\arabic*})}]
\item\label{assu:eigp:bin:pop} 
There exist constants $ \tau_2, \tau_4, \tau_5 \in (0,\infty)$ such that $\Lambda_{\max}(\bar{\g{\Theta}}) < \tau_2$, and  
\begin{equation*}
\Lambda_{\min}( \bar{\m{H}}_{\mc{B}\mc{B}}) \geq \tau_4~~~\text{and}~~~\Lambda_{\max}(\m{T}) \leq \tau_5.     
\end{equation*}

\item \label{assu:incp:bin:pop} There exists a constant $\delta\in (0,1]$, such that 
\begin{equation}
  \|\bar{\m{H}}_{\mc{B}^c\mc{B}} \left(\bar{\m{H}}_{\mc{B}\mc{B}}\right)^{-1}\|_{\infty} \leq (1 - \delta).  
\end{equation}
%\item  \label{assu:comm:bin}
%$\sqrt{\frac{(p-H+1)^2}{nK }-\frac{p-H+1}{n}} \rightarrow 0$ as  $n \rightarrow \infty$.
\end{enumerate}

It should be mentioned that Assumptions~\ref{assu:eigp:bin:pop}-\ref{assu:incp:bin:pop} are similar to those listed in~\cite{ravikumar2010high}. Under these assumptions and \ref{assu:beig}--\ref{assu:comm}, we have the following result:
\begin{thm}\label{thm:bin}
Suppose Assumptions~\ref{assu:beig}--\ref{assu:comm} and  \ref{assu:eigp:bin:pop}--\ref{assu:incp:bin:pop} are satisfied. Assume further that $\rho_{1n} =O(\sqrt{\log p/n})$, $n>O(q^3\log p)$ as $n \rightarrow \infty$, $ \rho_{2n} =O(\sqrt{\log(p-H+1)/n})$, $\rho_{2n} \leq \delta\rho_{1n}/(4(2-\delta) \tau_2 \tau_3)$, and $\g{\epsilon}=\g{0}$ { in \eqref{eqn:def:AB}}. Then, there exist finite constants $\check{C}(\bar{\g{\theta}}^o)$, $\check{D}(\bar{\m{q}}^o)$, and $\eta$ such that the following events hold with probability at least $1 - O(\exp(-\eta \rho_{1n}^2n))$:
 \begin{itemize}
\item There exists a local minimizer $(\wh{\g{\theta}}^o, \wh{\m{q}}^o)$ of \eqref{eqn:fair:ising} such that
\begin{align}\label{eqn:finalb:bin}
\nonumber 
&\max\left(\| \widehat{\g{\theta}}^o- \bar{\g{\theta}}^o \|, \|\wh{\m{q}}^o- \bar{\m{q}}^0\| \right) \\
& \qquad \leq \max \left( \check{C}(\bar{\g{\theta}}^o) \rho_{1n} \sqrt{q},  \check{D}(\bar{\m{q}}^o) \rho_{2n} \sqrt{\Psi(p,H,K)}\right),
\end{align}    
where $q$ and $\Psi(p,H,K)$ are defined in \eqref{eqn:rate:quant}.
%\item  If $\min_{(i,j) \in \mc{B}} \bar{\theta}_{ij}^o \geq \frac{10 \sqrt{q }\rho_{1n}}{\tau_4}$, then $\wh{\g{\theta}}^o_{\mc{B}_k^c} =0$ for the $k$-th community.
\item  If $\min_{(i,j) \in \mc{B}} \bar{\theta}_{ij}^o \geq 2 \check{C}(\bar{\g{\theta}}^o) \sqrt{q }\rho_{1n}$, then $\wh{\g{\theta}}^o_{\mc{B}^c} =0$. % for the $k$-th community.
  \end{itemize}
\end{thm}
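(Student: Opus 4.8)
The plan is to adapt the primal--dual witness (PDW) construction for $\ell_1$-penalized logistic neighborhood selection of \cite{ravikumar2010high} to the coupled objective \eqref{eqn:fair:ising}, treating the membership block $\m{q}^o$ as a second group of variables that enters the $\g{\theta}^o$-stationarity conditions only through the bilinear perturbation $-n\rho_{2n}\trace(\g{\Theta}\m{Q})$. Since this coupling term is bilinear, the joint objective is not jointly convex, which is precisely why the statement asserts only a \emph{local} minimizer: the PDW certificate will exhibit a stationary point with the correct sparsity pattern rather than a global optimum.

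First I would fix $\wh{\g{\theta}}^o_{\mc{B}^c}=\m{0}$ and solve the \emph{restricted} problem, minimizing \eqref{eqn:fair:ising} jointly over $(\g{\theta}^o_{\mc{B}},\m{q}^o)$ with $\m{Q}\in\mc{N}$. Because $\g{\epsilon}=\g{0}$, the fairness constraints collapse to the exact-parity equalities $\m{A}_1\m{Q}=\m{0}$, so $\mc{N}$ is a fixed convex polytope intersected with the PSD cone that contains $\bar{\m{Q}}$, and for any $\g{\Theta}$ the inner $\m{Q}$-update is the semidefinite program $\max_{\m{Q}\in\mc{N}}\trace(\g{\Theta}\m{Q})$. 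I would record the stationarity (KKT) conditions of this restricted problem and establish restricted strong convexity in the $\g{\theta}^o_{\mc{B}}$-block: using that the Ising variables are bounded, I would bound the sample--population Hessian deviation $\|\bar{\m{H}}^n_{\mc{B}\mc{B}}-\bar{\m{H}}_{\mc{B}\mc{B}}\|$ and the third-order Taylor remainder of the logistic log-partition, so that $\Lambda_{\min}(\bar{\m{H}}^n_{\mc{B}\mc{B}})\ge \tau_4/2$ with high probability; this is where $n>O(q^3\log p)$ enters, the cubic in $q=|\mc{B}|$ arising from controlling the remainder uniformly over the $q$-dimensional support.

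Two concentration ingredients then drive the $\g{\theta}^o$-rate. The score $\nabla_{\g{\theta}^o}L(\bar{\g{\theta}}^d,\bar{\g{\theta}}^o;\m{Y})$ is an average of mean-zero, bounded binary-data terms, so each of its $O(p^2)$ coordinates concentrates at rate $\sqrt{\log p/n}$; choosing $\rho_{1n}=O(\sqrt{\log p/n})$ dominates the score uniformly on the event of probability $1-O(\exp(-\eta\rho_{1n}^2 n))$. Inverting the on-support stationarity condition through $(\bar{\m{H}}^n_{\mc{B}\mc{B}})^{-1}$ then gives $\|\wh{\g{\theta}}^o_{\mc{B}}-\bar{\g{\theta}}^o_{\mc{B}}\|\lesssim \tau_4^{-1}\big(\sqrt q\,\rho_{1n}+\rho_{2n}\tau_3\sqrt q\big)$, and under $\rho_{2n}\le \delta\rho_{1n}/(4(2-\delta)\tau_3\|\bar{\g{\theta}}^o\|_\infty)$ the $\m{Q}$-induced term is absorbed, producing the first branch $\check{C}(\bar{\g{\theta}}^o)\rho_{1n}\sqrt q$ of \eqref{eqn:finalb:bin}.

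The crux, and the step I expect to be the main obstacle, is verifying strict dual feasibility off the support, because the $\rho_{2n}\m{Q}$ term perturbs the subgradient equation at every $(i,j)\in\mc{B}^c$. I would write the induced off-support subgradient as $\bar{\m{H}}_{\mc{B}^c\mc{B}}\bar{\m{H}}_{\mc{B}\mc{B}}^{-1}$ applied to the on-support residual, plus a $\rho_{2n}q_{ij}$ contribution; the first piece is bounded by $(1-\delta)\rho_{1n}$ using the incoherence Assumption~\ref{assu:incp:bin:pop}, while the incoherence propagation contributes the factor $(2-\delta)=1+(1-\delta)$ to the second piece, which is bounded by $\rho_{2n}\tau_3\|\bar{\g{\theta}}^o\|_\infty$ via $\Lambda_{\max}(\bar{\m{Q}})<\tau_3$ (Assumption~\ref{assu:beigq}). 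The constant $4(2-\delta)\tau_3\|\bar{\g{\theta}}^o\|_\infty$ in the hypothesis on $\rho_{2n}$ is exactly what keeps the total strictly below $\rho_{1n}$, so the off-support subgradient can be chosen strictly inside $(-1,1)$, certifying $\wh{\g{\theta}}^o_{\mc{B}^c}=\m{0}$. For the membership block, $\bar{\m{Q}}$ is feasible for $\max_{\m{Q}\in\mc{N}}\trace(\wh{\g{\Theta}}\m{Q})$ (again using $\g{\epsilon}=\g{0}$), and the spectral bound on $\bar{\m{Q}}$ from Assumption~\ref{assu:beigq} together with the concentration of $\wh{\g{\Theta}}$ yields, by a dual-certificate/polytope-perturbation argument, $\|\wh{\m{q}}^o-\bar{\m{q}}^o\|\lesssim \check{D}(\bar{\m{q}}^o)\rho_{2n}\sqrt{\Psi(p,H,K)}$, where $\Psi(p,H,K)=(p-H+1)((p-H+1)/K-1)$ counts the free within-community off-diagonal entries admissible under Assumption~\ref{assu:comm}. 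Taking the maximum of the two branches and a union bound over the $O(\exp(-\eta\rho_{1n}^2 n))$ failure events gives \eqref{eqn:finalb:bin}; finally, the $\beta$-min condition $\min_{(i,j)\in\mc{B}}\bar{\theta}^o_{ij}\ge 2\check{C}(\bar{\g{\theta}}^o)\sqrt q\,\rho_{1n}$ separates the nonzero entries from the estimation error, establishing the exact support recovery $\wh{\g{\theta}}^o_{\mc{B}^c}=\m{0}$.
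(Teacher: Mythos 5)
Your proposal is correct in substance and follows essentially the same route as the paper: a primal--dual witness construction adapted from \cite{ravikumar2010high}, in which one solves the restricted problem with $\g{\theta}^o_{\mc{B}^c}=0$, dominates the score by $\rho_{1n}$ (the paper's Lemma~\ref{lem:control_Wn}), controls the Taylor remainder (Lemma~\ref{lem:control_Rn}), and certifies strict dual feasibility off the support by splitting the subgradient into the incoherence piece bounded by $(1-\delta)\rho_{1n}$ plus the fairness perturbation bounded by $(2-\delta)\tau_3\|\bar{\g{\theta}}^o\|_\infty\rho_{2n}\le(\delta/4)\rho_{1n}$ --- exactly the computation in the paper's Lemma~\ref{prop:sample_version}, including your correct reading of where the factor $(2-\delta)$ comes from. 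Three deviations are worth noting. First, the paper establishes the estimation rate not by inverting the on-support stationarity condition but by a boundary argument (Lemma~\ref{lem:disc:ising}): after reparametrizing $\m{Q}=\m{N}\m{R}\m{N}^\top$ over the null space of $\m{A}_1$ (valid since $\g{\epsilon}=\g{0}$), it shows the joint objective strictly increases on the sphere of radius $\max\left(\check{C}\rho_{1n}\sqrt{q},\,\check{D}\rho_{2n}\sqrt{\Psi(p,H,K)}\right)$ around $(\bar{\g{\theta}}^o,\bar{\m{r}}^o)$, which yields both branches of \eqref{eqn:finalb:bin} simultaneously; your inversion argument recovers the $\sqrt{q}\,\rho_{1n}$ branch cleanly, but your treatment of the $\m{q}^o$ branch rests on an undeveloped ``dual-certificate/polytope-perturbation'' appeal, and this is precisely the step the joint disc argument makes rigorous (via the $I_3$ term). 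Second, you attribute $n>O(q^3\log p)$ to uniform control of the third-order logistic remainder; in the paper the remainder and restricted-eigenvalue steps need only $n>Cq^2\log p$ (Lemma~\ref{lem:control:eig:hess}), and the cubic actually arises in Lemma~\ref{prop:condition_consistency} when transferring the population incoherence \ref{assu:incp:bin:pop} to its sample analogue, with failure probability $\exp(-Cn/q^3+4\log p)$ --- a transfer your sketch relies on implicitly (the PDW runs on $\bar{\m{H}}^n$, not $\bar{\m{H}}$) but never states; the paper's two-step structure (prove the result under sample conditions, then show population conditions imply them with high probability) exists exactly to close this. Neither point breaks your blueprint, but a complete write-up would need the population-to-sample concentration step and a concrete argument for the $\m{q}^o$ rate.
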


Theorem \ref{thm:bin} gives sufficient conditions on the quadruple $(n, p,H, K)$ and the model parameters for the FBN to succeed in consistently estimating the neighborhood of every node in the graph  and communities simultaneously. In the case when $H=1$ (no fairness) and $K=p$ (no clustering), we recover the results of \cite{ravikumar2010high}.

\subsection{Consistency of Fair Community Detection in Graphical Models}\label{sec:cons:clus}

In this section, we establish that the proposed algorithm can recover the fair ground-truth community structure within a graph. Let \(\wh{\m{V}}\) and \(\bar{\m{V}}\) denote the orthonormal eigenvector matrices corresponding to the \(K\) largest eigenvalues of \(\wh{\m{Q}}\) and \(\bar{\m{Q}}\), respectively. According to \cite[Lemma 2.1]{lei2015consistency}, if two rows in the matrix \(\bar{\m{V}}\) are identical, their corresponding nodes belong to the same community. Therefore, our objective is to demonstrate that, after applying an orthogonal transformation, the rows of \(\wh{\m{V}}\) are close to those of \(\bar{\m{V}}\), enabling the use of K-means clustering on the rows of \(\wh{\m{V}}\) to identify community memberships. Specifically, we adopt a K-means clustering formulation inspired by \citep{lei2015consistency}, defined as:
\begin{equation}\label{eq:k-means} 
(\wh{\m{U}},\wh{\m{O}}) = \argmin_{\m{U},\m{O}} \|\m{U} \m{O} - \wh{\m{V}}\|_{\textnormal{F}}^2, ~~ \text{subject to}~~ \m{U} \in\mathbb{M}_{p,K}, ~~\m{O}\in \mb{R}^{K\times K},
\end{equation}
where \(\mathbb{M}_{p,K}\) is the set of \(p \times K\) matrices with rows containing a single \(1\) to indicate the fair community assignment of a node, while all other entries in the row are set to \(0\), as a node belongs to only one community.

Solving the optimization problem in \eqref{eq:k-means} to global optimality is known to be NP-hard \citep{aloise2009np}. However, polynomial-time algorithms \citep{kumar2004simple} exist to compute approximate solutions, providing a pair \((\wh{\m{U}},\wh{\m{O}}) \in \mathbb{M}_{p,K} \times \mb{R}^{K\times K}\) such that:
\begin{equation}\label{eq:k-means:relati} 
\|\wh{\m{U}} \wh{\m{O}} - \wh{\m{V}}\|_{\textnormal{F}}^2 \leq (1+\xi) \argmin_{(\m{U},\m{O}) \in \mathbb{M}_{p,K} \times \mb{R}^{K\times K} } \|\m{U} \m{O} - \wh{\m{V}}\|_{\textnormal{F}}^2,
\end{equation}
where \(\xi > 0\) is a small approximation factor.

Next, following the methodology of \citep[Theorem 3.1]{lei2015consistency}, we analyze the errors incurred during \((1+\xi)\)-approximate K-means clustering on the rows of \(\widehat{\m{V}}\) for estimating the community memberships. Let \(\mc{E}_k\) denote the set of nodes misclassified from the \(k\)-th community. Define \(\bar{\mc{C}} = \cup_{k\in[K]} ({\mc{C}_k}\backslash  {\mc{E}_k})\) as the set of nodes correctly classified across all communities, and let \(\bar{\m{V}}_{\bar{\mc{C}}}\) denote the submatrix of \(\bar{\m{V}}\) containing only the rows indexed by \(\bar{\mc{C}}\). The following theorem establishes bounds on the sizes of misclassified nodes for each community under specific conditions involving \(n\), \(p\), \(H\), and \(K\).

\begin{thm}\label{thm:consist:commu}
Let \(\wh{\m{U}}\) be the output of \((1+\xi)\)-approximate K-means clustering given in \eqref{eq:k-means:relati}. If 
\[
(2 + \xi) \Psi(p,H,K) \sqrt{\frac{K}{n}} < \pi
\]
for some constant \(\pi > 0\), then there exist subsets \(\mc{E}_k \subset \mc{C}_k\) for \(k=1,\ldots,K\), and a permutation matrix \(\g{\Phi}\) such that \(\wh{\m{V}}_{\bar{\mc{C}}} \g{\Phi}=\bar{\m{V}}_{\bar{\mc{C}}}\), and
\[
\sum_{k=1}^K \frac{|\mc{E}_k|}{|\mc{C}_k|} \leq \frac{(2 + \xi) \Psi(p,H,K) \sqrt{\frac{K}{n}}}{\pi}
\]
with probability tending to 1.
\end{thm}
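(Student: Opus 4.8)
The plan is to follow the spectral-clustering consistency program of \cite{lei2015consistency}, adapted to the perturbed membership matrix $\wh{\m{Q}}$ produced by our estimators. I would proceed in three stages: (i) convert the Frobenius estimation error on $\wh{\m{Q}}$ into an error on its $K$ leading eigenvectors via a Davis--Kahan argument, (ii) translate that eigenvector error into a misclassification bound for $(1+\xi)$-approximate K-means using the block structure of $\bar{\m{Q}}$, and (iii) invoke the high-probability estimation guarantees of Theorems~\ref{thm:three:fglasso} and~\ref{thm:bin} to obtain the ``with probability tending to $1$'' conclusion.

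First I would record the spectral structure of $\bar{\m{Q}}$. Since $\bar{q}_{ij}=1/|\mc{C}_k|$ for $i,j$ in the same community and $0$ otherwise, after the (WLOG identity) permutation $\bar{\m{Q}}$ is block diagonal with blocks $|\mc{C}_k|^{-1}\m{1}\m{1}^\top$; hence $\bar{\m{Q}}$ has exactly $K$ nonzero eigenvalues, all equal to $1$, and $\Lambda_{K+1}(\bar{\m{Q}})=0$, so the eigengap is bounded below by a constant and Assumption~\ref{assu:beigq} keeps the spectrum controlled. The rows of $\bar{\m{V}}$ take only $K$ distinct values: a node in $\mc{C}_k$ has row $|\mc{C}_k|^{-1/2}\m{e}_k$ up to the global rotation inherent in the eigenvector choice, so rows from communities $k\ne l$ are separated by $(|\mc{C}_k|^{-1}+|\mc{C}_l|^{-1})^{1/2}$. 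Applying the Davis--Kahan $\sin\Theta$ theorem in its Frobenius form (the Yu--Wang--Samworth variant), there is an orthogonal $\wh{\m{O}}$ with $\|\wh{\m{V}}\wh{\m{O}}-\bar{\m{V}}\|_{\textnormal{F}}\le c\,\|\wh{\m{Q}}-\bar{\m{Q}}\|_{\textnormal{F}}/(\Lambda_K(\bar{\m{Q}})-\Lambda_{K+1}(\bar{\m{Q}}))$, and by the estimation consistency of $\wh{\m{q}}^o$ the numerator is of order $\rho_{2n}\sqrt{\Psi(p,H,K)}$, which I would fold into the quantity $\phi(p,H,K)$.

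Second I would carry out the K-means geometry argument. Writing $\bar{\m{V}}=\bar{\m{U}}\m{X}$ with $\bar{\m{U}}\in\mathbb{M}_{p,K}$ the true indicator matrix and $\m{X}$ the $K$ distinct centroid rows, the $(1+\xi)$-approximation property~\eqref{eq:k-means:relati} together with the triangle inequality shows that any node whose estimated centroid is closer to a wrong community centroid must incur an eigenvector error at least half the corresponding row separation. Charging these contributions against $\|\wh{\m{V}}\wh{\m{O}}-\bar{\m{V}}\|_{\textnormal{F}}^2$ and normalizing by the per-community separations $(|\mc{C}_k|^{-1}+|\mc{C}_l|^{-1})$ yields a misclassification bound whose right-hand side, after inserting the Davis--Kahan estimate and the rate $\rho_{2n}\sqrt{\Psi(p,H,K)}$, collapses to $\pi^{-1}(2+\xi)\phi(p,H,K)\sqrt{K/n}$ under the stated smallness condition $(2+\xi)\phi(p,H,K)\sqrt{K/n}<\pi$; the factor $\sqrt{K}$ arises because the Frobenius-to-row conversion distributes the error across the $K$ centroids. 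On the complement set $\bar{\mc{C}}=\cup_k(\mc{C}_k\setminus\mc{E}_k)$ of correctly labelled nodes, the recovered membership rows coincide with the true ones up to a global relabelling, which is exactly the permutation $\g{\Phi}$ giving the stated identity $\wh{\m{V}}_{\bar{\mc{C}}}\g{\Phi}=\bar{\m{V}}_{\bar{\mc{C}}}$.

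Finally, the ``probability tending to $1$'' statement follows by substituting the event of Theorems~\ref{thm:three:fglasso} and~\ref{thm:bin}, on which $\|\wh{\m{q}}^o-\bar{\m{q}}^o\|\le D(\bar{\m{q}}^o)\rho_{2n}\sqrt{\Psi(p,H,K)}$ holds with probability at least $1-O(\exp(-\eta\log p))$, into the deterministic chain above. The main obstacle I anticipate is step two: making the K-means misclassification bound uniform over communities of possibly very different sizes (which our formulation explicitly permits) while preventing the eigengap and the smallest community size from degrading the rate, so that every factor collapses into the single clean quantity $\phi(p,H,K)\sqrt{K/n}$. Ensuring that the Davis--Kahan bound interacts correctly with the non-standard perturbation $\wh{\m{Q}}-\bar{\m{Q}}$, whose error is measured through the off-diagonal vector norm $\|\wh{\m{q}}^o-\bar{\m{q}}^o\|$ with the diagonal fixed at $1$, is the delicate bookkeeping that must be handled with care.
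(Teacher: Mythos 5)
Your proposal is correct and follows essentially the same route as the paper's proof: an orthogonal-invariance reduction plus the Davis--Kahan $\sin\Theta$ theorem in the form of \cite{yu2015useful} to bound $\|\wh{\m{V}}-\bar{\m{V}}\m{O}\|_{\textnormal{F}}$ via the estimation error of $\wh{\m{Q}}$ (controlled on the high-probability event of Theorems~\ref{thm:three:fglasso} and~\ref{thm:bin}), followed by the $(1+\xi)$-approximate K-means misclassification machinery of \cite{lei2015consistency}. The only cosmetic difference is that you sketch a re-derivation of the centroid-separation argument, whereas the paper simply invokes \citep[Lemma 5.3]{lei2015consistency} to convert the eigenvector bound into the stated ratio bound $\sum_{k=1}^K|\mc{E}_k|/|\mc{C}_k|\le \pi^{-1}(2+\xi)\phi(p,H,K)\sqrt{K/n}$.
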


\section{Simulation Study}\label{sec:synth}
\subsection{Tuning Parameter Selection}\label{Sec:tuning parameter}

We consider a \emph{Bayesian information criterion}~(BIC)-type quantity for tuning parameter selection in \eqref{loss:fair}. Recall from Section~\ref{sec:model} that objective function~\eqref{loss:fair} decomposes the parameter of interest into $(\g{\Theta}, \m{Q})$ and places $\ell_1$ and trace penalties on $\g{\Theta}$ and $\m{Q}$, respectively. Specifically, for the graphical Lasso, i.e., problem in~\eqref{loss:fair} with $\rho_2 =0$, \cite{yuan2006model} proposed to select the tuning parameter $\rho_1$  such that $\hat{\g{\Theta}}$ minimizes the following quantity:
\[
n \left(-\log \det (\hat{\g{\Theta}}) +  \text{trace}(\g{S}\hat{\g{\Theta}})\right) + \log (n) \cdot |\hat{\g{\Theta}}|.
\]
Here, $|\hat{\g{\Theta}}|$ is the cardinality of $\hat{\g{\Theta}}$, i.e., the number of unique non-zeros in $\hat{\g{\Theta}}$. 
Note that $\rho_1$ controls the sparsity of the inferred graph and that the dependency of the criterion comes through the estimator $\hat{\g{\Theta}}$.
 
Using a similar idea, we consider minimizing the following BIC-type criteria for selecting the set of tuning parameters $(\rho_1, \rho_2)$ for \eqref{loss:fair}:
\begin{align}\label{eqn:biq}
\nonumber 
\mathrm{BIC} (\hat{\g{\Theta}},\hat{\m{Q}}) := \sum_{k=1}^K & n_k\left(-\log |\hat{\g{\Theta}}_k^2| +\text{trace}\left(\left(\m{S}_k+ c\hat{\m{Q}}_k\right)\hat{\g{\Theta}}_k^2\right) \right)\\
&+ \log (n_k) \cdot |\hat{\g{\Theta}}_k|,    
\end{align}
where $\hat{\g{\Theta}}_{k}$ is the $k$-th estimated inverse covariance matrix.
   
\noindent The constant $c$ controls the balance between graph estimation and fair community detection. A small $c$ emphasizes  generic graphical model estimation, while a larger $c$   enhances the influence of fair community detection. Since the optimal value of $c$ may depend on the scale of the sample covariance matrix, particularly when the data are not centered or scaled, we employ a grid search strategy over $c \in \{0.1,0.25,0.45,0.65,0.85\}$ jointly with $(\rho_1,\rho_2)$ in the simulation study. In the empirical real-data experiments below, results are reported with $c=0.25$ to ensure consistency and comparability.

Throughout, other parameters of the algorithm are set to $\gamma=1e-2$ and $\nu=1e-4$.

We define several measures of performance that will be used to numerically compare the various methods. To assess the clustering performance, we compute the clustering error (CE) criteria and Ratio Cut (RCut)~\citep{von2007tutorial}. CE calculates the distance between an estimated community assignment $\hat{z}_i$ and the true assignment $z_i$ of the $i$th node:
\begin{equation*}
\textnormal{CE}:=\frac{1}{p}\left\vert\left\{(i,j): \mathbf{1}(\hat{z}_{i}=\hat{z}_j)\neq \mathbf{1}(z_{i}=z_j),i<j \right\}\right\vert.
\end{equation*}
For a clustering $\mc{V}=\mc{C}_1 {\cup}\ldots {\cup}\mc{C}_K$, we have
\begin{align}\label{def_ratio_cut}
\text{RCut}:=\sum_{k=1}^K \frac{\text{Cut}(\mc{C}_k,\mc{V}\setminus \mc{C}_k)}{|\mc{C}_k|}, ~~~\text{Cut}(\mc{C}_k,\mc{V}\setminus \mc{C}_k):=\sum_{i\in \mc{C}_k, j\in \mc{V}\setminus \mc{C}_k}\theta_{ij}.
\end{align}
The F1 score for community detection measures the harmonic mean of precision and recall for identifying true community memberships:
\begin{subequations}\label{metric:f1}
\begin{align}
\textnormal{Precision} &:= \frac{\sum_{i<j} \mathbf{1}(\hat{z}_i = \hat{z}_j \text{ and } z_i = z_j)}{\sum_{i<j} \mathbf{1}(\hat{z}_i = \hat{z}_j)}, \\[0.5em]
\textnormal{Recall} &:= \frac{\sum_{i<j} \mathbf{1}(\hat{z}_i = \hat{z}_j \text{ and } z_i = z_j)}{\sum_{i<j} \mathbf{1}(z_i = z_j)}, \\[0.5em]
\textnormal{F1} &:= \frac{2 \cdot \textnormal{Precision} \cdot \textnormal{Recall}}{\textnormal{Precision} + \textnormal{Recall}}.
\end{align}
\end{subequations}

To measure the  graph and parameter estimation quality, we calculate the proportion of correctly estimated edges (PCEE) \cite{tarzanagh2018estimation} and sum of squared error (SSE):
\begin{align}\label{metric:PCEE}
\textnormal{PCEE} &:= \frac{\sum_{j'< j} \mathbf{1}_{\{ |\hat{\theta}_{jj'}| > 10^{-5} \text{ and } |\theta_{jj'}| \ne 0 \}}}{\sum_{j' <j} \mathbf{1}_{\{ |\theta_{jj'}| \ne 0 \}}}, \\
\label{metric:SSE}
\textnormal{SSE} &:=   \sum_{j' < j} (\hat{\theta}_{jj'} - \theta_{jj'})^2.
\end{align}
Finally, we use \textit{balance} as a fairness metric to reflect the distribution of fair clustering \cite{chierichetti2017fair}. Let $\mc{N}_i = \{j : r_{ij} = 1\}$ be the set of neighbors of node $i$ in $\m{R}$. For a set of communities $\{\mc{C}_k\}_{k=1}^K$, the balance coefficient is defined as
\begin{equation}\label{eqn:balance}
\textnormal{Balance}:=\frac{1}{p} \sum_{i = 1}^p \tau_i  ~~~\textnormal{where}~~~\tau_i=\min_{ k, \ell \in [K]} \frac{|\mc{C}_{k}\cap \mc{N}_i|}{|\mc{C}_{\ell} \cap \mc{N}_i |}.
\end{equation}
Note that $0\leq \tau_i \leq 1$, and a large $\tau_i$ indicates that node $i$ has adequate representation in all communities. In particular, balance is used to quantify how well the selected edges can eliminate discrimination---the selected edges are considered fairer if they can lead to a balanced community structure that preserves proportions of protected attributes.

\subsection{Data Generation}\label{GGM:datagenerate}
In order to demonstrate the performance of the proposed algorithms, we create several synthetic datasets based on a special random graph with community and group structures. Then the baseline and proposed algorithms are used to recover graphs (i.e., graph-based models) from the artificially generated data. To create a
dataset, we first construct a graph, then its {  associated precision matrix, $\g{\Theta}$}, is used to generate independent data samples from the distribution $N(0, \g{\Theta}^{\dagger})$ where $\dagger$ denotes pseudoinverse. A graph (i.e., $ \g{\Theta}$) is constructed in two steps.

\textbf{In the first step}, we determine the graph structure based on the random modular graph also known as stochastic block model (SBM)~\cite{holland1983stochastic,lei2015consistency}.  The stochastic block model~\cite{holland1983stochastic} is a generative model for random graphs with planted blocks (ground-truth clustering). It is widely used to generate synthetic networks containing communities, subsets of nodes characterized by being connected with one another with particular edge densities~\cite{lei2015consistency}. In an SBM, each of $p$ vertices is assigned to one of $K$ blocks/clusters to prescribe a clustering, and edges are placed between vertex pairs with probabilities dependent only on the block membership of the vertices. SBM takes
the following parameters: 
\begin{itemize}
\item   The number $p$ of vertices;
\item A partition of the vertex set $\mc{V}=\{1, \ldots, p\}$ into communities  $\mc{C}_1,\ldots, \mc{C}_K$; and
\item A symmetric matrix $\mb{P} \in \mb{R}^{p \times p}$ of edge probabilities.  
\end{itemize}
The edge set is then sampled at random as follows: any two vertices $s \in \mc{C}_i$ and $ u \in \mc{C}_j$ are connected by an edge with probability $\mb{P}_{ij}$. More precisely, the SBM takes, as input, a function $\pi_c: [p] \rightarrow [K]$ that assigns each vertex $i \in \mc{V}$ to one of the $K$ clusters. Then, independently, for all node pairs $(i, j)$ such that $i > j$, $\mb{P}(a_{ij} = 1) = b_{\pi_c(i) \pi_c(j)}$, where $\m{B} \in [0, 1]^{K \times K}$ is a symmetric matrix. Each $b_{k\ell}$ specifies the probability of a connection between two nodes that belong to clusters $\mc{C}_k$ and $\mc{C}_\ell$, respectively. A commonly used variant of SBM assumes $b_{kk} = \xi_2$ and $b_{k\ell} = \xi_1$ for all $k, \ell \in [K]$ such that $k \neq \ell$: 
\begin{equation*}%\label{eq:stan:sbm_specification}
        \mb{P}(a_{ij} = 1) = \begin{cases}
          \zeta_2 & \text{if } \pi_c(i) = \pi_c(j), \\
          \zeta_1 & \text{if } \pi_c(i) \neq \pi_c(j).
        \end{cases}
\end{equation*}

To take protected groups into account, we use a modified SBM \cite{kleindessner2019guarantees}. Let $\pi_d: [p] \rightarrow [H]$ be a function that assigns each vertex $i \in \mc{V}$ to one of the $H$ protected groups. We consider a variant of SBM with the following probabilities: 
    \begin{equation}\label{eq:sbm_specification}
        \mb{P}(a_{ij} = 1) = \begin{cases}
          \zeta_4 & \text{if } \pi_c(i) = \pi_c(j)~\text{ and }~  \pi_d(i) = \pi_d(j) , \\
          \zeta_3 & \text{if } \pi_c(i) \neq \pi_c(j)~\text{ and }~\pi_d(i) = \pi_d(j), \\
          \zeta_2 & \text{if } \pi_c(i) = \pi_c(j)~\text{ and }~\pi_d(i) \neq \pi_d(j),  \\
          \zeta_1 & \text{if } \pi_c(i) \neq \pi_c(j)~\text{ and }~\pi_d(i) \neq \pi_d(j).
        \end{cases}
      \end{equation}
Here, $1 \geq \zeta_{i+1} \geq \zeta_i \geq 0$ are probabilities used for sampling edges. In our implementation, we set  $\zeta_{i} = 0.1i$ for all $i=1, \ldots, 4$. We note that when vertices $i$ and $j$ belong to the same community, they have a higher probability of connection between them for a fixed value of $\pi_d$; see,  \cite{kleindessner2019guarantees} for further discussions.

\textbf{In the second step}, the graph weights (i.e., node and edge weights) are randomly selected based on a uniform distribution from the interval $[0.1, 3]$ and the { associated  precision matrix $\g{\Theta}$} is constructed. Finally, given the graph precision matrix $\g{\Theta}$, we generate the data matrix $\m{Y}$  according to $\m{y}_1,\ldots,\m{y}_n \stackrel{\mathrm{i.i.d.}} \sim N(\g{0}, \g{\Theta}^{\dagger})$. %Then, variables are standardized to have standard deviation one.

\begin{exm}\label{exam:fair:sbm}
Let $p = 10$,  $H = 2$,  $\mc{D}_1=\{1,6,7,8,9\}$, and $\mc{D}_2=\{2,3,4,5,10\}$. This gives the group membership matrix as follows:
%We have the following group membership matrix and a fair ground-truth clustering as $\m{R}$ and $\m{Q}_1$, $\m{Q}_2$ respectively.
\begin{align*}
\m{R}= \begin{bmatrix}
1 & 0 & 0 & 0 & 0 & 1 & 1 & 1&1&0 \\ 
0 & 1 & 1 & 1 & 1 & 0 & 0 & 0 &0&1 \\ 
0 & 1 & 1 & 1 & 1 & 0 & 0 & 0 &0&1\\  
0 & 1 & 1 & 1 & 1 & 0 & 0 & 0 &0&1\\ 
0 & 1 & 1 & 1 & 1 & 0 & 0 & 0 &0&1\\ 
1 & 0 & 0 & 0 & 0 & 1 & 1 & 1 &1&0\\  
1 & 0 & 0 & 0 & 0 & 1 & 1 & 1 &1&0\\
1 & 0 & 0 & 0 & 0 & 1 & 1 & 1 & 1 & 0 \\
1 & 0 & 0& 0 & 0 & 1 & 1 & 1 & 1 & 0\\
0 & 1 & 1 & 1 & 1 & 0 & 0 & 0 & 0 & 1
    \end{bmatrix} 
    .
\end{align*}
Set $K= 3$. Define $\mc{C}_1'=\{1,2,3,4\}$, $\mc{C}_2'=\{5,6,7,8\}$, $\mc{C}_3'=\{9,10\}$,   $\mc{C}_1=\{1,3,4,6\}$, $\mc{C}_2=\{2,5,7,8\}$, $\mc{C}_3'=\{9,10\}$.     $\m{Q}'$ and   $\m{Q}$  provide the membership matrices associated with clusterings $\mc{C}_1' \cup  \mc{C}_2' \cup \mc{C}_3'$ and $\mc{C}_1 \cup  \mc{C}_2 \cup \mc{C}_3$, respectively, as follow:
\begin{align*}
    \m{Q}'&= \begin{bmatrix}
\frac{1}{4} & \frac{1}{4} & \frac{1}{4} & \frac{1}{4} & 0 & 0 & 0 & 0 &0&0\\
\frac{1}{4} & \frac{1}{4} & \frac{1}{4} & \frac{1}{4} & 0 & 0 & 0 & 0 &0&0\\
\frac{1}{4} & \frac{1}{4} & \frac{1}{4} & \frac{1}{4} & 0 & 0 & 0 & 0 &0&0 \\
\frac{1}{4} & \frac{1}{4} & \frac{1}{4} & \frac{1}{4} & 0 & 0 & 0 & 0 &0&0\\
0 & 0 & 0 & 0 & \frac{1}{4} & \frac{1}{4} & \frac{1}{4} & \frac{1}{4} &0&0 \\
0 & 0 & 0 & 0 & \frac{1}{4} & \frac{1}{4} & \frac{1}{4} & \frac{1}{4} &0&0\\
0 & 0 & 0 & 0 & \frac{1}{4} & \frac{1}{4} & \frac{1}{4} & \frac{1}{4} &0&0\\
0 & 0 & 0 & 0 & \frac{1}{4} & \frac{1}{4}& \frac{1}{4} & \frac{1}{4} &0&0\\
0 & 0 & 0 & 0 & 0 & 0& 0 & 0 &\frac{1}{2}&\frac{1}{2}\\
0 & 0 & 0 & 0 & 0 & 0& 0 & 0 &\frac{1}{2}&\frac{1}{2}\\
%0 & 0 & 1 & 1 & 0 & 0 & 1 & 1 & 0 & 1
    \end{bmatrix},
    \\
~~~~\m{Q}&= \begin{bmatrix}
\frac{1}{4} & 0 & \frac{1}{4} & \frac{1}{4} & 0 & \frac{1}{4} & 0 & 0 & 0 & 0 \\
0 & \frac{1}{4} & 0 & 0 & \frac{1}{4} & 0 & \frac{1}{4} & 1 & 0 & 0 \\
\frac{1}{4} & 0 & \frac{1}{4} & \frac{1}{4} & 0 & \frac{1}{4} & 0 & 0 & 0 & 0 \\
\frac{1}{4} & 0 & \frac{1}{4} & \frac{1}{4} & 0 & \frac{1}{4} & 0 & 0 & 0 & 0 \\
0 & \frac{1}{4} & 0 & 0 & \frac{1}{4} & 0 & \frac{1}{4} & \frac{1}{4} & 0 & 0 \\
\frac{1}{4} & 0 & \frac{1}{4} & \frac{1}{4} & 0 & \frac{1}{4} & 0 & 0 & 0 & 0 \\
0 & \frac{1}{4} & 0 & 0 & \frac{1}{4} & 0 & \frac{1}{4} & \frac{1}{4} & 0 & 0 \\
0 & \frac{1}{4} & 0 & 0 & \frac{1}{4} & 0 & \frac{1}{4} & \frac{1}{4} & 0 & 0 \\
0 & 0 & 0 & 0 & 0 & 0 & 0 & 0 & \frac{1}{2} & \frac{1}{2} \\
0 & 0 & 0 & 0 & 0 & 0 & 0 & 0 & \frac{1}{2} & \frac{1}{2}
    \end{bmatrix}.
     \qquad  \qquad  \qquad \qquad 
\end{align*}
For each $ h \in \{1,2\}$, we get:
\begin{align*}
 \frac{|\mc{D}_h\cap \mc{C}_k'|}{|\mc{C}_k'|} &\neq \frac{|\mc{D}_h|}{p} =\frac{1}{2}~~ \text{for some}~~k \in \{1, 2,3\}~~~\text{and}~~~\m{R}(\m{I} - \m{J}_p / p)\m{Q}' \neq \m{0}, \\
 \frac{|\mc{D}_h\cap \mc{C}_k|}{|\mc{C}_k|}&=  \frac{|\mc{D}_h|}{p}= \frac{1}{2}~~ \text{for all}~~k \in \{1, 2,3\}~~~\text{and}~~~\m{R}(\m{I} - \m{J}_p / p)\m{Q} = \m{0}.
\end{align*}
Hence, $\m{Q}$ is a fair partition matrix (clustering). %Define  $\zeta_{i} = 0.1i$ for all $i=1, \ldots, 4$.  
Figures~\ref{fig:SBM-example} (left) and (right) show the graphical models generated with probabilities \eqref{eq:sbm_specification} and using the membership matrices $\m{Q}'$ and $\m{Q}$, respectively.
\end{exm}

\subsection{Comparison to Community Detection Methods in the Known Graph Setting }\label{sec:algs:comm:set}
In this section, we consider the graph structure to be known. Specifically, we assume that the graph matrix $\g{\Theta}$ is constructed following the procedure outlined in the first step of data generation, as detailed in Section \ref{GGM:datagenerate}.

To evaluate our methods, we compare them against the following baselines in the context of convex community detection:
\begin{enumerate}[label=\hspace{1cm}CD-\Roman*., wide, labelindent=-26pt, itemsep=0pt]
 \item \label{cd:1} Two-stage approach for which we (ii) apply a community detection approach \cite{amini2018semidefinite} to compute partition matrix $\hat{\m{Q}}$, and (ii) employ a K-means clustering to obtain clusters.
 \item \label{cd:2} Two-stage approach for which we (ii) apply a community detection approach \cite{cai2015robust} to compute partition matrix $\hat{\m{Q}}$, and (ii) employ a K-means clustering to obtain clusters.
 \end{enumerate}
FCD. Two-stage approach for which we (ii) apply the fair community detection approach in \eqref{eqn:obj:know:q} to compute partition matrix $\hat{\m{Q}}$, and (ii) employ a K-means clustering to obtain clusters.
 
\begin{figure}[t]
    \begin{center} 
    \includegraphics[width=0.49\textwidth]{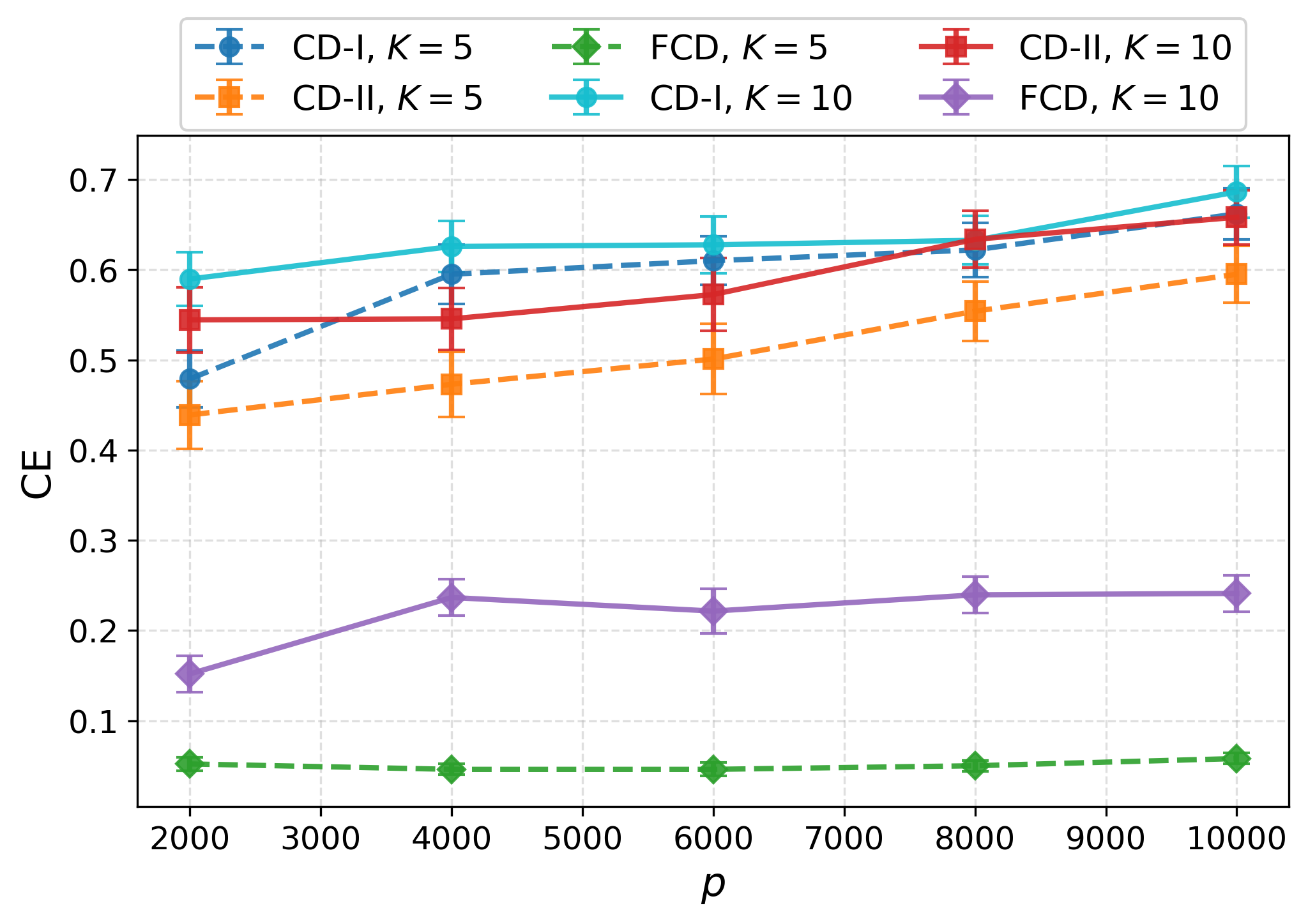}
    \includegraphics[width=0.49\textwidth]{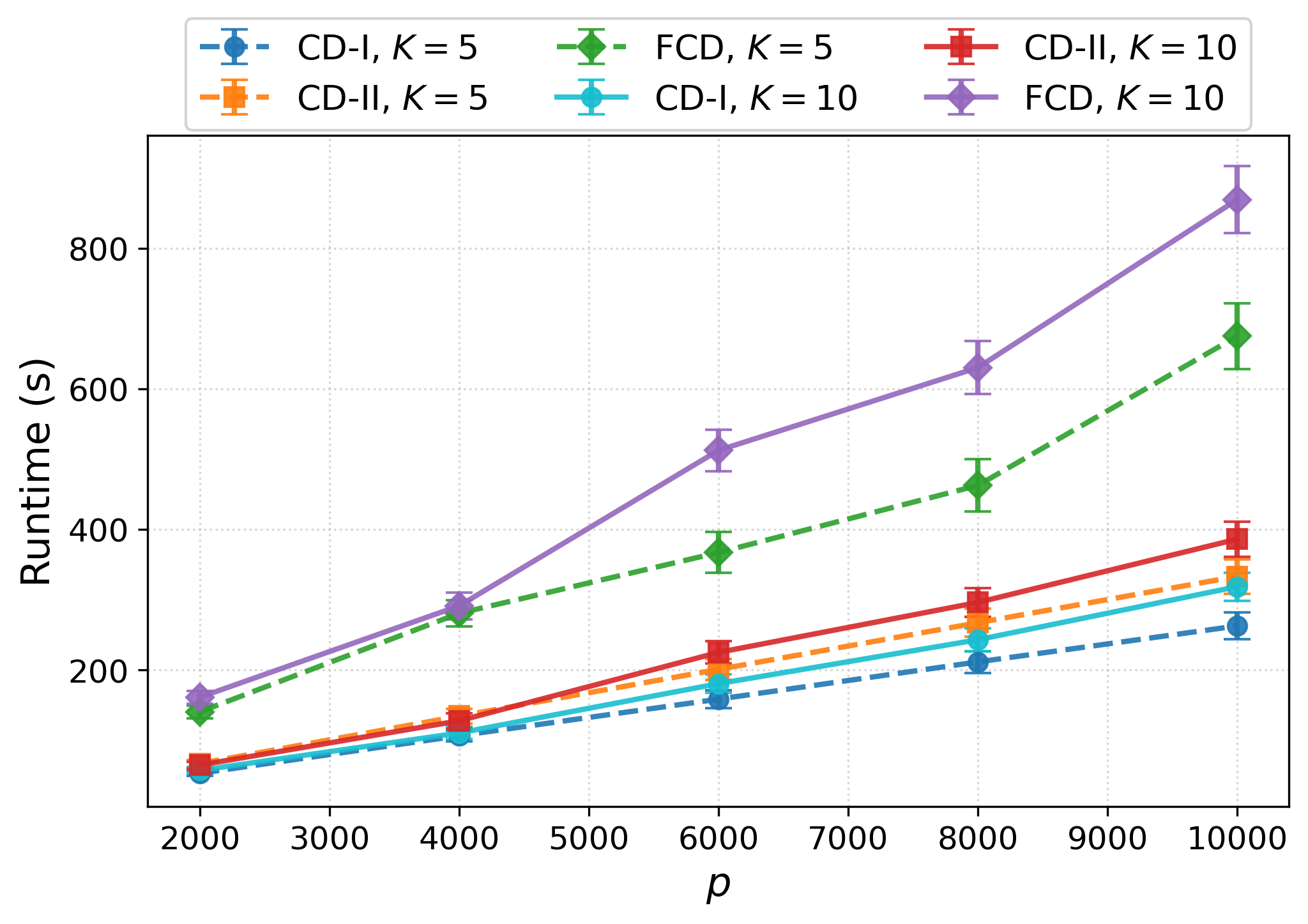} 
    \end{center} 
\caption{CE and runtime (in seconds) of CD-I~\cite{amini2018semidefinite}, CD-II~\cite{cai2015robust}, and the proposed FCD on a stochastic block model (SBM) with $H = 5$ and $K \in \{5, 10\}$.}
\label{fig:SBM-cd-example}
\end{figure}

Figure \ref{fig:SBM-cd-example} presents the CE and runtime (in seconds) obtained by repeating the procedure 100 times for the CD-I \cite{amini2018semidefinite}, CD-II \cite{cai2015robust}, and the proposed FCD methods on an SBM with $H = 5$ and $K \in \{5, 10\}$. We observe that the FCD method demonstrates significantly lower clustering error compared to both CD-I and CD-II. Furthermore, CD-I and CD-II yield comparable results in terms of clustering error, with CD-II showing marginally better performance. Regarding computational efficiency, the runtime of all three methods scales linearly with $p$. Notably, despite incorporating an additional fairness constraint, the proposed FCD maintains comparable runtime to CD-I and CD-II for smaller values of $p$, demonstrating that the fairness constraint does not significantly compromise computational efficiency.

\subsection{Comparison to Graphical Lasso and Neighbourhood Selection Methods in the unknown Graph Setting }\label{sec:algs:set}
We consider four setups for comparing our methods with community-based graphical models (GM):
\begin{enumerate}[label=\Roman*.] 
 \item \label{eqn:3sta} Three-stage approach for which we (i) use a GM to estimate precision matrix $\hat{\g{\Theta}}$, (ii) apply a community detection approach \cite{cai2015robust} to compute partition matrix $\hat{\m{Q}}$, and (iii) employ a K-means clustering to obtain clusters.
  \item  \label{eqn:2sta} Two-stage approach for which we (i) use \eqref{eqn:loss:relaxfair} without fairness constraint to simultaneously estimate precision and partition matrices and (ii) employ a K-means clustering to obtain clusters.
 \end{enumerate}
 \begin{enumerate}[label=F\Roman*.]  
  \item \label{eqn:3staf} Three-stage approach for which we (i) use a GM to estimate precision matrix $\hat{\g{\Theta}}$, (ii) apply a community detection approach~\cite{cai2015robust}  to compute partition matrix $\hat{\m{Q}}$, and (iii) employ a fair K-means clustering~\cite{chierichetti2017fair} to obtain clusters.
 \item  \label{eqn:2staf} Two-stage approach for which we (i) use \eqref{eqn:loss:relaxfair} to simultaneously estimate precision and partition matrices and (ii) employ the K-means clustering to obtain clusters.
\end{enumerate}
The main goal of Setups~\ref{eqn:3sta} and \ref{eqn:2sta} is to compare the community detection errors without fairness constraints under different settings of $L$ and $G$ functions. 

We consider three type of GMs in Setups~\ref{eqn:3sta}--\ref{eqn:2staf}:
\begin{enumerate}[label=\Alph*.]
\item \label{ref:t:glasso} A graphical Lasso-type method \cite{Friedman07} implemented using  the objective with 
\begin{align*}
L( \g{\Theta};\m{Y}) =n/2[-\log \det (\g{\Theta}) + \trace(\m{S}\g{\Theta})] \quad  \textnormal{and} \quad G(\g{\Theta})=\g{\Theta}.     
\end{align*}
\item \label{ref:t:concord} A neighborhood selection-type method \cite{khare2015convex} implemented using $L (\g{\Theta}; \m{Y}) = n/2[-\log |\text{diag}(\g{\Theta}^2) | + \text{trace}(\m{S}\g{\Theta}^2)]$ and $G(\g{\Theta})=\g{\Theta}^2$. 
\item \label{ref:t:hbn} A neighborhood selection-type method \cite{ravikumar2010high} implemented using $L (\g{\Theta}; \m{Y}) =-\sum_{j=1}^{p}\sum_{j'=1}^{p} \theta_{jj'} s_{jj'}+ 1/n\sum_{i=1}^{n}\sum_{j=1}^{p} \log \big(1 + \exp (\theta_{jj}+ \sum_{j'\neq j}\theta_{jj'}y_{ij'})\big)$ and  $G(\g{\Theta})=\g{\Theta}$. 
\end{enumerate}
%
%Setups~\ref{eqn:3sta} (\ref{eqn:2sta}) for which \ref{ref:t:glasso}- \ref{ref:t:hbn} are used for learning $\hat{\g{\Theta}}$ are called GLASSO--I (II), CONCORD-I (II), and BLASSO--I (II), respectively. Similarly Setups~\ref{eqn:3staf} ( \ref{eqn:2staf}) for which \ref{ref:t:glasso}- \ref{ref:t:hbn} are used for learning $\hat{\g{\Theta}}$ and $\hat{\m{Q}}$ are called FGLASSO--I (II), FCONCORD-I (II), and  FBLASSO--I (II). 

In tables below, the method column contains labels of the form ``GM-Type-Setup'' to refer to the GM type and the setup above. For example, GM-\ref{ref:t:glasso}\ref{eqn:2sta} refers to a graphical Lasso used in the first step of the two stage approach in \ref{eqn:2sta}
It is worth mentioning that GM-\ref{ref:t:glasso}\ref{eqn:2sta} and GM-\ref{ref:t:concord}\ref{eqn:3staf} can be seen as variants of the cluster-based GLASSO \cite{pircalabelu2020community,kumar2020unified,hosseini2016learning} and fair K-means applied to spectral clustering~\cite{kleindessner2019guarantees}, respectively.  Note that unlike \cite{kleindessner2019guarantees}, which assume that the graph structure and the number of communities are given in advance, GM-\ref{ref:t:concord}\ref{eqn:3staf} learns fair community structure while estimating heterogeneous GMs.

%%%%%%%%%%%%%%%%%%%%%%%%%%%%%%
\begin{table}[!ht]
\renewcommand{\arraystretch}{1}
\centering
\resizebox{\textwidth}{!}{
\begin{tabular}{||l|l|c|c|c|c||}
\hline
\textbf{\# Sample}  & \textbf{Method} & \textbf{CE} & \textbf{PCEE} & \textbf{F1} &  \textbf{SSE} \\
\hline\hline
$n=300$ 
& GM-\ref{ref:t:concord}\ref{eqn:3sta}   & 0.458(0.006) & 0.771(0.004) & 0.42(0.02) & 15.4(0.3) \\
& GM-\ref{ref:t:concord}\ref{eqn:2sta}   & 0.421(0.005) & 0.789(0.004) & 0.47(0.02) & 14.1(0.3) \\
& GM-\ref{ref:t:concord}\ref{eqn:3staf}  & 0.193(0.003) & 0.771(0.004) & 0.79(0.02) & 10.3(0.2) \\
& GM-\ref{ref:t:concord}\ref{eqn:2staf}\,(\textbf{FCONCORD}) & \textbf{0.057(0.003)} & \textbf{0.833(0.004)} & \textbf{0.93(0.01)} & \textbf{9.1(0.2)} \\
\hline\hline
$n=450$ 
& GM-\ref{ref:t:concord}\ref{eqn:3sta}   & 0.418(0.006) & 0.811(0.004) & 0.48(0.02) & 13.2(0.3) \\
& GM-\ref{ref:t:concord}\ref{eqn:2sta}   & 0.413(0.005) & 0.861(0.004) & 0.49(0.02) & 12.5(0.3) \\
& GM-\ref{ref:t:concord}\ref{eqn:3staf}  & 0.159(0.006) & 0.811(0.004) & 0.82(0.02) & 9.2(0.2) \\
& GM-\ref{ref:t:concord}\ref{eqn:2staf}\,(\textbf{FCONCORD}) & \textbf{0.009(0.003)} & \textbf{0.889(0.005)} & \textbf{0.98(0.01)} & \textbf{8.1(0.1)} \\
\hline\hline
$n=1000$ 
& GM-\ref{ref:t:concord}\ref{eqn:3sta}   & 0.281(0.004) & 0.852(0.004) & 0.63(0.02) & 10.1(0.2) \\
& GM-\ref{ref:t:concord}\ref{eqn:2sta}   & 0.258(0.005) & 0.869(0.004) & 0.67(0.02) & 8.8(0.2) \\
& GM-\ref{ref:t:concord}\ref{eqn:3staf}  & 0.118(0.004) & 0.885(0.004) & 0.87(0.01) & 7.1(0.1) \\
& GM-\ref{ref:t:concord}\ref{eqn:2staf}\,(\textbf{FCONCORD}) & \textbf{0.039(0.003)} & \textbf{0.902(0.004)} & \textbf{0.96(0.01)} & \textbf{6.2(0.1)} \\
\hline
\end{tabular}
}
\caption{Simulation results of neighborhood selection-type GMs on SBM network. The proposed FCONCORD outperforms the other methods.  The results are for $p=600$, $H=3$, and $K=2$ averaged over 100 repetitions.} 
\label{tab:conti:1}
\end{table}

%%%%%%%%%%%%%%%%%%%%%%%%%%%%%
\begin{table}[!ht]
\renewcommand{\arraystretch}{1}
\centering
\resizebox{\textwidth}{!}{
\begin{tabular}{||l|l|c|c|c|c||}
\hline
\textbf{\# Sample} & \textbf{Method} & \textbf{CE} & \textbf{PCEE} & \textbf{F1} &   \textbf{SSE}\\
\hline\hline
$n=300$ 
& GM-\ref{ref:t:glasso}\ref{eqn:3sta}   & 0.426(0.006) & 0.729(0.005) & 0.44(0.02) & 16.6(0.3) \\
& GM-\ref{ref:t:glasso}\ref{eqn:2sta}   & 0.411(0.006) & 0.759(0.005) & 0.46(0.02) & 15.3(0.3) \\
& GM-\ref{ref:t:glasso}\ref{eqn:3staf}  & 0.178(0.003) & 0.729(0.005) & 0.80(0.02) & 11.4(0.2) \\
& GM-\ref{ref:t:glasso}\ref{eqn:2staf}\,(\textbf{FGLASSO}) & \textbf{0.093(0.003)} & \textbf{0.813(0.018)} & \textbf{0.89(0.02)} & \textbf{10.1(0.2)} \\
\hline\hline
$n=450$ 
& GM-\ref{ref:t:glasso}\ref{eqn:3sta}   & 0.465(0.005) & 0.790(0.004) & 0.41(0.02) & 14.4(0.3) \\
& GM-\ref{ref:t:glasso}\ref{eqn:2sta}   & 0.401(0.005) & 0.801(0.004) & 0.49(0.02) & 13.2(0.3) \\
& GM-\ref{ref:t:glasso}\ref{eqn:3staf}  & 0.159(0.003) & 0.790(0.004) & 0.82(0.02) & 9.8(0.2) \\
& GM-\ref{ref:t:glasso}\ref{eqn:2staf}\,(\textbf{FGLASSO}) & \textbf{0.046(0.003)} & \textbf{0.861(0.005)} & \textbf{0.94(0.01)} & \textbf{8.6(0.1)} \\
\hline\hline
$n=1000$ 
& GM-\ref{ref:t:glasso}\ref{eqn:3sta}   & 0.301(0.005) & 0.838(0.004) & 0.61(0.02) & 11.0(0.2) \\
& GM-\ref{ref:t:glasso}\ref{eqn:2sta}   & 0.284(0.005) & 0.855(0.004) & 0.64(0.02) & 9.7(0.2) \\
& GM-\ref{ref:t:glasso}\ref{eqn:3staf}  & 0.141(0.004) & 0.870(0.004) & 0.85(0.01) & 7.3(0.1) \\
& GM-\ref{ref:t:glasso}\ref{eqn:2staf}\,(\textbf{FGLASSO}) & \textbf{0.034(0.003)} & \textbf{0.898(0.005)} & \textbf{0.96(0.01)} & \textbf{6.4(0.1)} \\
\hline
\end{tabular}
}
\caption{Simulation results of graphical Lasso-type algorithms on SBM network. The proposed FGLASSO outperforms the other methods. The results are for $p=600$, $H=3$, and $K=2$ averaged over 100 repetitions.} 
\label{tab:conti:2}
\end{table}
%%%%%%%%%%%%%%%% ===== HBN/Ising table =====
\begin{table}[!ht]
\renewcommand{\arraystretch}{1}
\centering
\resizebox{\textwidth}{!}{
\begin{tabular}{||l|l|c|c|c|c||}
\hline
\textbf{Sample Size} & \textbf{Method} & \textbf{CE} & \textbf{PCEE} & \textbf{F1} & \textbf{SSE} \\
\hline\hline
$n=200$
& GM-\ref{ref:t:hbn}\ref{eqn:3sta}   & 0.484(0.005) & 0.599(0.004) & 0.38(0.02) & 18.4(0.3) \\
& GM-\ref{ref:t:hbn}\ref{eqn:2sta}   & 0.456(0.003) & 0.649(0.004) & 0.42(0.02) & 17.1(0.3) \\ 
& GM-\ref{ref:t:hbn}\ref{eqn:3staf}  & 0.215(0.003) & 0.599(0.004) & 0.76(0.02) & 12.9(0.2) \\ 
& GM-\ref{ref:t:hbn}\ref{eqn:2staf}\,(\textbf{FBN}) & \textbf{0.113(0.004)} & \textbf{0.736(0.003)} & \textbf{0.87(0.02)} & \textbf{8.0(0.2)} \\
\hline\hline
$n=400$
& GM-\ref{ref:t:hbn}\ref{eqn:3sta}   & 0.429(0.004) & 0.635(0.004) & 0.45(0.02) & 15.9(0.3) \\
& GM-\ref{ref:t:hbn}\ref{eqn:2sta}   & 0.456(0.004) & 0.679(0.004) & 0.42(0.02) & 14.4(0.3) \\ 
& GM-\ref{ref:t:hbn}\ref{eqn:3staf}  & 0.215(0.003) & 0.635(0.004) & 0.76(0.02) & 10.8(0.2) \\ 
& GM-\ref{ref:t:hbn}\ref{eqn:2staf}\,(\textbf{FBN}) & \textbf{0.100(0.003)} & \textbf{0.794(0.003)} & \textbf{0.88(0.02)} & \textbf{6.6(0.1)} \\
\hline\hline
$n=1000$
& GM-\ref{ref:t:hbn}\ref{eqn:3sta}   & 0.308(0.004) & 0.708(0.004) & 0.60(0.02) & 11.5(0.2) \\
& GM-\ref{ref:t:hbn}\ref{eqn:2sta}   & 0.284(0.004) & 0.745(0.004) & 0.64(0.02) & 10.0(0.2) \\ 
& GM-\ref{ref:t:hbn}\ref{eqn:3staf}  & 0.141(0.003) & 0.759(0.004) & 0.84(0.01) & 7.4(0.1) \\ 
& GM-\ref{ref:t:hbn}\ref{eqn:2staf}\,(\textbf{FBN}) & \textbf{0.059(0.003)} & \textbf{0.820(0.004)} & \textbf{0.93(0.01)} & \textbf{4.1(0.1)} \\
\hline
\end{tabular}
}
\caption{Simulation results of binary neighborhood selection-type GMs on SIBM network. The proposed FBN outperforms the other methods. The results are for $p=100$, $H=3$, and $K=2$ averaged over 100 repetitions.} 
\label{tab:isi:1}
\end{table}
%%%%%%%%%%%%

To obtain the standard errors (shown in parentheses) in columns CE, PCEE, F1, and SSE, we repeated each experiment 100 times. Tables~\ref{tab:conti:1} and \ref{tab:conti:2} are for SBM with $p=600$. As shown in Tables~\ref{tab:conti:1} and \ref{tab:conti:2}, GM-\ref{ref:t:glasso}\ref{eqn:3sta}, GM-\ref{ref:t:glasso}\ref{eqn:2sta}, GM-\ref{ref:t:concord}\ref{eqn:3sta}, and GM-\ref{ref:t:concord}\ref{eqn:2sta} have the largest clustering error and the lowest proportion of correctly estimated edges. GM-\ref{ref:t:glasso}\ref{eqn:2sta} and GM-\ref{ref:t:concord}\ref{eqn:2sta} improve the performance of GM-\ref{ref:t:glasso}\ref{eqn:3sta} and GM-\ref{ref:t:concord}\ref{eqn:3sta} in the precision matrix estimates. However, they still incur a relatively large clustering error since they ignore the similarity across different community matrices and employ a standard K-means clustering. In contrast, our  proposed  FCONCORD and FGLASSO algorithms achieve the best clustering accuracy and estimation accuracy for both scenarios. This is due to our joint clustering and estimation strategy as well as the consideration of the fairness of precision matrices across clusters. This experiment shows that a satisfactory fair community detection algorithm is critical to achieve accurate estimations of heterogeneous and fair GMs, and alternatively good estimation of GMs can also improve the fair community detection performance. This explains the success of our joint method in terms of both fair clustering and GM estimation. 

Next, we consider a natural composition of SBM and the Ising model called Stochastic Ising Block Model (SIBM) \cite{bert16,ye2020exact} for more details. In SIBM, we take SBM similar to \eqref{eq:sbm_specification}, where $p$ vertices are divided into clusters and subgroups and the edges are connected independently with probability $\{\xi_i\}_{i=1}^4$. Then, we use the graph $\mc{G}$ generated by the SBM as the underlying graph of the Ising model and draw $n$ i.i.d. samples from it. The objective is to exactly recover the fair clusters in SIBM from the samples generated by the Ising model, without having access to the graph $\mc{G}$.

Table~\ref{tab:isi:1} reports the averaged clustering errors and the proportion of correctly estimated edges for SIBM with $p=100$. The standard GM-\ref{ref:t:hbn}\ref{eqn:3sta} method has the largest clustering error due to its ignorance of the network structure in the precision matrices. GM-\ref{ref:t:hbn}\ref{eqn:3staf} improves the clustering performance of the GM-\ref{ref:t:hbn}\ref{eqn:3sta} by using the method of \cite{ravikumar2010high} in the precision matrix estimation and the robust community detection approach~\cite{cai2015robust} for computing partition matrix $\hat{\m{Q}}$. GM-\ref{ref:t:hbn}\ref{eqn:2staf} is able to achieve the best clustering performance due to the procedure of joint fair clustering and heterogeneous GMs estimation.
%%%%%%%%%%%%%%%%%%%%%%%
\section{Real Data Application }\label{Sec:realdata}
%We now apply fair GMs to MovieLens 10K and  Last.FM datasets.

{
\subsection{Application to Social Networks}\label{sec:exp:socialnet}

\begin{figure}[t]
    \begin{center} 
        \includegraphics[width=0.49\textwidth]{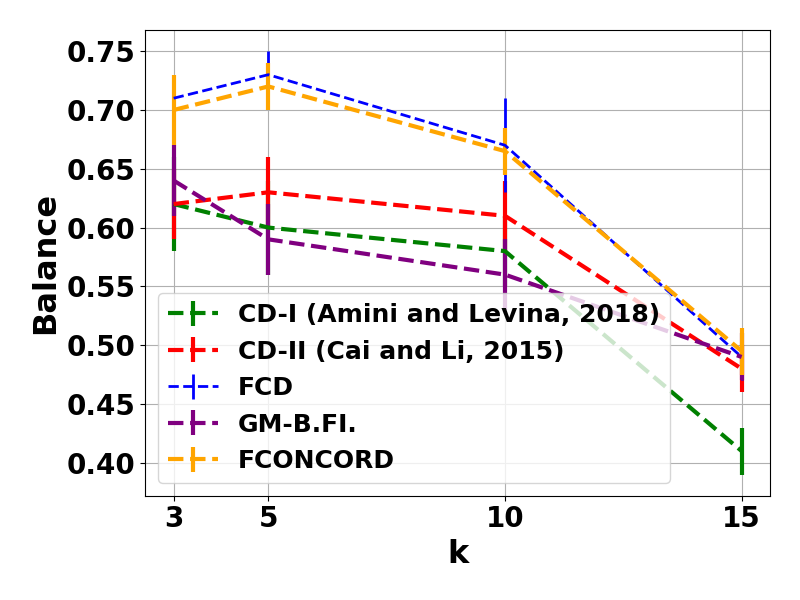}
        \includegraphics[width=0.49\textwidth]{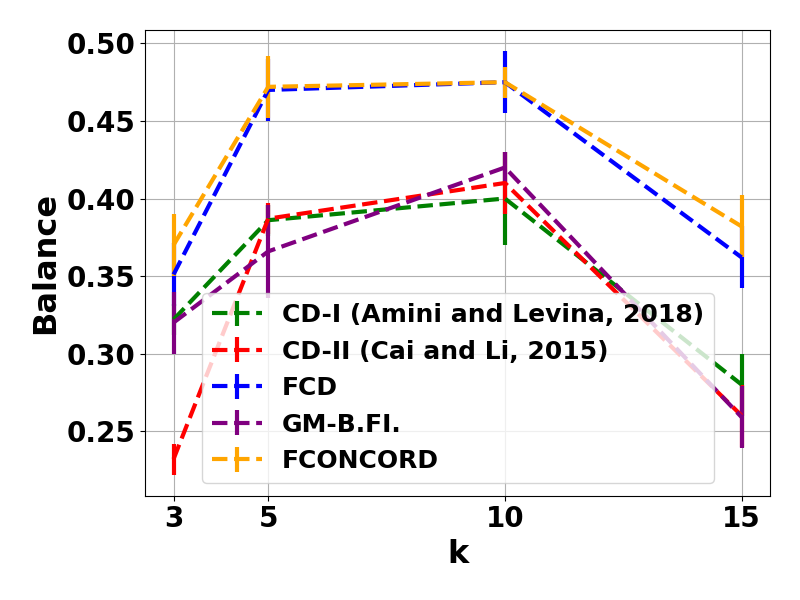}\\
        \includegraphics[width=0.49\textwidth]{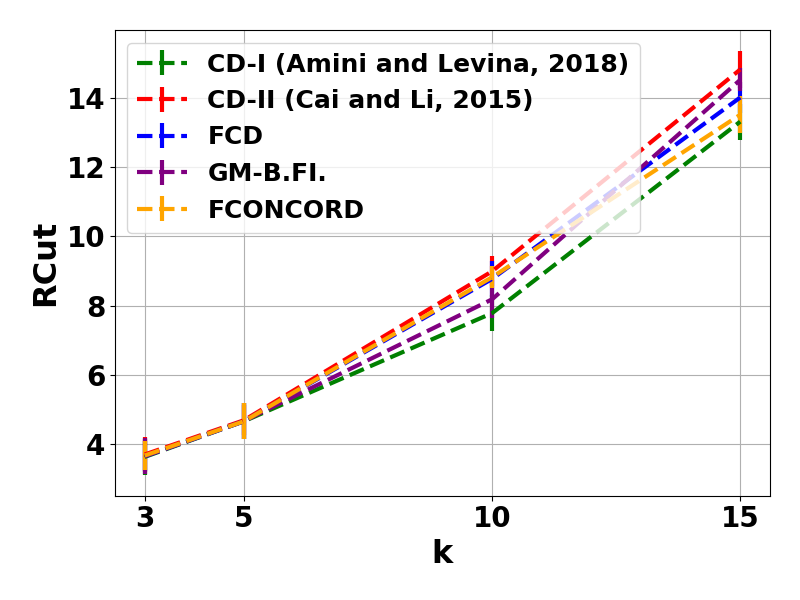}
        \includegraphics[width=0.49\textwidth]{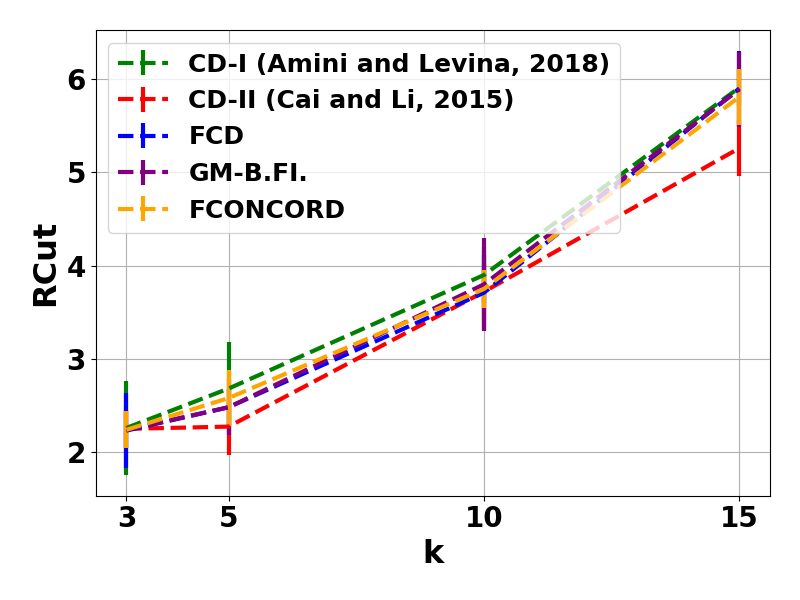}
    \end{center} 
    \caption{ { Balance and RCut of CD-I, CD-II \cite{cai2015robust}, FCD, GM-B.FI, and FCONCORD on Friendship (left) and DrugNet (right) Networks as a function of the number $k$ of clusters.}}
    \label{fig:balance-and-rcut-friendanddrug}
\end{figure}

In this section, we test the performance of our model on two commonly used benchmark datasets of community detection in social networks. The first dataset is a high school friendship network\footnote{\url{http://www.sociopatterns.org/datasets/}}. The dataset corresponds to the contacts and friendship relations between students in nine classes at a high school in Marseilles, France, over five days in December 2013. The dataset gives the contacts of the students of nine classes during five days in December 2013, as measured by the SocioPatterns infrastructure. It contains a tab-separated list representing the active contacts during 20-second intervals of the data collection. Each line has the form $[i, j, C_i, C_j]$, where $i$ and $j$ are the anonymous IDs of the persons in contact, $C_i$ and $C_j$ are their classes, and the interval during which this contact was active is $[t - 20s, t]$. Time is measured in seconds and expressed in UNIX ctime. The gender was considered the sensitive attribute. After data preprocessing, we obtain 127 students (nodes) split into male and female groups.

The second dataset, DrugNet, is a network encoding acquaintanceship between drug users in Hartford~\cite{weeks2002social}. After data preprocessing, we obtain 193 vertices. We use ethnicity as a sensitive attribute and split the vertices into three groups: African Americans, Latinos, and others. Note that the two datasets contain ground-truth graphs and no observed signals. One of the primary advantages of our GM model is that we can group observed data without real graph structures. Thus, we generate data similar to Section~\ref{GGM:datagenerate} based on the ground-truth networks~$\g{\Theta}$. We then use our model to group vertices via the observed data. For comparison, we apply CD-I~\cite{amini2018semidefinite}, CD-II~\cite{cai2015robust}, and FCD to the real networks to cluster vertices. Our goal is to demonstrate that our model, FCD, can achieve competitive fair clustering performance. Additionally, we show that FCONCORD can produce similar results even in the absence of real graphs. The quality of the fair clustering is evaluated using the "Balance" metric defined in~\eqref{eqn:balance}. We use RCut defined in~\eqref{def_ratio_cut} as the clustering separation accuracy metric. We let $N = 1000$. As displayed in Figure~\ref{fig:balance-and-rcut-friendanddrug}, for the two datasets, our model achieves almost the same RCut as CD-I~\cite{amini2018semidefinite} and CD-II~\cite{cai2015robust}—which are based on the ground-truth networks—even though we do not know the underlying graphs. However, compared to the state-of-the-art models CD-I and CD-II, our model can improve Balance by 20\% on average over $K$ on two benchmark datasets, meaning that our method can improve fairness in clustering at a moderate cost of RCut.

In this experiment, we assume the network matrix~$\Theta$ is given a priori. In the experiments of Figure~\ref{fig:balance-and-rcut-friendanddrug}, we evaluate the performance of standard semidefinite programming as in~\cite{cai2015robust} versus our fair versions on real network data.

The quality of a clustering is measured through its "Balance" defined in~\eqref{eqn:balance} and runtime. Figure~\ref{fig:balance-and-rcut-friendanddrug} shows the results as a function of the number of clusters~$k$ for two high school friendship networks~\cite{mastrandrea2015contact}. Vertices correspond to students and are split into two groups, males and females ($H=2$). The friendship network has 127 vertices, and an edge between two students indicates that one of them reported friendship with the other. The network corresponds to the largest connected component of the originally unconnected graph.

\begin{figure}[t]
    \begin{center} 
        \includegraphics[width=0.49\textwidth]{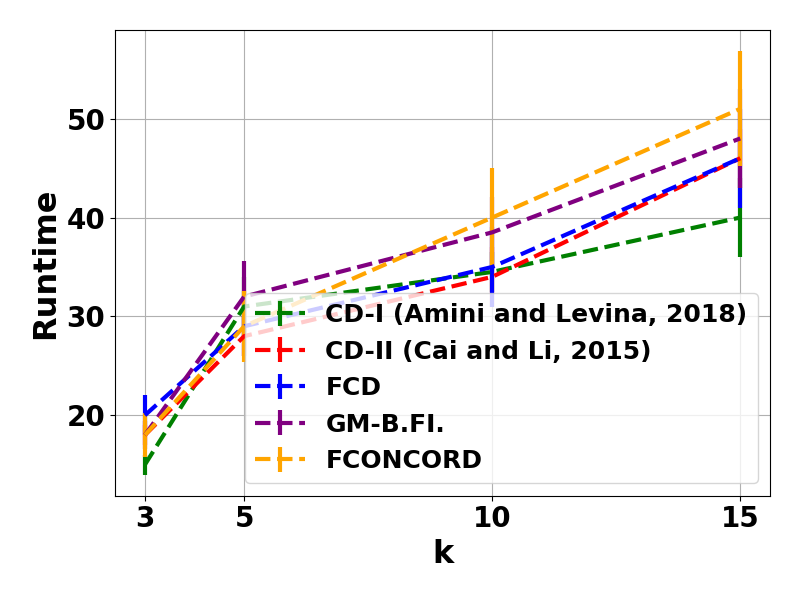}
        \includegraphics[width=0.49\textwidth]{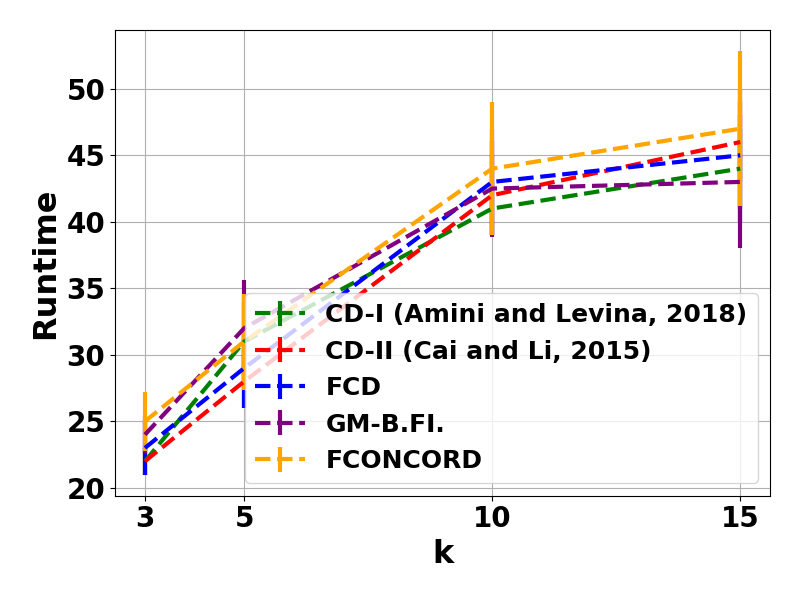}
    \end{center} 
    \caption{ { Runtime (in seconds) of CD-I, CD-II \cite{cai2015robust}, FCD, GM-B.FI, and FCONCORD on Friendship (left) and DrugNet (right) Networks as a function of the number $k$ of clusters.}}
    \label{fig:runtime:friend-and-drug}
\end{figure}

From Figure \ref{fig:runtime:friend-and-drug}, we can see that the runtime (in seconds) of CD-I, CD-II \cite{cai2015robust}, FCD, GM-B.FI, and FCONCORD on Friendship Network as a function of the number $k$ of clusters does not significantly increase despite FCONCORD having fairness constraints and needing to estimate graph and community structures.  In addition, despite incorporating an additional fairness constraint, the runtime of FCD is not significantly worse than  CD-I \cite{amini2018semidefinite} and CD-II \cite{cai2015robust}.

}

\subsection{Application to Natural Language Processing}\label{sec:wordembed}
Word embeddings are a form of word representation in an $p$-dimensional space. Word embeddings serve as a dictionary of sorts for computer programs that uses word meaning. The rationale behind word embeddings is that words with similar semantic meanings tend to have representations that are close together in the vector space than dissimilar words. This makes word embeddings useful by allowing them to represent the relationship between words in mathematical terms, making them an important and widely used component in many Natural Language Processing (NLP) models. However, despite its expansive applications, word embeddings have been found to exhibit strong, ethically questionable biases such as a bias against a specific race, gender, religion, and sexual preferences among others~\citep{bolukbasi2016man,caliskan2017semantics,petreski2022word}.

GMs are widely used in word embedding applications where the goal is to understand the relationships among the words/terms and their community structures~\citep{tan2014learning,harakawa2022trend}. In the following, we focus on the issues and the improvement of the GM approaches we have towards detecting and dealing with bias in word embeddings. To do so, we evaluated the bias based on data obtained from the preprocessed English dataset of the Small World of Words \citep{de2019small}. The goal of the analysis is to understand the relationships among the \textit{cue} terms, identify cue words that are hubs, and correct the bias in their community structure.  This dataset contains the responses of 83,864 participants in a word association task to 12,292 cue words. Each cue word was judged by exactly 100 participants; see Table~\ref{tab:sww}.

In our implementation, cue words and the responses from participants are considered random variables and the samples for the estimation of GMs, respectively. We apply a tokenizer to transform text data into numerical values (using \texttt{Word2Vec} algorithm). We select the 250 cue words for our numerical evaluations as follows: we first select 10 cue words uniformly at random and then select 25  words with the highest similarity to each cue word following \citep{de2019small}. We manually assign gender for each cue term. For example, the cue word \textquotedblleft woman\textquotedblright~(a node in GM) is female (group $\#1$), the cue word \textquotedblleft man\textquotedblright~is male (group $\#2$), and the cue word \textquotedblleft doctor\textquotedblright~can be both female and man (groups $\#1$ and 2). Please refer to Table~\ref{tab:sww} for further descriptions.

\begin{table}[t]
\centering
\begin{adjustbox}{max width=1\columnwidth}
\begin{tabular}{llllllllllll}
\hline
Participant & Age & Gender & Native & Country     & \textbf{Cue}    & R1    & R2        & R3        & \textbf{Cue} \\
ID &  &  & Language &        &  &       &         &         & \textbf{Group} ($h$) \\
\hline
 376           & 38  & Fe     & Australia      & Australia         & \textbf{strong} & box   & man       & weak      & \textbf{1 and 2}   \\
1062          & 19  & Ma     & Australia      & Australia         & \textbf{sick}   & ill   & hospital  & disease   & \textbf{1 and 2}   \\
4295          & 77  & Ma     & US  & US         & \textbf{doctor} & nurse & professor & physician & \textbf{1 and 2}   \\
4136          & 44  & Ma     & UK & Spain              & \textbf{mother} & woman & hair      & emotions  & \textbf{1}         \\
4214          & 63  & Fe     & US  & US         & \textbf{man}    & woman & boy       & child     & \textbf{2} \\        
\hline
\end{tabular}
\end{adjustbox}
\caption{ The data collected as part of the Small World of Words dataset. Each participant gave 3 responses (R1, R2, R3) to each cue word that they were presented with. The last column shows our manual group assignments. 
}\label{tab:sww}
\end{table}
%\paragraph{Gender Bias in SWOW-EN2018.}
%

\begin{table}[!ht]
\renewcommand{\arraystretch}{1.2}
\centering
\begin{tabular}{||l|l|c|c||}
\hline
 & Method & RCut & Balance \\ 
\hline\hline
\multirow{2}{*}{$K = 5$} 
    & GM-\ref{ref:t:concord}\ref{eqn:2sta} & 9.4 (0.1) & 0.324 (0.005) \\
    & \textbf{GM-\ref{ref:t:concord}\ref{eqn:2staf} (FCONCORD)} & \textbf{8.7 (0.1)} & \textbf{0.439 (0.005)} \\
\hline
\multirow{2}{*}{$K = 10$} 
    & GM-\ref{ref:t:concord}\ref{eqn:2sta} & 15.1 (0.1) & 0.507 (0.005) \\
    & \textbf{GM-\ref{ref:t:concord}\ref{eqn:2staf} (FCONCORD)} & \textbf{14.9 (0.1)} & \textbf{0.619 (0.005)} \\
\hline
\end{tabular}
\caption{The ratio cut and balance of various methods on the Word Embedding dataset. }
\label{tab:worddoc:real}
\end{table}

\begin{figure}[t]
\begin{center}
\includegraphics[scale=0.08]{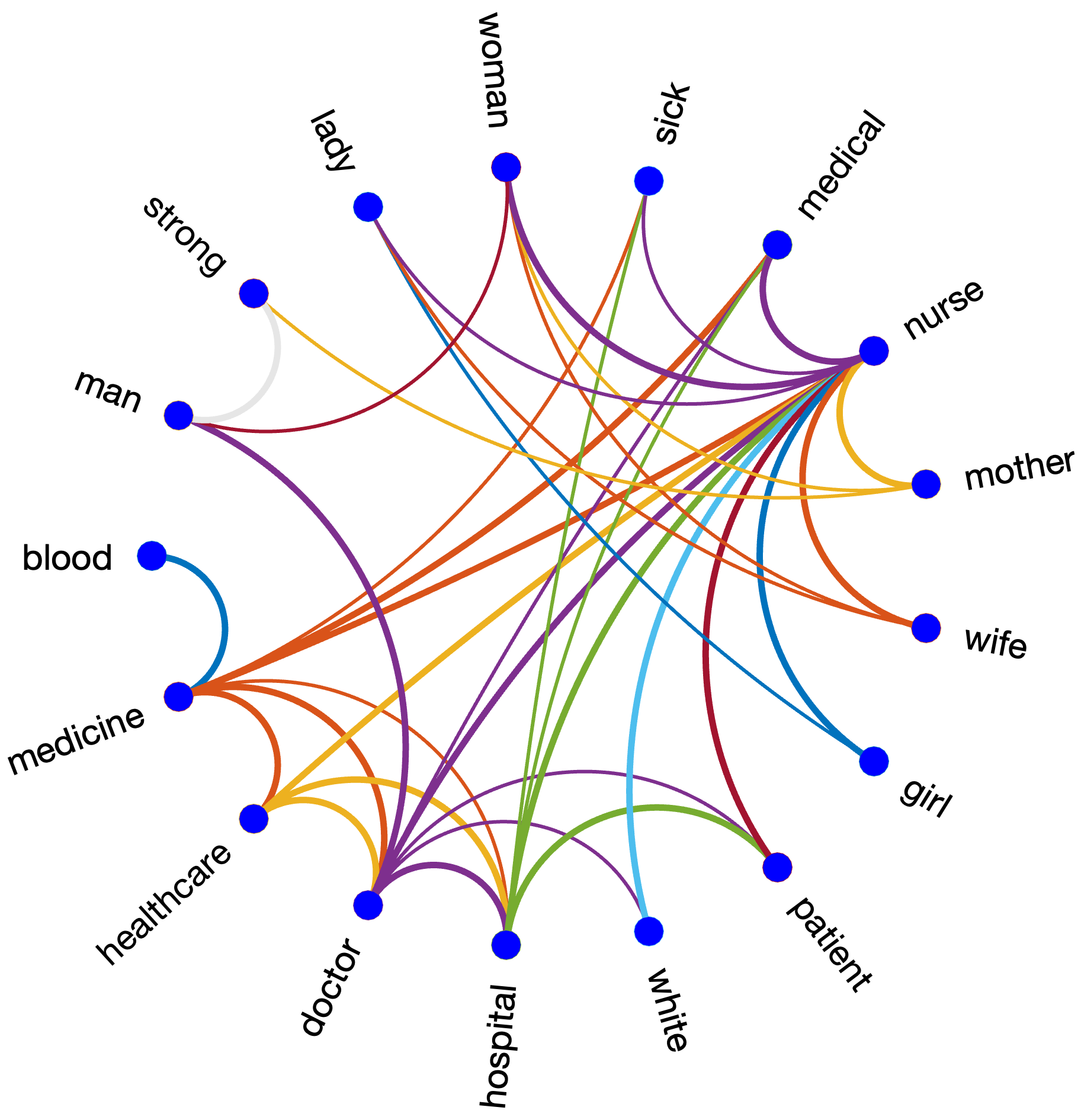}
\includegraphics[scale=0.08]{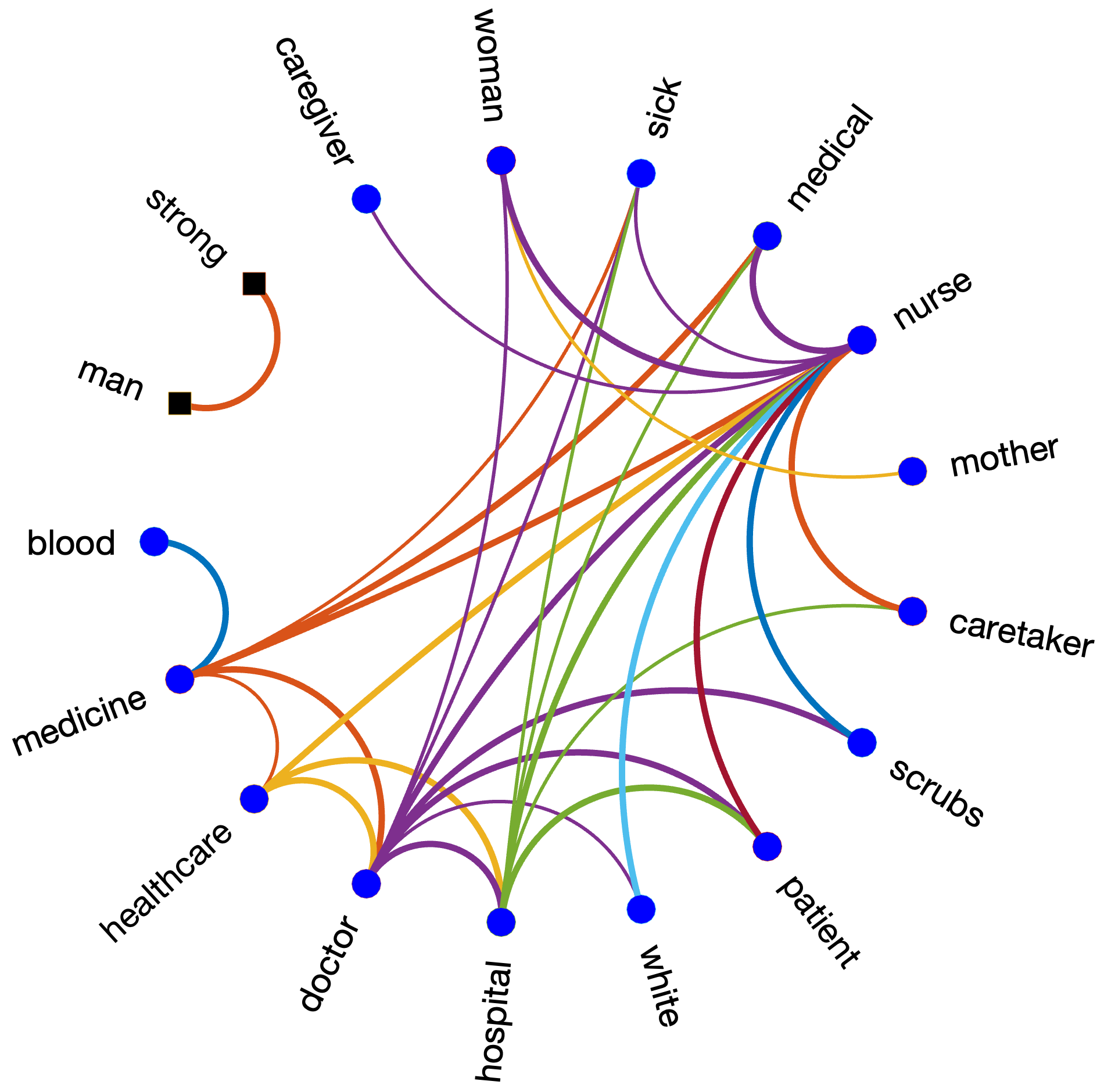}
 \end{center}
  \caption{Subgraphs of the precision matrices estimated by GM-\ref{ref:t:concord}\ref{eqn:2sta}(left) and GM-\ref{ref:t:concord}\ref{eqn:2staf}(right). Nodes represent the cue labeled by their word. Circle markers denote words within a healthcare community, and square markers denote isolated words. The width of a link is proportional to the magnitude of the corresponding partial correlations. GM-\ref{ref:t:concord}\ref{eqn:2staf}(FCONCORD) enhances the neutrality from gender bias and improves the related word connection within the healthcare community.
  }
\label{fig:net:word}
\end{figure}%
Despite the adoption of different graph learning algorithms for this dataset, the existing methods often do not have fairness considerations. Although one may manually remove the protected attributes in the selected features to avoid discrimination, a number of non-protected attributes that are highly correlated with the protected attributes may still be selected by these methods and result in discrimination. To address this issue, we next implement the proposed FCONCORD which uses a joint $\ell_1$--regularized regressions for each node (cue word) of a graph associated with the feature covariance matrix with tuning parameters selected using  \eqref{eqn:biq}. %We measure the accuracy of FCONCORD via the clustering error and balance, where the assignments of the nodes are obtained from \citep{de2019small}. 

Table~\ref{tab:worddoc:real} compares various algorithms in terms of Ratio Cut (RCut) and Balance. Not only FCONCORD is able to achieve a higher balance, but it also does so with a better RCut as compared to GM-\ref{ref:t:concord}\ref{eqn:2sta}.  In addition, the estimated communities for two sub-graphs of cue words are also shown in Figure~\ref{fig:net:word}.  From both networks, we can see that the estimated communities mainly consist of healthcare communities. The estimated network also shows two \textit{hub} nodes: \textquotedblleft doctor \textquotedblright and  \textquotedblleft nurse \textquotedblright. For instance, the fact that the nurse is a hub indicates that many term occurrences (in the medical community) are explained by the occurrence of the word nurse. These results provide an intuitive explanation of the relationships among the terms in the cue word communities. We also note that FCONCORD enhances the neutrality from gender bias by removing the connection between biased edges such as (\textit{lady}, \textit{nurse}), (\textit{girl}, \textit{nurse}), and (\textit{doctor}, \textit{man}).   

\subsection{Application to Fair Community Detection in Recommender System}\label{sec:exp:recom}

Recommender systems (RS) model user-item interactions
to provide personalized item recommendations that will suit the user’s taste. Broadly speaking, two types of methods are used in such systems—content based and collaborative filtering. Content based approaches model interactions through user and item covariates. Collaborative filtering (CF), on the other hand, refers to a set of techniques that model user-item interactions based on user’s past response. 

A popular class of methods in RS is based on clustering users and items~\citep{ungar1998clustering,o1999clustering,sarwar2001item,schafer2007collaborative}. Indeed, it is more natural to model the users and the items using clusters (communities), where each cluster includes a set of like-minded users or the subset of items that they are interested in. The overall procedure of this method, called cluster CF (CCF), contains two main steps. First, it finds clusters of users and/or items, where each cluster includes a group of like-minded users or a set of items that these users are particularly interested in. Second, in each cluster, it applies traditional CF methods to learn users’ preferences over the items within this cluster. Despite efficiency and scalability of these methods, in many human-centric applications, using CCF in its original form can result in unfavorable and even harmful clustering and prediction outcomes towards some demographic groups in the data. 

It is shown in \cite{schafer2007collaborative,mnih2008probabilistic} that using item-item similarities based on “who-rated-what'' information is strongly correlated with how users explicitly rate items. Hence, using this information as user covariates helps in improving predictions for explicit ratings. Further, one can derive an item graph where edge weights represent movie similarities that are based on global “who-rated-what'' matrix~\citep{kouki2015hyper,wang2015collaborative,agarwal2011modeling,mazumder2011flexible}. Imposing sparsity on such a graph and finding its~\textit{fair} communities is attractive since it is intuitive that an item is generally related to only a few other items. This can be achieved through our fair GMs. Such a graph gives a fair neighborhood structure that can also help better predict explicit ratings. In addition to providing useful information to predict ratings, we note that using who-rated-what information also provides information to study the fair relationships among items based on user ratings.

The goal of our analysis is to understand the balance and prediction accuracy of fair GMs on RS datasets as well as the relationships among the items in these datasets. We compare the performance of our fair GMs implemented in the framework of standard CCF and its fair K-means variant. In particular, we consider the following algorithms:
\begin{itemize}
\item FGLASSO~(FCONCORD)+CF: A two-stage approach for which we first use FGLASSO (FCONCORD) to obtain the fair clusters and then apply traditional CF to learn users’ preferences over the items within each cluster. We set $\rho_1= 1$, $\rho_2 = 0.05$, $\gamma=0.01$, and $\epsilon=1e-3$ in our implementations.
\item CCF (Fair CCF): A two-stage approach for which we first use K-means (fair K-means \cite{chierichetti2017fair}) clustering to obtain the clusters and then apply CF to learn users’ preferences within each cluster~\citep{ungar1998clustering}.
\end{itemize}

\subsubsection{Music Data}\label{sec:exp:music}

Music RSs are designed to give personalized recommendations of songs, playlists, or artists to a user, thereby reflecting and further complementing individual users’ specific music preferences. Although accuracy metrics have been widely applied to evaluate recommendations in music RS literature, evaluating a user’s music utility from other impact-oriented perspectives, including their potential for discrimination, is still a novel evaluation practice in the music RS literature~\citep{epps2020artist,chen2020bias,shakespeare2020exploring}. Next, we center our attention on artists' gender bias for which we want to estimate if standard music RSs may exacerbate its impact. 
\begin{figure}[t]
\begin{center}
\includegraphics[scale=0.35]{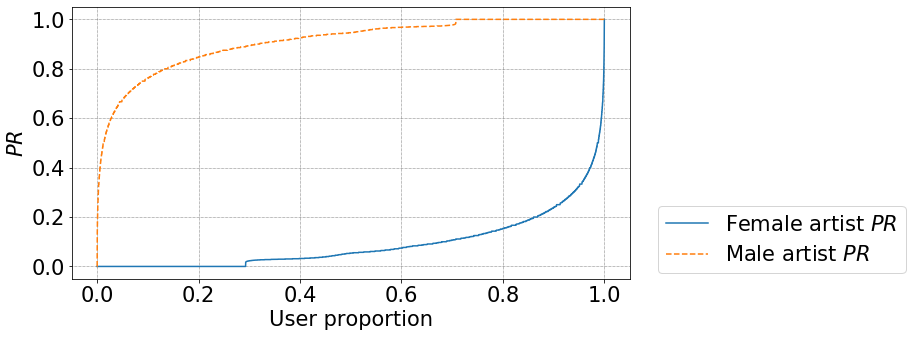}
 \end{center}
  \caption{Input Preference Ratio (PR) distributions of LFM-360K dataset.}\label{pr:lsf}
\end{figure}%

To illustrate the impact of artists gender bias in RSs, we use the freely available \textit{binary} LFM-360K music dataset\footnote{\url{http://www.last.fm}}. The LFM-360K consists of approximately 360,000 users listening histories from \texttt{Last.fm} collected during Fall 2008. We generate recommendations for a sample of all users for which gender can be identified. We limit the size of this sample to be 10\% randomly chosen of all male and female users in the whole dataset due to computational constraints. Let $\mc{U}$ be the set of $n$ users, $\mc{I}$ be the set of $p$ items and $\m{Y}$ be the $n \times p$ input matrix, where $y_{ui}=1$ if user $u$ has selected item $i$, and zero otherwise. Given the matrix $\m{Y}$, the input preference ratio (PR) for user group $\mc{D}$ on item category $\mc{C}$ is the fraction of liked items by group $\mc{D}$ in category $\mc{C}$,  defined as the following~\citep{shakespeare2020exploring}:
\begin{equation}\label{eqn:pr}
\textnormal{PR}(\mc{D},\mc{C}):= \frac{\sum_{u\in \mc{D}} \sum_{i\in \mc{C}} y_{ui}}{\sum_{u\in \mc{D}} \sum_{i\in \mc{I}} y_{ui}}.
\end{equation}

Figure~\ref{pr:lsf} represents the distributions of users’ input PR towards male and female artist groups. It shows that only around 20\% of users have a PR towards male artists lower than 0.8. On the contrary, 80\% of users have a PR lower than 0.2 towards female artists. This shows that commonly deployed state of the art CF algorithms may act to further increase or decrease artist gender bias in user-artist RS.  

%\begin{wrapfigure}{r}{0.5\textwidth}
%  \begin{center}
%\end{wrapfigure}
\begin{figure}[t]
\begin{center}
\includegraphics[scale=0.3]{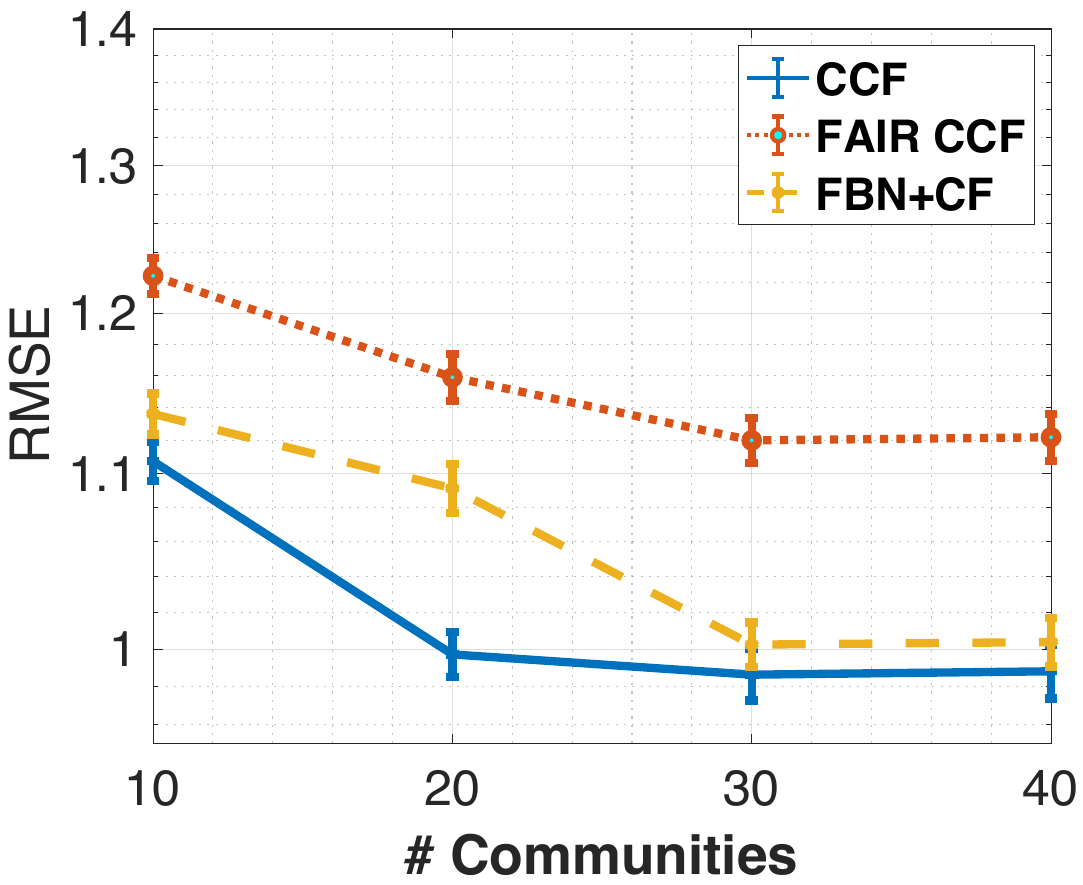}
~~~~~~~~
\includegraphics[scale=0.3]{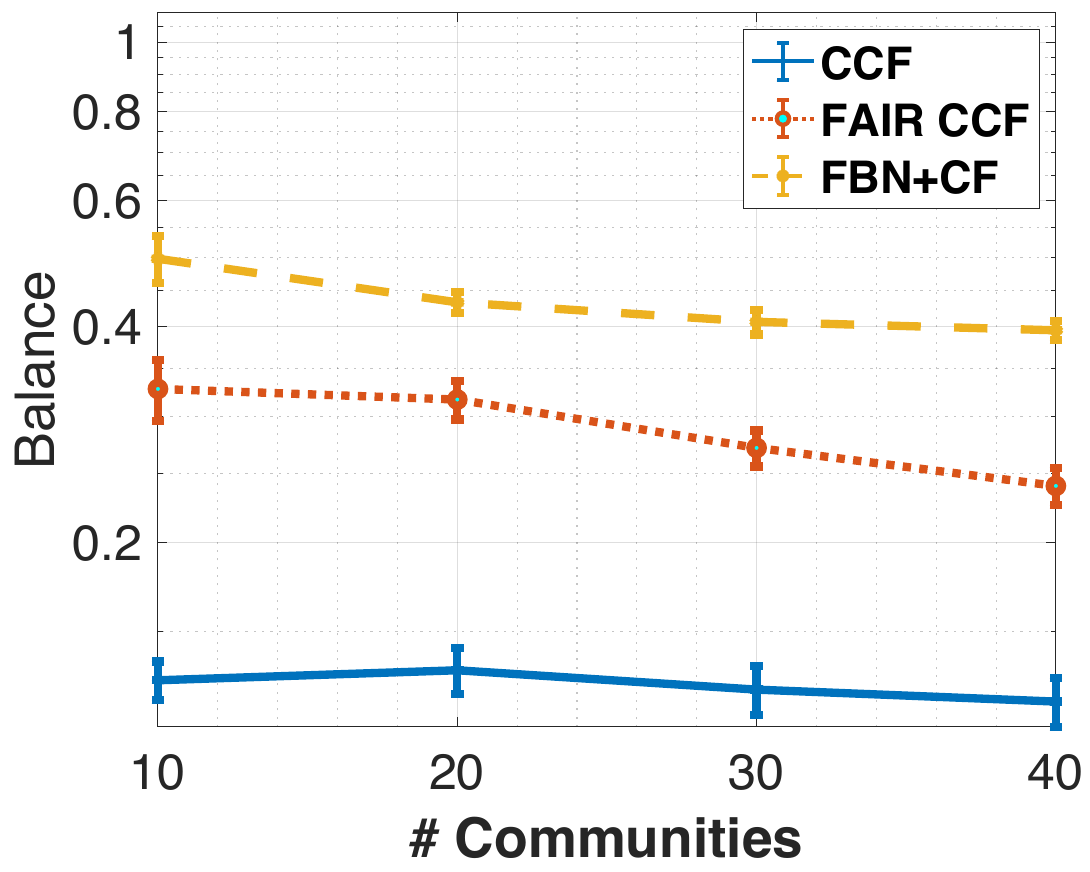}
 \end{center}
  \caption{RMSE (left) and Balance (right) of standard CCF, Fair CCF, and FBN+CF on LFM-360K music data.}\label{bal:rmese:lsf}
\end{figure}%

Next, we study the balance and prediction accuracy of fair GMs on music RSs. Figure~\ref{bal:rmese:lsf} indicates that the proposed FBN+CF has the best performance in terms of balance and root mean squared error (RMSE), where RMSE is defined as
\[
\text{RMSE} = \sqrt{\frac{1}{n}\sum_{i=1}^{n}(\hat{y}_i - y_i)^2},
\]
with $y_i$ denoting the true rating and $\hat{y}_i$ the predicted rating for user–item pair $i$. As expected, the baseline with no notion of fairness--CCF--results in the best overall precision. Of the two fairness-aware approaches, the fair K-means-based approach--Fair CCF--performs considerably below FBN+CF. This suggests that recommendation quality can be preserved, but leaves open the question of whether we can improve fairness.

Hence, we turn to the impact on the fairness of the three approaches. Figure~\ref{bal:rmese:lsf}(right) presents the balance. We can see that fairness-aware approaches--Fair CCF and FBN+CF--have a strong impact on the balance in comparison with standard CCF. And for RMSE, we see that FBN+CF achieves a much better rating difference in comparison with Fair CCF, indicating that we can induce aggregate statistics that are fair between the two sides of the sensitive attribute (male vs. female).

\subsubsection{MovieLens Data}\label{sec:exp:movie:RS}

We use the MovieLens 10K dataset\footnote{\url{http://www.grouplens.org}}.
%, keeping all movies with at least 20 ratings. 
Following previous works~\citep{koren2009collaborative,kamishima2012enhancement,chen2020bias}, we use  \textit{year} of the movie as a sensitive attribute and consider movies before 1991 as old movies. Those more recent are considered new movies. \cite{koren2009collaborative} showed that the older movies have a tendency to be rated higher, perhaps because only masterpieces have survived. When adopting~\textit{year} as a sensitive attribute, we show that our fair graph-based RS enhances the neutrality from this masterpiece bias.  %In total, we have 671 users, 373 old movies, and 323 new movies. 
%The sparsity of the dataset is 11.4\%. 
%We randomly split the dataset into 90\% for training and 10\% for testing and apply the two-phase CCF in each cluster as described above. 
The clustering balance and RMSE have been used to evaluate different modeling methods on this dataset. Since reducing RMSE is the goal, statistical models assume the response (ratings) to be Gaussian for this data~\citep{kouki2015hyper,wang2015collaborative,agarwal2011modeling}. 

\begin{figure}[t]
\begin{center}
\includegraphics[scale=0.3]{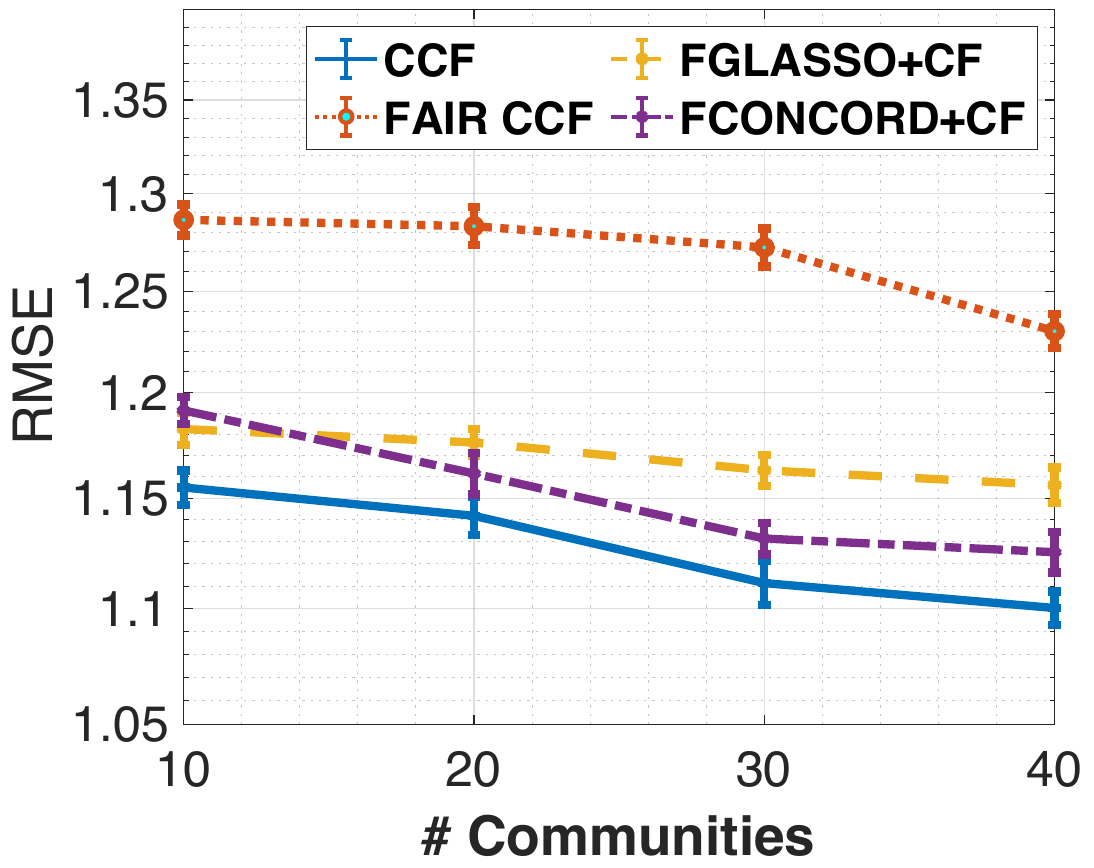}
~~~~~~~~
\includegraphics[scale=0.3]{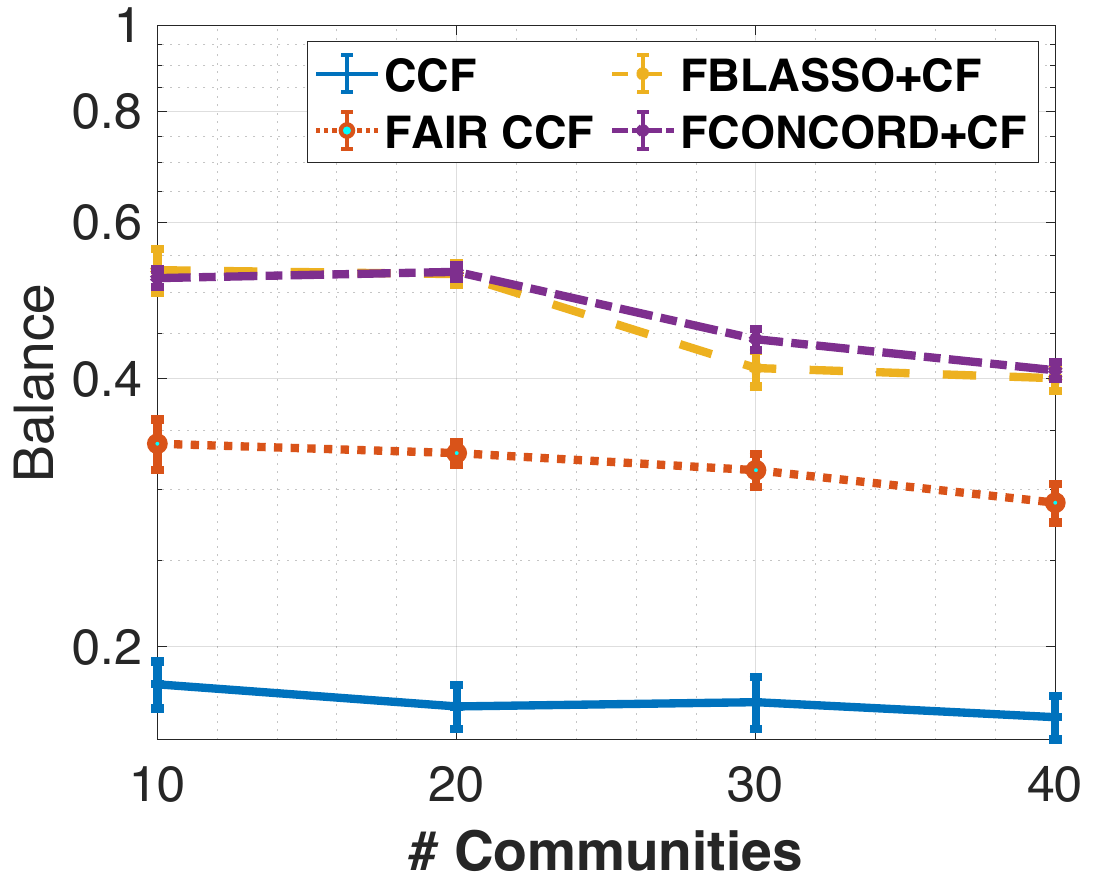}
 \end{center}
  \caption{RMSE (left) and Balance (right) of standard CCF, Fair CCF, FGLASSO+CF,  and FCONCORD+CF on MovieLens 10K data. %Higher balance indicates of the fairness attribute pro propotions in the communities.     
   }\label{bal:rmese:mov}
\end{figure}%

Experimental results are shown in Figure~\ref{bal:rmese:mov}. As expected, the baseline with no notion of fairness--CCF--results in the best overall RMSEs, with our two approaches (FGLASSO+CF and FCONCORD+CF) providing performance fairly close to CCF. Figure~\ref{bal:rmese:mov} (right) shows that compared to fair CCF,  FGLASSO+CF and FCONCORD+CF significantly improve the clustering balance. Hence, our fair graph-based RSs successfully enhanced the neutrality without seriously sacrificing the prediction accuracy.
\begin{table}[!ht]
\centering
\begin{adjustbox}{max width=0.99\columnwidth}
\begin{tabular}{llr}
\hline
\hline
\multicolumn{2}{c}{GM-\ref{ref:t:concord}\ref{eqn:2sta}} \\
\cline{1-2}
\multicolumn{2}{c}{The pair of movies}  & Partial correlation \\
\hline
The Godfather (1972)       & The Godfather: Part II (1974)    & 0.592     \\
Grumpy Old Men (1993) & Grumpier Old Men (1995)  & 0.514      \\
Patriot Games (1992)        & Clear and Present Danger (1994)   & 0.484     \\
The Wrong Trousers (1993)    & A Close Shave (1995)  & 0.448      \\
Toy Story (1995)  & Toy Story 2 (1999)    &  0.431       \\
Star Wars: Episode IV--A New Hope (1977) &Star Wars: Episode V--The Empire Strikes Back (1980) & 0.415\\
\hline
\multicolumn{2}{c}{GM-\ref{ref:t:concord}\ref{eqn:2staf} (FCONCORD)} \\
\cline{1-2}
\multicolumn{2}{c}{The pair of movies}  & Partial correlation \\
\hline
The Godfather (1972)       & The Godfather: Part II (1974)    & 0.534   \\
Grumpy Old Men (1993) & Grumpier Old Men (1995)  & 0.520      \\
Austin Powers: International Man of Mystery (1997) & Austin Powers: The Spy Who Shagged Me (1999)& 0.491\\
Toy Story (1995)  & Toy Story 2 (1999)    &  0.475     \\
Patriot Games (1992)        & Clear and Present Danger (1994)   & 0.472     \\
The Wrong Trousers (1993)    & A Close Shave (1995)   & 0.453  \\
%Star Wars (1977) &Star Wars (1980) & 0.415\\
\hline
\hline
\end{tabular}
\end{adjustbox}
\caption{Pairs of movies with top 5 absolute values of partial correlations in the precision matrix from GM-\ref{ref:t:concord}\ref{eqn:2sta} and GM-\ref{ref:t:concord}\ref{eqn:2staf}(FCONCORD).}
\label{tab:mv:pc}
\end{table}

Fair GMs also provide information to study the relationships among items based on user ratings. To illustrate this, the top-5 movie pairs with the highest absolute values of partial correlations are shown in Table \ref{tab:mv:pc}. If we look for the highly related movies to a specific movie in the precision matrix, we find that FCONCORD enhances the balance  by assigning higher correlations to more recent movies such as “The Wrong Trousers'' (1993) and  “A Close Shave'' (1995).
\begin{figure}[!ht]
\begin{center}
\includegraphics[scale=0.22]{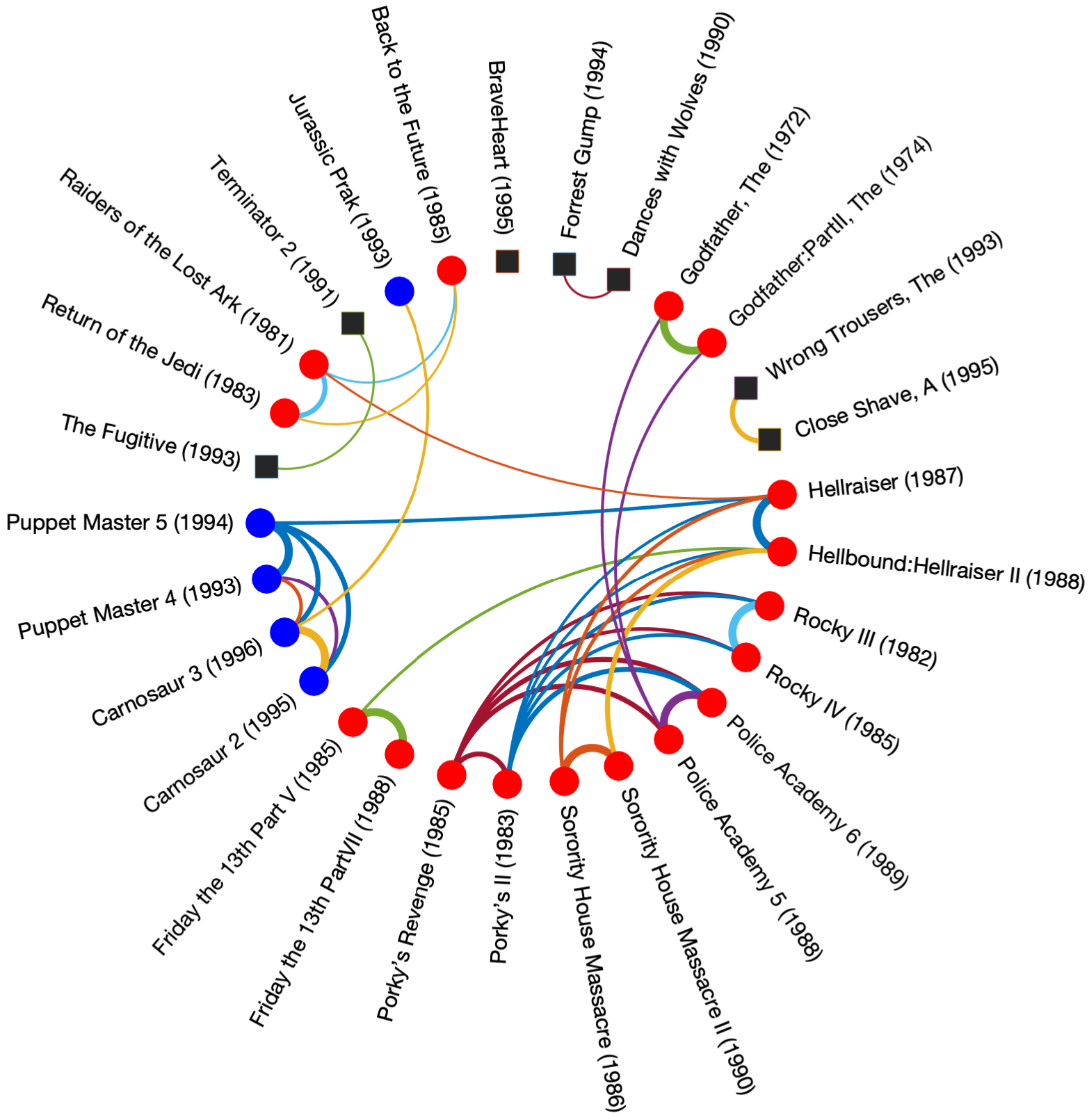}
\includegraphics[scale=0.22]{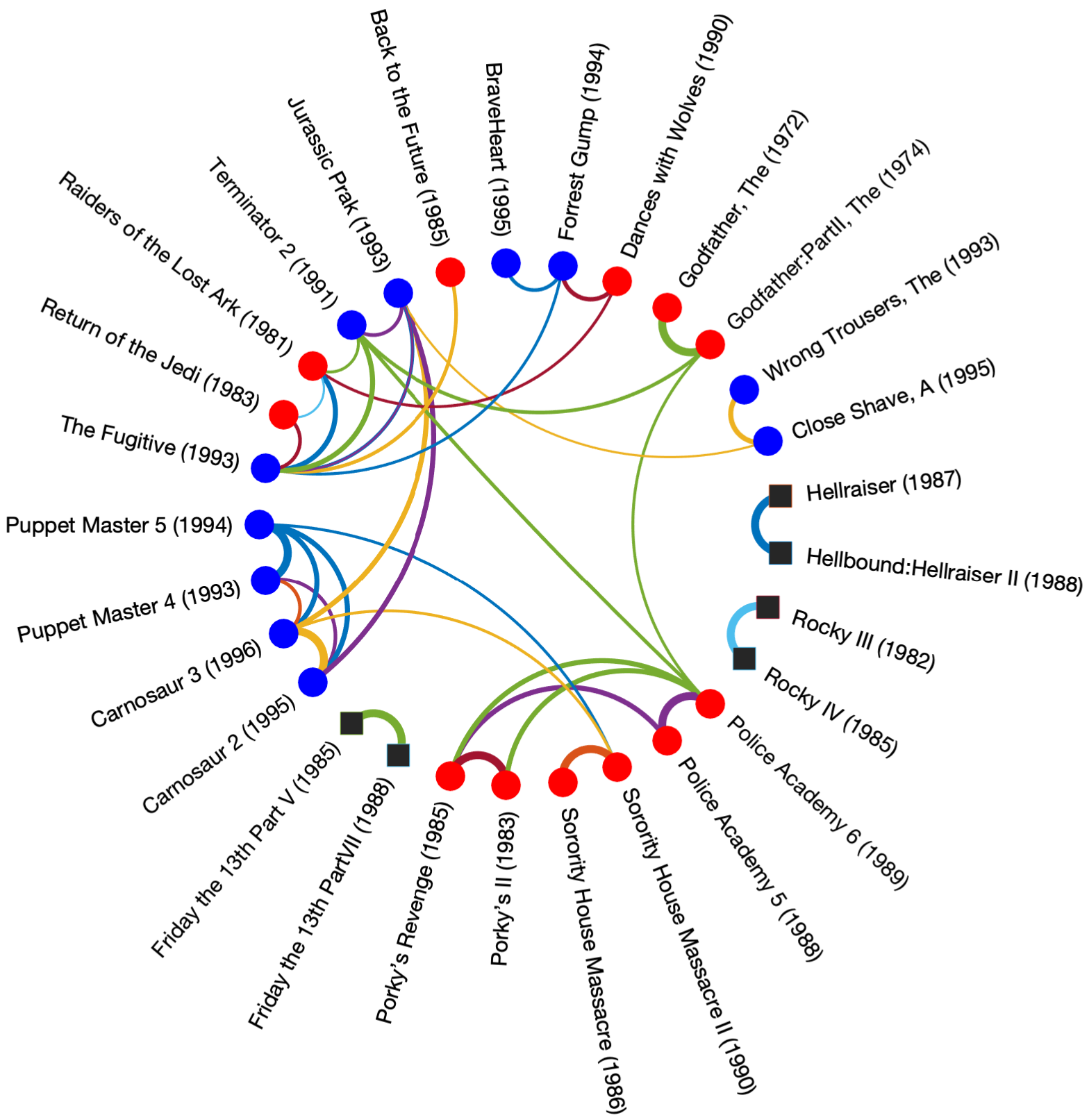}
 \end{center}
  \caption{Subgraphs of the precision matrices estimated by GM-\ref{ref:t:concord}\ref{eqn:2sta}(left) and GM-\ref{ref:t:concord}\ref{eqn:2staf}(right). Nodes represent the movies labeled by their titles. Circle markers denote movies within a single community in each subgraph, and square markers denote isolated movies. Blue nodes are new movies and red nodes old movies within each community. The width of a link is proportional to the magnitude of the corresponding partial correlations. GM-\ref{ref:t:concord}\ref{eqn:2staf}(FCONCORD) enhances the neutrality from year bias by replacing the old movies within each community by new ones.
  }
\label{fig:net:mov}
\end{figure}%

In addition, the estimated communities for two sub-graphs of movies are also shown in Figure~\ref{fig:net:mov}. From both networks, we can see that the estimated communities mainly consist of mass-marketed commercial movies, dominated by action films. Note that these movies are usually characterized by high production budgets, state-of-the-art visual effects, and famous directors and actors. Examples in this community include “The Godfather (1972) '', “Terminator 2 (1991) '', and “Return of the Jedi (1983)'', “Raiders of Lost Ark (1981)'', etc. As expected, movies within the same series are most strongly associated. Figure~\ref{fig:net:mov} (right) shows that FCONCORD enhances the neutrality from the old movies bias by replacing them with new ones such as “Jurassic Park (1993)'', “The Wrong Trousers(1993) '', and “A Close Shave (1995)''.

\section{Conclusion}\label{sec:conc}

In this work, we developed a novel approach to learning fair graphical models with community structure. Our goal is to motivate a new line of work for fair community learning in graphical models that can begin to alleviate fairness concerns in this important subtopic within unsupervised learning. We established statistical consistency of the proposed method for both a Gaussian GM and an Ising model proving that our method can recover the graphs and their fair communities with high probability. We applied the proposed framework to the tasks of estimating a Gaussian graphical model and a binary network. The proposed framework can also be applied to other types of graphical models, such as the Poisson graphical model~\citep{allen2012log} or the exponential family graphical model \citep{yang2012graphical}.
%%%%%%%%%%%%%%%%%%%%%%%%%%%%%%%%%%%%%%%%%%%%%%
\section*{Acknowledgment}
Davoud Ataee Tarzanagh and Laura Balzano were supported by NSF BIGDATA award \#1838179, ARO YIP award W911NF1910027, and NSF CAREER award CCF-1845076. Alfred O. Hero was supported by US Army Research Office grants \#W911NF-15-1-0479 and \#W911NF-19-1-0269, in addition to NSF award CCF-2246213.
%%%%%%%%%%%%%%%%%%%%%%%%%%%%%%%%%%%%%%%%%%%%%%

\section*{Appendix}
%\davoud{
%Organization of the Appendix
%\begin{itemize}
%\item We gather some useful statistical learning and concentration results in 
%\item  Proofs of our generic bounds  are provided in
%\item Proofs for general 
%\end{itemize}
%}
%We give in the supplement proofs to lemmas and theorems. Note: throughout the supplementary material, when
%evaluation is taken place at $\bar{\g{\theta}}^d, \bar{\g{\theta}}^o, \bar{\m{q}}^d$ or $\bar{\m{q}}^o$, sometimes we omit the argument σ in the notation for simplicity. 
%\subsection{ Proof of Theorem~\ref{thm:bcd}}
%\begin{proof}
%Since our objective function is continuous on a compact level set, the solution sequence of our method is defined and bounded. Hence, every coordinate block found by the Steps (a) and (b) is a stationary point of \eqref{eqn:lagfun}. The reminder of the proof follows from \citep{tseng2001convergence}.
%\end{proof}
The appendix is organized as follows:  
\begin{itemize}
\item Appendix~\ref{sec:app:pre} provides some preliminaries used in the proof of main theorems. 
\item Appendix~\ref{sec:app:conti} provides large sample properties of FCONCORD, i.e., the proof of Theorem~\ref{thm:three:fglasso}.
\item Appendix~\ref{sec:app:fbn} provides large sample properties of FBN, i.e., the proof of Theorem~\ref{thm:bin}. 
\item Appendix~\ref{sec:app:comm} gives the consistency of fair community labeling in graphical models. 
\item Appendix~\ref{sec:der:alg} provides the detailed derivation of the updates for Algorithm~\ref{Alg:general}.   
\end{itemize}

\subsection{Preliminaries}\label{sec:app:pre}
Let $F (\g{\Theta},\m{Q};\m{Y}) =n/2\big[-\log |\text{diag}(\g{\Theta})^2|+ \trace\big( (\m{S}+ \rho_2 \m{Q} ) \g{\Theta}^2\big) \big]$.  Let the notation ${F}_n(\g{\theta}^d, \g{\theta}^o, \m{q}^d, \m{q}^o; \m{Y})$ stands for $\frac{{F}}{n}$. We introduce a restricted version of criterion~\eqref{eqn:fgglasso} as below: 
\begin{equation}\label{eqn:local:estmin}
\minimize_{\g{\theta}^o, \g{q}^o}~~F_n(\bar{\g{\theta}}^d, \g{\theta}^o, \bar{\m{q}}^d, \m{q}^o;\m{Y})+ \rho_{1n} \|\g{\theta}^o\|_1, ~~~ \text{subj. to}~~ \g{\theta}^o_{\mc{B}^c}=0.% ~~\m{Q} \in \mc{N}.
\end{equation} 

We define a linear operator  $\mc{A}: \mb{R}^{\binom{p}{2}} \rightarrow \mb{R}^{p \times p}$, $\m{w} \rightarrow \mc{A} \m{w}$, satisfying 
\begin{equation}\label{eqn:graph:oper}
[\mc{A}\m{w}]_{ij} =
\begin{cases}
w_{i+d_j}\; & \; \;i >j,\\
[\mc{A}\m{w}]_{ji} \; & \; \;i<j,\\
0\; &\;\; i=j,
\end{cases}
\end{equation}
where $d_j=-j+\frac{j-1}{2}(2p-j).$  An example for $\mc{A}\m{w} $ on  $\m{w}=[w_1,w_2,\cdots,w_6]^\top$ is given below
\begin{equation*}
\mc{A} \m{w}= \left[
\begin{array}{cccc}
	0 & w_1 & w_2 & w_3\\
	w_1 & 0 & w_4 & w_5\\
	w_2 & w_4 & 0& w_6\\
	w_3 & w_5 & w_6 & 0
\end{array}
\right].
\end{equation*}	
We derive the adjoint operator $\mc{A}^*$ of $\mc{A}$ by making $\mc{A}^*$ satisfy $\langle \mc{A} \m{w}, \m{z} \rangle=\langle \m{w},\mc{A}^*\m{z} \rangle$; see, \cite[
Section~4.1]{kumar2020unified} for more details.

Let $\m{Q}^o=\text{diag}(\bar{\m{q}}^d)+ \mc{A}\m{q}^o$ and $\g{\Theta}^o=\text{diag}(\bar{\g{\theta}}^d)+ \mc{A}\g{\theta}^o$. Since by our assumption $\g{\epsilon}=\m{0}$, we obtain
\begin{align}~\label{eqn:app:obj}
\begin{array}{ll}
\underset{\g{\theta}^o, \g{q}^o}{\text{minimize}} &
\begin{array}{c}
\hspace{.5cm} L_n(\bar{\g{\theta}}^d,\g{\theta}^o;\m{Y}) + \rho_{1n} \|\g{\theta}^o\|_{1} + \rho_{2n} \trace(\g{\Theta}^o \m{Q}^o \g{\Theta}^o),
\end{array}\\
\text{subj. to} &  
\begin{array}[t]{l}
\hspace{.5cm} \g{\theta}^o_{\mc{B}^c}=0,~~ \m{Q}^o \succeq \m{0}, ~~\m{A}_1 \m{Q}^o=\m{0}, ~~ \m{0} \leq \m{Q}^o\leq \m{J}_p.
\end{array}
\end{array}
\end{align}
It is easy to see that $\textnormal{rank}(\m{A}_1)=H-1$. Let $\m{N}\in \mb{R}^{p\times(p-H+1)}$ be a matrix whose rows form an orthonormal basis of the nullspace of $\m{A}_1$. We can substitute $\m{Q}^o=\m{N} \m{R}^o\m{N}^\top$ for $\m{R}^o\in \mb{R}^{(p-H+1) \times (p-H+1)}$, and then, using that $\m{N}^\top\m{N}=\m{I}_{(p-H+1)}$, Problem \eqref{eqn:app:obj} becomes
\begin{align}~\label{eqn:appnull:obj}
\begin{array}{ll}
\underset{\g{\theta}^o, \g{r}^o}{\text{minimize}} &
\begin{array}{c}
\hspace{.5cm} F_{n}(\bar{\g{\theta}}^d, \g{\theta}^o, \bar{\m{r}}^d, \m{r}^o;\m{Y})  +\rho_{1n} \|\g{\theta}^o\|_{1}
\end{array}\\
\text{subj. to} & 
\begin{array}[t]{l}
\hspace{.5cm} \g{\theta}^o_{\mc{B}^c}=0,~~\m{Q}^o \succeq \m{0},  ~~ \m{0} \leq  \m{Q}^o \leq \m{J}_p,
\end{array}
\end{array}
\end{align}
where  $F_{n}(\bar{\g{\theta}}^d, \g{\theta}^o, \bar{\m{r}}^d, \m{r}^o;\m{Y})   = L_n(\bar{\g{\theta}}^d,\g{\theta}^o;\m{Y}) + \rho_{2n} \trace( \g{\Theta}^o \m{N} \m{R}^o\m{N}^\top \g{\Theta}^o)$. 
%
% \begin{lem}
% Define the sign vector $\hat{\m{t}}$ for $\hat{\g{\theta}}^o$ to satisfy the following properties,
% \begin{equation}\label{eqn:sig:def}
% 	 \begin{cases} \hat{t}_{ij} = \textnormal{sign}(\hat{\theta}^o_{ij}), & \textnormal{if } \theta_{ij}^o\neq 0 \ ,  \\ 
%                           \rvert \hat{t}_{ij} \rvert \leq 1, & \textnormal{if } \hat{\theta}_{ij}^o = 0 \ . \end{cases}
% \end{equation}
% for all  $1 \leq i<j \leq p$.
% Suppose there exists an optimal solution $\hat{\g{\theta}}^o$ with sign $\hat{\m{t}}$ defined as above, such that, $\lVert \hat{\g{t}}_{\mathcal{B}^C}\rVert_{\infty} < 1$, then any optimal solution $\tilde{\g{\theta}}^o$ must have $\tilde{\g{\theta}}^o_{\mathcal{B}^C}= 0$.
% \end{lem}

Throughout, we use $\bar{\m{g}}^n$ and $\bar{\m{H}}^n$ to denote the gradient and the Hessian of $L_n(\bar{\g{\theta}}^d,\g{\theta}^o;\m{Y})$. We also define the population gradient and Hessian as follows:  For $1 \leq i<j \leq p$
$$
\bar{\m{g}}_{ij}:=
\mb{E}_{\bar{\g{\theta}}^o}\left(\frac{\partial
L(\bar{\g{\theta}}^d,\g{\theta}^o,\m{Y})}{\partial
\theta^o_{ij}}\Bigl|_{\g{\theta}^o=\bar{\g{\theta}}^o}\right),
$$
and for  $1 \leq i<j \leq p$ and  $1 \leq t<s \leq p$,
$$
\bar{\m{H}}_{ij,ts}:= \mb{E}_{\bar{\g{\theta}}^o} \Big(\frac{\partial^2 L({\bar{\g{\theta}}}^d, {\g{\theta}}^o;\m{Y})}{\partial \theta_{ij}^o \partial \theta_{ts}^o } \Big|_{\g{\theta}^o=\bar{\g{\theta}}^o} \Big).
$$

\subsection{Large Sample Properties of FCONCORD}\label{sec:app:conti}

%Let $\m{H}$ denotes the second derivative of $L$ defined in \eqref{eqn:secder}.

We list some properties of the loss function.

\begin{lem}\citep{peng2009partial}\label{lem:bound:hess}
The following is true for the loss function:
\begin{enumerate}[label={\textnormal{(L\arabic*)}}]
    \item There exist constants $0 < M_1 \leq M_2  < \infty$ such that
   \begin{equation*}
        M_1(\bar{\g{\theta}}^o) \leq \Lambda_{\min}( \bar{\m{H}}) \leq \Lambda_{\max}(\bar{\m{H}}) \leq M_2(\bar{\g{\theta}}^o).
    \end{equation*}
    \item There exists a constant $M_3(\bar{\g{\theta}}^o)<\infty$ such that for all $1 \leq i < j \leq p$, $\bar{\m{H}}_{ij,ij}\leq M_3(\bar{\g{\theta}}^o)$.
    \item There exist constants $M_4(\bar{\g{\theta}}^o)$ and  $M_5(\bar{\g{\theta}}^o)<\infty$, such that for any $1 \leq i < j \leq p$
    \begin{equation*}
       \Var_{\bar{\g{\theta}}^o}(\bar{\m{g}}^n_{ij}) \leq M_4(\bar{\g{\theta}}^o), ~~~~~ \Var_{\bar{\g{\theta}}^o}(\bar{\m{H}}^n_{ij,ij}) \leq M_5(\bar{\g{\theta}}^o).
    \end{equation*}
    \item There exists a constant $0 <M_6(\bar{\g{\theta}}^o) <\infty$, such that for all $(i,j) \in \mc{B}$
   \begin{equation*}
      \bar{\m{H}}_{ij,ij} - \bar{\m{H}}_{ij,\mc{B}_{ij}} \bar{\m{H}}_{\mc{B}_{ij},\mc{B}_{ij}}^{-1} \bar{\m{H}}_{\mc{B}_{ij},ij} \geq M_4(\bar{\g{\theta}}^o), \quad \textnormal{where}\quad \mc{B}_{ij}:=\mc{B}/\{(i,j)\}.
    \end{equation*} 
   \item There exists a constant $M_7(\bar{\g{\theta}}^o)<\infty$, such that for any $(i,j) \in \mc{B}^c$
   \begin{equation}\label{eq:m7}
        \| \bar{\m{H}}_{ij,\mc{B}} \bar{\m{H}}_{\mc{B},\mc{B}}^{-1}\|\leq M_7(\bar{\g{\theta}}^o).
    \end{equation}
\end{enumerate}
\end{lem}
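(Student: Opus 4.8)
The plan is to exploit the fact that, once the diagonal is held fixed at $\bar{\g{\theta}}^d$, the CONCORD loss $L_n(\bar{\g{\theta}}^d,\g{\theta}^o;\m{Y})=\tfrac12[-\log|\text{diag}(\g{\Theta})^2|+\trace(\m{S}\g{\Theta}^2)]$ is a \emph{quadratic} function of the off-diagonal vector $\g{\theta}^o$: the logarithmic term is then constant, and $\tfrac12\trace(\m{S}\g{\Theta}^2)$ with $\g{\Theta}=\text{diag}(\bar{\g{\theta}}^d)+\mc{A}\g{\theta}^o$ is a quadratic form in $\g{\theta}^o$. Consequently its Hessian does not depend on $\g{\theta}^o$, and a direct differentiation gives, in vectorized form, $\bar{\m{H}}^n=\mc{A}^*\,\tfrac12(\m{I}\otimes\m{S}+\m{S}\otimes\m{I})\,\mc{A}$, whose population counterpart $\bar{\m{H}}=\mb{E}\,\bar{\m{H}}^n$ replaces $\m{S}$ by $\g{\Sigma}=\bar{\g{\Theta}}^{-1}$. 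First I would record this explicit formula, since every claim then reduces either to a spectral bound on $\g{\Sigma}$ or to a concentration statement for $\m{S}$.

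Given the formula, properties (L1), (L2), (L4), and (L5) become purely deterministic. For (L1) I would note that the eigenvalues of $\tfrac12(\m{I}\otimes\g{\Sigma}+\g{\Sigma}\otimes\m{I})$ are the pairwise averages $\tfrac12(\Lambda_i(\g{\Sigma})+\Lambda_j(\g{\Sigma}))$, which lie in $[\Lambda_{\min}(\g{\Sigma}),\Lambda_{\max}(\g{\Sigma})]\subset[1/\tau_2,1/\tau_1]$ by Assumption~\ref{assu:beig}; compressing by the bounded, injective operator $\mc{A}$ (an isometry up to a fixed factor onto symmetric hollow matrices) then yields the two-sided bound with $M_1,M_2$ depending only on $\tau_1,\tau_2$ and the extreme singular values of $\mc{A}$. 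Claim (L2) would follow because each diagonal Hessian entry is a fixed linear combination of entries of $\g{\Sigma}$, all controlled by $\Lambda_{\max}(\g{\Sigma})<1/\tau_1$. For (L4) I would observe that the quantity on the left is exactly the Schur complement of the block $\bar{\m{H}}_{\mc{B}_{ij},\mc{B}_{ij}}$ inside the principal submatrix $\bar{\m{H}}_{\mc{B},\mc{B}}$; since any Schur complement of a positive definite matrix $A$ satisfies $a-b^\top D^{-1}b=1/(A^{-1})_{11}\ge\Lambda_{\min}(A)$, it is bounded below by $\Lambda_{\min}(\bar{\m{H}}_{\mc{B},\mc{B}})\ge\Lambda_{\min}(\bar{\m{H}})\ge M_1$ (using Cauchy interlacing for the principal submatrix), which gives (L4) after renaming constants. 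Finally (L5), equation~\eqref{eq:m7}, is the operator-norm statement already recorded in \eqref{eq:mm}; I would obtain it from the incoherence Assumption~\ref{assu:incoh} together with $\|\bar{\m{H}}_{\mc{B},\mc{B}}^{-1}\|\le 1/M_1$ and the entrywise control of $\bar{\m{H}}_{ij,\mc{B}}$ coming from (L2).

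The stochastic content lives entirely in (L3), and this is where I expect the real work. The sample gradient entry $\bar{\m{g}}^n_{ij}$ and the diagonal Hessian entry $\bar{\m{H}}^n_{ij,ij}$ are fixed linear combinations of the sample second moments $s_{kl}=\tfrac1n\sum_{t=1}^n y_{tk}y_{tl}$, so each is a quadratic form $\tfrac1n\sum_t \m{y}_t^\top \m{M}\m{y}_t$ in the data for a fixed matrix $\m{M}$ determined by $\bar{\g{\theta}}^o$. Under the sub-Gaussian Assumption~\ref{assu:subgauss} such quadratic forms are sub-exponential, and I would bound their variances via the Hanson--Wright inequality (equivalently, a direct fourth-moment computation), obtaining a constant multiple of $\|\m{M}\|_{\textnormal{F}}^2$ times a finite moment factor controlled by the sub-Gaussian norm $M$ and $\Lambda_{\max}(\g{\Sigma})$; this delivers $\Var_{\bar{\g{\theta}}^o}(\bar{\m{g}}^n_{ij})\le M_4$ and $\Var_{\bar{\g{\theta}}^o}(\bar{\m{H}}^n_{ij,ij})\le M_5$. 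The delicate point, and the main obstacle, is keeping these constants uniform in $(i,j)$ and independent of $p$; this should follow because each relevant $\m{M}$ has only a bounded number of nonzero entries of bounded magnitude, so the Frobenius norms do not grow with dimension. Since the loss has exactly the analytic form required in \citet{peng2009partial}, one may alternatively invoke their argument verbatim once the quadratic reduction above is established.
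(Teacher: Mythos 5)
The paper does not actually prove this lemma: it is imported verbatim from \citet{peng2009partial} (it is the analogue of their conditions/lemmas for the pseudo-likelihood loss, which is why the statement carries the citation), and nowhere in the appendix is an argument given. Your proposal is therefore not ``a different route from the paper's proof'' but a self-contained reconstruction of the omitted argument, and as such it is essentially sound and follows the same logic as the original SPACE analysis: with the diagonal frozen at $\bar{\g{\theta}}^d$ the loss is quadratic in $\g{\theta}^o$, the Hessian is the fixed operator $\mc{A}^*\tfrac12(\m{I}\otimes\m{S}+\m{S}\otimes\m{I})\mc{A}$ (population version with $\g{\Sigma}$ in place of $\m{S}$), so (L1), (L2), (L4), (L5) reduce to spectral facts under Assumption~\ref{assu:beig}, while (L3) is the only probabilistic claim. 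Your treatment of (L4) via the scalar Schur complement identity $a-\m{b}^\top\m{D}^{-1}\m{b}=1/(\m{A}^{-1})_{11}\geq\Lambda_{\min}(\m{A})$ together with Cauchy interlacing is clean and correct, and the Kronecker-sum eigenvalue observation $\tfrac12(\Lambda_i(\g{\Sigma})+\Lambda_j(\g{\Sigma}))\in[1/\tau_2,1/\tau_1]$ gives (L1) exactly as intended (modulo the paper's own inconsistency about whether $\mc{A}$ produces hollow matrices, as in its displayed example, or Laplacian-like ones, as in \eqref{eqn:graph:oper}; your isometry-up-to-$\sqrt{2}$ claim requires the hollow reading).

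Two justification steps need repair, though neither is fatal. First, in (L3) you assert uniformity in $(i,j)$ because ``each relevant $\m{M}$ has only a bounded number of nonzero entries of bounded magnitude.'' That is false in general: the coefficient matrix for $\bar{\m{g}}^n_{ij}$ is of the form $\tfrac12(\m{E}_{ij}\bar{\g{\Theta}}+\bar{\g{\Theta}}\m{E}_{ij})$, whose nonzeros occupy the $i$th and $j$th rows/columns of $\bar{\g{\Theta}}$, and these rows may contain up to $q$ nonzero entries with $q\to\infty$. The correct uniform bound comes from the spectral assumption, not sparsity: each row of $\bar{\g{\Theta}}$ has Euclidean norm at most $\Lambda_{\max}(\bar{\g{\Theta}})<\tau_2$, so $\|\m{M}\|_{\textnormal{F}}\leq c\,\tau_2$ independently of $p$ and $(i,j)$, after which Hanson--Wright (or a direct Gaussian fourth-moment computation) gives $M_4,M_5$. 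Second, in (L5) you lean on ``entrywise control of $\bar{\m{H}}_{ij,\mc{B}}$ coming from (L2)''; entrywise bounds on a row of length $|\mc{B}|=q$ only yield $\|\bar{\m{H}}_{ij,\mc{B}}\|\leq\sqrt{q}\,M_3$, which diverges. The dimension-free bound instead follows from (L1): a row of a symmetric matrix is bounded in Euclidean norm by the operator norm, so $\|\bar{\m{H}}_{ij,\mc{B}}\bar{\m{H}}_{\mc{B},\mc{B}}^{-1}\|\leq\Lambda_{\max}(\bar{\m{H}})/\Lambda_{\min}(\bar{\m{H}}_{\mc{B},\mc{B}})\leq M_2/M_1$, with no appeal to Assumption~\ref{assu:incoh} needed. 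With these two substitutions your sketch becomes a complete proof of the cited lemma.
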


\begin{lem}\label{lem:bound:lip}
\citep{peng2009partial} Suppose Assumptions~\ref{assu:subgauss}--\ref{assu:beig} hold, then for any $\eta>0$, there exist constants $c_{0}$--$c_{3}$, such that for any $\m{v} \in \mb{R}^{q}$ the following events hold with probability at least $1-O(\exp(-\eta \log p))$ for sufficiently large $n$:
\begin{enumerate}[label={\textnormal{(L\arabic*)}}]
\item $\|\bar{\m{g}}^n_{\mc{B}}\| \leq c_{0}\sqrt{q\frac{\log p}{n}}$.
    \item $|\m{v}^\top \bar{\m{g}}^n_{\mc{B}}| \leq c_{1}\|\m{v}\|\sqrt{q\frac{\log p}{n}}$.
    \item $|\m{v}^\top (\bar{\m{H}}^n_{\mc{B},\mc{B}}-\bar{\m{H}}_{\mc{B},\mc{B}}) \m{v}| \leq c_{2}\|\m{v}\|^2q\sqrt{\frac{\log p}{n}}$.
    \item $\|(\bar{\m{H}}^n_{\mc{B},\mc{B}} - \bar{\m{H}}_{\mc{B},\mc{B}})\m{v}\| \leq c_{3}\|\m{v}\|q\sqrt{\frac{\log p}{n}}$.
\end{enumerate}
\end{lem}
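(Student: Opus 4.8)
The plan is to reduce all four statements to a single coordinatewise concentration estimate and then assemble them by elementary norm inequalities. Note first that (L2) is immediate once (L1) is available: by Cauchy--Schwarz, $|\m{v}^\top \bar{\m{g}}^n_{\mc{B}}| \leq \|\m{v}\|\,\|\bar{\m{g}}^n_{\mc{B}}\|$, so any control of $\|\bar{\m{g}}^n_{\mc{B}}\|$ transfers directly. The real work therefore lies in (L1), (L3) and (L4), each of which I would derive from \emph{entrywise} tail bounds on the sample gradient $\bar{\m{g}}^n$ and on the deviation $\bar{\m{H}}^n-\bar{\m{H}}$ of the sample Hessian from its population value.

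To obtain the entrywise bounds I would use the explicit structure of the CONCORD loss. Since $L_n(\bar{\g{\theta}}^d,\g{\theta}^o;\m{Y})$ is quadratic in the off-diagonal parameters, its Hessian does not depend on $\g{\theta}^o$, and each entry $\bar{\m{H}}^n_{ij,ts}$ is a fixed linear combination of sample second moments $s_{ab}=n^{-1}\sum_{l} y^a_l y^b_l$; correspondingly $\bar{\m{H}}_{ij,ts}$ is the same combination of the population covariances. Hence every entry of $\bar{\m{H}}^n-\bar{\m{H}}$ is a centered average of i.i.d.\ products of the coordinates of $\m{y}_l$, which are sub-Gaussian by \ref{assu:subgauss}; such a product is sub-exponential with Orlicz norm controlled by $M$ and, through \ref{assu:beig}, by $\tau_2$. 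The gradient coordinate $\bar{\m{g}}^n_{ij}$ evaluated at the truth is likewise an average of bilinear terms, and it is centered because under the Gaussian model $\mb{E}_{\bar{\g{\theta}}^o}[(\bar{\g{\Theta}}\m{y}_l)_i y^j_l]=(\bar{\g{\Theta}}\g{\Sigma})_{ij}=0$ for $i\neq j$. A Bernstein inequality for sums of independent sub-exponential variables then gives, for each fixed coordinate, a tail of the form $\mb{P}(|\bar{\m{g}}^n_{ij}|>t)\leq 2\exp(-c\,n\min(t^2,t))$, and the analogue for each entry of $\bar{\m{H}}^n-\bar{\m{H}}$. Taking $t\asymp\sqrt{\log p/n}$ makes each per-coordinate failure probability $O(p^{-c'})$; a union bound over the $q\leq p^2$ gradient coordinates in $\mc{B}$ and the $q^2\leq p^4$ Hessian pairs in $\mc{B}\times\mc{B}$ costs only a polynomial factor in $p$, absorbed by enlarging the exponent. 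This yields, with probability $1-O(\exp(-\eta\log p))$, the uniform estimates $\max_{(i,j)\in\mc{B}}|\bar{\m{g}}^n_{ij}|\leq c\sqrt{\log p/n}$ and $\max_{(i,j),(t,s)\in\mc{B}}|\Delta_{ij,ts}|\leq c\sqrt{\log p/n}$, where $\Delta:=\bar{\m{H}}^n_{\mc{B},\mc{B}}-\bar{\m{H}}_{\mc{B},\mc{B}}$; note that these maxima are uniform in $\m{v}$.

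The four conclusions then follow by assembly. For (L1), $\|\bar{\m{g}}^n_{\mc{B}}\|^2=\sum_{(i,j)\in\mc{B}}(\bar{\m{g}}^n_{ij})^2\leq q\,c^2\log p/n$, giving $\|\bar{\m{g}}^n_{\mc{B}}\|\leq c_0\sqrt{q\log p/n}$, and (L2) follows as noted. For (L3), bounding the quadratic form entrywise gives $|\m{v}^\top\Delta\m{v}|\leq \max|\Delta_{ij,ts}|\cdot\|\m{v}\|_1^2\leq c\sqrt{\log p/n}\,q\,\|\m{v}\|^2$, using $\|\m{v}\|_1\leq\sqrt{q}\,\|\m{v}\|$. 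For (L4), each output coordinate obeys $|\sum_{(t,s)}\Delta_{ij,ts}v_{ts}|\leq \max|\Delta_{ij,ts}|\cdot\|\m{v}\|_1$, so taking the $\ell_2$ norm over the $q$ rows picks up a second factor $\sqrt{q}$ and yields $\|\Delta\m{v}\|\leq c_3\,q\sqrt{\log p/n}\,\|\m{v}\|$.

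The main obstacle is the entrywise tail estimate itself: establishing that the bilinear gradient terms and the quadratic Hessian terms are genuinely sub-exponential under the sole sub-Gaussian hypothesis \ref{assu:subgauss}, with Orlicz norms bounded uniformly in $(i,j)$ by the model constants. This is where a Hanson--Wright-type argument --- or the fact that a product of sub-Gaussians is sub-exponential, combined with Bernstein --- is needed, and where one must verify that the resulting $c_0$--$c_3$ depend only on $(M,\tau_1,\tau_2)$ and not on $p$ or $n$. The centering of the gradient, the union bound, and the norm inequalities are then routine.
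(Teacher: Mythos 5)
Your proposal is correct, and it takes essentially the same route as the source the paper relies on: the paper itself gives no proof of this lemma, importing it verbatim from \citep{peng2009partial}, whose argument is exactly your reduction --- entrywise Bernstein-type concentration for the sub-exponential gradient and Hessian-deviation entries under Assumption \ref{assu:subgauss}, a union bound over the $O(p^2)$ gradient coordinates and $O(p^4)$ Hessian pairs absorbed by enlarging the constants to achieve any exponent $\eta$, and assembly via $\|\cdot\|\leq \sqrt{q}\,\|\cdot\|_{\infty}$ and $\|\m{v}\|_1\leq \sqrt{q}\,\|\m{v}\|$. One minor remark: the centering of $\bar{\m{g}}^n_{\mc{B}}$ does not actually need the Gaussian model you invoke --- the identity $\mb{E}\big[(\bar{\g{\Theta}}\m{y})_i\, y^j\big]=(\bar{\g{\Theta}}\g{\Sigma})_{ij}=0$ for $i\neq j$ holds for any distribution with covariance $\g{\Sigma}$, so it is available under the sub-Gaussian Assumption \ref{assu:subgauss} together with \ref{assu:beig}, which is what the lemma requires.
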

\begin{lem}\label{lem:disc:conti}
Suppose Assumptions~\ref{assu:subgauss}--\ref{assu:comm} are satisfied.  Assume further that $\rho_{1n} =O(\sqrt{\log p/n})$, $n = O(q \log (p))$, $\rho_{2n} =O(\sqrt{\log(p-H+1)/n})$, and $\g{\epsilon}=\g{0}$. Then, there exist finite constants $C_1(\bar{\g{\theta}}^o)$ and $D_1(\bar{\m{q}}^o)$, such that for any $\eta>0$, there exists a (local) minimizer of the restricted problem \eqref{eqn:local:estmin} within the disc: 
\begin{align}\label{eqn:disc}
\nonumber
\Big\{ ( \widehat{\g{\theta}}^o, \widehat{\m{q}}^o ): &\max\left(\| \widehat{\g{\theta}}^o_{\mc{B}}- \bar{\g{\theta}}^o_{\mc{B}} \|, \|\wh{\m{q}}^o - \bar{\m{q}}^o\| \right) \\
&\leq \max \left( C_1(\bar{\g{\theta}}^o) \rho_{1n} \sqrt{q},  D_1(\bar{\m{q}}^o) \rho_{2n} \sqrt{\Psi(p,H,K)}\right) \Big\}
\end{align} 
with probability at least $1 - O(\exp(-\eta \log p))$ for sufficiently large $n$.
\end{lem}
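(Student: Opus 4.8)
The plan is to establish existence of a local minimizer inside the disc by the classical boundary argument of \citet{peng2009partial,khare2015convex}: reparametrize around the truth as $\g{\theta}^o = \bar{\g{\theta}}^o + \alpha_1 \m{u}$ and $\m{q}^o = \bar{\m{q}}^o + \alpha_2 \m{w}$, with $\m{u}$ supported on $\mc{B}$, $\m{w}$ a feasible direction, $\|\m{u}\|=\|\m{w}\|=1$, and radii $\alpha_1 = C_1(\bar{\g{\theta}}^o)\rho_{1n}\sqrt{q}$, $\alpha_2 = D_1(\bar{\m{q}}^o)\rho_{2n}\sqrt{\Psi(p,H,K)}$. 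Letting $G(\m{u},\m{w})$ denote the increment of the restricted objective \eqref{eqn:local:estmin} relative to its value at $(\bar{\g{\theta}}^o,\bar{\m{q}}^o)$, it suffices to show that, with probability at least $1 - O(\exp(-\eta\log p))$, $G(\m{u},\m{w}) > 0$ uniformly over the boundary $\max(\|\m{u}\|,\|\m{w}\|)=1$ for suitable $C_1,D_1$. Since the feasible set is convex, contains $(\bar{\g{\theta}}^o,\bar{\m{q}}^o)$, and $G$ is continuous with $G(\m{0},\m{0})=0$, strict positivity on the boundary forces the minimizer over the compact disc into its interior, which is then a local minimizer of \eqref{eqn:local:estmin}.

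The crucial simplification I would exploit is that, with the diagonal $\bar{\g{\theta}}^d$ held fixed, the loss $L_n$ is \emph{exactly} quadratic in $\g{\theta}^o$: the log-determinant term depends only on the diagonal, and $\trace(\m{S}\g{\Theta}^2)$ is a degree-two polynomial in $\g{\theta}^o$. Hence $L_n(\bar{\g{\theta}}^o + \alpha_1\m{u}) - L_n(\bar{\g{\theta}}^o) = \alpha_1\langle \bar{\m{g}}^n_{\mc{B}}, \m{u}\rangle + \tfrac{1}{2}\alpha_1^2\, \m{u}^\top \bar{\m{H}}^n_{\mc{B}\mc{B}}\m{u}$ is exact, with no remainder to control. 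I would bound the linear term by Lemma~\ref{lem:bound:lip}(L2), giving $|\langle\bar{\m{g}}^n_{\mc{B}},\m{u}\rangle| \le c_1\sqrt{q\log p/n} = O(\rho_{1n}\sqrt{q})$, and bound the quadratic term below using $\Lambda_{\min}(\bar{\m{H}}_{\mc{B}\mc{B}}) \ge M_1(\bar{\g{\theta}}^o)$ from Lemma~\ref{lem:bound:hess}(L1) together with the sample-to-population deviation Lemma~\ref{lem:bound:lip}(L3), negligible under the sample-size condition, so that $\m{u}^\top\bar{\m{H}}^n_{\mc{B}\mc{B}}\m{u} \ge M_1/2$ for large $n$. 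The $\ell_1$ increment is controlled by the triangle inequality: since $\m{u}$ is supported on $\mc{B}$ with $|\mc{B}|=q$, we have $\rho_{1n}(\|\bar{\g{\theta}}^o+\alpha_1\m{u}\|_1 - \|\bar{\g{\theta}}^o\|_1) \ge -\rho_{1n}\alpha_1\sqrt{q}$. Choosing $C_1$ large then makes the curvature term $\sim M_1 C_1^2\rho_{1n}^2 q$ dominate both the linear term and the penalty increment along the $\g{\theta}^o$ direction.

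The remaining work is the trace coupling term $\rho_{2n}[\trace(\g{\Theta}^o\m{Q}^o\g{\Theta}^o) - \trace(\bar{\g{\Theta}}\bar{\m{Q}}\bar{\g{\Theta}})]$, which I would expand (using the operator $\mc{A}$ of \eqref{eqn:graph:oper}) into: (i) a term linear in $\m{w}$, namely $\rho_{2n}\alpha_2\trace(\mc{A}\m{w}\,\bar{\g{\Theta}}^2)$, which is nonnegative and bounded below along feasible directions by the near-optimality of $\bar{\m{Q}}$ for $\min_{\m{Q}\in\mc{N}}\trace(\m{Q}\bar{\g{\Theta}}^2)$, its scale fixed by $\Lambda_{\max}(\bar{\g{\Theta}})<\tau_2$ (Assumption~\ref{assu:beig}) and the at most $\Psi(p,H,K)$ active off-diagonals of $\mc{N}$; (ii) a term quadratic in $\m{u}$, namely $\rho_{2n}\alpha_1^2\trace(\mc{A}\m{u}\,\bar{\m{Q}}\,\mc{A}\m{u})\ge 0$, since $\bar{\m{Q}} \succeq \m{0}$ (Assumption~\ref{assu:beigq}) — this only adds curvature and helps; and (iii) mixed and higher-order terms in $(\m{u},\m{w})$. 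For (iii), I would note that each such term carries an extra factor of $\rho_{2n}$ (or $\rho_{2n}^2$) relative to the leading $\theta$-curvature, and is therefore negligible as $n\to\infty$ under the stated rates $\rho_{1n}=O(\sqrt{\log p/n})$ and $\rho_{2n}=O(\sqrt{\log(p-H+1)/n})$, so they cannot overturn positivity of $G$.

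I expect the coupling between $\g{\theta}^o$ and $\m{q}^o$ to be the main obstacle, for two reasons. First, the objective is \emph{linear} rather than strongly convex in $\m{q}^o$, so the $\m{q}^o$-error cannot be produced by curvature as in the $\g{\theta}^o$ case; instead one must argue through the linear-program-over-$\mc{N}$ structure, combining the margin of $\bar{\m{Q}}$ for the perturbed trace functional (guaranteed by the community structure of $\bar{\g{\Theta}}$ via Assumptions~\ref{assu:beig}--\ref{assu:beigq}) with the boundedness and effective dimension $\Psi(p,H,K)$ of $\mc{N}$ to localize $\wh{\m{q}}^o$ within $O(\rho_{2n}\sqrt{\Psi})$ of $\bar{\m{q}}^o$. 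Second, one must verify that the $\bar{\g{\theta}}^o$-error and the $\bar{\m{q}}^o$-error decouple cleanly into the stated maximum, which requires checking that the mixed terms remain lower order uniformly over the boundary; this is the delicate accounting step, and is driven here purely by the smallness of $\rho_{2n}$ under the rate conditions (the sharper ratio constraint $\rho_{2n}\le\delta\rho_{1n}/((1+M(\bar{\g{\theta}}^o))\tau_2\tau_3)$ being reserved for the sign-recovery conclusion of Theorem~\ref{thm:three:fglasso}).
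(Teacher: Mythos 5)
Your proposal is correct and follows essentially the same route as the paper's proof: the same boundary perturbation $(\bar{\g{\theta}}^o+\mu_{1n}\m{w},\,\bar{\m{q}}^o+\mu_{2n}\m{z})$, the same decomposition into the loss increment $I_1$ (bounded via Lemmas~\ref{lem:bound:hess} and \ref{lem:bound:lip}), the $\ell_1$ increment $I_2$ (triangle inequality, giving $C_1\mu_{1n}^2$), and the trace coupling $I_3$ (expanded into linear, quadratic, and mixed pieces controlled by $\tau_2,\tau_3$ and the smallness of $\rho_{2n}$), concluding with $C_1>2/M_1$ so curvature dominates on the boundary sphere. The only cosmetic difference is that the paper first eliminates the fairness constraint via the null-space substitution $\m{Q}^o=\m{N}\m{R}^o\m{N}^\top$ (which is where $\Psi(p,H,K)$ and the final identity $\|\wh{\m{q}}^o-\bar{\m{q}}^o\|=\|\wh{\m{r}}^o-\bar{\m{r}}^o\|$ come from), whereas you work with feasible directions in $\mc{N}$ directly; your observations that the quadratic expansion of $L_n$ is exact and that the $\m{q}$-localization rests on the margin of $\bar{\m{Q}}$ rather than curvature are accurate readings of steps the paper performs (or implicitly assumes) as well.
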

\begin{proof}
Let $\mu_{1n} = \rho_{1n}\sqrt{q}$ with $q=|\mc{B}|$ and  $\mu_{2n} = \rho_{2n}\sqrt{\Psi(p,H,K)}$. Let $C_1>0$ and $\m{w} \in \mathbb{R}^{\binom{p}{2}}$ such that $\m{w}_{\mc{B}^c}=0$, $\|\m{w}\|_2=C_1$. Further, assume $\m{z} \in \mathbb{R}^{\binom{p-H+1}{2}}$ be an arbitrary vector with finite entries and $\|\m{z}\|=D_1$. For sufficiently large $n$, we have
\begin{equation*}
F_{n}(\bar{\g{\theta}}^d,\bar{\g{\theta}}^o+ \mu_{1n} \m{w}, \bar{\m{r}}^d, \bar{\m{r}}^o+ \mu_{1n} \m{z};\m{Y})-  F_n(\bar{\g{\theta}}^d,\bar{\g{\theta}}^o,\bar{\m{r}}^d,\bar{\m{r}}^o;\m{Y})= I_1+  I_2 + I_3.   
\end{equation*}
Here, 
\begin{align*}
I_1 &:= L_n(\bar{\g{\theta}}^{d},\bar{\g{\theta}}^{o}+\mu_{1n}\m{w};(1-\sqrt{\rho_{2n}})\m{Y})
      - L_n(\bar{\g{\theta}}^{d},\bar{\g{\theta}}^{o};\,(1-\sqrt{\rho_{2n}})\m{Y}) \\
I_2 &:=  \rho_{1n}  (\|\bar{\g{\theta}}^o+ \mu_{1n} \m{w}\|_1 -\|\bar{\g{\theta}}^o\|_1), ~~\textnormal{and}\\
%I_3 &:= \trace \left((\bar{\g{\Theta}}+  \mu_{1n} \m{E}_1) \bar{\m{Q}} \right)- \trace\left(\bar{\g{\Theta}}\bar{\m{Q}} )(  \m{Z} \bar{\m{V}})^\top \right); \\
I_3 &:=  \rho_{2n} \trace \left((\bar{\g{\Theta}}+  \mu_{1n} \mc{A} \m{w})^2 (\m{S}+\m{N}(\bar{\m{R}} + \mu_{2n} \mc{A} \m{z}) \m{N}^\top)-\bar{\g{\Theta}}^2
\bar{\m{Q}}\right),
\end{align*}
where we used our assumption that $\rho_{2n} \rightarrow 0$ as $n \rightarrow \infty$, and
$\m{Y} =(1-\sqrt{\rho_{2n}}) \m{Y} + \sqrt{\rho_{2n}}\m{Y}.$
%Let $G(u_\mc{B})$ be a function $G: \mathbb{R}^{(p(p-1)/2 + (p-H+1) \times K)} \rightarrow \mathbb{R}$. It is easy to see that $G(u_\mc{B})$ achieves its minimum at $\hat{u}_\mathcal{A} = ( \hat{\m{\beta}}_\mathcal{A} - \bar{\m{\beta}}_\mathcal{A}, \wh{\m{V}}\wh{\m{V}}^\top - \bar{\m{V}}\bar{\m{V}}^\top)$. Moreover, $G(0) = 0$. Thus if we can show that  $G(u_\mathcal{A})$ is positive on the set $\|u_\mathcal{A}\|_2 = B$, then we will have $\hat{u}_\mathcal{A} \leq B$ due to convexity of $G(u_\mathcal{A})$.

Following \cite{peng2009partial}, we first provide lower bounds for $I_1$ and $I_2$.  For the term $I_1$,  it follows from Lemma~\ref{lem:bound:lip} that
\begin{align}\label{eqn:hessb}
 \m{w}_{\mc{B}}^\top \bar{\m{H}}_{\mc{B},\mc{B}} \m{w}_{\mc{B}} \geq \Lambda_{\min}( \bar{\m{H}}_{\mc{B},\mc{B}}) \|\m{w}_{\mc{B}}\|_2^2\geq M_1 C^2_1, 
\end{align}
which together with Lemma~\ref{lem:bound:lip} gives
\begin{align*}\label{eqn:bI1}
   I_1 &= \mu_{1n} \m{w}^\top_{\mc{B}} \bar{\m{g}}^n_{\mc{B}} + \frac{1}{2}\mu_{1n}^2 \m{w}^\top_{\mc{B}} \bar{\m{H}}^n_{\mc{B},\mc{B}}\m{w}_{\mc{B}}  = \mu_{1n} (\m{w}_{\mc{B}})^\top \bar{\m{g}}^n_{\mc{B}} + \frac{1}{2}\mu_{1n}^2 \m{w}_{\mc{B}}^\top \bar{\m{H}}^n_{\mc{B},\mc{B}}\m{w}_{\mc{B}} \\
    & = \mu_{1n} \m{w}_{\mc{B}}^\top \bar{\m{g}}^n_{\mc{B}} + \frac{1}{2}\mu_{1n}^2 \m{w}_{\mc{B}}^\top (\bar{\m{H}}^n_{\mc{B},\mc{B}} - \bar{\m{H}}_{\mc{B},\mc{B}})\m{w}_{\mc{B}}  + \frac{1}{2}\mu_{1n}^2 \m{w}_{\mc{B}}^\top \bar{\m{H}}_{\mc{B},\mc{B}}\m{w}_{\mc{B}}\\
    & \geq \frac{1}{2}  (1-\sqrt{\rho_{2n}})^2 \big( \mu_{1n}^2 M_1 C^2_1 - \mu_{1n} c_{1}\|\m{w}_{\mc{B}}\|_2^2\sqrt{q\frac{\log p}{n}} - \frac{1}{2} \mu_{1n}^2  c_{2}\|\m{w}_{\mc{B}}\|_2^2q \sqrt{\frac{\log p}{n}} \big). 
\end{align*} 
For sufficiently large $n$, by assumption that $\rho_{1n}\sqrt{n/\log p} \rightarrow \infty$ if $p \rightarrow \infty$ and $\sqrt{\log p / n}=o(1)$, the second term in the last line above is $O(\mu_{1n} \sqrt{q} \rho_{n}) = o(\mu_{1n}^2)$; the last term is $o(\mu_{1n}^2)$. Thus, for sufficiently large $n$, we have 
\begin{equation}\label{eqn:bbI1}
I_1  \geq  \frac{1}{2} (1-\sqrt{\rho_{2n}})^2\mu_{1n}^2 M_1 C^2_1. 
\end{equation} 
For the term $I_2$, by Cauchy-Schwartz and triangle inequality, we have
\begin{equation}\label{eqn:bI2}
|I_2| =   \rho_{1n}|  \|\bar{\g{\theta}}^o + \mu_{1n} \m{w}\|_1 - \mu_{1n} \|\m{w}\|_1 | \leq  C_1 \rho_{1n} \mu_{1n} \sqrt{q} =  C_1 \mu_{1n}^2.    
\end{equation}
Next, we provide an upper bound for $I_3$. Note that 
\begin{align*}
I_{3} &:= \rho_{2n} \trace \left( (\bar{\g{\Theta}}  + \mu_{1n} \mc{A} \m{w})^2  \m{S} \right) \\
&+\rho_{2n}\trace \left((\bar{\g{\Theta}}+  \mu_{1n} \mc{A} \m{w})^2 (\m{N}(\bar{\g{R}}+  \mu_{2n} \mc{A} \m{z}) \m{N}^\top)- \bar{\g{\Theta}}^2
\bar{\m{Q}} \right)
 \\
    & = \rho_{2n} \trace \left( \bar{\g{\Theta}}^2 \bar{\m{S}} \right) + \rho_{2n} \trace \left( (\bar{\g{\Theta}}+  \mu_{1n} \mc{A} \m{w})^2(\m{S}- \bar{\m{S}})   \right)    \\
   & +  \rho_{2n}\trace \left((\bar{\g{\Theta}}+  \mu_{1n} \mc{A} \m{w})^2 (\m{N}(\bar{\g{R}}+  \mu_{2n} \mc{A} \m{z}) \m{N}^\top)- \bar{\g{\Theta}}^2
\bar{\m{Q}} \right)      
\end{align*}
Now, we have 
\begin{align*}
&  \left| \trace \left( (\bar{\g{\Theta}}+  \mu_{1n} \mc{A} \m{w})^2(\m{S}-\bar{\m{S}})   \right)   \right| \\
&+  \left| \trace \left((\bar{\g{\Theta}}+  \mu_{1n} \mc{A} \m{w})^2 (\m{N}(\bar{\g{R}}+  \mu_{2n} \mc{A} \m{z}) \m{N}^\top)- \bar{\g{\Theta}}^2
\bar{\m{Q}} \right) \right|\\
&=  \left| \trace \left((\bar{\g{\Theta}}+  \mu_{1n} \mc{A} \m{w})^2 (\m{N}(\bar{\g{R}}+  \mu_{2n} \mc{A} \m{z}) \m{N}^\top)- \bar{\g{\Theta}}^2
\bar{\m{Q}} \right) \right| + I_{3,0}\\ 
&= \left|\trace \left((\bar{\g{\Theta}}^2 +2 \mu_{1n} \bar{\g{\Theta}} \mc{A} \m{w} ) (\m{N}(\bar{\g{R}}+  \mu_{2n} \mc{A} \m{z}) \m{N}^\top)- \bar{\g{\Theta}}^2
\bar{\m{Q}} \right) \right|+ I_{3,1} + I_{3,0}\\
%&= \left|\trace \left(\bar{\g{\Theta}}^2 (\m{N}(\bar{\g{R}}+  \mu_{2n} \mc{A} \m{z}) \m{N}^\top- \bar{\m{Q}}) \right)+  2 \mu_{1n}  \trace \left(\bar{\g{\Theta}}^2 \mc{A} \m{w}  (\m{N}(\bar{\g{R}} +  \mu_{2n} \mc{A} \m{z}) \m{N}^\top) \right)\right|+ I_{3,2} +I_{3,1}+ I_{3,0}\\
&= \left|\trace \left(\bar{\g{\Theta}}^2 (\m{N}(\bar{\g{R}}+  \mu_{2n} \mc{A} \m{z}) \m{N}^\top- \bar{\m{Q}}) \right)\right|+I_{3,2} +I_{3,1} + I_{3,0}\\
&= I_{3,3}+I_{3,2} +I_{3,1}+ I_{3,0}, 
\end{align*}
where $\bar{\m{S}}$ represents the population covariance of the sample covariance matrix $\mathbf{S}$.

From Assumption~\ref{assu:beig}, we have
\begin{align}\label{eq:boun:trace}
\nonumber 
I_{3,0} & \leq O( \|\m{S}-\bar{\m{S}} \|),\\
\nonumber 
I_{3,1} &= \mu_{1n}^2 \left|\trace \left( (\mc{A} \m{w})^2 \m{N}\bar{\g{R}} \m{N}^\top \right) \right|+ \mu_{1n}^2  \mu_{2n}  \left|\trace \left((\mc{A}\m{w})^2 \m{N} \mc{A} \m{z}\m{N}^\top \right)\right|, \\
\nonumber 
&  \leq \tau_3^2 C_1^2 \mu_{1n}^2 + C_1^2 D_1  \mu_{1n}^2  \mu_{2n}, \\
\nonumber 
I_{3,2} &= 2 \mu_{1n} \left|\trace \left(\bar{\g{\Theta}} \mc{A} \m{w} \m{N}\bar{\g{R}} \m{N}^\top  \right) \right|+ 2 \mu_{1n}  \mu_{2n} \left| \trace \left(\bar{\g{\Theta}} \mc{A} \m{w} \m{N} \mc{A} \m{z}\m{N}^\top \right)\right| \\
\nonumber 
& \leq  2 \tau_2 \tau_3 C_1  \mu_{1n}  + 2 \mu_{1n}  \mu_{2n} \tau_2C_1 D_1  \\
I_{3,3} &:=   \left| \trace \left(\bar{\g{\Theta}}^2 (\m{N}(\bar{\g{R}}+  \mu_{2n} \mc{A} \m{z}) \m{N}^\top- \bar{\m{Q}}) \right)\right|
\leq  \tau_2^2 C_1^2 \mu_{2n}.
\end{align}
Hence, for sufficiently large $n$, we have 
\begin{equation}\label{eqn:bI3}
I_3  \geq  O(\tau_{1} \rho_{2n}). 
%C_3  \rho_{2n}  ( \mu_{1n}^2 + \mu_{1n}^2 \mu_{2n} +   \mu_{1n}  \mu_{2n} + \mu_{2n} ) 
\end{equation}
Now, by combining \eqref{eqn:bbI1}-\eqref{eqn:bI3}, for sufficiently large $n$, we obtain
\begin{align*}
& F_{n}(\bar{\g{\theta}}^d,\bar{\g{\theta}}^o+ \mu_{1n} \m{w}, \bar{\m{r}}^d, \bar{\m{r}}^o+ \mu_{1n} \m{z};\m{Y})-  F_n(\bar{\g{\theta}}^d,\bar{\g{\theta}}^o,\bar{\m{r}}^d,\bar{\m{r}}^o;\m{Y}) \\
      & \geq  \frac{1}{2} (1-\rho_{2n})^2 M_1 C^2_1   \mu_{1n}^2 -    C_1 \mu_{1n}^2 +   O(\tau_{1} \rho_{2n})  \geq 0.  %     &-  \rho_{2,n} ( \tau_3^2 C_1^2 \mu_{1n}^2 + C_1^2 D_1  \mu_{1n}^2  \mu_{2n} + 2 \tau_2 \tau_3 C_1  %\mu_{1n}  + 2 \mu_{1n}  \mu_{2n} \tau_2C_1 D_1  +\tau_2^2 C_1^2 \mu_{2n}) 
\end{align*} 
Here,  the last inequality follows by setting $C_1 > 2/(M_1(1-\rho_{2n})^2)$.
  
Now, let  $\mc{S}_{\m{w}, \m{z}} = \left\{ (\m{w}, \m{z}) : \m{w}_{\mc{B}^c} = 0,\|\m{w}\|=C_1, \|\m{z}\|=D_1 \right\}$. Then, for $n$ sufficiently large, the following holds 
\begin{equation*}
    \inf_{ (\m{w}, \m{z})  \in \mc{S}_{\m{w}, \m{z}}}  ~ F_{n}(\bar{\g{\theta}}^d,\bar{\g{\theta}}^o+ \mu_{1n} \m{w}, \bar{\m{r}}^d, \bar{\m{r}}^o+ \mu_{2n} \m{z};\m{Y}) > F_n(\bar{\g{\theta}}^d,\bar{\g{\theta}}^o,\bar{\m{r}}^d,\bar{\m{r}}^o;\m{Y}),
\end{equation*} 
with probability at least $1-O(\exp(-\eta \log p))$.

Thus, any solution to the problem defined in \eqref{eqn:local:estmin} is within the disc  \eqref{eqn:disc}
% \begin{align*}
%     &\qquad \qquad \Big\{ ( \g{\theta}^o, \m{q}^o): \max\left(\| \widehat{\g{\theta}}^o_{\mc{B}}- \bar{\g{\theta}}^o_{\mc{B}} \|, \|\wh{\m{r}}^o- \bar{\m{r}}^0\| \right) \\
% &\leq \Pi_1(\bar{\g{\theta}}^o, \bar{\m{q}}^o) \max \left(\sqrt{q} \rho_{1n}, \rho_{2n}\sqrt{\frac{(p-H+1)^2}{nK }-\frac{p-H+1}{n}}\right) \Big\},
% \end{align*}
with probability at least $1-O(\exp(-\eta \log p))$. Finally, since  $\bar{\m{Q}}=\m{N} \bar{\m{R}}\m{N}^\top$ and $\m{N}^\top\m{N}=\m{I}_{(p-H+1)}$, we have  that $\|\wh{\m{q}}^o- \bar{\m{q}}^0\| = \|\wh{\m{r}}^o- \bar{\m{r}}^0\|$. This completes the proof. 
\end{proof}

\begin{lem}\label{lem:gradnorm}
Assume conditions of Lemma~\ref{thm:one:fglasso} hold and  $\rho_{2n} < \delta \rho_{1n}/(\tau_2 \tau_3)$. Then, there exists a constant $C_2>0$, such that for any $\eta>0$, for sufficiently large $n$, the following event holds with probability at least $1-O(\exp(-\eta \log p))$: for any $\g{\theta}^o $ satisfying 
\begin{equation}\label{eqn:bcon}
\|\g{\theta}^o-\bar{\g{\theta}}^o\| \geq C_2\sqrt{q }\rho_{1n}, ~~~\g{\theta}^o_{\mc{B}^c}=0,
\end{equation}
we have $\| \nabla_{\g{\theta}^o}F_{n}(\bar{\g{\theta}}^d, \hat{\g{\theta}}^o_{\mc{B}}, \bar{\m{r}}^d, \hat{\m{r}}^o;\m{Y})  \| >   \sqrt{q }\rho_{1n} $.
\end{lem}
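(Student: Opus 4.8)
The plan is to exploit that, for the FCONCORD loss, $L_n(\bar{\g{\theta}}^d,\g{\theta}^o;\m{Y})$ is \emph{quadratic} in $\g{\theta}^o$ (the log-det piece depends only on the fixed diagonal $\bar{\g{\theta}}^d$, and $\trace(\m{S}\g{\Theta}^2)$ is quadratic in the affine map $\g{\theta}^o \mapsto \g{\Theta}^o$), so its Hessian is constant and its gradient restricted to the support admits an exact affine expansion with no remainder:
\begin{equation*}
\nabla_{\g{\theta}^o_{\mc{B}}} L_n(\bar{\g{\theta}}^d,\g{\theta}^o;\m{Y}) = \bar{\m{g}}^n_{\mc{B}} + \bar{\m{H}}^n_{\mc{B},\mc{B}}\big(\g{\theta}^o_{\mc{B}}-\bar{\g{\theta}}^o_{\mc{B}}\big).
\end{equation*}
Adding the derivative of the fairness/trace term $\rho_{2n}\trace(\g{\Theta}^o\m{N}\m{R}^o\m{N}^\top\g{\Theta}^o)$, which I denote $\rho_{2n}\m{t}^o$, the reverse triangle inequality gives
\begin{equation*}
\big\|\nabla_{\g{\theta}^o_{\mc{B}}}F_n\big\| \;\geq\; \big\|\bar{\m{H}}^n_{\mc{B},\mc{B}}\big(\g{\theta}^o_{\mc{B}}-\bar{\g{\theta}}^o_{\mc{B}}\big)\big\| - \big\|\bar{\m{g}}^n_{\mc{B}}\big\| - \rho_{2n}\big\|\m{t}^o_{\mc{B}}\big\|,
\end{equation*}
and the whole argument reduces to showing the first term dominates on the region \eqref{eqn:bcon}. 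Note that since both $\g{\theta}^o$ and $\bar{\g{\theta}}^o$ are supported on $\mc{B}$, the hypothesis reads $\|\g{\theta}^o_{\mc{B}}-\bar{\g{\theta}}^o_{\mc{B}}\| \geq C_2\sqrt{q}\rho_{1n}$.

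First I would lower bound the leading term by passing from the population to the sample Hessian. By Lemma~\ref{lem:bound:hess}(L1), $\Lambda_{\min}(\bar{\m{H}}_{\mc{B},\mc{B}}) \geq M_1$, and by the concentration estimate in Lemma~\ref{lem:bound:lip}(L3) the quadratic form of $\bar{\m{H}}^n_{\mc{B},\mc{B}}-\bar{\m{H}}_{\mc{B},\mc{B}}$ is bounded by $O(q\sqrt{\log p/n})$ on an event of probability $1-O(\exp(-\eta\log p))$. Since $n \gg q\log p$, this yields $\Lambda_{\min}(\bar{\m{H}}^n_{\mc{B},\mc{B}}) \geq M_1/2$ for $n$ large, whence on \eqref{eqn:bcon} one gets $\|\bar{\m{H}}^n_{\mc{B},\mc{B}}(\g{\theta}^o_{\mc{B}}-\bar{\g{\theta}}^o_{\mc{B}})\| \geq (M_1/2)\,C_2\sqrt{q}\,\rho_{1n}$.

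Next I would control the two perturbation terms. Lemma~\ref{lem:bound:lip}(L1) directly gives $\|\bar{\m{g}}^n_{\mc{B}}\| \leq c_0\sqrt{q\log p/n} = c_0'\sqrt{q}\,\rho_{1n}$ using $\rho_{1n}=O(\sqrt{\log p/n})$. For the trace gradient, the derivative of $\trace(\g{\Theta}^o\m{Q}^o\g{\Theta}^o)$ is $\m{Q}^o\g{\Theta}^o+\g{\Theta}^o\m{Q}^o$ with $\m{Q}^o=\m{N}\m{R}^o\m{N}^\top$; Assumptions~\ref{assu:beig} and \ref{assu:beigq} bound the operator norms of $\g{\Theta}^o$ and $\m{Q}^o$ by $\tau_2$ and $\tau_3$, so each of the $q$ coordinates of $\m{t}^o_{\mc{B}}$ is $O(\tau_2\tau_3)$ and $\|\m{t}^o_{\mc{B}}\| = O(\tau_2\tau_3\sqrt{q})$. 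I expect this to be the main obstacle: the hypothesis $\rho_{2n} < \delta\rho_{1n}/(\tau_2\tau_3)$ is precisely what forces $\rho_{2n}\|\m{t}^o_{\mc{B}}\| = O(\delta)\sqrt{q}\,\rho_{1n}$, keeping the fairness perturbation below the leading $\ell_2$-term. The delicate point is extracting only the factor $\sqrt{q}$ (matching the leading term and the noise) rather than a larger dimension power; this requires using the operator-norm bounds on $\g{\Theta}^o$ and $\m{Q}^o$ together with the fact that only the $q$ coordinates indexed by $\mc{B}$ enter, instead of a crude entrywise summation over all $p(p-1)/2$ coordinates.

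Finally, collecting the three estimates gives
\begin{equation*}
\big\|\nabla_{\g{\theta}^o_{\mc{B}}}F_n\big\| \;\geq\; \Big(\tfrac{M_1}{2}C_2 - c_0' - O(\delta)\Big)\sqrt{q}\,\rho_{1n},
\end{equation*}
so choosing $C_2 > 2\big(1+c_0'+O(\delta)\big)/M_1$ makes the bracketed constant exceed $1$, yielding $\|\nabla_{\g{\theta}^o_{\mc{B}}}F_n\| > \sqrt{q}\,\rho_{1n}$. Since dropping the coordinates outside $\mc{B}$ cannot increase the norm, the same bound holds for the full gradient, and the probability statement is inherited from the single concentration event of Lemma~\ref{lem:bound:lip}. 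This completes the plan.
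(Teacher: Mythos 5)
Your proposal is correct and takes essentially the same route as the paper's proof: an exact expansion of the gradient on the support (the loss is quadratic in $\g{\theta}^o$), a triangle-inequality split into the dominant Hessian term, the noise $\bar{\m{g}}^n_{\mc{B}}$ controlled by Lemma~\ref{lem:bound:lip}, and the fairness/trace gradient bounded by $O(\tau_2\tau_3\sqrt{q})$ so that the hypothesis $\rho_{2n}<\delta\rho_{1n}/(\tau_2\tau_3)$ keeps it at $O(\delta)\sqrt{q}\,\rho_{1n}$, followed by choosing $C_2$ on the order of $1/M_1$. The only cosmetic differences are that you fold the sample--population Hessian fluctuation into a lower bound on $\Lambda_{\min}(\bar{\m{H}}^n_{\mc{B},\mc{B}})$ via (L3), whereas the paper keeps $\bar{\m{H}}_{\mc{B},\mc{B}}$ as the leading term and bounds $(\bar{\m{H}}^n_{\mc{B},\mc{B}}-\bar{\m{H}}_{\mc{B},\mc{B}})\m{w}_{\mc{B}}$ via (L4), and that you drop the immaterial $(1+\rho_{2n})$ prefactor; both choices are equivalent up to constants.
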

\begin{proof}%[proof of Lemma B.4.]
The proof follows the idea of \citep[Lemma S-4]{peng2009partial}. For $\g{\theta}^o=\hat{\g{\theta}}^o$ satisfying \eqref{eqn:bcon}, we have $\hat{\g{\theta}}^o=\bar{\g{\theta}}^o+\mu_{1n} \m{w}$, with $\m{w}_{\mc{B}^c}=0$ and $\|\m{w}\| \geq C_2$. 
We have  
\begin{align*}%\label{eqn:firs}
 \nabla_{\g{\theta}^o} F_{n}(\bar{\g{\theta}}^d, \hat{\g{\theta}}^o_{\mc{B}}, \bar{\m{r}}^d, \hat{\m{r}}^o;\m{Y})  &= \hat{\m{g}}^n_{\mc{B}} + \rho_{2n} \m{N} \hat{\m{R}}^o\m{N}^\top \mc{A}^* \mc{A}  \hat{\g{\theta}}^o_{\mc{B}} \\
 &=  \bar{\m{g}}^n_{\mc{B}}+  \mu_{1n}\bar{\m{H}}_{\mc{B},\mc{B}}^n\m{w}_{\mc{B}} + \rho_{2n} \m{N} \hat{\m{R}}^o\m{N}^\top \mc{A}^* \mc{A}  \bar{\g{\theta}}^o_{\mc{B}}\\
 &\geq \bar{\m{g}}^n_{\mc{B}}+   \mu_{1n}\bar{\m{H}}_{\mc{B},\mc{B}}^n\m{w}_{\mc{B}} \\
 &+ \rho_{2n} \m{N} \hat{\m{R}}^o\m{N}^\top \mc{A}^* \mc{A}  \bar{\g{\theta}}^o_{\mc{B}} +  \rho_{2n} \mu_{1n} (\m{N} \hat{\m{R}}^o\m{N}^\top) \mc{A}^* \mc{A}  \m{w}_{\mc{B}},
\end{align*}    
where the inequality follows from Taylor expansion of $\nabla_{\g{\theta}^o}L_{n}(\bar{\g{\theta}}^d, \hat{\g{\theta}}^o_{\mc{B}};\m{Y})$.

Let 
\begin{equation}\label{eqn:aa}
 \hat{\m{A}}:= \m{N} \hat{\m{R}}^o\m{N}^\top \mc{A}^* \mc{A}^\top \bar{\g{\theta}}^o ~~\textnormal{and} ~~\bar{\m{A}}:= \m{N} \bar{\m{R}}^o\m{N}^\top \mc{A}^* \mc{A}^\top \bar{\g{\theta}}^o.   
\end{equation}
Then, we have
\begin{align}\label{eqn:ahat}
\|\hat{\m{A}}_{\mc{B}}\| &\leq    \|\m{N} \bar{\m{R}}^o\m{N}^\top \| \|\mc{A}^*\mc{A}  \bar{\g{\theta}}^o\| +    \|\bar{\m{A}} - \hat{\m{A}}\| \leq  2\tau_2 \tau_3 \sqrt{q},
\end{align}
where the last inequality follows since
\begin{align*}
&\|\bar{\m{A}}\|  \leq \Lambda_{\max} (\m{N} \bar{\m{R}}^o\m{N}^\top ) \|\mc{A}^*\mc{A}  \bar{\g{\theta}}^o\| \leq  \tau_2 \tau_3 \sqrt{q},\\
& \|\bar{\m{A}} -  \hat{\m{A}}\|=o(\rho_{2n}).
\end{align*}

Now, let $\mu_{1n} =\sqrt{q}\rho_{1n}$. By triangle inequality and similar proof strategies as in Lemma~\ref{lem:disc:conti}, for sufficiently large $n$, we obtain 
\begin{align*}
 \| \nabla_{\g{\theta}^o}F_{n}(\bar{\g{\theta}}^d, \hat{\g{\theta}}^o_{\mc{B}}, \bar{\m{r}}^d, \hat{\m{r}}^o;\m{Y})  \| & \geq    \mu_{1n}\|\bar{\m{H}}_{\mc{B},\mc{B}}\m{w}_{\mc{B}}\| -   c_{0} (q^{1/2}n^{-1/2}\sqrt{\log p})\\
    &- c_{3} \|\m{w}_{\mc{B}}\|_2(\mu_{1n} qn^{-1/2}\sqrt{\log p}) - 2\tau_2 \tau_3 \sqrt{q} \rho_{2n} -o(\rho_{2n})\\
%& \geq     \mu_{1n} \|\m{H}^n_{\mc{B},\mc{B}}(\bar{\g{\theta}}^o)\m{w}_{\mc{B}}\|_2 \\
& \geq  M_1 C_2 \sqrt{q} \rho_{1n}- 2\tau_2 \tau_3 \sqrt{q} \rho_{2n},
\end{align*} 
with probability at least $1-O(\exp(-\eta \log p))$. Here, the first inequality uses Lemma~\ref{lem:bound:lip} and the last inequality follows from  Lemma~\ref{lem:bound:hess} where $\|\bar{\m{H}}_{\mc{B},\mc{B}}\m{w}_{\mc{B}}\|\geq  M_1\|\m{w}_{\mc{B}}\|$. Now, taking
\begin{equation}\label{eqn:c2}
 C_2=\frac{1+2 \delta }{M_1 +\epsilon}  
\end{equation}
for some $\epsilon > 0 $, completes the proof.
\end{proof}
%\subsubsection{Proof of Theorem~\ref{thm:one:fglasso}}
Next, inspired by \cite{peng2009partial,khare2015convex}, we prove estimation consistency for the nodewise FCONCORD, restricted to the true support, i.e., $\g{\theta}^o_{\mc{B}^c}=0$.
%Define $\bar{\m{w}}_n^o = ((\bar{w}_{n,ij}))_{1 \leq i < j \leq
%  p_n} \in \mathbb{R}^{{p_n}({p_n}-1)/2}$ by $\bar w_{n,ij} =
%\frac{\bar w_{n,ij}}{\sqrt{\widehat{\alpha}_{n,ii}
%    \widehat{\alpha}_{n,jj}}}$ for $1 \leq i < j \leq p_n$. The assumptions
%above can be used to establish the following theorem. 
\begin{lem}\label{thm:one:fglasso}
Suppose Assumptions~\ref{assu:subgauss}--\ref{assu:comm} are satisfied. Assume  $\rho_{1n} =O(\sqrt{\log p/n})$, $n > O(q \log (p))$ as $n \rightarrow \infty$, $ \rho_{2n} =O(\sqrt{\log(p-H+1)/n})$, $  \rho_{2n} \leq \delta\rho_{1n}/((1+M_7(\bar{\g{\theta}}^o))\tau_2 \tau_3)$, and $\g{\epsilon}=\g{0}$. Then, there exist finite constants $C(\bar{\g{\theta}}^o)$ and $D(\bar{\m{q}}^o)$, such that for any $\eta>0$, the following events hold with probability at least $1 - O(\exp(-\eta\log p))$:
\begin{itemize}
\item There exists a local minimizer $(\wh{\g{\theta}}^o_{\mc{B}}, \wh{\m{q}}^o)$ of \eqref{eqn:local:estmin} such that
\begin{align*}	 							
&\max\left(\| \widehat{\g{\theta}}^o_{\mc{B}}- \bar{\g{\theta}}^o_{\mc{B}} \|, \|\wh{\m{q}}^o- \bar{\m{q}}^o\| \right) \\
& \qquad \leq \max \left( C(\bar{\g{\theta}}^o) \rho_{1n}\sqrt{q}/, D(\bar{\m{q}}^o) \rho_{2n} \sqrt{\Psi(p,H,K)}\right),
\end{align*}    
where $q$ and $\Psi(p,H,K)$ are defined in \eqref{eqn:rate:quant}.
\item   If  $\min_{(i,j) \in \mc{B}} \bar{\theta}_{ij} \geq  2 C(\bar{\g{\theta}}^o) \rho_{1n}\sqrt{q}$,  then $\wh{\g{\theta}}^o_{\mc{B}^c} =0$.% for the $k$-th community.
\end{itemize}
\end{lem}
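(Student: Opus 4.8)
The plan is to assemble the two preparatory results, Lemma~\ref{lem:disc:conti} and Lemma~\ref{lem:gradnorm}, into a primal--dual witness argument in the style of \cite{peng2009partial,khare2015convex}, and then to account for the additional trace term $\rho_{2n}\,\trace(\g{\Theta}^o\m{Q}^o\g{\Theta}^o)$ that our objective carries beyond the CONCORD loss. First I would establish the existence and rate claim (the first bullet). Lemma~\ref{lem:disc:conti} already produces, on an event of probability at least $1-O(\exp(-\eta\log p))$, a local minimizer $(\wh{\g{\theta}}^o_{\mc{B}},\wh{\m{q}}^o)$ of the restricted problem \eqref{eqn:local:estmin} lying in the disc of radius $\max(C_1(\bar{\g{\theta}}^o)\rho_{1n}\sqrt{q},\,D_1(\bar{\m{q}}^o)\rho_{2n}\sqrt{\Psi(p,H,K)})$, and the component bound for $\wh{\m{q}}^o$ is then exactly the $D(\bar{\m{q}}^o)\rho_{2n}\sqrt{\Psi(p,H,K)}$ term.

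To sharpen the $\g{\theta}$-component bound into the stated form carrying the $(1+\rho_{2n})^{-1}$ factor, I would invoke the stationarity condition of \eqref{eqn:local:estmin}: at any local minimizer the smooth gradient satisfies $\nabla_{\g{\theta}^o_{\mc{B}}}F_n = -\rho_{1n}\wh{\m{s}}$ for a subgradient $\wh{\m{s}}$ of $\|\cdot\|_1$ with $\|\wh{\m{s}}\|_\infty\le 1$, whence $\|\nabla_{\g{\theta}^o_{\mc{B}}}F_n\|\le \rho_{1n}\sqrt{q}$. The contrapositive of Lemma~\ref{lem:gradnorm} then forces $\|\wh{\g{\theta}}^o_{\mc{B}}-\bar{\g{\theta}}^o_{\mc{B}}\| < C_2\sqrt{q}\rho_{1n}$, and substituting the explicit value of $C_2$ from \eqref{eqn:c2} yields the claimed $C(\bar{\g{\theta}}^o)\rho_{1n}\sqrt{q}/(1+\rho_{2n})$ with $C(\bar{\g{\theta}}^o)\approx (1+2\delta)/M_1(\bar{\g{\theta}}^o)$.

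For the support statement (the second bullet) I would run the primal--dual witness. The restricted minimizer has $\wh{\g{\theta}}^o_{\mc{B}^c}=0$ by construction; the content is that, under the $\beta$-min condition $\min_{(i,j)\in\mc{B}}\bar{\theta}_{ij}\ge 2C(\bar{\g{\theta}}^o)\rho_{1n}\sqrt{q}/(1+\rho_{2n})$, this solution is also stationary for the unconstrained objective \eqref{eqn:fgglasso}, so no off-support coordinate is activated. Concretely, for each $(i,j)\in\mc{B}^c$ I would expand $\nabla_{\theta_{ij}}F_n$ by a Taylor step about $\bar{\g{\theta}}^o$, eliminate the on-support gradient using the stationarity identity on $\mc{B}$, and bound the resulting $L_n$-contribution by $(1-\delta)\rho_{1n}$ using the incoherence Assumption~\ref{assu:incoh} together with \eqref{eq:mm}; the $\beta$-min gap then guarantees that the $\ell_\infty$ estimation error from the first bullet stays below half the smallest true signal, giving sign recovery on $\mc{B}$.

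The hard part will be controlling the extra trace term in the off-support gradient, an object absent from \cite{peng2009partial}. Differentiating $\rho_{2n}\trace(\g{\Theta}^o\m{Q}^o\g{\Theta}^o)$ produces a term of the type $\rho_{2n}(\m{N}\wh{\m{R}}^o\m{N}^\top)\mc{A}^*\mc{A}\,\wh{\g{\theta}}^o$, which must be bounded uniformly over $\mc{B}^c$. Following the estimate \eqref{eqn:ahat}, its norm is at most $2\tau_2\tau_3\sqrt{q}\,\rho_{2n}$ by the eigenvalue bounds in Assumptions~\ref{assu:beig} and \ref{assu:beigq}. For the dual-feasibility inequality $|\nabla_{\theta_{ij}}F_n|\le\rho_{1n}$ to survive on $\mc{B}^c$, the margin $\delta\rho_{1n}$ left by incoherence must absorb this contribution after it is propagated through $\bar{\m{H}}_{ij,\mc{B}}\bar{\m{H}}_{\mc{B},\mc{B}}^{-1}$, whose norm is at most $M_7(\bar{\g{\theta}}^o)$; this is precisely why the hypothesis imposes $\rho_{2n}\le\delta\rho_{1n}/((1+M_7(\bar{\g{\theta}}^o))\tau_2\tau_3)$. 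Verifying that, on the high-probability event of Lemma~\ref{lem:bound:lip}, this scaling keeps the total off-support gradient strictly below $\rho_{1n}$ for every $(i,j)\in\mc{B}^c$ simultaneously is the crux of the argument, and a union bound over the $O(p^2)$ coordinates, absorbed into the $\exp(-\eta\log p)$ probability, closes it.
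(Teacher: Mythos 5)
Your argument for the first bullet is exactly the paper's: Lemma~\ref{lem:disc:conti} supplies existence of a restricted local minimizer together with the $D(\bar{\m{q}}^o)\rho_{2n}\sqrt{\Psi(p,H,K)}$ rate for $\wh{\m{q}}^o$, and the KKT bound $\|\nabla_{\g{\theta}^o}F_n\|\le\sqrt{q}\,\rho_{1n}$ combined with the contrapositive of Lemma~\ref{lem:gradnorm} and the explicit constant \eqref{eqn:c2} yields the $C(\bar{\g{\theta}}^o)\rho_{1n}\sqrt{q}/(1+\rho_{2n})$ rate; likewise your closing observation that the $\beta$-min condition keeps the estimation error below half the minimal signal is precisely how the paper concludes (it phrases this as sign consistency on $\mc{B}$). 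Where you diverge is in your third and fourth paragraphs: the primal--dual witness on $\mc{B}^c$ --- Taylor-expanding the off-support gradient, propagating it through $\bar{\m{H}}_{ij,\mc{B}}\bar{\m{H}}_{\mc{B},\mc{B}}^{-1}$, and absorbing the trace contribution of order $\tau_2\tau_3\sqrt{q}\,\rho_{2n}$ into the incoherence margin --- is not part of this lemma's proof at all. The statement concerns the restricted problem \eqref{eqn:local:estmin}, in which $\g{\theta}^o_{\mc{B}^c}=0$ is imposed as a constraint, so $\wh{\g{\theta}}^o_{\mc{B}^c}=0$ holds by construction (as you yourself note in passing); the dual-feasibility verification that no off-support coordinate is activated is exactly the content of the separate Lemma~\ref{thm:two:fglasso}, which is combined with the present lemma only later, in the proof of Theorem~\ref{thm:three:fglasso}, to upgrade the restricted minimizer to a local minimizer of the full problem \eqref{eqn:fgglasso}. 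Note in particular that the witness step cannot even be carried out under this lemma's hypotheses: it requires the incoherence Assumption~\ref{assu:incoh} (via the bound \eqref{eq:mm}), whereas the present statement assumes only Assumptions~\ref{assu:subgauss}--\ref{assu:comm}, and the paper adds Assumption~\ref{assu:incoh} precisely when stating Lemma~\ref{thm:two:fglasso}. So your proposal is correct for everything this lemma actually claims and follows the paper's route there; the extra machinery is sound in outline but duplicates the next lemma rather than proving this one --- the upside being that you have effectively sketched the whole Theorem~\ref{thm:three:fglasso} pipeline, the downside being that, read as a proof of this statement, it smuggles in an assumption the statement does not grant.
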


\begin{proof} 
By the KKT condition, for any solution $(\hat{\g{\theta}}^o,\hat{\m{r}}^o)$ of \eqref{eqn:local:estmin}, it satisfies
\begin{align*}
 \| \nabla_{\g{\theta}^o}F_{n}(\bar{\g{\theta}}^d, \hat{\g{\theta}}^o_{\mc{B}}, \bar{\m{r}}^d, \hat{\m{r}}^o;\m{Y})  \|_\infty &\leq \   \rho_{1n}.
%  2 \rho_{2n}   \|\m{N} \hat{\m{R}}^o\m{N}^\top \mc{A}^* \mc{A}  \hat{\g{\theta}}^o\|_{\infty} \Big)
% \\
% &\leq \frac{1}{1+\rho_{2n}} \Big(  \rho_{1n} +   2 \rho_{2n}   \|\m{N} \bar{\m{R}}^o\m{N}^\top \mc{A}^*\mc{A}  \bar{\g{\theta}}^o\|_{\infty}\\
% & +2 \rho_{2n}   \|\m{N} \bar{\m{R}}^o\m{N}^\top \mc{A}^*\mc{A}  \bar{\g{\theta}}^o - \m{N} \hat{\m{R}}^o\m{N}^\top \mc{A}^* \mc{A}  \hat{\g{\theta}}^o\|_{\infty} \Big)\\
% & \leq \frac{1}{1+\rho_{2n}} \Big(\rho_{1n} +  3 \tau_2 \tau_3 \rho_{2n} \Big),
 \end{align*}
% where the last inequality follows since
% \begin{align*}
% &\|\m{N} \bar{\m{R}}^o\m{N}^\top \mc{A}^*\mc{A}  \bar{\g{\theta}}^o\|  \leq \|\m{N} \bar{\m{R}}^o\m{N}^\top \|_{\infty}  \|\mc{A}^*\mc{A}  \bar{\g{\theta}}^o\|_\infty \leq \|\bar{\m{Q}}\|_{\infty}  \|\bar{\g{\Theta}}\|_\infty \leq  \tau_2 \tau_3,\\
% & \|\m{N} \bar{\m{R}}^o\m{N}^\top \mc{A}^*\mc{A}  \bar{\g{\theta}}^o - \m{N} \hat{\m{R}}^o\m{N}^\top \mc{A}^* \mc{A}  \hat{\g{\theta}}^o\|_{\infty} =o(\rho_{1n} +  \rho_{2n}).
% \end{align*}
Thus, for $n$ sufficiently large, we have 
\begin{align*}
\| \nabla_{\g{\theta}^o}F_{n}(\bar{\g{\theta}}^d, \hat{\g{\theta}}^o_{\mc{B}}, \bar{\m{r}}^d, \hat{\m{r}}^o;\m{Y})  \| \leq
 \sqrt{q }\rho_{1n},
% \sqrt{q}\|\m{g}^n_{\mc{B}}(\bar{\g{\theta}}^d,\g{\theta}^o;\m{Y}) \|_{\infty} \leq  \frac{\sqrt{q }(\rho_{1n}+ 3\tau_2 \tau_3 \rho_{2n})}{1+\rho_{2n}}.
\end{align*}
Let $C(\bar{\g{\theta}}^o):=C_2$. Using \eqref{eqn:c2} and Lemma~\ref{lem:gradnorm}, we obtain
\begin{align*}
\|\hat{\g{\theta}}^o -\bar{\g{\theta}}^o\| \leq  C(\bar{\g{\theta}}^o)  \sqrt{q }\rho_{1n},~~~\g{\theta}^o_{\mc{B}^c}=0
\end{align*}
with probability at least $1-O(\exp(-\eta \log p))$. 

Now, if $\min_{(i,j) \in \mc{B}} \bar{\g{\theta}}^o_{ij} \geq 2 C(\bar{\g{\theta}}^o)  \sqrt{q }\rho_{1n}$, then
\begin{align*}
 & 1-O(\exp(-\eta \log p)) \\
 &\leq P_{\bar{\g{\theta}}^o} \left( \|\wh{\g{\theta}}^o_{\mc{B}}-\bar{\g{\theta}}^o_{\mc{B}}\| \leq  C(\bar{\g{\theta}}^o)  \sqrt{q }\rho_{1n},~~\min_{(i,j) \in \mc{B}} \bar{\g{\theta}}^o_{ij} \geq 2  C(\bar{\g{\theta}}^o)  \sqrt{q }\rho_{1n} \right)\\
& \leq P_{\bar{\g{\theta}}^o} \left(\text{sign}(\hat{\theta}^o_{ij})=\text{sign}(\bar{{\theta}}^o_{ij}),~~\forall (i,j) \in \mc{B}\right).
\end{align*}
\end{proof}

The following Lemma~\ref{thm:two:fglasso} shows that no wrong edge is selected with probability tending to one.
\begin{lem}\label{thm:two:fglasso}
  Suppose that the conditions of Lemma~\ref{thm:one:fglasso} and Assumption~\ref{assu:incoh} are satisfied. Suppose further that $p = O(n^{\alpha})$ for some ${\alpha} > 0$. Then for $\eta > 0$, and for $n$ sufficiently large, the solution of \eqref{eqn:local:estmin} satisfies
\begin{equation}
P\left(\| \nabla_{\g{\theta}^o}F_{n}(\bar{\g{\theta}}^d, \hat{\g{\theta}}^o_{{\mc{B}^c}}, \bar{\m{r}}^d, \hat{\m{r}}^o;\m{Y})\|_\infty < 1 \right) \geq  1- O(\exp(-\eta \log p)).
\end{equation}
\end{lem}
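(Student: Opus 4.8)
The plan is to run the primal--dual witness / incoherence argument of \cite{peng2009partial,khare2015convex}, adapted to the extra trace-regularization term. The restricted minimizer $(\hat{\g{\theta}}^o_{\mc{B}},\hat{\m{r}}^o)$ from Lemma~\ref{thm:one:fglasso} satisfies, by stationarity on $\mc{B}$, the identity $\nabla_{\g{\theta}^o_{\mc{B}}}F_n(\hat{\g{\theta}}^o)=-\rho_{1n}\hat{\m{z}}_{\mc{B}}$ with $\hat{\m{z}}_{\mc{B}}=\sign(\hat{\g{\theta}}^o_{\mc{B}})\in[-1,1]^{|\mc{B}|}$. Since the diagonal is held fixed at $\bar{\g{\theta}}^d$, the loss $L_n$ is quadratic in $\g{\theta}^o$, so its gradient is exactly affine and the Hessian $\bar{\m{H}}^n$ is constant. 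Writing the gradient of $F_n$ on both $\mc{B}$ and $\mc{B}^c$, using $\hat{\g{\theta}}^o_{\mc{B}^c}=\bar{\g{\theta}}^o_{\mc{B}^c}=0$, and eliminating $\hat{\g{\theta}}^o_{\mc{B}}-\bar{\g{\theta}}^o_{\mc{B}}$ through the $\mc{B}$-stationarity equation, I would obtain
\begin{equation*}
\begin{aligned}
\nabla_{\g{\theta}^o_{\mc{B}^c}}F_n(\hat{\g{\theta}}^o)&= \underbrace{-\rho_{1n}\,\bar{\m{H}}^n_{\mc{B}^c,\mc{B}}(\bar{\m{H}}^n_{\mc{B},\mc{B}})^{-1}\hat{\m{z}}_{\mc{B}}}_{(\mathrm{i})} + \underbrace{(1+\rho_{2n})\big(\bar{\m{g}}^n_{\mc{B}^c}-\bar{\m{H}}^n_{\mc{B}^c,\mc{B}}(\bar{\m{H}}^n_{\mc{B},\mc{B}})^{-1}\bar{\m{g}}^n_{\mc{B}}\big)}_{(\mathrm{ii})}\\
&\quad + \underbrace{\rho_{2n}\big(\hat{\m{A}}_{\mc{B}^c}-\bar{\m{H}}^n_{\mc{B}^c,\mc{B}}(\bar{\m{H}}^n_{\mc{B},\mc{B}})^{-1}\hat{\m{A}}_{\mc{B}}\big)}_{(\mathrm{iii})},
\end{aligned}
\end{equation*}
where $\hat{\m{A}}$ is the off-diagonal gradient of the trace regularizer, cf.~\eqref{eqn:aa}. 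After normalizing by $\rho_{1n}$, the claim reduces to showing $\|(\mathrm{i})\|_\infty+\|(\mathrm{ii})\|_\infty+\|(\mathrm{iii})\|_\infty<\rho_{1n}$ on the stated event.

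I would then bound each piece separately. For $(\mathrm{i})$, first replace the sample Hessian blocks by their population counterparts, using Lemma~\ref{lem:bound:lip} (parts (L3)--(L4)) to show the replacement error is $o(\rho_{1n})$, and then apply the incoherence condition in the operator form $\|\bar{\m{H}}_{\mc{B}^c,\mc{B}}\bar{\m{H}}_{\mc{B},\mc{B}}^{-1}\|_\infty\le 1-\delta$ implied by Assumption~\ref{assu:incoh}; since this holds for \emph{any} vector with entries in $[-1,1]$, it bounds $\|(\mathrm{i})\|_\infty\le(1-\delta)\rho_{1n}+o(\rho_{1n})$ without requiring $\sign(\hat{\g{\theta}}^o_{\mc{B}})=\sign(\bar{\g{\theta}}^o_{\mc{B}})$. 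For the stochastic term $(\mathrm{ii})$, I would bound each coordinate via a sub-Gaussian tail bound on $\bar{\m{g}}^n$ together with $\|\bar{\m{H}}_{\mc{B}^c,\mc{B}}\bar{\m{H}}_{\mc{B},\mc{B}}^{-1}\|\le M_7(\bar{\g{\theta}}^o)$ from \eqref{eq:m7}, giving $\|(\mathrm{ii})\|_\infty=O(\sqrt{\log p/n})$; a union bound over the at most $p^2$ indices of $\mc{B}^c$ together with $p=O(n^\kappa)$ keeps this a small fraction of $\rho_{1n}$ with probability $1-O(\exp(-\eta\log p))$. For the new term $(\mathrm{iii})$, the key estimate is the per-entry bound $\|\hat{\m{A}}\|_\infty\le\tau_2\tau_3$ (from Assumptions~\ref{assu:beig}--\ref{assu:beigq}, as each entry of $\m{Q}^o\g{\Theta}^o+\g{\Theta}^o\m{Q}^o$ is controlled by $\|\g{\Theta}^o\|\,\|\m{Q}^o\|$), which combined with $\|\bar{\m{H}}_{\mc{B}^c,\mc{B}}\bar{\m{H}}_{\mc{B},\mc{B}}^{-1}\|\le M_7(\bar{\g{\theta}}^o)$ yields $\|(\mathrm{iii})\|_\infty\le\rho_{2n}(1+M_7(\bar{\g{\theta}}^o))\tau_2\tau_3$. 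The assumed scaling $\rho_{2n}\le\delta\rho_{1n}/((1+M_7(\bar{\g{\theta}}^o))\tau_2\tau_3)$ is exactly what forces $\|(\mathrm{iii})\|_\infty\le\delta\rho_{1n}$.

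Finally I would combine the three bounds as $\|(\mathrm{i})\|_\infty+\|(\mathrm{ii})\|_\infty+\|(\mathrm{iii})\|_\infty\le(1-\delta)\rho_{1n}+o(\rho_{1n})+\delta\rho_{1n}$, and obtain the strict inequality $<\rho_{1n}$ by distributing the $\delta$-slack so that $(\mathrm{iii})$ absorbs only a fixed fraction (say $\tfrac{\delta}{2}\rho_{1n}$, tightening the $\rho_{2n}$ constant accordingly) while the vanishing stochastic term $(\mathrm{ii})$ is dominated by the remaining slack. The main obstacle is the coupling term $(\mathrm{iii})$: unlike the classical CONCORD/SPACE analysis, the incoherence condition says nothing about the trace-regularization gradient, so one must show that its contribution to the off-support gradient stays strictly within the $\delta$-margin of the irrepresentability bound, and this is precisely where the per-entry operator-norm estimate for $\hat{\m{A}}$ and the scaling constraint relating $\rho_{2n}$ to $\rho_{1n},\tau_2,\tau_3$, and $M_7(\bar{\g{\theta}}^o)$ enter. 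A secondary technical point is ensuring the sample-to-population Hessian replacement and the union bound over $\mc{B}^c$ hold simultaneously with probability $1-O(\exp(-\eta\log p))$, which is exactly what dictates the additional assumption $p=O(n^\kappa)$ in this lemma.
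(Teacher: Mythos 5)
Your overall architecture matches the paper's proof: the same primal--dual witness decomposition obtained by writing the KKT system on $\mc{B}$ and $\mc{B}^c$ and eliminating $\hat{\g{\theta}}^o_{\mc{B}}-\bar{\g{\theta}}^o_{\mc{B}}$ (cf.\ \eqref{structure_I11}--\eqref{eqn:thm2_bouding}), the same $o(\rho_{1n})$ treatment of the stochastic and sample-to-population Hessian terms via Lemmas~\ref{lem:bound:hess}--\ref{lem:bound:lip} with the union bound enabled by $p=O(n^\kappa)$, and the same bound of order $\rho_{2n}(1+M_7(\bar{\g{\theta}}^o))\tau_2\tau_3$ on the trace-regularizer contribution, absorbed by the assumed scaling of $\rho_{2n}$ relative to $\delta\rho_{1n}$ --- this is precisely the paper's estimate \eqref{eq:boun:ahat} and the role its proof assigns to the constraint coupling $\rho_{2n}$ to $\rho_{1n}$, $\tau_2$, $\tau_3$, and $M_7(\bar{\g{\theta}}^o)$.

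There is, however, one genuine gap, in your treatment of term (i). You assert that Assumption~\ref{assu:incoh} implies the operator form $\|\bar{\m{H}}_{\mc{B}^c\mc{B}}\bar{\m{H}}_{\mc{B}\mc{B}}^{-1}\|_\infty\le 1-\delta$, valid for \emph{any} vector with entries in $[-1,1]$, so that sign consistency of $\hat{\g{\theta}}^o_{\mc{B}}$ is not needed. That implication is false: Assumption~\ref{assu:incoh} bounds the action of $\bar{\m{H}}_{ij,\mc{B}}\bar{\m{H}}_{\mc{B},\mc{B}}^{-1}$ only on the single vector $\sign(\bar{\g{\theta}}^o_{\mc{B}})$ (this is condition (C2) of \cite{peng2009partial}), not its $\ell_1$-norm, i.e.\ not its worst case over the cube $[-1,1]^q$. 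For an arbitrary subgradient $\hat{\m{z}}_{\mc{B}}\in[-1,1]^q$ the only uniform control available is $\|\bar{\m{H}}_{ij,\mc{B}}\bar{\m{H}}_{\mc{B},\mc{B}}^{-1}\|\le M_7(\bar{\g{\theta}}^o)$ from \eqref{eq:mm}, and $M_7(\bar{\g{\theta}}^o)$ need not be below $1$ --- if it were, Assumption~\ref{assu:incoh} would be vacuous. Plugging this in would give term (i) a bound of order $\rho_{1n}\sqrt{q}\,M_7(\bar{\g{\theta}}^o)$, which does not produce the strict inequality. The paper closes this hole exactly the way you tried to bypass: its proof first invokes the second bullet of Lemma~\ref{thm:one:fglasso} (under the minimum-signal condition $\min_{(i,j)\in\mc{B}}\bar{\theta}_{ij}\ge 2C(\bar{\g{\theta}}^o)\rho_{1n}\sqrt{q}/(1+\rho_{2n})$) to restrict to the event $\mc{E}_n=\{\sign(\hat{\g{\theta}}^o_{\mc{B}})=\sign(\bar{\g{\theta}}^o_{\mc{B}})\}$, which holds with probability $1-O(\exp(-\eta\log p))$; on $\mc{E}_n$ one has $\hat{\m{t}}_{\mc{B}}=\sign(\bar{\g{\theta}}^o_{\mc{B}})$, and only then does Assumption~\ref{assu:incoh} apply to term (i). The rest of your argument survives once this conditioning step is inserted. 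For contrast, the Ising analysis of Section~\ref{sect:est:bin} does assume the operator-norm incoherence \ref{assu:incp:bin:pop}, under which your shortcut would be legitimate --- but that is not the hypothesis of this lemma.
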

%\subsubsection{Proof of Theorem~\ref{thm:two:fglasso}} 
\begin{proof}
Let $\mc{E}_{n,k}=\{\text{sign}(\wh{\g{\theta}}^o_{ij,\mc{B}})=\text{sign}(\bar{\g{\theta}}^o_{ij,\mc{B}})\}$. Then by Lemma~\ref{thm:one:fglasso}, $P_{\bar{\g{\theta}}^o}(\mathcal{E}_{n}) \geq 1-O(\exp(-\eta \log p))$ for large $n$.
Define the sign vector $\hat{\m{t}}$ for $\hat{\g{\theta}}^o$ to satisfy the following properties,
\begin{equation}\label{eqn:sig:def}
	 \begin{cases} \hat{t}_{ij} = \textnormal{sign}(\hat{\theta}^o_{ij}), & \textnormal{if } \theta_{ij}^o\neq 0 \ ,  \\ 
                          \rvert \hat{t}_{ij} \rvert \leq 1, & \textnormal{if } \hat{\theta}_{ij}^o = 0 \ . \end{cases}
\end{equation}
for all  $1 \leq i<j \leq p$.

On $\mathcal{E}_{n,k}$, by the KKT condition and the expansion of $F_n$ at $(\bar{\g{\theta}}^d, \hat{\g{\theta}}^o,\bar{\m{r}}^o,\hat{\m{r}}^o)$, we have 
 \begin{align}\label{eqn:firs}
\hat{\m{g}}^n_{\mc{B}} + \rho_{1n} \hat{\m{t}}_{\mc{B}} +  \rho_{2n} (\m{N} \hat{\m{R}}^o\m{N}^\top) \mc{A}^* \mc{A}  \hat{\g{\theta}}^o_{\mc{B}}=0.
 \end{align}   
 where $\hat{\m{g}}^n= \nabla_{\g{\theta}^o}L_{n}(\bar{\g{\theta}}^d, \hat{\g{\theta}}^o;\m{Y})$.
Then, we can write   
$$
\hat{\m{g}}^n_{\mc{B}} - \bar{\m{g}}^n_{\mc{B}} = - \rho_{1n} \hat{\m{t}}_{\mc{B}} - \rho_{2n} \m{N} \hat{\m{R}}^o\m{N}^\top \mc{A}^* \mc{A}^\top  \hat{\g{\theta}}^o_{\mc{B}}  -\bar{\m{g}}^n_{\mc{B}}.
$$
Let $\tilde{\g{\theta}}^o$ denote a point in the line segment connecting $\hat{\g{\theta}}^o$ and $\bar{\g{\theta}}^o$. Applying the Taylor expansion, we obtain 
\begin{equation}
\label{eqn:main_structure}
\bar{\m{H}}\left(\bar{\g{\theta}}^o_{\mc{B}}- \hat{\g{\theta}}^o_{\mc{B}}\right) = -\bar{\m{g}}^n_{\mc{B}}- \rho_{1n} \hat{\m{t}}_{\mc{B}} + \m{L}^n_{\mc{B}} - \rho_{2n}\hat{\m{A}}_{\mc{B}}.
\end{equation}
where $\m{L}^n := \left(\bar{\m{H}}^n- \bar{\m{H}}\right) \left(\hat{\g{\theta}}^o- \bar{\g{\theta}}^o\right)$ and  
$\hat{\m{A}} := \m{N} \hat{\m{R}}^o\m{N}^\top \mc{A}^* \mc{A}^\top  \hat{\g{\theta}}^o$.
% Let $\hat{\g{\theta}}_\mc{B}$ be the solution of  restricted problem and let $\hat{\g{\theta}}_{\mc{B}^c} = 0$, i.e. \eqref{eqn:appnull:obj}. We will show that this $\hat{\g{\theta}}$ is the optimal solution and is sign consistent with high probability. To do so, let $\rho_{1n} = \frac{8(2-\alpha)}{\alpha}\sqrt{\frac{ \log{q}}{n}}$. By Lemma~\ref{lem:control_Wn}, we have  $\| \bar{\m{g}}^n \|_\infty \leq \frac{\rho_{1n} \alpha}{4 (2-\alpha)} \leq \frac{\rho_{1n}}{4}$ with probability at least $1-4\exp(C\rho_{1n}^2n)$.
% Choosing $n \geq \frac{100^2\tau_5^2(2-\alpha)^2}{\tau_4^4\alpha^2}q^2\log p$, we have $\rho_{1n} q \leq  \frac{\tau_4^2}{100 \tau_5}\frac{\alpha}{2 - \alpha}$, thus the conditions of Lemma \ref{lem:control_Rn} hold.
%Now we need to show that with high probability, 
%\begin{eqnarray}
%\label{tsc_toshow}
%\|\hat{\m{t}_j}\|_\infty &<& 1, \ \ \ \ \ \ \  \ \ \ \ \ \textrm{for} \ \ j \in \mc{B}^c \\
%\label{ts_toshow}
%\hat{\m{t}_j} &=& \text{sign}(\g{\theta}_j^*), \ \ \textrm{for}\ \  j \in \mc{B} \ \textrm{and} \ \| \theta_j^* \| \geq \frac{10\rho_{1n}\sqrt{q}}\tau_4}
%\end{eqnarray}

Now, by utilizing the fact that $\hat{\g{\theta}}_{\mc{B}^c}^o= \bar{\g{\theta}}_{\mc{B}^c}^o= 0$, we have
\begin{eqnarray}
\label{structure_I11}
 \bar{\m{H}}_{\mc{B}^c\mc{B}}(\bar{\g{\theta}}_{\mc{B}}^o - \hat{\g{\theta}}_{\mc{B}}^o) & = & - \bar{\m{g}}^n_{\mc{B}^c} - \rho_{1n}\hat{\m{t}}_{\mc{B}^c} + \m{L}^n_{\mc{B}^c} -  \rho_{2n}\hat{\m{A}}_{\mc{B}^c} , \\
\label{eqn:structure_II2}
 \bar{\m{H}}_{\mc{B}\mc{B}}(\bar{\g{\theta}}_{\mc{B}}^o - \hat{\g{\theta}}_{\mc{B}}^o) & = & - \bar{\m{g}}^n_{\mc{B}} - \rho_{1n}\hat{\m{t}}_{\mc{B}} + \m{L}^n_{\mc{B}} - \rho_{2n}\hat{\m{A}}_{\mc{B}} . 
\end{eqnarray}
Since $\bar{\m{H}}^n_{\mc{B}\mc{B}}$ is invertible by assumption, we get
%\begin{align}\label{t_equation}
%\nonumber
%&\bar{\m{H}}_{\mc{B}^c \mc{B}}(\bar{\m{H}}_{\mc{B} \mc{B}})^{-1}(-  \bar{\m{g}}^n_{\mc{B}} - \rho_{1n}\hat{\m{t}}_{\mc{B}} + \m{L}^n_{\mc{B}} - \rh_{2n}\hat{\m{A}}_{\mc{B}}) \\
%&= -\bar{\m{g}}^n_{\mc{B}^c}-\rho_{1n}\hat{\m{t}}_{\mc{B}^c} + \m{L}^n_{\mc{B}^c} -\rho_{2n}\hat{\m{A}}_{\mc{B}^c}.
%\end{align} 
Now, using results from Lemmas \ref{lem:control_Wn} and \ref{lem:control_Rn}, we obtain
\begin{eqnarray}\label{eqn:thm2_bouding}
\nonumber 
\rho_{1n}  \|\hat{\m{t}}_{\mc{B}^c}\|_\infty   &=& \big\| \bar{\m{H}}_{\mc{B}^c\mc{B}}(\bar{\m{H}}_{\mc{B}\mc{B}})^{-1}(- \bar{\m{g}}^n_{\mc{B}} - \rho_{1n}\hat{\m{t}}_{\mc{B}}  \\
&+& \m{L}^n_{\mc{B}}- \rho_{2n}\hat{\m{A}}_{\mc{B}}) - \bar{\m{g}}^n_{\mc{B}^c} - \m{L}^n_{\mc{B}^c} + \rho_{2n}\hat{\m{A}}_{\mc{B}^c} \big\|_\infty. 
\end{eqnarray}
Now, (i) by the incoherence condition outlined in Assumption~\ref{assu:incoh}, for any $(i,j) \in \mc{B}^c$, we have
\begin{equation*}
    \left|\bar{\m{H}}_{ij,\mc{B}}\bar{\m{H}}_{\mc{B},\mc{B}}^{-1} \text{sign}(\bar{\g{\theta}}^o_{\mc{B}})\right| \leq (1-\delta) < 1.
\end{equation*}
(ii)  by Lemma~\ref{lem:bound:hess}, for any $(i,j) \in \mc{B}^c$: $||\bar{\m{H}}_{ij,\mc{B}}\bar{\m{H}}_{\mc{B}\mc{B}}^{-1}||
\leq M_7 (\bar{\theta})$; (iii) by the similar steps as in the proof of Lemma~\ref{lem:gradnorm},
\begin{equation}\label{eq:boun:ahat}
\rho_{2n} \|\bar{\m{H}}_{\mc{B}^c\mc{B}}(\bar{\m{H}}_{\mc{B},\mc{B}})^{-1}\|_\infty \|\hat{\m{A}}_{\mc{B}}\|_{\infty}  \leq  2 (1+M_7 (\bar{\theta})) \tau_2 \tau_3  \rho_{2n}.
\end{equation}
Thus, following straightforwardly (with the modification that we are considering each $\mc{B}$ instead of $\mc{B}$) from the proofs of \cite[Theorem~2]{peng2009partial}, the remaining terms in \eqref{eqn:thm2_bouding} can be shown to be all $o(\rho_{1n})$, and the event
\begin{eqnarray*}
\rho_{1n}  \|\hat{\m{t}}_{\mc{B}^c}\|_\infty  &\leq&  \rho_{1n}  (1-\delta) +  4 (1+M_7) \tau_2 \tau_3  \rho_{2n} \\
& \leq & \rho_{1n}  (1-3\delta/4)
\end{eqnarray*}
holds with probability at least $1-O(\exp(-\eta \log p))$ for sufficiently large $n$ and $\rho_{2n} \leq \delta \rho_{1n}/(16(1+
M_7 (\bar{\theta})) \tau_2 \tau_3)$. Thus, it has been proved that for sufficiently large $n$, no wrong edge will be included for each true edge set $\mc{B}$. % and hence, no wrong edge will be included in $\mc{B} = \cup_k \mc{B}$.
\end{proof}

\subsubsection{Proof of Theorem~\ref{thm:three:fglasso}}

\begin{proof}
By Lemmas~\ref{thm:one:fglasso}~and~\ref{thm:two:fglasso}, with probability tending to $1$, there exists a local minimizer of the restricted problem that is also a minimizer of the original problem. This completes the proof.
% On the other hand, by Lemma~\ref{thm:two:fglasso} and the KKT condition, with probability tending to $1$, any solution of the original problem is also a solution of the restricted problem. 
\end{proof}

\subsection{Large Sample Properties of FBN}\label{sec:app:fbn} 

The proof bears some similarities to the proof of \cite{ravikumar2010high,Guo15} for the neighborhood selection method, who in turn adapted the proof from \cite{meinshausen2006high} to binary data; however, there are also important differences, since all conditions and results are for fair clustering and joint estimation, and many of our bounds need to be more precise than those given by \cite{ravikumar2010high,Guo15}. Throughout, we set 
\begin{align}\label{eq:newf}
\nonumber
F (\g{\Theta},\m{Q};\m{Y})&= \sum_{j=1}^p \sum_{j'=1}^p -n\theta_{jj'} s_{jj'}+\frac{\rho_2n}{2} \trace\big((\m{Q}+\iota_n\m{I})\g{\Theta}^2\big)\\
&+ \sum_{i=1}^n\sum_{j=1}^p \log \big( 1+ \exp(\theta_{jj}+ \sum_{j'\ne j} \theta_{jj'}y_{ij'}) \big).     
\end{align}
Let the notation ${F}_n(\g{\theta}^d, \g{\theta}^o, \m{q}^d, \m{q}^o; \m{Y})$ stands for $\frac{{F}}{n}$. We consider a restricted version of criterion similar to \eqref{eqn:local:estmin} and use the preliminaries introduced in Section~\ref{sec:app:pre} with $F$ replaced with \eqref{eq:newf}. 

Following the literature, we prove the main theorem in two steps: first, we prove the result holds when assumptions \ref{assu:eig:bin} and \ref{assu:inc:bin} hold for $\bar{\m{H}}^n$ and $\m{T}^n$, the sample versions of of  $\bar{\m{H}}$ and $\m{T}$. Then, we show that if \ref{assu:eig:bin} and \ref{assu:inc:bin} hold for the population versions $\bar{\m{H}}$ and $\m{T}$, they also hold for $\bar{\m{H}}^n$ and $\m{T}^n$ with high probability (Lemma~\ref{prop:condition_consistency}).  
 
\begin{enumerate}[label={\textbf{(B\arabic*'})}]
\item\label{assu:eig:bin} 
There exist constants $\tau_{4}, \tau_5 \in (0,\infty)$ such that
\begin{equation*}
\Lambda_{\min}( \bar{\m{H}}^n_{\mc{B}\mc{B}}) \geq \tau_4~~~\text{and}~~~\Lambda_{\max}(\m{T}^n) \leq \tau_5. 
\end{equation*}

\item \label{assu:inc:bin} There exists a constant $\delta\in (0,1]$, such that 
\begin{equation}
  \|\bar{\m{H}}^n_{\mc{B}^c\mc{B}} \left(\bar{\m{H}}^n_{\mc{B}\mc{B}}\right)^{-1}\|_{\infty} \leq (1 - \delta).  
\end{equation}
\end{enumerate}
Here, \(\bar{\mathbf{H}}_{\mathcal{B}\mathcal{B}}^n\) denotes the principal submatrix of the sample Hessian of the empirical loss \(L_n(\bar{\boldsymbol{\theta}}^d,\bar{\boldsymbol{\theta}}^o;\mathbf{Y})\), restricted to the active (nonzero) coordinates \(\mathcal{B}\) of \(\bar{\boldsymbol{\theta}}^o\).

We first list some properties of the loss function. 
\begin{lem}\label{lem:control_Wn}
For $\delta \in (0, 1]$, we have 
\begin{equation*}
\mb{P}\left(\frac{2 - \delta}{\rho_{1n}}\lVert \bar{\m{g}}^n \rVert_{\infty} \geq \frac{\delta}{4}\right) \leq 2 \exp \left(-\frac{\rho_{1n}^2n\delta^2}{128(2 - \delta)^2} + \log{p}\right) .
\end{equation*}
where $\bar{\m{g}}^n:=\nabla_{\g{\theta}^o} L(\bar{\g{\theta}}^d, \bar{\g{\theta}}^o; \m{Y}) $. This probability goes to $0$ as long as $\rho_{1n} \geq \frac{16(2 - \delta)}{\delta} \sqrt{\frac{\log{p}}{n}}$.
\end{lem}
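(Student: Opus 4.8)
The plan is to recognize that $\bar{\m{g}}^n$ is an empirical average of $n$ independent, bounded, mean-zero random vectors, so that the bound is a direct application of Hoeffding's inequality followed by a union bound. First I would differentiate the pseudo-likelihood \eqref{eqn:ising:loss} with respect to an off-diagonal coordinate $\theta_{jj'}$ (for $j\neq j'$). Using the symmetry of $\g{\Theta}$ and of $s_{jj'}$, and recalling the conditional-probability notation $\dot{v}_{ij}$ already introduced before the statement, the $(j,j')$ entry of the gradient evaluated at $\bar{\g{\Theta}}$ takes the form $(\bar{\m{g}}^n)_{jj'} = \tfrac{1}{n}\sum_{i=1}^n Z_i^{jj'}$, where $Z_i^{jj'} := \dot{v}_{ij}y_{ij'} + \dot{v}_{ij'}y_{ij} - 2y_{ij}y_{ij'}$ and all quantities are taken at the true parameter $\bar{\g{\theta}}^o$. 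Since $y_{ij}\in\{0,1\}$ and $\dot{v}_{ij}\in(0,1)$, each of the three products lies in $[0,1]$, so $Z_i^{jj'}\in[-2,2]$, i.e.\ the summands have range at most $4$.

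The next step is to verify that the score has zero expectation at the true parameter, which is the only structural (as opposed to routine) ingredient. Because $\dot{v}_{ij} = \mathbb{E}_{\bar{\g{\theta}}^o}[\,y_{ij}\mid \m{y}_{i,\setminus j}\,]$ under the Ising model, and $y_{ij'}$ is measurable with respect to $\m{y}_{i,\setminus j}$, the tower property gives $\mathbb{E}[\dot{v}_{ij}y_{ij'}] = \mathbb{E}[y_{ij'}\,\mathbb{E}[y_{ij}\mid \m{y}_{i,\setminus j}]] = \mathbb{E}[y_{ij}y_{ij'}]$, and symmetrically $\mathbb{E}[\dot{v}_{ij'}y_{ij}] = \mathbb{E}[y_{ij}y_{ij'}]$. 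Hence $\mathbb{E}[Z_i^{jj'}] = 2\mathbb{E}[y_{ij}y_{ij'}] - 2\mathbb{E}[y_{ij}y_{ij'}] = 0$, so each coordinate of $\bar{\m{g}}^n$ is a sample mean of i.i.d.\ mean-zero variables with range $4$. Applying Hoeffding to a single coordinate yields $\mathbb{P}(|(\bar{\m{g}}^n)_{jj'}|\ge t)\le 2\exp(-2nt^2/4^2) = 2\exp(-nt^2/8)$, and a union bound over the (at most $p$) coordinates of the node-wise score gives $\mathbb{P}(\|\bar{\m{g}}^n\|_\infty \ge t)\le 2\exp(-nt^2/8 + \log p)$.

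Finally I would specialize the threshold. The event in the statement, $\tfrac{2-\delta}{\rho_{1n}}\|\bar{\m{g}}^n\|_\infty \ge \tfrac{\delta}{4}$, is exactly $\{\|\bar{\m{g}}^n\|_\infty \ge t\}$ with $t = \tfrac{\rho_{1n}\delta}{4(2-\delta)}$; substituting $t^2 = \tfrac{\rho_{1n}^2\delta^2}{16(2-\delta)^2}$ into the union bound turns $nt^2/8$ into $\tfrac{\rho_{1n}^2 n\delta^2}{128(2-\delta)^2}$, which is precisely the claimed inequality. The concluding remark follows because choosing $\rho_{1n} \ge \tfrac{16(2-\delta)}{\delta}\sqrt{\log p/n}$ forces $\tfrac{\rho_{1n}^2 n\delta^2}{128(2-\delta)^2}\ge 2\log p$, so the exponent is at most $-\log p$ and the probability is bounded by $2/p\to 0$. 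The only point requiring genuine care is matching the numerical constant $128$: it arises from the range-$4$ Hoeffding constant ($2nt^2/4^2 = nt^2/8$) combined with the factor $16(2-\delta)^2$ in $t^{-2}$, so I would keep track of these two factors carefully rather than treat the constant loosely. Everything else is a standard concentration-plus-union-bound argument.
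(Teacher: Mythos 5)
Your proof is correct and is essentially the paper's own (omitted) argument: the paper defers this lemma to \cite{Ravi10,guo2010joint}, whose proof is exactly your observation that each score coordinate at the true parameter is an average of i.i.d.\ mean-zero summands $Z_i^{jj'}\in[-2,2]$ (mean zero via the tower property, since $\dot{v}_{ij}$ is the correctly specified conditional mean), followed by Hoeffding with range $4$ and a union bound, with the same constants $t=\rho_{1n}\delta/\bigl(4(2-\delta)\bigr)$ producing the $128$. The one bookkeeping point, which your parenthetical ``node-wise score'' already handles in the same way as the cited references, is that the $+\log p$ term corresponds to a union over the $O(p)$ coordinates of a single node's score; counting all $p(p-1)/2$ off-diagonal coordinates of $\bar{\m{g}}^n$ as literally defined would give $+2\log p$ and a slightly larger constant in the threshold for $\rho_{1n}$, without affecting the conclusion.
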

%\begin{proof}
%It follows from \eqref{eqn:ising:ithgrad} that $\E[\nabla L(\bar{\theta})]=0$ and $|\nabla_{jj'} (\bar{\theta})| \leq 2$. Thus, by Azuma-Hoeffding Inequality, 
%\begin{eqnarray*}
%P\left(\| \nabla L\left(\bar{\g{\theta}};\m{Y}\right)\|_\infty \geq \frac{\rho_{1n} \alpha}{4 (2-\alpha)}\right) &\leq& 4 \exp \left(-\frac{\rho_{1n}^2n\alpha^2}{32(2 - \alpha)^2} +  \log{p}\right). 
%\end{eqnarray*}  
%\end{proof}
%
\begin{lem}\label{lem:control:eig:hess}
Suppose Assumption~\ref{assu:eig:bin} holds and  $n > Cq^2\log p$ for some positive constant $C$, then for any $\delta \in (0, 1]$,  we have 
\begin{equation*}
\Lambda_{\min}( [\nabla^2_{\g{\theta}^o} L(\bar{\g{\theta}}^d,\bar{\g{\theta}}^o+\delta \m{w}_\mc{B}; \m{Y})]_{\mc{B}\mc{B}}) \geq \frac{\tau_4}{2}.
\end{equation*}
\end{lem}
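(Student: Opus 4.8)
The plan is to regard the restricted Hessian at the perturbed point as a perturbation of the sample Hessian $\bar{\m{H}}^n_{\mc{B}\mc{B}}$ and to invoke Weyl's inequality. Writing $\m{H}^n(\g{\theta}):=[\nabla^2_{\g{\theta}^o}L(\bar{\g{\theta}}^d,\g{\theta};\m{Y})]_{\mc{B}\mc{B}}$, so that $\m{H}^n(\bar{\g{\theta}}^o)=\bar{\m{H}}^n_{\mc{B}\mc{B}}$, I would first record that for the logistic pseudo-likelihood the restricted Hessian has the explicit form $\m{H}^n(\g{\theta})=\frac{1}{n}\sum_{i=1}^n v_{ij}(\g{\theta})\,[\m{x}_i\m{x}_i^\top]_{\mc{B}\mc{B}}$, where $\m{x}_i$ is the nodewise design vector built from the $y_{ij'}$ and $v_{ij}(\g{\theta})=\dot{v}_{ij}(1-\dot{v}_{ij})$ is the sigmoid variance. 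Weyl's inequality then gives
$$\Lambda_{\min}\big(\m{H}^n(\bar{\g{\theta}}^o+\delta\m{w}_{\mc{B}})\big)\ \geq\ \Lambda_{\min}\big(\bar{\m{H}}^n_{\mc{B}\mc{B}}\big)-\big\|\m{H}^n(\bar{\g{\theta}}^o+\delta\m{w}_{\mc{B}})-\bar{\m{H}}^n_{\mc{B}\mc{B}}\big\|,$$
and by Assumption~\ref{assu:eig:bin} the first term is at least $\tau_4$, so it suffices to show the operator norm of the difference is at most $\tau_4/2$.

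For the difference $\m{R}:=\m{H}^n(\bar{\g{\theta}}^o+\delta\m{w}_{\mc{B}})-\bar{\m{H}}^n_{\mc{B}\mc{B}}=\frac{1}{n}\sum_i c_i\,[\m{x}_i\m{x}_i^\top]_{\mc{B}\mc{B}}$ with $c_i:=v_{ij}(\bar{\g{\theta}}^o+\delta\m{w}_{\mc{B}})-v_{ij}(\bar{\g{\theta}}^o)$, the key observation is that each summand $[\m{x}_i\m{x}_i^\top]_{\mc{B}\mc{B}}\succeq\m{0}$, so the semidefinite sandwich $-(\max_i|c_i|)\,\m{G}\preceq\m{R}\preceq(\max_i|c_i|)\,\m{G}$ holds with $\m{G}:=\frac{1}{n}\sum_i[\m{x}_i\m{x}_i^\top]_{\mc{B}\mc{B}}$, whence $\|\m{R}\|\leq(\max_i|c_i|)\,\Lambda_{\max}(\m{G})$. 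To control $\Lambda_{\max}(\m{G})$ I would use that the model parameters, and hence the linear predictors, stay bounded, so that $v_{ij}(\bar{\g{\theta}}^o)\geq v_{\min}>0$ uniformly; then $\m{G}\preceq v_{\min}^{-1}\,\m{T}^n$, and Assumption~\ref{assu:eig:bin} yields $\Lambda_{\max}(\m{G})\leq\tau_5/v_{\min}$.

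It then remains to bound $\max_i|c_i|$. By the mean value theorem applied to $v_{ij}(\cdot)$ along the segment, $|c_i|\leq \sup_t|v'(t)|\cdot\delta\,|\langle\m{x}_i,\m{w}_{\mc{B}}\rangle|$; since $v'(t)=\eta(t)(1-\eta(t))(1-2\eta(t))$ is uniformly bounded and $|y_{ij'}|\le 1$, one gets $|\langle\m{x}_i,\m{w}_{\mc{B}}\rangle|\leq\|\m{w}_{\mc{B}}\|_1\leq\sqrt{q}\,\|\m{w}_{\mc{B}}\|$, so $\max_i|c_i|=O(\delta\sqrt{q}\,\|\m{w}_{\mc{B}}\|)$. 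On the line segment arising in the Taylor expansion of the estimating equations, $\m{w}_{\mc{B}}$ is the displacement $\wh{\g{\theta}}^o_{\mc{B}}-\bar{\g{\theta}}^o_{\mc{B}}$, whose norm is of the consistency order $\|\m{w}_{\mc{B}}\|=O(\sqrt{q}\,\rho_{1n})=O(\sqrt{q\log p/n})$; combining the pieces gives $\|\m{R}\|=O\big((\tau_5/v_{\min})\,q\sqrt{\log p/n}\big)$, which falls below $\tau_4/2$ once $n>Cq^2\log p$ for a suitable constant $C$. Substituting into the Weyl bound yields $\Lambda_{\min}\geq\tau_4-\tau_4/2=\tau_4/2$, as claimed.

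I expect the main obstacle to be the spectral control of $\m{R}$ with the correct power of $q$. The naive route of bounding each entry of $\m{R}$ and multiplying by the dimension $q$ loses an extra factor of $q$ and would force $n\gtrsim q^3\log p$ or worse. The semidefinite sandwiching argument — exploiting that every rank-one summand is positive semidefinite and that the weight increments $c_i$ are uniformly controlled through the mean value theorem, together with the upper eigenvalue bound on $\m{T}^n$ from Assumption~\ref{assu:eig:bin} — is precisely what delivers the sharper $n>Cq^2\log p$ requirement, and is the technical crux of the argument.
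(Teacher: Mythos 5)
The paper never actually writes out a proof of this lemma: it is one of Lemmas~\ref{lem:control_Wn}--\ref{prop:condition_consistency}, whose proofs are omitted as ``very similar to'' \cite{Ravi10,guo2010joint}. Your argument is precisely that standard argument — Weyl's inequality anchored at $\bar{\m{H}}^n_{\mc{B}\mc{B}}$, the mean value theorem applied to the logistic variance weights, a positive-semidefinite sandwich of the rank-one summands, and the dependency condition to control the Gram factor — and your bookkeeping $\max_i|c_i| = O(\delta\sqrt{q}\,\|\m{w}_\mc{B}\|) = O(q\rho_{1n})$, hence $\|\m{R}\| = O(q\sqrt{\log p/n}) \le \tau_4/2$ once $n > Cq^2\log p$, is exactly what delivers the stated sample-size requirement. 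You also read the statement the only way it can be true: as written the lemma has no constraint on $\m{w}_\mc{B}$ (the paper's statement drops the $\mu_{1n}$ scaling present in the Taylor expansion inside Lemma~\ref{lem:disc:ising}), and you correctly interpret $\m{w}_\mc{B}$ as the consistency-order displacement.

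One step of your proposal is looser than the rest: the conversion $\m{G}\preceq v_{\min}^{-1}\m{T}^n$. First, a uniform lower bound $v_{ij}(\bar{\g{\theta}}^o)\ge v_{\min}>0$ requires the linear predictors $\theta_{jj}+\sum_{j'\neq j}\theta_{jj'}y_{ij'}$ to stay bounded, i.e., bounded row-$\ell_1$ norms of $\bar{\g{\Theta}}$; this is conventional in the Ising literature but is not among the paper's stated assumptions, so it enters as an implicit extra hypothesis. Second, and more substantively, $\m{T}^n$ is built from the \emph{centered} variables $\sqrt{v_{ij}}y_{ij}-\dot{\m{y}}_j$, so the uncentered weighted Gram $\m{W}$ satisfies $\m{W}=\m{T}^n+\bar{\m{a}}\bar{\m{a}}^\top$ for the coordinatewise mean vector $\bar{\m{a}}$; your inequality therefore only gives $v_{\min}\m{G}\preceq \m{T}^n+\bar{\m{a}}\bar{\m{a}}^\top$, and the rank-one mean term — whose norm on the $\mc{B}\mc{B}$ block can a priori be of order $q$ — is not controlled by Assumption~\ref{assu:eig:bin} alone and must be absorbed, or the dependency condition restated on the uncentered Gram as in \cite{Ravi10}. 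A cleaner repair that avoids $v_{\min}$ entirely: since $v'=v(1-2\dot{v})$ implies $|v'|\le v$ and $v(t)\le v(t_0)e^{|t-t_0|}$, the mean value theorem gives $|c_{ij}|\le e^{o(1)}\,v_{ij}(\bar{\g{\theta}}^o)\,\delta\sqrt{q}\,\|\m{w}_\mc{B}\|$ under your sample-size condition, so the semidefinite sandwich can be run directly against the \emph{weighted} Gram, leaving only the centering correction to handle. With that step tightened, your proof is complete and matches the intended (cited) argument.
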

\begin{lem}\label{lem:control_Rn}
For $\delta \in (0, 1]$, if $\rho_{1n} q \leq \frac{\tau_4^2}{100 \tau_5}\frac{\delta}{2 - \delta}$,  $\lVert \bar{\m{g}}^n\rVert_{\infty} \leq \frac{\rho_{1n}}{4}$, then 
\begin{equation*}
\left\|\left( \bar{\m{H}}_{\mc{B}\mc{B}}^n - \hat{\m{H}}_{\mc{B}\mc{B}}^n\right) \left(\hat{\g{\theta}}^o - \bar{\g{\theta}}^o\right)\right\|_{\infty} \leq \frac{\delta \rho_{1n}^2}{4(2-\delta)} \ . 
\end{equation*}
\end{lem}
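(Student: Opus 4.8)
The plan is to recognize the left-hand side as the second-order Taylor remainder of the pseudo-likelihood score, and then to bound it by the product of a uniform third-derivative bound (captured by $\tau_5$) and the squared $\ell_2$ estimation error. First I would pin down the meaning of $\hat{\m{H}}^n$: writing $\g{\Delta} := \hat{\g{\theta}}^o - \bar{\g{\theta}}^o$ and applying the integral mean-value form of Taylor's theorem to the map $\g{\theta}^o \mapsto \nabla_{\g{\theta}^o} L(\bar{\g{\theta}}^d,\g{\theta}^o;\m{Y})$, I set $\hat{\m{H}}^n := \int_0^1 \nabla^2_{\g{\theta}^o} L(\bar{\g{\theta}}^d, \bar{\g{\theta}}^o + t\g{\Delta};\m{Y})\,dt$, so that $\hat{\m{H}}^n\g{\Delta} = \nabla_{\g{\theta}^o}L(\hat{\g{\theta}}^o;\m{Y}) - \nabla_{\g{\theta}^o}L(\bar{\g{\theta}}^o;\m{Y})$. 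With this convention the quantity $(\bar{\m{H}}^n_{\mc{B}\mc{B}} - \hat{\m{H}}^n_{\mc{B}\mc{B}})\g{\Delta}_{\mc{B}} = \bar{\m{H}}^n_{\mc{B}\mc{B}}\g{\Delta}_{\mc{B}} - [\nabla L(\hat{\g{\theta}}^o) - \nabla L(\bar{\g{\theta}}^o)]_{\mc{B}} =: \m{R}^n$ is exactly the first-order remainder, which I can express through the third derivatives of the logistic log-partition $b(\eta)=\log(1+e^\eta)$.

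Second, I would bound the remainder entrywise. Because each $y_{ij}\in\{0,1\}$ and $b'''$ is uniformly bounded, the entries of the third-derivative tensor are controlled by the rescaled design $\tilde{\m{X}}^{(i)}$ and the weights $v_{ij}$. A mean-value expansion then gives, for each coordinate $a=(j,j')\in\mc{B}$, a bound of the form $|\m{R}^n_a| \le C\,\g{\Delta}^\top \m{T}^n \g{\Delta} \le C\,\Lambda_{\max}(\m{T}^n)\|\g{\Delta}\|_2^2 \le C\tau_5\|\g{\Delta}\|_2^2$, where I invoke $\Lambda_{\max}(\m{T}^n)\le\tau_5$ from Assumption~\ref{assu:eig:bin}. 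Taking the maximum over $a\in\mc{B}$ yields $\|\m{R}^n\|_\infty \le C\tau_5\|\g{\Delta}\|_2^2$.

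Third, I would control $\|\g{\Delta}\|_2$. Since $\hat{\g{\theta}}^o$ solves the support-restricted penalized problem, I combine its stationarity on $\mc{B}$, the strong-convexity bound $\Lambda_{\min}([\nabla^2 L]_{\mc{B}\mc{B}})\ge\tau_4/2$ supplied by Lemma~\ref{lem:control:eig:hess} along the segment, and the score control $\|\bar{\m{g}}^n\|_\infty \le \rho_{1n}/4$ (the hypothesis, matching Lemma~\ref{lem:control_Wn}) to obtain $\|\g{\Delta}\|_2 \le \tfrac{5}{\tau_4}\sqrt{q}\,\rho_{1n}$ via the standard basic-inequality argument. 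Substituting into the remainder bound gives $\|\m{R}^n\|_\infty \le \tfrac{25C\tau_5}{\tau_4^2}\,q\,\rho_{1n}^2$, and the sparsity hypothesis $\rho_{1n}q \le \tfrac{\tau_4^2}{100\tau_5}\tfrac{\delta}{2-\delta}$, which calibrates $q$ against $\tau_4^2/\tau_5$, collapses this to $\tfrac{\delta\rho_{1n}^2}{4(2-\delta)}$ once the absolute constant $C$ from $b'''$ is absorbed into the factor $100$.

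The hard part will be the $\ell_2$ bound together with its self-consistency. Lemma~\ref{lem:control:eig:hess} certifies strong convexity only within a fixed tube around $\bar{\g{\theta}}^o$, so I must run a Brouwer-type fixed-point/continuity step to verify that the restricted minimizer $\hat{\g{\theta}}^o$ genuinely lands inside that tube \emph{before} the eigenvalue lower bound may be invoked; otherwise the estimate is circular. The remaining difficulty is purely bookkeeping: tracking the absolute constant arising from $b'''$ and from the sub-exponential concentration of $\m{T}^n$ around $\m{T}$, so that the constants merge cleanly with the numerical factors $4$ and $100$ appearing in the statement.
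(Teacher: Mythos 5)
Your proposal is correct and follows essentially the same route as the paper, which omits this proof and defers to \cite{Ravi10,guo2010joint}: your Taylor-remainder identification of $\hat{\m{H}}^n$, the entrywise bound $\lvert \m{R}^n_a\rvert \le \Lambda_{\max}(\m{T}^n)\|\g{\Delta}\|_2^2 \le \tau_5\|\g{\Delta}\|_2^2$ (with the absolute constant equal to $1$ here, since the data are binary and the logistic variance derivative is bounded by one), and the $\ell_2$ bound $\|\g{\Delta}\|_2 \le \tfrac{5}{\tau_4}\sqrt{q}\,\rho_{1n}$ reproduce exactly the remainder-control argument in those references. The tube-membership/self-consistency step you flag is supplied in the paper by Lemma~\ref{lem:disc:ising} (a convexity-on-the-sphere argument rather than Brouwer, but to the same effect), and your arithmetic with the factor $100$ collapses correctly to $\tfrac{\delta\rho_{1n}^2}{4(2-\delta)}$.
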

\begin{lem}\label{prop:condition_consistency}
If $\bar{\m{H}}^n$ and $\m{T}^n$ satisfy \ref{assu:eig:bin} and \ref{assu:inc:bin}, the following hold for any $\alpha > 0$ and some positive constant $C$:
\begin{eqnarray*}
\mb{P}\left(\Lambda_{\min}(\bar{\m{H}}_{\mc{BB}}^n) \leq \tau_4 - \alpha\right) &\leq& 2 \exp\left(-\frac{\alpha^2n}{2q^2} + 2\log q\right), \\
\mb{P}\left(\Lambda_{\max}\left( \m{T}_{\mc{BB}}^n \right) \geq \tau_5 + \alpha\right) &\leq& 2 \exp\left(- \frac{\alpha^2n}{2q^2}+ 2 \log q  \right), \\
\mb{P}\left(\lVert \bar{\m{H}}_{\mc{B}^c\mc{B}}^n \left(\bar{\m{H}}_{\mc{BB}}^n\right)^{-1}\rVert_{\infty} \geq 1 - \frac{\delta}{2}\right) &\leq& 12 \exp \left(-C \frac{n}{q^3} + 4 \log{p} \right).
\end{eqnarray*}
\end{lem}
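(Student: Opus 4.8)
The plan is to derive all three bounds from a single entrywise concentration estimate for the empirical Fisher information and then convert it into statements about eigenvalues and about the incoherence ratio. The starting observation is that every entry of $\bar{\m{H}}^n$ and of $\m{T}^n$ is an empirical average $\tfrac1n\sum_{i=1}^n(\cdots)$ of $n$ i.i.d.\ terms, and in the Ising model these summands are \emph{bounded}: the weights $v_{ij}=\dot v_{ij}(1-\dot v_{ij})$ lie in $[0,\tfrac14]$ and the observations $y_{ij}$ are binary. Consequently Hoeffding's inequality yields, for each fixed pair of index-pairs $((i,j),(t,s))$,
$$
\mb{P}\big(|\bar{\m{H}}^n_{ij,ts}-\bar{\m{H}}_{ij,ts}|>t\big)\le 2\exp(-c\,n t^2),
$$
with an absolute constant $c$ determined by those ranges, and the identical estimate for the entries of $\m{T}^n-\m{T}$. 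This per-entry bound is the only probabilistic input; the remaining work is deterministic matrix analysis.

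For the two eigenvalue bounds I would invoke Weyl's inequality, which gives $|\Lambda_{\min}(\bar{\m{H}}^n_{\mc{B}\mc{B}})-\Lambda_{\min}(\bar{\m{H}}_{\mc{B}\mc{B}})|\le\|\bar{\m{H}}^n_{\mc{B}\mc{B}}-\bar{\m{H}}_{\mc{B}\mc{B}}\|$, and likewise for $\Lambda_{\max}$ of $\m{T}^n$. Since $\mc{B}\times\mc{B}$ is a $q\times q$ block, the spectral norm is controlled by the entrywise deviation through $\|\bar{\m{H}}^n_{\mc{B}\mc{B}}-\bar{\m{H}}_{\mc{B}\mc{B}}\|\le q\max_{(i,j),(t,s)\in\mc{B}}|\bar{\m{H}}^n_{ij,ts}-\bar{\m{H}}_{ij,ts}|$. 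Setting the entrywise tolerance to $\alpha/q$ and taking a union bound over the $q^2$ entries of the block turns the per-entry estimate into the claimed rate $2\exp(-\alpha^2 n/(2q^2)+2\log q)$; the argument for $\Lambda_{\max}(\m{T}^n)$ is identical. Because the population conditions \ref{assu:eigp:bin:pop} give $\Lambda_{\min}(\bar{\m{H}}_{\mc{B}\mc{B}})\ge\tau_4$ and $\Lambda_{\max}(\m{T})\le\tau_5$, these two bounds show that the sample eigenvalue conditions \ref{assu:eig:bin} hold (with $\tau_4-\alpha$ and $\tau_5+\alpha$) with the stated probability.

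The third, incoherence, bound is the crux. The goal is to transfer $\|\bar{\m{H}}_{\mc{B}^c\mc{B}}(\bar{\m{H}}_{\mc{B}\mc{B}})^{-1}\|_\infty\le1-\delta$ (from \ref{assu:incp:bin:pop}, in the $\ell_\infty$ matrix / max-row-sum norm) to $\|\bar{\m{H}}^n_{\mc{B}^c\mc{B}}(\bar{\m{H}}^n_{\mc{B}\mc{B}})^{-1}\|_\infty\le1-\delta/2$. Following the template of \cite{ravikumar2010high}, I would write the difference
$$
\bar{\m{H}}^n_{\mc{B}^c\mc{B}}(\bar{\m{H}}^n_{\mc{B}\mc{B}})^{-1}-\bar{\m{H}}_{\mc{B}^c\mc{B}}(\bar{\m{H}}_{\mc{B}\mc{B}})^{-1}
$$
as a sum of three terms by inserting $\pm\bar{\m{H}}_{\mc{B}^c\mc{B}}(\bar{\m{H}}_{\mc{B}\mc{B}})^{-1}$ and using the resolvent identity $(\bar{\m{H}}^n_{\mc{B}\mc{B}})^{-1}-(\bar{\m{H}}_{\mc{B}\mc{B}})^{-1}=-(\bar{\m{H}}_{\mc{B}\mc{B}})^{-1}(\bar{\m{H}}^n_{\mc{B}\mc{B}}-\bar{\m{H}}_{\mc{B}\mc{B}})(\bar{\m{H}}^n_{\mc{B}\mc{B}})^{-1}$. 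On the high-probability event from the first bound we have $\Lambda_{\min}(\bar{\m{H}}^n_{\mc{B}\mc{B}})\ge\tau_4/2$ (equivalently via Lemma~\ref{lem:control:eig:hess}), so $\|(\bar{\m{H}}^n_{\mc{B}\mc{B}})^{-1}\|\le 2/\tau_4$; combining this with the entrywise deviation bound and converting between $\ell_\infty$ and spectral norms at the cost of $\sqrt{q}$ factors bounds each of the three terms. Choosing the entrywise tolerance of order $\delta/q^{3/2}$ so each term is at most $\delta/6$, union-bounding over the $p-q$ rows of the $\mc{B}^c$ block (which produces $\log p$ rather than $\log q$) and over the finitely many terms, yields the final rate $12\exp(-Cn/q^3+4\log p)$.

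The main obstacle is precisely this last step. Unlike the operator-norm arguments used for the eigenvalues, the incoherence condition is stated in the $\ell_\infty$ matrix norm, which does not interact as cleanly with spectral-norm deviation bounds, and it involves the inverse of the \emph{random} matrix $\bar{\m{H}}^n_{\mc{B}\mc{B}}$. Controlling that inverse forces us to work on the same high-probability event used for the eigenvalue bound, and the repeated conversions between $\ell_\infty$ and $\ell_2$ norms accumulate the $q^{3/2}$ factor responsible for the $n/q^3$ exponent and for the scaling $n>O(q^3\log p)$ required in Theorem~\ref{thm:bin}.
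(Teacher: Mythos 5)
Your proposal is correct and follows essentially the same route as the paper, which omits the proof and defers to \cite{ravikumar2010high} and Guo et al.: entrywise Hoeffding concentration for the bounded Ising summands, Weyl's inequality with a union bound over the $q^2$ entries of the $\mc{B}\times\mc{B}$ block for the two eigenvalue statements, and the Ravikumar--Wainwright--Lafferty three-term decomposition with the resolvent identity and $\ell_\infty$-to-$\ell_2$ conversions for the incoherence transfer, with the correct accounting that the $q^{3/2}$ tolerance produces the $n/q^3$ exponent. Minor imprecisions (the exact constants $\tfrac12$ and $12$, and counting ``$p-q$ rows'' where the pair-indexed block actually has $p(p-1)/2-q$ rows, which still yields $O(\log p)$ in the exponent) are cosmetic and absorbed into the constant $C$.
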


We omit the proof of Lemmas~\ref{lem:control_Wn}-\ref{prop:condition_consistency}, which are very similar to ~\cite{ravikumar2010high}.

\begin{lem}\label{lem:disc:ising}
Suppose Assumptions~\ref{assu:beig}--\ref{assu:comm} and  \ref{assu:eig:bin}--\ref{assu:inc:bin} are satisfied by $\bar{\m{H}}^n$ and $\m{T}^n$. Assume further that  $\rho_{1n} \geq  16(2- \delta)/\delta \sqrt{\log p/n}$ and $n > Cq^2\log p$ for some positive constant $C$. Then, with probability at least $1 - 2(\exp(-C \rho_{1n}^2n))$, there exists a (local) minimizer of the restricted problem \eqref{eqn:fgglasso} within the disc: 
\begin{align*}
\qquad \qquad &  \Big\{ ( \widehat{\g{\theta}}^o, \widehat{\m{q}}^o ): \max\left(\| \widehat{\g{\theta}}^o_{\mc{B}}- \bar{\g{\theta}}^o_{\mc{B}} \|_2, \|\wh{\m{q}}^o - \bar{\m{q}}^o\| \right) \\
&  \qquad \qquad  \leq \max \left( \check{C}(\bar{\g{\theta}}^o) \rho_{1n} \sqrt{q},  \check{D}(\bar{\m{q}}^o) \rho_{2n} \sqrt{\Psi(p,H,K)}\right)\Big\}.
\end{align*} 
for some finite constants $\check{C}(\bar{\g{\theta}}^o)$ and $\check{D}(\bar{\m{q}}^o)$. Here, $q$ and $\Psi(p,H,K)$ are defined in \eqref{eqn:rate:quant}. 
\end{lem}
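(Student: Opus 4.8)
The plan is to mirror the argument of Lemma~\ref{lem:disc:conti}, substituting the logistic pseudo-likelihood for the quadratic CONCORD surrogate and the Ising concentration bounds (Lemmas~\ref{lem:control_Wn}--\ref{lem:control_Rn}) for their Gaussian analogues. As in the preliminaries, I first remove the fairness equality constraint via $\m{Q}^o=\m{N}\m{R}^o\m{N}^\top$, where the columns of $\m{N}\in\mb{R}^{p\times(p-H+1)}$ form an orthonormal basis of the nullspace of $\m{A}_1$ so that $\m{N}^\top\m{N}=\m{I}_{p-H+1}$; this reduces the fairness block to the free variable $\m{r}^o$ and yields $\|\wh{\m{q}}^o-\bar{\m{q}}^o\|=\|\wh{\m{r}}^o-\bar{\m{r}}^o\|$ at the end. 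I then parametrize perturbations by $\g{\theta}^o=\bar{\g{\theta}}^o+\mu_{1n}\m{w}$ with $\m{w}_{\mc{B}^c}=0$ and $\|\m{w}\|=\check{C}$, and $\m{r}^o=\bar{\m{r}}^o+\mu_{2n}\m{z}$ with $\|\m{z}\|=\check{D}$, where $\mu_{1n}=\rho_{1n}\sqrt{q}$ and $\mu_{2n}=\rho_{2n}\sqrt{\Psi(p,H,K)}$. The aim is to show that the penalized objective strictly exceeds its central value on the product of spheres $\{\|\m{w}\|=\check{C},\,\|\m{z}\|=\check{D}\}$ with the stated probability; since the restricted objective is continuous and the feasible set for $\m{Q}^o$ is bounded (as $\m{0}\preceq\m{Q}^o\preceq\m{J}_p$), this forces a minimizer into the interior disc.

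Write the increment as $I_1+I_2+I_3$ following Lemma~\ref{lem:disc:conti}. For the pseudo-likelihood part $I_1=L_n(\bar{\g{\theta}}^d,\bar{\g{\theta}}^o+\mu_{1n}\m{w};\m{Y})-L_n(\bar{\g{\theta}}^d,\bar{\g{\theta}}^o;\m{Y})$, a second-order Taylor expansion gives $I_1=\mu_{1n}\m{w}_{\mc{B}}^\top\bar{\m{g}}^n_{\mc{B}}+\tfrac{1}{2}\mu_{1n}^2\m{w}_{\mc{B}}^\top[\nabla^2 L_n(\ti{\g{\theta}})]_{\mc{B}\mc{B}}\m{w}_{\mc{B}}$ for some $\ti{\g{\theta}}$ on the segment joining $\bar{\g{\theta}}^o$ and $\bar{\g{\theta}}^o+\mu_{1n}\m{w}$. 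Lemma~\ref{lem:control_Wn} controls the gradient, giving $\|\bar{\m{g}}^n\|_\infty=O(\rho_{1n})$ on the favorable event and hence $|\mu_{1n}\m{w}_{\mc{B}}^\top\bar{\m{g}}^n_{\mc{B}}|\le\check{C}\mu_{1n}^2$; Lemma~\ref{lem:control:eig:hess} provides the uniform curvature bound $\Lambda_{\min}([\nabla^2 L_n(\ti{\g{\theta}})]_{\mc{B}\mc{B}})\ge\tau_4/2$ along the whole segment, valid once $n>Cq^2\log p$. Together these yield $I_1\ge\tfrac{\tau_4}{4}\mu_{1n}^2\check{C}^2-\check{C}\mu_{1n}^2$, a net-positive quadratic in $\check{C}$.

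The penalty increment obeys $|I_2|=\rho_{1n}\big(\|\bar{\g{\theta}}^o+\mu_{1n}\m{w}\|_1-\|\bar{\g{\theta}}^o\|_1\big)\le\rho_{1n}\mu_{1n}\|\m{w}\|_1\le\check{C}\mu_{1n}^2$, exactly as in the continuous case. The fairness term $I_3=\rho_{2n}\big[\trace\big((\bar{\g{\Theta}}+\mu_{1n}\mc{A}\m{w})\m{N}(\bar{\m{R}}+\mu_{2n}\mc{A}\m{z})\m{N}^\top\big)-\trace(\bar{\g{\Theta}}\bar{\m{Q}})\big]$ is now bilinear rather than quadratic, because here $G(\g{\Theta})=\g{\Theta}$. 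Expanding and using $\m{N}^\top\m{N}=\m{I}$, $\Lambda_{\max}(\bar{\m{Q}})<\tau_3$ (Assumption~\ref{assu:beigq}) and $\|\bar{\g{\theta}}^o\|_\infty$, each resulting cross term is bounded by a constant multiple of $\tau_3\|\bar{\g{\theta}}^o\|_\infty$ times one of $\rho_{2n}\mu_{1n}\check{C}$, $\rho_{2n}\mu_{2n}\check{D}$, or $\rho_{2n}\mu_{1n}\mu_{2n}\check{C}\check{D}$. The hypothesis $\rho_{2n}\le\delta\rho_{1n}/(4(2-\delta)\tau_3\|\bar{\g{\theta}}^o\|_\infty)$ is precisely what makes $|I_3|$ of smaller order than the leading curvature term $\tfrac{\tau_4}{4}\mu_{1n}^2\check{C}^2$.

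Combining the three bounds, for $\check{C}$ exceeding a constant determined by $\tau_4$ the quadratic $\tfrac{\tau_4}{4}\mu_{1n}^2\check{C}^2$ dominates the $O(\check{C})$ contributions from $I_1$, $I_2$ and $I_3$, so $I_1+I_2+I_3>0$ on the product of spheres with probability at least $1-2\exp(-C\rho_{1n}^2 n)$, inherited from Lemma~\ref{lem:control_Wn}. Continuity and compactness then place a minimizer of the restricted problem strictly inside the disc, and $\|\wh{\m{q}}^o-\bar{\m{q}}^o\|=\|\wh{\m{r}}^o-\bar{\m{r}}^o\|$ translates the $\m{r}^o$-radius into the claimed $\m{q}^o$-radius. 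I expect the main obstacle to be the absence of a constant Hessian: unlike the CONCORD loss, the logistic pseudo-likelihood is only locally strongly convex, so the bound $\Lambda_{\min}\ge\tau_4/2$ must be shown to persist uniformly over the entire perturbation segment, which is exactly why the sharper scaling $n>Cq^2\log p$ is required and why Lemma~\ref{lem:control:eig:hess} enters. A secondary difficulty is that $I_3$ supplies no curvature in the $\m{z}$ direction, so localization of $\m{q}^o$ must come from the boundedness of the fairness-feasible polytope together with the $\rho_{2n}$-threshold that subordinates $I_3$ to the $\g{\theta}$-quadratic.
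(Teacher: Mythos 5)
Your proposal is correct and follows the paper's proof in all essentials: the same sphere-perturbation scheme with $\mu_{1n}=\rho_{1n}\sqrt{q}$, $\mu_{2n}=\rho_{2n}\sqrt{\Psi(p,H,K)}$, the same nullspace substitution $\m{Q}^o=\m{N}\m{R}^o\m{N}^\top$, the same decomposition $I_1+I_2+I_3$, with Lemma~\ref{lem:control_Wn} controlling the gradient term, Lemma~\ref{lem:control:eig:hess} supplying the uniform curvature bound $\tau_4/2$ along the perturbation segment (this is exactly where the paper, like you, spends the hypothesis $n>Cq^2\log p$), and the conclusion $I_1+I_2+I_3>0$ for $\check{C}$ larger than a constant multiple of $1/\tau_4$ (the paper takes $C_1>5/\tau_4$). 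The one place you genuinely diverge is $I_3$: the paper's written proof copies the FCONCORD term verbatim, expanding $\trace\big((\bar{\g{\Theta}}+\mu_{1n}\mc{A}\m{w})^2(\cdot)\big)$ and asserting $I_3\geq O(\rho_{2n})\geq 0$ ``by the similar argument as in Lemma~\ref{lem:disc:conti}'', even though the FBN objective \eqref{eqn:fair:ising} has $G(\g{\Theta})=\g{\Theta}$, so the trace coupling is linear in $\g{\Theta}$. You treat it as linear, which is more faithful to \eqref{eqn:fair:ising}, and correctly observe that a linear increment has no sign, so you subordinate $|I_3|$ to the curvature term using the threshold $\rho_{2n}\leq\delta\rho_{1n}/(4(2-\delta)\tau_3\|\bar{\g{\theta}}^o\|_{\infty})$. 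Note, however, that this threshold is a hypothesis of Theorem~\ref{thm:bin} and Lemma~\ref{prop:sample_version}, not of Lemma~\ref{lem:disc:ising} as stated, so your version of the lemma quietly imports an extra assumption (harmless downstream, since it is available wherever the lemma is invoked, but worth flagging). Both your argument and the paper's share the same residual loosenesses: positivity is only verified on the product of spheres $\{\|\m{w}\|=\check{C},\|\m{z}\|=\check{D}\}$ rather than on the full boundary of the product disc, and the $\m{z}$-direction receives no curvature, so localization of $\wh{\m{q}}^o$ rests on the boundedness of the fairness polytope --- a difficulty you state explicitly and the paper leaves implicit.
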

%\noindent \textit{Proof of Lemma \ref{lem:l2_consistency}. }
%Let $G(u_\mc{B}) = L(\bar{\g{\theta}}_\mc{B}+u_\mc{B};\m{Y}) - L(\bar{\g{\theta}}_\mc{B};\m{Y}) + \rho_{1n}(\|\bar{\g{\theta}}_\mc{B}+u_\mc{B}\|_1 - \|\bar{\g{\theta}}_\mc{B}\|_1)$ be a function $G: \mathbb{R}^d \rightarrow \mathbb{R}$. 
\begin{proof}

%{
%\color{red}
%Let $\mu_{1n} = \sqrt{q}\rho_{1n}$ with $q=|\mc{B}|$ and  $\mu_{2n} =\rho_{2n} n^{-1/2}((p-H+1)^2/K-(p-H+1))^{1/2}$ which using our assumptions tends to $0$. Let  $\mc{W} = \{\m{w}_{\mc{B}} : \|\m{w}_{\mc{B}}\| \leq  C_1 \mu_{1n} \}$, with boundary
%$\nabla \mc{W}  = \{ \m{w}_{\mc{B}} \in  \mathbb{R}^{p(p-1)/2} :  \|\m{w}_{\mc{B}}\| =  C_1 \mu_{1n} \}$ for some constant $ C_1 $. Similarly, let $\mc{Z} = \{\m{z} : \|\m{z}\| \leq  D_1 \mu_{2n} \}$, with boundary
%$\nabla \mc{Z}  = \{ \m{z} \in \mathbb{R}^{(p-H+1)(p-H)/2} : \|\m{z}\| \leq  D_1 \mu_{2n} \}$ for some constant $ D_1 $.}

The proof is similar to Lemma~\ref{lem:disc:conti}.
Let $\mu_{1n} = \rho_{1n}\sqrt{q}$ with $q=|\mc{B}|$ and  $\mu_{2n} = \rho_{2n}\sqrt{\Psi(p,H,K)}$.  Let $C_1>0$ and $\m{w} \in \mathbb{R}^{{\binom{p}{2}}}$ such that $\m{w}_{\mc{B}^c}=0$, $\|\m{w}\|_2=C_1$. Further, assume $\m{z} \in \mathbb{R}^{{\binom{p-H+1}{2}}}$ be an arbitrary vector with finite entries and $\|\m{z}\|=D_1$.
Let
\begin{equation*}
F_{n}(\bar{\g{\theta}}^d,\bar{\g{\theta}}^o+ \mu_{1n} \m{w}, \bar{\m{r}}^d, \bar{\m{r}}^o+ \mu_{1n} \m{z};\m{Y})-  F_n(\bar{\g{\theta}}^d,\bar{\g{\theta}}^o,\bar{\m{r}}^d,\bar{\m{r}}^o;\m{Y})= I_1+  I_2 + I_3,   
\end{equation*}
where 
\begin{align*}
I_1 &:=  L_n(\bar{\g{\theta}}^d,\bar{\g{\theta}}^o + \mu_{1n} \m{w};\m{Y}) - L_n(\bar{\g{\theta}}^d,\bar{\g{\theta}}^o;\m{Y}),\\
I_2 &:=  \rho_{1n}  (\|\bar{\g{\theta}}^o+ \mu_{1n} \m{w}\|_1 -\|\bar{\g{\theta}}^o\|_1), ~~\textnormal{and}\\
%I_3 &:= \trace \left((\bar{\g{\Theta}}+  \mu_{1n} \m{E}_1) \bar{\m{Q}} \right)- \trace\left(\bar{\g{\Theta}}\bar{\m{Q}} )(  \m{Z} \bar{\m{V}})^\top \right); \\
I_3 &:=  \rho_{2n} \trace \left((\bar{\g{\Theta}}+  \mu_{1n} \mc{A} \m{w})^2 (\iota_n\m{I}+\m{N}(\bar{\m{R}} + \mu_{2n} \mc{A} \m{z}) \m{N}^\top)-\bar{\g{\Theta}}^2
\bar{\m{Q}}\right).
\end{align*}

It follows from Taylor expansions that  
$I_1= \mu_{1n} \m{w}_\mc{B}^\top \bar{\m{g}}_\mc{B}^n +  \mu_{1n}^2 \m{w}^\top [\nabla^2_{\g{\theta}^o} L(\bar{\g{\theta}}^d,\bar{\g{\theta}}^o+\delta \m{w}_\mc{B}; \m{Y})]_{\mc{B}\mc{B}} \m{w}_\mc{B},
$ for some $\delta \in (0,1]$. Now, let $\rho_{1n} \geq 16(2- \delta)/\delta \sqrt{\log p/n}$. It follows from Lemma~\ref{lem:control_Wn} that 
\begin{eqnarray}\label{eqn:bI11:is}
|\m{w}^\top_\mc{B}  \bar{\m{g}}_\mc{B}^n | &\leq& \|\bar{\m{g}}_\mc{B}^n \|_\infty\|\m{w}_\mc{B}\|_1 \leq  \mu_{1n} \frac{C_1}{4},
\end{eqnarray}
where the last inequality follows since  $\|\bar{\m{g}}_\mc{B}^n\|_\infty \leq 4$.

Further, using our assumption on the sample size, it follows from Lemma~\ref{lem:control:eig:hess} that 
\begin{eqnarray}\label{eqn:bI12:is}
\m{w}_\mc{B}^\top [\nabla^2_{\g{\theta}^o} L(\bar{\g{\theta}}^d,\bar{\g{\theta}}^o+\delta \m{w}_\mc{B}; \m{Y})]_{\mc{B}\mc{B}} \m{w}_\mc{B}  &\geq& \frac{\tau_4 C_1 ^2}{2}. 
\end{eqnarray}
For the second term, it can be easily seen that 
\begin{equation}\label{eqn:bI2:is}
|I_2|=|\rho_{1n}(\|\bar{\g{\theta}}_\mc{B}^o+\m{w}_\mc{B}\|_1 - \|\bar{\g{\theta}}^o\|_1)| \leq 
\sqrt{q}\rho_{1n}  C_1.     
\end{equation}
In addition, by the similar argument as in the proof of Lemma~\ref{lem:disc:conti}, we obtain 
\begin{align*}
I_{3} &= \rho_{2n} \iota_n \trace (\bar{\g{\Theta}}+  \mu_{1n} \mc{A} \m{w})^2\\
&+  \rho_{2n} \trace \left((\bar{\g{\Theta}}+  \mu_{1n} \mc{A} \m{w})^2 (\m{N}(\bar{\m{R}} + \mu_{2n} \mc{A} \m{z}) \m{N}^\top)-\bar{\g{\Theta}}^2
\bar{\m{Q}}\right).
\end{align*}
Following steps in \eqref{eq:boun:trace}, for sufficiently large $n$, we have  
\begin{equation}\label{eqn:bI3:is}
I_3  \geq   O(\rho_{2n} \iota_n) - 2 \tau_2 \tau_3 C_1 \rho_{2n} \mu_{1n} -  \tau_2^2 C_1^2 \rho_{2n} \mu_{2n}. 
%C_3  \rho_{2n}  ( \mu_{1n}^2 + \mu_{1n}^2 \mu_{2n} +   \mu_{1n}  \mu_{2n} + \mu_{2n} ) 
\end{equation}
%where $\bar{\theta}_{\min} =\min_{(i,j) \in \mc{B}} \bar{\theta}_{ij}$ 
%\begin{align*}
%|I_3| &= |\trace \left((\bar{\g{\Theta}}+  \mc{A} \m{w}_\mc{B}) (\m{N}(\bar{\g{R}}+  \mc{A} \m{z}) \m{N}^\top)- \bar{\g{\Theta}} \m{N}\bar{\g{R}}\m{N}^\top \right)|\\
%& \leq I_{3,3}+I_{3,2} +I_{3,1}, 
%\end{align*}
%where
%\begin{align}\label{eqn:bI3:iss}
%\nonumber
%I_{3,1} &= |\trace \left( \bar{\g{\Theta}}(\m{N}  (\bar{\g{R}}+  \mc{A} \m{z}) \m{N}^\top-  \m{N}\bar{\g{R}}\m{N}^\top) \right)|   \leq 2 \Lambda_{\max}(\bar{\g{\Theta}}) \|\m{z}\|   \leq  2 \tau_5 \|\m{z}\|, \\
%\nonumber
%I_{3,2} &= |\trace \left( \mc{A} \m{w}_\mc{B} \m{N}\bar{\g{R}}\m{N}^\top  \right) |  \leq 2 \Lambda_{\max}(\bar{\m{Q}}) \|\m{w}_{\mc{B}}\|   \leq  2 \tau_{q,\max} \|\m{w}_{\mc{B}}\|,\\
%I_{3,3} &= |\trace \left( \mc{A} \m{w} \m{N} \mc{A} \m{z} \m{N}^\top   \right) |  \leq 2 \Lambda_{\max}( \mc{A} \m{w}_\mc{B} ) \|\m{z}\| \leq  2 \|\m{z}\| \|\m{w}_{\mc{B}}\|.
%\end{align}
By combining~~\eqref{eqn:bI11:is}--\eqref{eqn:bI3:is}, and choosing  $\iota_n \geq O(\max(\mu_{1n},\mu_{2n}))$, we obtain
\begin{eqnarray*}
&&~~F_{n}(\bar{\g{\theta}}^d,\bar{\g{\theta}}^o+ \mu_{1n} \m{w}, \bar{\m{r}}^d, \bar{\m{r}}^o+ \mu_{1n} \m{z};\m{Y}) -  F_n(\bar{\g{\theta}}^d,\bar{\g{\theta}}^o,\bar{\m{r}}^d,\bar{\m{r}}^o;\m{Y}) \\
%&\geq & \|u_\mc{B}\|_2(-\tau_5\sqrt{q}\|u_\mc{B}\|_2^2 + \tau_4\|u_\mc{B}\|_2 -\frac{5}{4}\rho_{1n}\sqrt{q} + \davoud{I_3}).\\
& \geq & C_1^2 \frac{q\log p}{n} \left( \frac{\tau_4}{2}-\frac{1}{C_1} - \frac{1}{4C_1} \right) \\
&+& O(\rho_{2n} \iota_n)  - 2 \tau_2 \tau_3 C_1 \rho_{2n} \mu_{1n} -  \tau_2^2 C_1^2 \rho_{2n} \mu_{2n} \geq 0. 
%&+& D_1^2 (- 2 \tau_{q,\max} \|\m{w}_{\mc{B}}\| -2 \|\m{z}\| \|\m{w}_{\mc{B}}\|-2 \tau_5 \|\m{z}\|)
\end{eqnarray*}
The last inequality uses the condition  $C_1 > 5/\tau_4$.  The proof follows by setting $\check{C}(\bar{\g{\theta}}^o)=C_1$ and  $\check{D}(\bar{\g{\theta}}^o)=D_1$.
%Easy algebra shows that if $\rho_{1n} q \leq \frac{\tau_4^2}{10\tau_5} $  and $B = \frac{5\rho_{1n}\sqrt{q}}{\tau_4}$, the result follows. 
\end{proof}

\begin{lem}\label{prop:sample_version}
Suppose Assumptions~\ref{assu:beig}--\ref{assu:comm} hold.
If \ref{assu:eig:bin} and \ref{assu:inc:bin} are satisfied by $\bar{\m{H}}^n$ and $\m{T}^n$, $\rho_{1n} =O(\sqrt{\log p/n})$, $n > O(q^2\log p)$ as $n \rightarrow \infty$, $ \rho_{2n} =O(\sqrt{\log(p-H+1)/n})$, $\rho_{2n} \leq \delta\rho_{1n}/(4(2-\delta) \tau_3 \|\bar{\g{\theta}}^o\|_{\infty})$, $\g{\epsilon}=\g{0}$, and $\min_{(i,j) \in \mc{B}} \bar{\theta}_{ij}^o \geq 2 \check{C}(\bar{\g{\theta}}^o) \sqrt{q }\rho_{1n}$. Then, the result of Theorem \ref{thm:bin} holds.
\end{lem}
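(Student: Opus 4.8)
The plan is to reproduce, for the logistic pseudo-likelihood, the two-part argument that established Theorem~\ref{thm:three:fglasso} in the continuous case (Lemmas~\ref{thm:one:fglasso} and~\ref{thm:two:fglasso}). Concretely, I would (i) exhibit a local minimizer of the \emph{restricted} problem (with $\g{\theta}^o_{\mc{B}^c}=0$) satisfying the estimation bound \eqref{eqn:finalb:bin}, and then (ii) run a primal--dual witness argument showing this restricted minimizer is in fact stationary for the full problem, so no edge outside $\mc{B}$ is ever selected. Together these yield exactly the two bullets of Theorem~\ref{thm:bin}. Note that this lemma assumes the \emph{sample}-level conditions \ref{assu:eig:bin} and \ref{assu:inc:bin} already hold, so the population-to-sample reduction (Lemma~\ref{prop:condition_consistency}) is not invoked here.

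For step (i), I would call Lemma~\ref{lem:disc:ising}, which under \ref{assu:beigq}--\ref{assu:comm}, \ref{assu:eig:bin}--\ref{assu:inc:bin}, and the stated rates places a minimizer of the restricted problem inside the disc of radius $\max\!\big(\check{C}(\bar{\g{\theta}}^o)\rho_{1n}\sqrt{q},\,\check{D}(\bar{\m{q}}^o)\rho_{2n}\sqrt{\Psi(p,H,K)}\big)$ with probability at least $1-2\exp(-C\rho_{1n}^2 n)$. Since $\bar{\m{Q}}=\m{N}\bar{\m{R}}\m{N}^\top$ with $\m{N}^\top\m{N}=\m{I}_{p-H+1}$, the norm bound on the reduced coordinates $\wh{\m{r}}^o-\bar{\m{r}}^o$ transfers to $\|\wh{\m{q}}^o-\bar{\m{q}}^o\|$, giving the first bullet. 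The minimum-signal hypothesis $\min_{(i,j)\in\mc{B}}\bar{\theta}^o_{ij}\geq 2\check{C}(\bar{\g{\theta}}^o)\sqrt{q}\,\rho_{1n}$ then forces $\sign(\wh{\theta}^o_{ij})=\sign(\bar{\theta}^o_{ij})$ on $\mc{B}$, as in Lemma~\ref{thm:one:fglasso}.

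For step (ii), I would write the KKT stationarity condition of the restricted problem on $\mc{B}$ and Taylor-expand the logistic gradient $\hat{\m{g}}^n$ around $\bar{\g{\theta}}^o$, decomposing it into the population gradient $\bar{\m{g}}^n_{\mc{B}}$, a Hessian term, the remainder $\m{L}^n=(\hat{\m{H}}^n-\bar{\m{H}}^n)(\hat{\g{\theta}}^o-\bar{\g{\theta}}^o)$, and the fairness/community trace contribution $\rho_{2n}\hat{\m{A}}$. Using invertibility of $\bar{\m{H}}^n_{\mc{B}\mc{B}}$ from \ref{assu:eig:bin}, I would solve for the subgradient $\hat{\m{t}}_{\mc{B}^c}$ on the complement and bound each piece: the gradient term via Lemma~\ref{lem:control_Wn}, the Taylor remainder via Lemma~\ref{lem:control_Rn} (this is where $n>O(q^2\log p)$ and Lemma~\ref{lem:control:eig:hess} enter), the incoherence term via \ref{assu:inc:bin} to produce the factor $(1-\delta)$, and the fairness term via $\|\hat{\m{A}}_{\mc{B}}\|_\infty \lesssim \tau_3\|\bar{\g{\theta}}^o\|_\infty$ using Assumption~\ref{assu:beigq}. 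Exactly as in \eqref{eq:boun:ahat}, the coupling $\rho_{2n}\leq \delta\rho_{1n}/(4(2-\delta)\tau_3\|\bar{\g{\theta}}^o\|_\infty)$ absorbs this last term so that $\|\hat{\m{t}}_{\mc{B}^c}\|_\infty \leq (1-\delta)+\tfrac{\delta}{2}<1$, i.e. strict dual feasibility holds; hence $\wh{\g{\theta}}^o_{\mc{B}^c}=0$ with probability at least $1-O(\exp(-\eta\rho_{1n}^2 n))$.

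I expect the main obstacle to be controlling the primal--dual witness in the presence of the extra fairness regularizer while working with the \emph{logistic} loss rather than the Gaussian one: here the Hessian is both sample- and parameter-dependent, so the remainder control of Lemma~\ref{lem:control_Rn} and the eigenvalue control of Lemma~\ref{lem:control:eig:hess} must be combined carefully to keep the Taylor remainder $o(\rho_{1n})$, and the trace contribution $\rho_{2n}\hat{\m{A}}_{\mc{B}^c}$ must be shown not to destroy strict dual feasibility. This is precisely what the $\rho_{2n}$--$\rho_{1n}$ coupling and the spectral bound $\Lambda_{\max}(\bar{\m{Q}})<\tau_3$ from \ref{assu:beigq} are engineered to guarantee.
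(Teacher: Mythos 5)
Your proposal reproduces the paper's proof essentially step for step: the first bullet via Lemma~\ref{lem:disc:ising} plus the minimum-signal condition, and the second via the same primal--dual witness construction---mean-value expansion of the logistic gradient, inversion of $\bar{\m{H}}^n_{\mc{B}\mc{B}}$, bounds from Lemmas~\ref{lem:control_Wn} and~\ref{lem:control_Rn} (with Lemma~\ref{lem:control:eig:hess} supplying the Hessian eigenvalue control under $n > O(q^2\log p)$), the incoherence factor $(1-\delta)$ from \ref{assu:inc:bin}, and absorption of the fairness term $\rho_{2n}\m{A}^n$ through the stated $\rho_{2n}$--$\rho_{1n}$ coupling with $\tau_3\|\bar{\g{\theta}}^o\|_\infty$, yielding strict dual feasibility $\|\hat{\m{t}}_{\mc{B}^c}\|_\infty < 1$ exactly as in the paper (which obtains $1-\delta/4$ where you write $(1-\delta)+\delta/2$, an immaterial difference). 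You are also correct that Lemma~\ref{prop:condition_consistency} plays no role here, since the sample-level conditions \ref{assu:eig:bin}--\ref{assu:inc:bin} are hypotheses of this lemma.
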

\begin{proof}
Define the sign vector $\hat{\m{t}}$ for $\hat{\g{\theta}}$ to satisfy \eqref{eqn:sig:def}. For $\hat{\g{\theta}}$ to be a solution of \eqref{eqn:fair:ising}, the sub-gradient at $\hat{\g{\theta}}$ must be 0, i.e., 
 \begin{equation}\label{subgradient}
\hat{\m{g}}^n + \rho_{1n} \hat{\m{t}} + \rho_{2n}\m{N} \hat{\m{R}}^o\m{N}^\top \mc{A}^* \mc{A}  \hat{\g{\theta}}^o = 0,
\end{equation}
where $\hat{\m{g}}^n=\nabla_{\g{\theta}^o} L(\bar{\g{\theta}}^d, \hat{\g{\theta}}^o; \m{Y})$.
Then we can write   
$$
\hat{\m{g}}^n - \bar{\m{g}}^n = - \rho_{1n} \hat{\m{t}} - \rho_{2n} \m{N} \hat{\m{R}}^o\m{N}^\top \mc{A}^* \mc{A}^\top  \hat{\g{\theta}}^o  -\bar{\m{g}}^n.
$$
Let $\tilde{\g{\theta}}$ denote a point in the line segment connecting $\hat{\g{\theta}}$ and $\bar{\g{\theta}}$.  Applying the mean value theorem gives 
\begin{equation}\label{main_structure}
 \bar{\m{H}}^n\left(\hat{\g{\theta}}^o - \bar{\g{\theta}}^o\right) = -\bar{\m{g}}^n - \rho_{1n} \hat{\m{t}} + \m{L}^n - \rho_{2n}\m{A}^n.
\end{equation}
Here, $\m{L}^n = \left(\bar{\m{H}}_{\mc{B}\mc{B}}^n- \tilde{\m{H}}_{\mc{B}\mc{B}}^n\right) \left(\hat{\g{\theta}}^o- \bar{\g{\theta}}^o\right)$ and $\m{A}^n = \m{N} \hat{\m{R}}^o\m{N}^\top \mc{A}^* \mc{A}^\top \hat{\g{\theta}}^o$.  

Let $\hat{\g{\theta}}_\mc{B}$ be the solution of  restricted problem and let $\hat{\g{\theta}}_{\mc{B}^c} = 0$, i.e. \eqref{eqn:appnull:obj}. We will show that this $\hat{\g{\theta}}$ is the optimal solution and is sign consistent with high probability. To do so, let $\rho_{1n} = \frac{16(2-\delta)}{\delta}\sqrt{\frac{ \log{q}}{n}}$. By Lemma~\ref{lem:control_Wn}, we have  $\| \bar{\m{g}}^n \|_\infty \leq \frac{\rho_{1n} \delta}{4 (2-\delta)} \leq \frac{\rho_{1n}}{4}$ with probability at least $1-4\exp(-C\rho_{1n}^2n)$.
Choosing $n \geq \frac{100^2\tau_5^2(2-\delta)^2}{\tau_4^4\delta^2}q^2\log p$, we have $\rho_{1n} q \leq  \frac{\tau_4^2}{100 \tau_5}\frac{\delta}{2 - \delta}$, thus the conditions of Lemma \ref{lem:control_Rn} hold.
%Now we need to show that with high probability, 
%\begin{eqnarray}
%\label{tsc_toshow}
%\|\hat{\m{t}_j}\|_\infty &<& 1, \ \ \ \ \ \ \  \ \ \ \ \ \textrm{for} \ \ j \in \mc{B}^c \\
%\label{ts_toshow}
%\hat{\m{t}_j} &=& \text{sign}(\g{\theta}_j^*), \ \ \textrm{for}\ \  j \in \mc{B} \ \textrm{and} \ \| \theta_j^* \| \geq \frac{10\rho_{1n}\sqrt{q}}\tau_4}
%\end{eqnarray}

Now, by rewriting \eqref{main_structure} and utilizing the fact that $\hat{\g{\theta}}_{\mc{B}^c} = \bar{\g{\theta}}_{\mc{B}^c} = 0$, we have
\begin{eqnarray}
\label{structure_I1}
 \bar{\m{H}}^n_{\mc{B}^c\mc{B}}(\hat{\g{\theta}}_{\mc{B}}^o - \bar{\g{\theta}}_{\mc{B}}^o) & = & - \bar{\m{g}}^n_{\mc{B}^c} - \rho_{1n}\hat{\m{t}}_{\mc{B}^c} + \m{L}^n_{\mc{B}^c} -  \rho_{2n}\m{A}^n_{\mc{B}^c} , \\
\label{structure_I2}
 \bar{\m{H}}^n_{\mc{B}\mc{B}}(\hat{\g{\theta}}_{\mc{B}}^o - \bar{\g{\theta}}_{\mc{B}}^o) & = & - \bar{\m{g}}^n_{\mc{B}} - \rho_{1n}\hat{\m{t}}_{\mc{B}} + \m{L}^n_{\mc{B}} - \rho_{2n}\m{A}^n_{\mc{B}} . 
\end{eqnarray}
Since $\bar{\m{H}}^n_{\mc{BB}}$ is invertible by assumption, combining \eqref{structure_I1} and \eqref{structure_I2} gives 
\begin{align}\label{eqn:t_equation}
\nonumber
&\bar{\m{H}}^n_{\mc{B}^c\mc{B}}(\bar{\m{H}}^n_{\mc{B}\mc{B}})^{-1}(-  \bar{\m{g}}^n_{\mc{B}} - \rho_{1n}\hat{\m{t}}_{\mc{B}} + \m{L}^n_{\mc{B}} - \rho_{2n}\m{A}^n_{\mc{B}}) \\
&= -\bar{\m{g}}^n_{\mc{B}^c}-\rho_{1n}\hat{\m{t}}_{\mc{B}^c} + \m{L}^n_{\mc{B}^c} -\rho_{2n}\m{A}^n_{\mc{B}^c}.
\end{align} 
Now, using results from Lemmas \ref{lem:control_Wn} and \ref{lem:control_Rn}, we obtain
\begin{eqnarray*}
\rho_{1n}  \|\hat{\m{t}}_{\mc{B}^c}\|_\infty   &=& \big\| \bar{\m{H}}^n_{\mc{B}^c\mc{B}}(\bar{\m{H}}^n_{\mc{B}\mc{B}})^{-1}(- \bar{\m{g}}^n_{\mc{B}} - \rho_{1n}\hat{\m{t}}_{\mc{B}}  \\
&+& \m{L}^n_{\mc{B}}- \rho_{2n}\m{A}^n_{\mc{B}}) - \bar{\m{g}}^n_{\mc{B}^c} - \m{L}^n_{\mc{B}^c} + \rho_{2n}\m{A}^n_{\mc{B}^c} \big\|_\infty \\
&\leq& \|\bar{\m{H}}^n_{\mc{B}^c\mc{B}}(\bar{\m{H}}^n_{\mc{B}\mc{B}})^{-1}\|_\infty(\|\bar{\m{g}}^n_{\mc{B}}\|_\infty + \rho_{1n} + \|\m{L}^n\|_\infty + \rho_{2n}\|\m{A}^n_{\mc{B}^c}\|_{\infty} )\\
&+&   \|\bar{\m{g}}^n_{\mc{B}}\|_\infty + \|\m{L}^n\|_\infty + \rho_{2n}\|\m{A}^n_{\mc{B}^c}\|_{\infty}\\
&\leq&  \rho_{1n}  (1-\frac{\delta}{2}) + (2-\delta) \|\bar{\g{\theta}}^o\|_{\infty} \tau_3 \rho_{2n}\\ 
&\leq&  \rho_{1n}  (1-\frac{\delta}{2}) + \frac{\delta}{4}\rho_{1n}.
\end{eqnarray*}
The result  follows by using  Lemma \ref{lem:disc:ising},  and our assumption that  $\min_{(i,j) \in \mc{B}} \bar{\theta}_{ij}^o  \geq 2 \check{C}(\bar{\g{\theta}}^o) \rho_{1n}\sqrt{q}$ where 
 $ \check{C}(\bar{\g{\theta}}^o)$ is defined in  Lemma \ref{lem:disc:ising}.
\end{proof}

\subsubsection{Proof of Theorem \ref{thm:bin}}
\begin{proof}
With Lemmas  \ref{prop:condition_consistency} and \ref{prop:sample_version}, the proof of Theorem \ref{thm:bin} is straightforward. Given that \ref{assu:eigp:bin:pop} and \ref{assu:incp:bin:pop} are satisfied by $\bar{\m{H}}$ and $\m{T}$ and that $\rho_{1n} =O(\sqrt{\log p/n})$ and $q \sqrt{(\log p)/n}=o(1)$ hold. Thus, the conditions of Lemma \ref{prop:sample_version} hold, and therefore the results in Theorem \ref{thm:bin} hold.
\end{proof}

\subsection{Proof of Theorem~\ref{thm:consist:commu}}\label{sec:app:comm}
\begin{proof} 
We want to bound $\min_{\m{O}\in\mb{R}^{K\times K}}\|\m{Z} \wh{\m{V}}-\m{Z}\bar{\m{V}}\m{O}\|_{F}$,  where $ \m{O}^\top \m{O}=\m{O}\m{O}^\top=\m{I}_K$. For any $\m{O}\in\mb{R}^{K\times K}$ with  $\m{O}^\top\m{O}=\m{O}\m{O}^\top=\m{I}_K$, since $\m{Z}^\top \m{Z}=\m{I}_{(p-H+1)}$,  we have
\begin{align*}
\|\m{Z} \wh{\m{V}}-\m{Z}\bar{\m{V}}\m{O}\|^2_{F}=\|\m{Z}(\wh{\m{V}}-\bar{\m{V}}\m{O})\|^2_{F}=  \|\wh{\m{V}}-\bar{\m{V}}\m{O}\|^2_{F}.
\end{align*}
Hence, 
\begin{align}\label{absch_popoloplo}
\nonumber 
&\min_{\m{O}\in\mb{R}^{K\times K}: \m{O}^\top \m{O}=\m{O}\m{O}^\top=\m{I}_K} \|\m{Z} \wh{\m{V}}-\m{Z}\bar{\m{V}}\m{O}\|^2_{F}\\
& \qquad \qquad = \min_{\m{O}\in\mb{R}^{K\times K}: \m{O}^\top \m{O}=\m{O}\m{O}^\top=\m{I}_K} \|\wh{\m{V}}-\bar{\m{V}} \m{O}\|_{F}.
\end{align}

We proceed similarly to \cite{lei2015consistency}. By Davis-Kahan’s Theorem \citep[Theorem~1]{yu2015useful}
\begin{equation*}
\|\wh{\m{V}}-\bar{\m{V}} \m{O}\|_{F} \leq  \frac{4 \sqrt{K} \min\left( \sqrt{s-r+1} \|\wh{\m{V}}\wh{\m{V}}^\top -  \bar{\m{V}}^\top \bar{\m{V}}^\top \|,   \|\wh{\m{V}}\wh{\m{V}}^\top -  \bar{\m{V}} \bar{\m{V}}^\top \|_{\textnormal{F}} \right) }{\min \left(\Lambda_{r-1}-\Lambda_{r}, \Lambda_{s} -\Lambda_{s+1} \right)},
\end{equation*}
where $s$ and $r$ denote the positions of the ordered (from large to small) eigenvalues of the matrix $\bar{\m{V}} \bar{\m{V}}^\top$.  Using Theorem~\ref{thm:one:fglasso}, we have that  
\begin{align*}
 \|\wh{\m{V}}\wh{\m{V}}^\top -  \bar{\m{V}}^\top \bar{\m{V}}^\top \|_2 &\leq  \|\wh{\m{V}}\wh{\m{V}}^\top -  \bar{\m{V}} \bar{\m{V}}^\top \|_{\textnormal{F}} =O \left(n^{-1}\Psi(p,H,K)\right). 
\end{align*}
This implies that  
\begin{equation*}
\|\wh{\m{V}}-\bar{\m{V}} \m{O}\|_{F}  \leq \kappa \Psi(p,H,K) \sqrt{\frac{K}{n}}
\end{equation*}
for some $\kappa>0$.

The rest of the proof follows as in the proof of \citep[Theorem 1]{lei2015consistency}. More specifically,  it follows from \citep[Lemma 5.3 ]{lei2015consistency} that 
\begin{align}\label{eqn:kappa}
\nonumber
\sum_{k=1}^K \frac{|\mc{S}_k|}{|\mc{C}_k|}  &\leq 4 (4+2\xi )   \|\wh{\m{V}}\wh{\m{V}}^\top -  \bar{\m{V}}^\top \bar{\m{V}}^\top \|\\
\nonumber
 & \leq 4 (4+2\xi ) \kappa \Psi(p,H,K) \sqrt{\frac{K}{n}}\\
 & = \frac{2+\xi}{\pi} \Psi(p,H,K) \sqrt{\frac{K}{n}}, 
\end{align}
for some  $\pi>0$.
\end{proof}

\subsection{ Updating Parameters and Convergence of Algorithm~\ref{Alg:general}}\label{sec:der:alg}

\subsubsection{Proof of Theorem~\ref{thm:bcd}}\label{sec:proof:thm:bcd}
\begin{proof}
The convergence of Algorithm~\ref{Alg:general} follows from \cite{wang2019global} which, using a generalized version of the ADMM algorithm, propose optimizing a general constrained nonconvex optimization problem of the form $f(x) + g(y)$ subject to $x=y$. 
%where $\Omega_1 =(\g{\Theta}, \m{Q})$, and $\Omega_2 =(\g{\Omega}, \m{Q})$. 
More precisely, for sufficiently large $\gamma$ (the lower bound is given in \citep[Lemma 9]{wang2019global}), and starting from any  $(\g{\Theta}^{(0)}, \m{Q}^{(0)},\g{\Omega}^{(0)},\m{W}^{(0)})$, Algorithm~\ref{Alg:general} generates a sequence that is bounded, has at least one limit point, and that each limit point $(\g{\Theta}^{(*)},\m{Q}^{(*)},\g{\Omega}^{(*)},\m{W}^{(*)})$ is a stationary point of  \eqref{eqn:lagfun}.%, i.e.,  $\partial \Upsilon_{\gamma} (\g{\Theta}^{(*)}, \m{Q}^{(*)},\g{\Omega}^{(*)},\m{W}^{(*)})=0$. 

The global convergence of Algorithm~\ref{Alg:general} uses the Kurdyka--{\L}ojasiewicz (KL) property of $\mc{L}_\gamma$. Indeed, the KL property has been shown to hold for a large class of functions including subanalytic and semi-algebraic functions such as indicator functions of semi-algebraic sets, vector (semi)-norms $\|\cdot\|_{p}$ with $p \geq 0 $ be any rational number, and matrix (semi)-norms (e.g., operator, trace, and Frobenious norm). Since the loss function $L$ is convex and other functions in \eqref{eqn:lagfun} are either subanalytic or semi-algebraic, the augmented Lagrangian $\mc{L}_\gamma$ satisfies the KL property. The remainder of proof is similar to \cite[Theorem 1]{wang2019global}.
\end{proof}

\subsubsection{Proof of Lemma \ref{lem:fcon:update}}

\begin{proof}
The proof uses the idea of \citep[Lemma~4]{khare2015convex}.  Note that for $1 \leq i \leq p$, 
\begin{eqnarray}\label{eq3}
\nonumber
\Upsilon_{2,\gamma} ( \g{\Theta}) &=& -n \log \theta_{ii} + \frac{n}{2}\big(\theta_{ii}^2 s_{ii} + 2 
\theta_{ii} \sum_{j \neq i} \theta_{ij} s_{ij}\big) \\
\nonumber
 &+&\frac{\gamma}{2}({\theta}_{ii}-{\omega}_{ii}+{w}_{ii})^2+ \text{ terms independent of } \theta_{ii},
\end{eqnarray}
where $s_{ij}=\m{y}_i^\top \m{y}_j/n$. Hence,
\begin{eqnarray*}
  \frac{\partial}{\partial \theta_{ii}}  \Upsilon_{2,\gamma} ( \g{\Theta})=0 
  &\Leftrightarrow & -\frac{1}{\theta_{ii}} + \theta_{ii} (s_{ii} + n \gamma)\\
  &+& \sum_{j \neq i} \theta_{ij} s_{ij} + n\gamma(\omega_{ii}- w_{ii})=0. \\
\end{eqnarray*}  
This implies that
\begin{eqnarray*}
 0&=&\theta_{ii}^2 \underbrace{(s_{ii}+ n\gamma )}_{=:a_{i}} 
 \\
 &+&\theta_{ii} \underbrace{\big(\sum_{j \neq i} \theta_{ij} s_{ij}+ n\gamma ( w_{ii} -\omega_{ii})\big)}_{=:b_{i}} -1,
\end{eqnarray*}
which gives \eqref{eq:fggl:diag}. %Note that since $\theta_{ii}>0$ the positive root has been retained as the solution.  
Also, for $1 \leq i < j \leq p$, we have
\begin{eqnarray}\label{eq4}
\nonumber
\Upsilon_{2,\gamma} ( \g{\Theta}) &=& \frac{n}{2} (s_{ii} + s_{jj}) \theta_{ij}^2 + n \big( 
\sum_{j' \neq j} \theta_{ij'} s_{jj'} + \sum_{i' \neq i} \theta_{i'j} s_{ii'} \big) \theta_{ij} \\
&+& \frac{\gamma}{2}({\theta}_{ij}-{\omega}_{ij}+ {w}_{ij})^2+ \mbox{ terms independent of } \theta_{ij}.
\end{eqnarray}
Thus, 
\begin{align}
\nonumber
0 &= \underbrace{ \big(s_{ii} + s_{jj} +  n\gamma\big)}_{=:a_{ij}} \theta_{ij}  + \\
\nonumber 
& +  \underbrace{\big( \sum_{j' \neq j} \theta_{ij'} s_{jj'} + \sum_{i' \neq i} \theta_{i'j} s_{ii'} \big) + n\gamma(\omega_{ij}-w_{ij})}_{=:b_{ij}},
\end{align}
which implies \eqref{eq:fggl:offdiag}.

The proof for updating $\g{\Omega}$ follows similarly. 
\end{proof}

\bibliographystyle{plain}
\bibliography{ref_fairgl}

\end{document}